\documentclass{article} %
\usepackage{iclr2025_conference,times}

\usepackage{amsmath,amsfonts,bm}

\def\eqref#1{equation~\ref{#1}}

\def\1{\bm{1}}

\DeclareMathAlphabet{\mathsfit}{\encodingdefault}{\sfdefault}{m}{sl}
\SetMathAlphabet{\mathsfit}{bold}{\encodingdefault}{\sfdefault}{bx}{n}

\input{config/packages}

\DeclareMathOperator*{\st}{subject\,\, to \quad}

\newtheorem{theorem}{Theorem}[section] 
\newtheorem*{theorem*}{Theorem}
\newtheorem*{proposition*}{Proposition}

\newtheorem{lemma}{Lemma}
\newtheorem{definition}{Definition}

\newtheorem{proposition}{Proposition}

\definecolor{darkgreen}{RGB}{104, 196, 84}

\title{Adversarial Training for Defense Against \\ Label Poisoning Attacks}

\author{Melis Ilayda~Bal\textsuperscript{\textmd{1}}\thanks{Corrresponding author. Code is available at \url{https://github.com/melisilaydabal/floral}. }  , Volkan Cevher\textsuperscript{\textmd{2,3}}, Michael Muehlebach\textsuperscript{\textmd{1}}\\
    \textsuperscript{\textmd{1}}Max Planck Institute for Intelligent Systems, T\"{u}bingen, Germany \\
    \textsuperscript{\textmd{2}}LIONS, EPFL \quad
    \textsuperscript{\textmd{3}}AGI Foundations, Amazon\\ 
    \texttt{\{mbal, michaelm\}@tuebingen.mpg.de,} \\
    \texttt{volkan.cevher@epfl.ch, volkcevh@amazon.de} \\
}

\iclrfinalcopy %
\begin{document}
\doparttoc %
\faketableofcontents %

\maketitle

\begin{abstract}
As machine learning models grow in complexity and increasingly rely on publicly sourced data, such as the human-annotated labels used in training large language models, they become more vulnerable to label poisoning attacks.
These attacks, in which adversaries subtly alter the labels within a training dataset, can severely degrade model performance, posing significant risks in critical applications.
In this paper, we propose \textsc{Floral}, a novel adversarial training defense strategy based on support vector machines (SVMs) to counter these threats. 
Utilizing a bilevel optimization framework, we cast the training process as a non-zero-sum Stackelberg game between an \textit{attacker}, who strategically poisons critical training labels, and the \textit{model}, which seeks to recover from such attacks. 
Our approach accommodates various model architectures and employs a projected gradient descent algorithm with kernel SVMs for adversarial training. 
We provide a theoretical analysis of our algorithm’s convergence properties and empirically evaluate \textsc{Floral}'s effectiveness across diverse classification tasks.
Compared to robust baselines and foundation models such as RoBERTa, \textsc{Floral} consistently achieves higher robust accuracy under increasing attacker budgets.
These results underscore the potential of \textsc{Floral} to enhance the resilience of machine learning models against label poisoning threats, 
thereby ensuring robust classification in adversarial settings.
\looseness -1
\end{abstract}

\section{Introduction}
\label{sec:intro}
\looseness -1 
The susceptibility of machine learning models to the integrity of their training data is a growing concern, particularly as these models become more complex and reliant on large volumes of publicly sourced data, such as the human-annotated labels used in training large language models \citep{kumar2020adversarial, cheng2020learning, wang-etal-2023-noise}.
Any compromise in training data can severely undermine a model’s performance and reliability \citep{adversarial-classification, szegedy2013intriguing}--- leading to catastrophic outcomes in security-critical applications, such as fraud detection \citep{fraud-detection}, medical diagnosis \citep{adv-medical-ml}, and autonomous driving \citep{adv-autonomous-driving}. \looseness -1

One of the most insidious forms of threat is the \textit{data poisoning} (\textit{causative}) attacks \citep{security-ml}, where adversaries subtly manipulate a subset of the training data, causing the model to learn erroneous input-output associations.
These attacks can involve either feature or label perturbations. 
Unlike feature poisoning, which alters the input data itself, (\textit{triggerless}) label poisoning is particularly challenging to detect because only the labels are modified, leaving the input data unchanged, as illustrated in Figure~\ref{fig:label-attacks-sketch}.
Deep learning models are inherently vulnerable to random label noise \citep{zhang2021understanding}, and this susceptibility is magnified when the noise is adversarially crafted to be more damaging.
Figure~\ref{fig:roberta-motivation} illustrates this vulnerability: The RoBERTa model \citep{roberta} fine-tuned for sentiment analysis suffers substantial performance degradation under label poisoning attacks \citep{bert-noise}, with severity growing as the attacker’s budget increases.
In contrast, Figure~\ref{fig:roberta-motivation-floral} highlights \textsc{Floral}’s effectiveness in mitigating these attacks.
Here, the adversarially labelled dataset is generated by poisoning the labels of the most influential training points 
(see Appendix~\ref{app:sec:roberta-exp-details} for details). \looseness -1

A line of work has addressed label poisoning through designing triggerless attacks against SVMs \citep{biggio2012poisoning, adversarial-flip-svm, xiao2015support}, backdoor attacks in vision contexts \citep{chen2022clean, label-poisoning} or combining label poisoning with adversarial attacks \citep{fowl2021adversarial, geiping2021doesn}. 
Defense mechanisms 
typically focus on filtering (sanitization) techniques \citep{curie, label-sanitization}, kernel correction \citep{svm-adversarial-noise}, intrinsic dimensionality-based sample weighting \citep{defending-svms} or robust learning \citep{steinhardt2017certified}. 
Adversarial training (AT) \citep{goodfellow-2014, madry2017towards} is a widely adopted empirical defense against data poisoning—particularly for feature perturbations—framing the interaction as a zero-sum game and training models on adversarially perturbed data \citep{huang2015learning, kurakin2018adversarial}. 
However, as shown in our experiments (Section~\ref{sec:experiment-results}), conventional AT does not adequately defend against label poisoning attacks, and its direct application to label poisoning remains largely unexplored.
\looseness -1

In this paper, we address robust classification under label poisoning attacks and introduce \textsc{Floral} (\textbf{F}lipping \textbf{L}abels f\textbf{or} \textbf{A}dversarial \textbf{L}earning), an SVM-based adversarial training defense that can be seamlessly adapted to other model architectures.
We formulate our defense strategy as a bilevel optimization problem \citep{ at-nonzero-game}, enabling a computationally \textit{efficient} generation of optimal label attacks, and forming a non-zero-sum Stackelberg game between an \textit{attacker} (or \textit{adversary}), targeting critical training labels, and the \textit{model}, recovering from such attacks. 
We propose a projected gradient descent algorithm tailored for kernel SVMs to solve the bilevel optimization problem.
Our experiments on various classification tasks demonstrate that \textsc{Floral} improves robustness in the face of adversarially manipulated labels by effectively leveraging the inherent robustness of SVMs combined with the strengths of adversarial training, achieving enhanced model resilience against label poisoning while maintaining a balance with classification accuracy. \looseness -1

 \begin{figure}[t]
    {\centering  
   \begin{subfigure}[t]
{0.44\textwidth}\centering{\includegraphics[width=1\linewidth,trim=3 305 385 5,clip, valign=t]{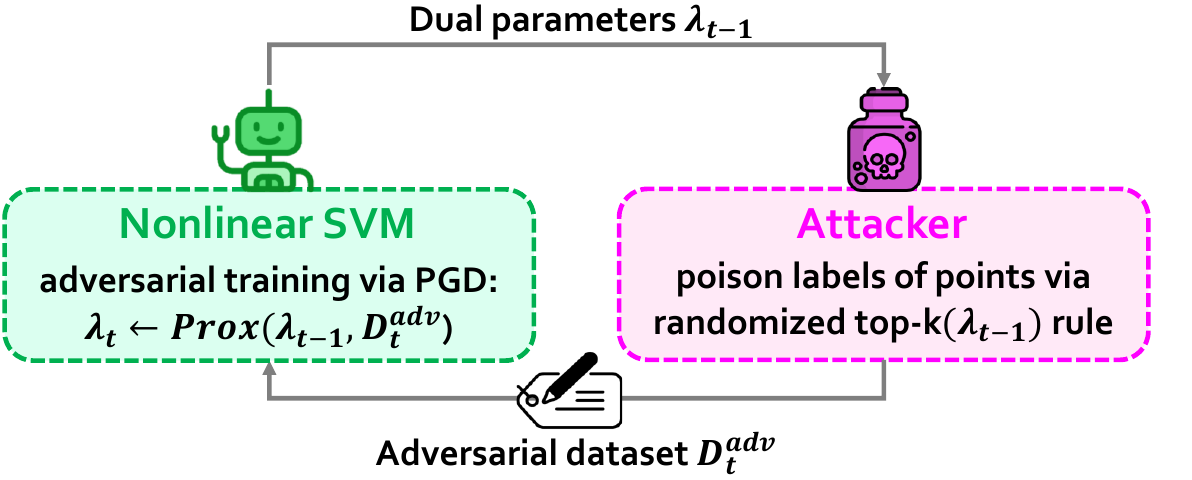}}
  \caption{\textsc{Floral} illustration.}
  \label{fig:robust_label_alg_sketch}
  \end{subfigure}
  \begin{subfigure}[t]
{0.27\textwidth}\centering{\includegraphics[width=1\linewidth,trim=0 10 0 70,clip, valign=t]{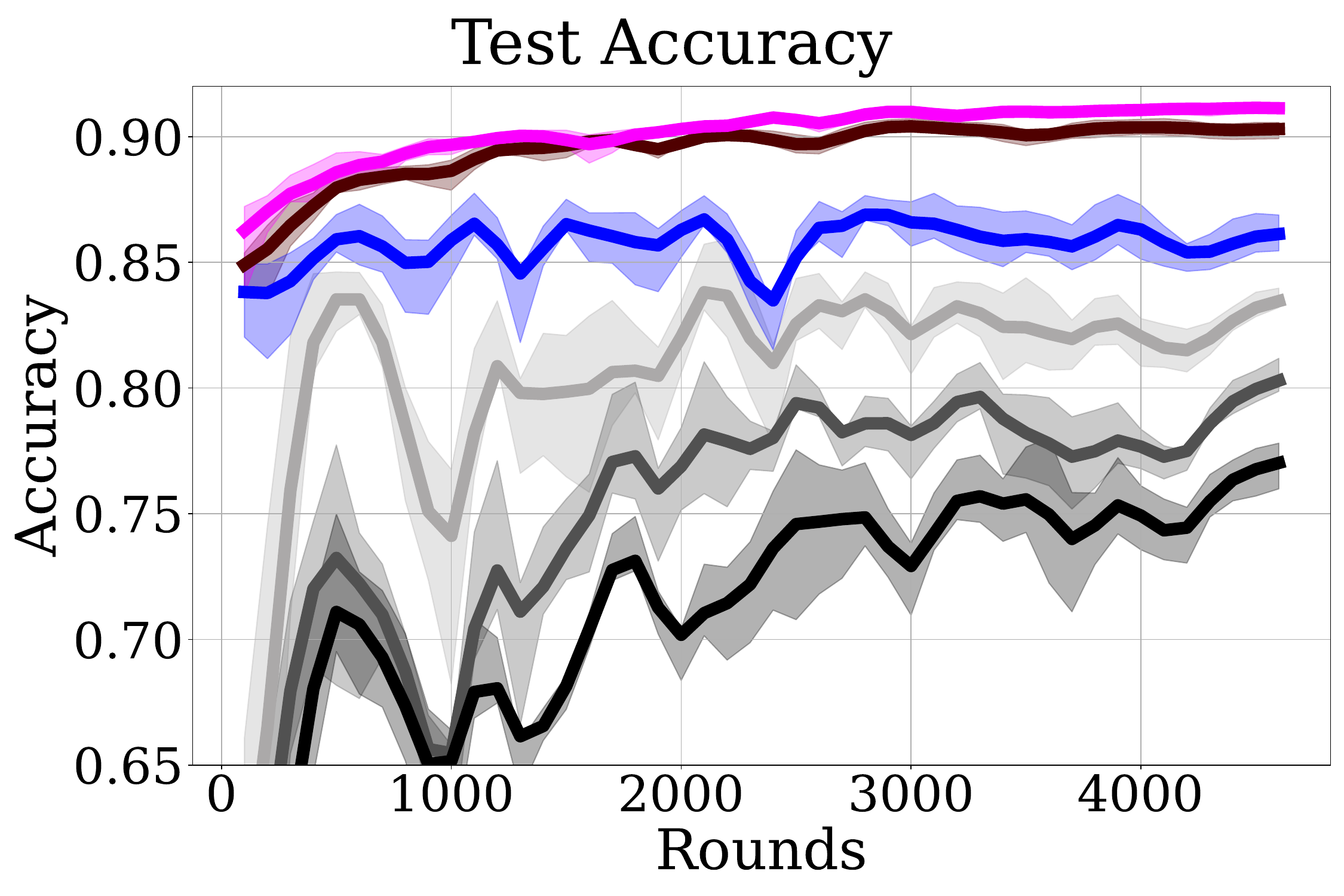}}
  \caption{RoBERTa.}
  \label{fig:roberta-motivation}
  \end{subfigure}
   \begin{subfigure}[t]
{0.27\textwidth}\centering{\includegraphics[width=1\linewidth,trim=0 10 0 70,clip, valign=t]{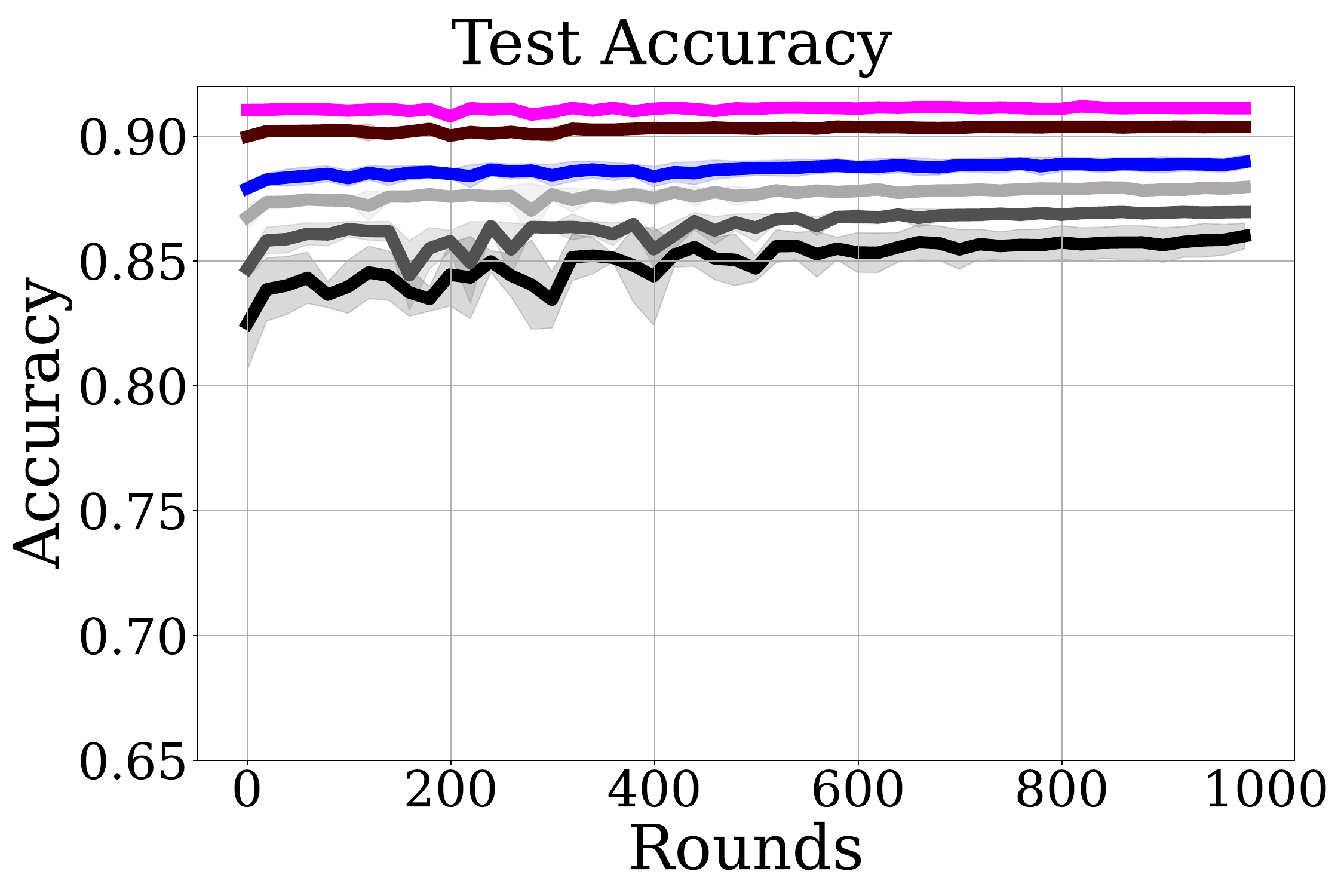}}
  \caption{\textsc{Floral}.}
  \label{fig:roberta-motivation-floral}
  \end{subfigure}
}
\\
\hspace*{\fill}
\begin{subfigure}[t]
{0.54\textwidth}\centering{\includegraphics[width=01\linewidth,trim=150 10 120 20,clip]{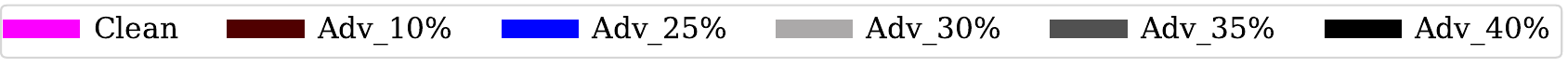}}
\end{subfigure}
  \setlength{\belowcaptionskip}{-10pt}
  \caption{(a): The illustration of \textsc{Floral} defense, adversarial training under label poisoning attacks. (b): The test accuracy degradation of RoBERTa fine-tuned on the IMDB dataset with adversarial labels, showing its vulnerability to such attacks. (c): \textsc{Floral} effectively mitigates the impact of label poisoning in (b), achieving significantly higher robust accuracy.
  }
  \label{fig:floral-motivation}
\end{figure}
\vspace{-0.2cm}
\paragraph{Contributions.} 
Our main contributions are the following.
\begin{itemize}[left=0.2cm,topsep=0pt]
\setlength\itemsep{-0.05em}
    \item We propose \textsc{Floral}, a support vector machine-based adversarial training strategy that defends against label poisoning attacks. To the best of our knowledge, this is the first work to introduce adversarial training as a defense specifically for \textit{label poisoning attacks}. We consider kernel SVMs in our formulation, however, as we show in our experiments, the method can be easily integrated with other models such as neural networks. \looseness -1
    
    \item We utilize a bilevel optimization formulation for the robust learning problem, leading to
    a non-zero-sum Stackelberg game between an \textit{attacker} who poisons the labels of influential training points and the \textit{model} trying to recover from such attacks. We provide a projected gradient descent (PGD)--based algorithm to solve the game efficiently.
    
    \item We theoretically analyze the local asymptotic stability of our algorithm by proving that its iterative updates remain bounded and 
    characterizing its convergence to the Stackelberg equilibrium. \looseness -1
    
    \item We empirically analyze \textsc{Floral}'s effectiveness through experiments on various classification tasks against robust baselines as well as foundation models such as RoBERTa. Our results demonstrate that as the attacker's budget increases, \textsc{Floral} maintains higher robust accuracy compared to baselines trained on adversarial data. \looseness -1

    \item Finally, we show the generalizability of \textsc{Floral} against attacks from the literature, \texttt{alfa}, \texttt{alfa-tilt} \citep{xiao2015support} and \texttt{LFA} \citep{label-sanitization}, which aim to maximize the difference in empirical risk between classifiers trained on tainted and untainted label sets. \looseness -1
\end{itemize}
\looseness -1 \vspace{-0.2cm}
\begin{figure}[t!]
    \centering  
{\includegraphics[width=0.6\linewidth,trim=0 140 0 139,clip]{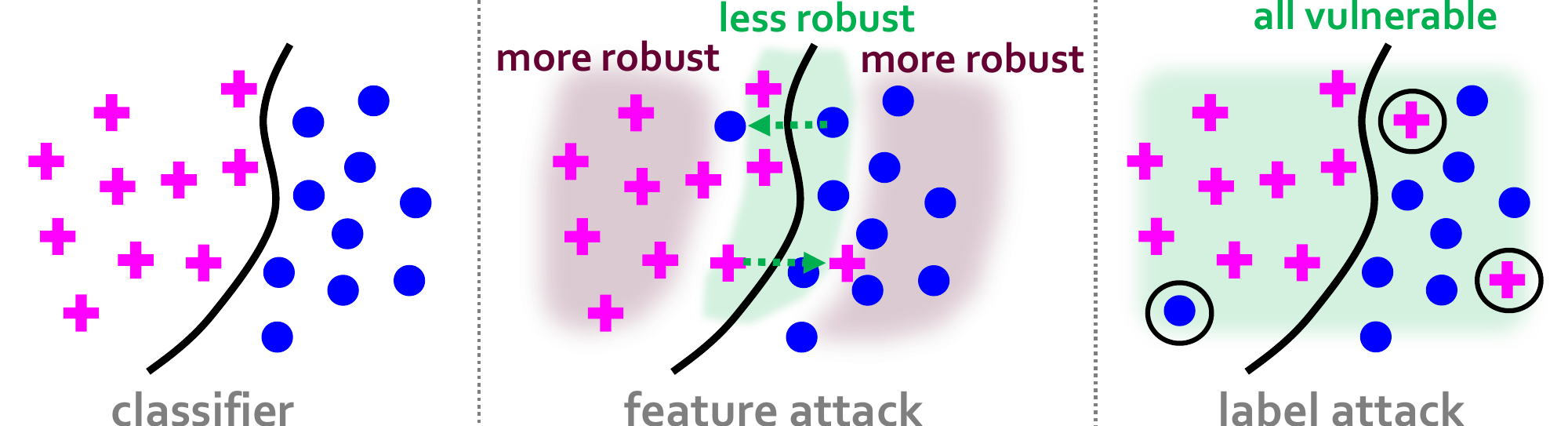}}
    \setlength{\belowcaptionskip}{-5pt}
     \caption{
     \textbf{Sensitivity of the decision boundary to label poisoning attacks.}
     The vulnerability of data points differs between feature perturbation and label poisoning attacks. 
    Given a perfect classifier, points near the decision boundary are less robust to feature attacks \citep{gairat, explore-exploit-db-dynamics}, leading to localized shifts in classification regions when the attack is performed.
    In contrast, the decision boundary has a broader sensitivity with respect to label poisoning attacks which can affect both near-boundary and distant points.
    By injecting incorrect labels, these attacks can create more widespread disruption and an overall degradation in classifier performance across the input space.
     \looseness -1
}
\label{fig:label-attacks-sketch}
\end{figure} 
\section{Problem Statement and Background}
\label{sec:problem-statement-background}
\looseness -1 \vspace{-0.2cm}
We tackle the problem of robust binary classification in the presence of label poisoning attacks 
(see Section~\ref{sec:proposed-approach} for an extension to multi-class classification).
Given a training dataset $\mathcal{D}=\{(x_i, y_i) \in (\mathcal{X},\mathcal{Y}) \}_{i=1}^{n}$,  where $\mathcal{X} \subseteq \mathbb{R}^{d}$ are the input features and $\mathcal{Y} = \{ \pm 1\}$ are the binary labels (potentially involving adversarial labels), we consider a kernel SVM classifier 
$f_\lambda (x):= \mathrm{sign}(\sum_{j} \lambda_j y_j k(x, x_j) + b)$,
parametrized by $\lambda \in \mathbb{R}^n$ and bias $b \in \mathbb{R}$, which assigns a label to each data point and is derived from the following quadratic program (dual formulation) \citep{boser1992training, hearst1998support}: \looseness -1 
\begin{alignat}{3}
D(f_{\lambda};\mathcal{D})&:  \min\limits_{\lambda \in \mathbb{R}^{n}} \quad \dfrac{1}{2} \lambda^\mathrm{T} && Q \lambda - \mathbbm{1}^\mathrm{T} \lambda  \label{soft-svm-dual-kernel-compact-obj} \\
& \st && y^\mathrm{T} \lambda = 0, \quad 0 \leq \lambda \leq C,
\label{soft-svm-dual-kernel-compact-const2}
\end{alignat}
where $Q \in \mathbb{R}^{n \times n}$ is a positive semi-definite matrix, with elements $Q_{ij}=y_i y_j K_{ij}$ and $\mathbbm{1}$ is the $n$-dimensional vector of all ones. Here, $K$ is the Gram matrix with entries $K_{ij}=k(x_i, x_j), \forall i,j \in [n] := \{1,\dots,n\}$, derived from a kernel function $k$.
A common kernel choice is the radial basis function (RBF), given as $k(x_i, x_j)=\exp (-\gamma\left\|x_i-x_j\right\|^2)$, with width parameter $\gamma$.
The parameter $C \geq 0$ is a regularization term, balancing the trade-off between maximizing the margin and minimizing classification errors. 
In this formulation, each dual variable $\lambda_i, i \in [n]$ corresponds to the Lagrange multiplier associated with the misclassification constraint for the training point $x_i$. 
\looseness -1

\vspace{-0.2cm}
\section{The \textsc{Floral} Approach}
\label{sec:proposed-approach}
\looseness -1 \vspace{-0.2cm}
In the context of label poisoning attacks, the attacker’s objective is to maximize the model’s test classification error by subtly altering the labels in the training dataset to an optimal adversarial configuration. 
Adversarial training \citep{goodfellow-2014, madry2017towards} can be extended to counter these attacks and minimize model sensitivity to disruptive labels by actively optimizing for robustness under worst-case scenarios.
In this setting, the attacker generates the optimal label attack within a budget of 
$k$ flips to maximize the model’s loss, while the model seeks parameters that minimize this worst-case loss.
A straightforward, yet naive \citep{at-nonzero-game}, way to implement this approach would be to use the following minimax formulation:
\begin{equation}
\min _{\lambda \in \mathbb{R}^n} \frac{1}{n} \sum\limits_{i=1}^{n} 
\left\{ 
    \max_{\substack{
    \sum_{i \in [n]} \mathbf{1}\{y_i \neq \Tilde{y}_i\}=k \\
    \Tilde{y}_i \in \mathcal{Y}, i \in [n] \\ 
    }} 
    \mathcal{L} \left(f_\lambda(x_i), \Tilde{y}_i\right) 
\right\},
\label{eq:zero-sum-game-surrogate}
\end{equation}
where $\mathcal{L}$ denotes a loss function,
which in the case of the kernel SVM is related to the hinge loss \citep{smola1998learning}, 
and $\Tilde{y}$ represents the adversarial label set. 
This formulation is problematic for multiple reasons: \looseness -1
\vspace{-0.2cm}
\begin{enumerate}[left=0.2cm]
  \setlength\itemsep{-0.05em}
    \item \textit{Misaligned objectives}: The loss is only a surrogate for the test accuracy, which is the actual quantity of interest to both the learner and the attacker. 
    However, from an optimization perspective, maximizing an upper bound (such as the hinge loss in SVMs) on the classification error as in (\ref{eq:zero-sum-game-surrogate}) is not meaningful as such a bound does not represent the true objective of the attacker. Hence, a non-zero-sum formulation would allow for a more nuanced representation of the attacker's objectives \citep{yasodharan2019nonzero}. \looseness -1

    \item \textit{Ineffective defense against critical points}: In the case of an SVM-based classifier, the minimax formulation would only safeguard against attacks targeting data points with the largest hinge loss, i.e., those farthest from the decision boundary. These attacks are easily distinguishable \citep{adversarial-flip-svm} as, e.g., soft margin SVMs are shown to be robust to outliers \citep{smola1998learning}.
    In contrast, attacks targeting the critical points that define the decision boundary (\textit{support vectors}) would be more effective in degrading the classifier's performance.
    \looseness -1
    \item \textit{Combinatorial explosion}: Even if a bilevel formulation is employed where the attacker minimizes the margin, the problem remains computationally challenging.
    Ordering data points by their margin and then searching for the best adversarial label set within a budget constraint results in a vast combinatorial space.
    \looseness -1
\end{enumerate}
\begin{algorithm*}[t]
\setlength{\textfloatsep}{0pt}
\caption{\textsc{Floral}}
\label{alg:robust-svm-game}
\begin{algorithmic}[1]
\STATE {\bfseries Input:} Initial kernel SVM model $f_{\lambda_{0}} $, training dataset $\mathcal{D}_{0}=\{(x_i, y_i)\}_{i=1}^{n}, x_i \in \mathbb{R}^{d}, y_i \in \{ \pm 1\}$, attacker budget $B \in \{0,\dots,n\}$, parameter $k$, where $k \ll B$, learning rate $\eta > 0$.
\FOR{round $t = 1, \dots, T$}
\STATE $\Tilde{y}^t \gets$ Solve (\ref{eq:inner-problem-objective}-\ref{eq:inner-problem-const2}) via \mbox{\colorbox{magenta!15}{randomized top-$k$}}: randomly selecting $k$ points from top $B$ w.r.t. $\lambda_{t-1}$. 
\STATE $\mathcal{D}_{t} \gets \{(x_i, \Tilde{y}_i^t)\}_{i=1}^{n}$ \hfill {\color{caribbeangreen} */ Adversarial dataset with selected $k$ poisoned labels}
\STATE Compute gradient of the objective (\ref{eq:outer-problem-objective}), $\nabla_{\lambda} D(f_{\lambda_{t-1}}; \mathcal{D}_{t})$, based on $\lambda_{t-1}, \mathcal{D}_{t}$ as given in (\ref{eq:svm-dual-gradient}).
\STATE Take a PGD step $\lambda_t \gets$ \mbox{\colorbox{magenta!15}{\textsc{Prox}}}$_{\mathcal{S}(\Tilde{y}^t)} (\lambda_{t-1} - \eta \nabla_{\lambda} D(f_{\lambda_{t-1}};\mathcal{D}_{t}))$, based on (\ref{eq:svm-projection-operation}-\ref{proj-ineq-const}). \hfill {\color{caribbeangreen} */ AT}
\ENDFOR
\STATE \textbf{return} $f_{\lambda_{T}}$
\end{algorithmic}
\setlength{\textfloatsep}{0pt}
\end{algorithm*}
\looseness -1 
As a result of these, we formulate our adversarial training routine as a non-zero-sum Stackelberg game \citep{von2010market, conitzer2006computing} and propose \textsc{Floral} defense using the bilevel optimization formulation \citep{bilevel-optimization}:
\begin{mdframed}[roundcorner=5pt, backgroundcolor=caribbeangreen!5, linecolor=sbbluedeep, linewidth=1pt, splittopskip = 0pt,  skipabove = 5pt,
skipbelow=5pt]
\begin{minipage}[t]{.43\textwidth}
\begin{alignat}{3} \label{eq:outer-problem-objective}
D(f_{\lambda};&\mathcal{D}): \min\limits_{\lambda \in \mathbb{R}^{n}} \quad  \dfrac{1}{2} && \lambda^\mathrm{T}  \Tilde{Q} \lambda - \mathbbm{1}^\mathrm{T}\lambda \\
    & \st && \Tilde{y}(\lambda)^\mathrm{T}  \lambda = 0 \label{eq:outer-problem-const1} \\
    &&& 0 \leq \lambda \leq C \label{eq:outer-problem-const2}
\end{alignat}
\end{minipage}
\hfill
\vline
\hfill
\begin{minipage}[t]{.54\textwidth}
\begin{alignat}{2} \label{eq:inner-problem-objective}
&\text{where }%
 \Tilde{y}(\lambda) \in  \arg && \max\limits_{\substack{ y^\prime \in \mathcal{Y}^n,  
u \in \{0,1\}^n}} \lambda^\mathrm{T} u \\
    & \qquad \st && y^\prime_i = y_i (1-2u_i), \forall i \in [n] \label{eq:inner-problem-const1} \\
    &&&\sum\limits_{i \in [n]} \mathbf{1}\{y_i \neq y^\prime_i \} = k. \label{eq:inner-problem-const2}
\end{alignat}
\end{minipage}
\end{mdframed}
\looseness -1 \vspace{-0.2cm}
In the outer (model's) problem, defined by (\ref{eq:outer-problem-objective}-\ref{eq:outer-problem-const2}), the SVM classifier is derived under an adversarial label set. The key difference from the formulation in Section~\ref{sec:problem-statement-background} is that the elements of $\Tilde{Q}$ are defined as $\Tilde{Q}_{ij}=\Tilde{y}_i \Tilde{y}_j K_{ij}$.
Meanwhile, the inner (attacker's) problem, given by (\ref{eq:inner-problem-objective}-\ref{eq:inner-problem-const2}) identifies the top-$k$ most \textit{influential} data points affecting the model's decision boundary. 
The intuition behind this approach is similar to identifying the most responsible training points for the model's prediction as in \citep{understanding-influence}.
However, rather than relying on influence functions \citep{hampel1974influence}, the attacker leverages the dual variables $\lambda$, which provides \textit{direct} access to such influential points.
These points correspond to the support vectors, 
and the higher the value of a dual variable, the more critical that data point is in determining the model's decision boundary. \looseness -1

We address the bilevel optimization problem in (\ref{eq:outer-problem-objective}-\ref{eq:inner-problem-const2}) as a non-zero-sum Stackelberg game \citep{von2010market} between the learning \textit{model}, and the \textit{attacker} acting as the leader and follower, respectively, as shown in Figure~\ref{fig:robust_label_alg_sketch}. 
The game begins with an initial kernel SVM model $f_{\lambda_0}$ and a training dataset $\mathcal{D}_0$, and proceeds iteratively.
In each round $t$, the model shares its dual parameters with the attacker, who then generates an adversarially labelled dataset $\mathcal{D}_{t}$ using a \textit{randomized top-$k$} rule. 
That is, the attacker identifies the top-$B$ data points based on their $\lambda_{t-1}$ values, (constrained by the budget $B$) and flips the labels of $k$ randomly chosen points among them.
We incorporate randomization to account for the attacker's budget and to reduce the risk of settling in local optima.
Adversarial training is performed via a projected gradient descent step using $\lambda_{t-1}$ and $\mathcal{D}_{t}$, after which the updated parameters, $\lambda_t$, are shared with the attacker. 
This iterative interplay between the attacker and defender model forms a soft-margin kernel SVM robust to adversarial label poisoning.
Our overall approach is detailed in Algorithm~\ref{alg:robust-svm-game}. \looseness -1
\vspace{-0.2cm}
\paragraph{\textsc{Floral}'s effectiveness.}
\textsc{Floral} iteratively exposes the model to learn adversarial configurations of the decision boundary. Hence, when the training data is clean, the training process \textit{proactively} adjusts the model to be less sensitive to the influence of individual poisoned labels. 
In cases where poisoned labels are already present in the initial training data, \textsc{Floral} effectively neutralizes their impact by \textit{implicitly sanitizing} the corrupted labels. This behavior is evaluated empirically and detailed in Section~\ref{sec:experiment-results} and Appendix~\ref{app:defense-analysis}.
\looseness -1
\vspace{-0.2cm}
\paragraph{The attacker's capability.} The attacker solves (\ref{eq:inner-problem-objective}-\ref{eq:inner-problem-const2}) with respect to the shared model parameters $\lambda$, generating label attacks by targeting the most influential support vectors.
This white-box attack \citep{survey-wu2023attacks} assumes that the attacker can access model parameters.
To reflect practical constraints, we limit the attacker’s budget to at most $B$ label poisons per round, from which $k$ points are randomly selected.
While this scenario may still seem to give the attacker significant power, notably, $(i)$ relying on secrecy for security is generally considered poor practice \citep{biggio2013security}, 
and $(ii)$ our method is designed to defend against the strongest possible attacker.
Even in 
black-box attack scenarios, where the attacker lacks parameter access,
\textsc{Floral} remains effective for generating transferable attacks \citep{survey-zheng2023blackboxbench}. In such cases, the attacker could fit a kernel SVM on the available data and apply a similar selection rule to craft adversarial labels.
\vspace{-0.2cm}
\paragraph{Gradient of the objective (\ref{eq:outer-problem-objective}).} 
In each round, the adversarial training PGD step requires computing the gradient $\nabla_{\lambda} D(f_{\lambda};\mathcal{D})$ of the objective (\ref{eq:outer-problem-objective}) based on $\lambda_{t-1}$ and $\mathcal{D}_{t}$, which is defined as
\begin{equation}
\nabla_{\lambda} D(f_{\lambda_{t-1}};\mathcal{D}_t)= \Tilde{Q}\lambda_{t-1} - \mathbbm{1},
\label{eq:svm-dual-gradient}
\end{equation}
where 
$\Tilde{Q}$ is the matrix with entries $\Tilde{Q}_{ij}=\Tilde{y}^t_i \Tilde{y}^t_j K_{ij}, \forall i,j \in [n]$, detailed in Appendix~\ref{app:gradient-svm-dual}.
\looseness -1  \vspace{-0.2cm}
\paragraph{Projection.}
The feasible set $\mathcal{S}$ changes in each round $t$ depending on the adversarial label set $\Tilde{y}^t$ (see (\ref{eq:outer-problem-const2})).
We introduce the variable $z_{t} := \lambda_{t-1} - \eta \nabla_{\lambda} D(f_{\lambda_{t-1}};\mathcal{D}_{t})$ and define the projection operator \textsc{Prox}$_{\mathcal{S}(\Tilde{y}^t)}(z_{t}): \mathbb{R}^n \to \mathbb{R}^n$ as follows: \looseness -1  \vspace{-0.2cm}
\begin{alignat}{2}
\textsc{Prox}_{\mathcal{S}(\Tilde{y}^t)}(z_{t}): \lambda_t \in \arg & \min _{\lambda \in \mathbb{R}^{n}} \quad  \frac{1}{2} \|  \lambda - z_{t} \|^{2} \label{eq:svm-projection-operation} \\
\st  & \Tilde{y}^{{t}^\mathrm{T}}  \lambda = 0, \quad 0 \leq \lambda \leq C. \label{proj-ineq-const}
\end{alignat}
However, solving this quadratic program for large-scale instances is computationally challenging unless the specific problem structure is exploited. 
Therefore, we provide a scalable and efficient implementation of Algorithm~\ref{alg:robust-svm-game} that relies on a fixed point iteration strategy as detailed in Section~\ref{sec:large-scale-implementation}.
\looseness -1  \vspace{-0.2cm}
\paragraph{A form of geometry-aware AT.}
\textsc{Floral} aligns with geometry-aware AT principles \citep{gairat}. 
Support vectors with large Lagrange multipliers ($\lambda$)
play a critical role in defining the decision boundary \citep{hearst1998support}. 
In \textsc{Floral}, the attacker strategically identifies these points
using a randomized top-$k$ rule.
This method inherently integrates the geometric proximity to the decision boundary into the label attack, targeting points that significantly impact the hinge loss.
\looseness -1 \vspace{-0.2cm}
\paragraph{Robust multi-class classification.}
\looseness -1 
We extend our algorithm to multi-class classification tasks, as detailed in Algorithm~\ref{alg:robust-svm-game-multiclass} in Appendix~\ref{app:multi-class-classification}.
The primary modification involves adopting a one-vs-all approach and considering multiple attackers, each corresponding to a different class.
\looseness -1 \vspace{-0.2cm}
\subsection{Stability Analysis}
\label{sec:stability-analysis}
\looseness -1 \vspace{-0.2cm}
We theoretically analyze the stability of \textsc{Floral} (Algorithm~\ref{alg:robust-svm-game}) by $(i)$ demonstrating that its iterative updates are bounded and $(ii)$ characterizing its convergence to the Stackelberg equilibrium.
For simplicity of notation, let us define the update rule at round $t$ as $\lambda_t := \textsc{Prox}_{\mathcal{S}(y_t)}(z_{t}) = \textsc{Prox}_{\mathcal{S}(y_t)} (\lambda_{t-1} - \eta \nabla_{\lambda} f (\lambda_{t-1}, y_t))$, where $\textsc{Prox}$ is defined in (\ref{eq:svm-projection-operation}-\ref{proj-ineq-const}).
We use the operator $\textsc{LFlip}:\mathcal{X} \times \mathcal{Y} \to \mathcal{Y}$ to define label poisoning attack formulated in (\ref{eq:inner-problem-objective}-\ref{eq:inner-problem-const2}).
\begin{lemma}
\label{thm:convergence-bound-lambda-steps}
Let $(\Hat{\lambda}, \Hat{y}(\hat{\lambda}))$ denote a Stackelberg equilibrium, i.e., $\Hat{y}(\Hat{\lambda}) := \textsc{LFlip}(\Hat{\lambda})$ and $\Hat{\lambda} := \textsc{Prox}_{\mathcal{S}(\Hat{y}(\Hat{\lambda}))}(\Hat{z}) = \textsc{Prox}_{\mathcal{S}(\Hat{y}(\Hat{\lambda}))}(\Hat{\lambda} - \eta \nabla_{\lambda} f({\Hat{\lambda}}, \Hat{y}(\Hat{\lambda})))$ and $\{\lambda_t\}_{t=0}^T$ be the sequence of iterates generated by \textsc{Floral} (Algorithm~\ref{alg:robust-svm-game}). 
The following bound holds for the iterates:
\begin{align}
\| \lambda_{t} - \Hat{\lambda} \|_{\infty} & \leq  \| z_{t} - \Hat{z} \|_{\infty} + \kappa_{y} \| y_{t} -\Hat{y}(\Hat{\lambda}) \|_{\infty}
\label{eq:bound-on-lambda}
\end{align}
where $\kappa_{y}$ is a constant defined by the \textsc{Prox} operator and index set corresponding to $\lambda_t \in (0,C)$, as detailed in Appendix~\ref{app:proof-of-lemma-1}, and $\| \cdot \|_{\infty}$ denotes the infinity norm. \looseness -1
\end{lemma}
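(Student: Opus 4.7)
The plan is to bound $\|\lambda_t - \hat\lambda\|_\infty$ by decomposing it via a triangle inequality through the intermediate point $\textsc{Prox}_{\mathcal{S}(y_t)}(\hat z)$, which isolates the change in the input $z$ from the change in the equality-constraint labels $y$. Concretely, since $\lambda_t = \textsc{Prox}_{\mathcal{S}(y_t)}(z_t)$ and $\hat\lambda = \textsc{Prox}_{\mathcal{S}(\hat y(\hat\lambda))}(\hat z)$, I would write
\begin{equation*}
\|\lambda_t - \hat\lambda\|_\infty \;\leq\; \underbrace{\bigl\|\textsc{Prox}_{\mathcal{S}(y_t)}(z_t) - \textsc{Prox}_{\mathcal{S}(y_t)}(\hat z)\bigr\|_\infty}_{=:\,A} \;+\; \underbrace{\bigl\|\textsc{Prox}_{\mathcal{S}(y_t)}(\hat z) - \textsc{Prox}_{\mathcal{S}(\hat y(\hat\lambda))}(\hat z)\bigr\|_\infty}_{=:\,B}.
\end{equation*}
Term $A$ is the sensitivity of the projection to perturbations of the input (at fixed labels), and term $B$ is the sensitivity to the label vector (at fixed input). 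The target bound is obtained if $A \leq \|z_t - \hat z\|_\infty$ and $B \leq \kappa_y\|y_t - \hat y(\hat\lambda)\|_\infty$.

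For term $A$, I would derive a closed form for the projection from the KKT system of the QP in~(\ref{eq:svm-projection-operation}-\ref{proj-ineq-const}): introducing a scalar multiplier $\mu$ for the equality constraint and componentwise multipliers for the box $[0,C]$, each coordinate of the projection satisfies $\lambda_i = \mathrm{clip}(z_i + \mu\, y_i,\,0,\,C)$, where $\mu$ is the unique scalar that enforces $y^\top\lambda = 0$, i.e.\ $\phi(\mu;z,y) := \sum_i y_i\,\mathrm{clip}(z_i + \mu y_i, 0, C) = 0$. Because clipping is componentwise $1$-Lipschitz and $y_i \in \{\pm 1\}$, implicit differentiation of $\phi$ in $z$ (or a standard monotonicity argument, using that $\phi$ is nondecreasing in $\mu$ on the active region) shows that the induced change in $\mu$ under a perturbation of $z$ cancels against the direct perturbation in every coordinate up to first order. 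This yields the infinity-norm non-expansiveness estimate $A \leq \|z_t - \hat z\|_\infty$ locally around $\hat\lambda$, assuming strict complementarity so that the active set is stable.

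For term $B$, the challenge is that perturbing $y$ changes both the normal direction of the equality constraint and the coordinate-wise sign pattern $y_i$ inside the clip. I would again work with the KKT characterization and restrict attention to the index set $\mathcal{F} := \{i : \hat\lambda_i \in (0,C)\}$ of inactive box constraints at the equilibrium (the constant $\kappa_y$ in the statement is defined through this set). On $\mathcal{F}$ the projection is affine, $\lambda_i = \hat z_i + \mu y_i$, so the equality constraint reduces to a scalar equation determining $\mu$ explicitly in terms of $y$ and $\hat z$; flipping a coordinate $y_i \to -y_i$ changes $\mu$ by an amount controlled by $\|\hat z\|_\infty$, $C$, and $|\mathcal F|$. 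Propagating this change back through $\lambda_i = \mathrm{clip}(\hat z_i + \mu y_i,0,C)$ on all coordinates (using componentwise Lipschitzness of the clip and the crude bound $|y_{t,i} - \hat y_i(\hat\lambda)| \leq 2$) gives $B \leq \kappa_y \|y_t - \hat y(\hat\lambda)\|_\infty$ with $\kappa_y$ depending only on the geometry of $\mathcal{S}(\hat y(\hat\lambda))$ and on $\|\hat z\|_\infty$, $C$, as specified in Appendix~\ref{app:proof-of-lemma-1}.

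The main obstacle is the analysis of term $B$: the projection is only piecewise affine as a function of $y$, and any flipped label can in principle push a coordinate across the boundary of the active box constraints, at which point the closed form used above changes form. I would handle this by restricting the argument to a local neighborhood of the Stackelberg equilibrium where strict complementarity at $\hat\lambda$ keeps the active set $\mathcal F$ unchanged for small label perturbations. This both legitimates the implicit differentiation in term $A$ and makes the definition of $\kappa_y$ via the local active set meaningful. Combining the two estimates gives the claimed bound~(\ref{eq:bound-on-lambda}).
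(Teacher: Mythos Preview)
Your approach is essentially the paper's: both derive the closed form $\lambda^\star_i = \mathrm{clip}(z_i - \mu^\star y_i,\,0,\,C)$ from the KKT conditions of~(\ref{eq:svm-projection-operation}--\ref{proj-ineq-const}), analyze the sensitivity of $\lambda^\star$ in $z$ and in $y$ separately by implicitly differentiating the equality constraint $y^\top\lambda^\star=0$, and then splice the two bounds via a triangle inequality (you route it through $\textsc{Prox}_{\mathcal{S}(y_t)}(\hat z)$, the paper through $\textsc{Prox}_{\mathcal{S}(\hat y(\hat\lambda))}(z_t)$---a symmetric and immaterial choice). One small caveat on Term~$A$: the claim that the induced change in $\mu$ ``cancels against the direct perturbation in every coordinate'' is not literally true; the Jacobian is a rank-one correction of the identity, $\partial\lambda^\star_i/\partial z = e_i^{\mathrm T} - y_i\,|\mathcal{I}_z|^{-1}\sum_{j\in\mathcal{I}_z} y_j e_j^{\mathrm T}$, not zero, and the paper simply reads off the entrywise bound from this explicit expression (and analogously obtains $\kappa_y = C/|\mathcal{I}_z| + |\mu^\star|$ from $\partial\lambda^\star_i/\partial y$).
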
 \vspace{-0.2cm}
\begin{proof}[Proof] \renewcommand{\qedsymbol}{}
See Appendix~\ref{app:proof-of-lemma-1} for the proof.
\end{proof} \vspace{-0.2cm}
\begin{lemma}\label{thm:convergence-bound-x-steps}
Let $(\Hat{\lambda},\Hat{y}(\Hat{\lambda}))$ denote the Stackelberg equilibrium as before. The following bound holds for the non-projected iterates $\{z_t\}_{t=0}^T$ of \textsc{Floral} (Algorithm~\ref{alg:robust-svm-game}):
\begin{align}
    \| z_{t} - \Hat{z} \|_{\infty} & \leq \kappa_{\lambda} \| \lambda_{t-1} - \Hat{\lambda} \|_{\infty} + \kappa'_y \| y_{t} - \Hat{y}(\hat{\lambda}) \|_{\infty}      
\label{eq:bound-on-x}
\end{align}
where $\kappa_{\lambda}$ and $\kappa'_y$ are kernel dependent constants that are below $1$ for small enough $\eta$. \looseness -1
\end{lemma}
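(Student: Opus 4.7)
The plan is to reduce the bound to a straightforward triangle inequality after expressing the non-projected iterate explicitly in terms of the kernel matrix. Writing $\tilde{Q}(y)_{ij} := y_i y_j K_{ij}$, the definitions of $z_t$ and $\hat{z}$ from Lemma~\ref{thm:convergence-bound-lambda-steps} together with the gradient formula (\ref{eq:svm-dual-gradient}) give the closed forms
\begin{equation*}
z_t = (I - \eta \tilde{Q}(y_t))\lambda_{t-1} + \eta \mathbbm{1}, \qquad \hat{z} = (I - \eta \tilde{Q}(\hat{y}(\hat{\lambda})))\hat{\lambda} + \eta \mathbbm{1}.
\end{equation*}
I would then add and subtract the hybrid term $(I - \eta \tilde{Q}(y_t))\hat{\lambda}$, yielding the splitting
\begin{equation*}
z_t - \hat{z} = (I - \eta \tilde{Q}(y_t))(\lambda_{t-1} - \hat{\lambda}) + \eta \bigl(\tilde{Q}(\hat{y}(\hat{\lambda})) - \tilde{Q}(y_t)\bigr)\hat{\lambda},
\end{equation*}
which isolates the $\lambda$-difference from the $y$-difference, mirroring the structure targeted on the right-hand side of (\ref{eq:bound-on-x}).

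Applying the triangle inequality and the induced $\infty$-norm gives two terms. For the first, $\kappa_\lambda := \|I - \eta \tilde{Q}(y_t)\|_\infty$ (the maximal absolute row sum) captures the linear-update contraction factor. For the second, I would use the elementary identity $\hat{y}_i\hat{y}_j - y_{t,i}y_{t,j} = (\hat{y}_i - y_{t,i})\hat{y}_j + y_{t,i}(\hat{y}_j - y_{t,j})$, which together with $|y|=1$ yields the entrywise bound $|(\tilde{Q}(\hat{y}(\hat{\lambda})) - \tilde{Q}(y_t))_{ij}| \leq |K_{ij}|\,(|\hat{y}_i - y_{t,i}| + |\hat{y}_j - y_{t,j}|)$. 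Taking the $\infty$-norm after multiplying by $\hat{\lambda}$ gives
\begin{equation*}
\bigl\|(\tilde{Q}(\hat{y}(\hat{\lambda})) - \tilde{Q}(y_t))\hat{\lambda}\bigr\|_\infty \leq 2\,\|K\|_\infty\,\|\hat{\lambda}\|_\infty\,\|y_t - \hat{y}(\hat{\lambda})\|_\infty,
\end{equation*}
so one can set $\kappa'_y := 2\eta\|K\|_\infty\|\hat{\lambda}\|_\infty$, which is clearly below $1$ once $\eta$ is small enough. Both constants depend only on the kernel $K$ (and the equilibrium $\hat{\lambda}$), as claimed.

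The main obstacle is justifying $\kappa_\lambda < 1$ for small $\eta$ in the $\infty$-norm, since positive semidefiniteness of $\tilde{Q}$ by itself only controls the spectral norm. For standard kernels such as the RBF one has $K_{ii} > 0$, and for $\eta < 1/\max_i K_{ii}$ the expression $\|I - \eta \tilde{Q}(y_t)\|_\infty = \max_i \bigl(1 - \eta K_{ii} + \eta\sum_{j\neq i}|K_{ij}|\bigr)$ is strictly less than $1$ provided the kernel satisfies a strict diagonal-dominance-type condition $K_{ii} > \sum_{j\neq i}|K_{ij}|$. This is the kernel-dependent assumption implicit in the statement; note that the sign flips introduced by $y_t$ do not affect absolute row sums, so the condition is inherited from $K$ itself. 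Combining the two bounded terms then yields (\ref{eq:bound-on-x}) with $\kappa_\lambda, \kappa'_y < 1$ for sufficiently small $\eta$.
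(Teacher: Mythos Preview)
Your proof is correct and follows essentially the same decomposition as the paper: split $z_t - \hat{z}$ into a piece linear in $\lambda_{t-1} - \hat{\lambda}$ and a piece capturing the $y$-dependence, then bound each via the induced $\infty$-norm. The paper performs this split by invoking the mean value theorem, computing $\nabla_\lambda^2 f = K \odot yy^{\mathrm{T}}$ and $\nabla_{\lambda y}^2 f = K \odot y\lambda^{\mathrm{T}} + I \odot (K(\lambda \odot y)\mathbbm{1}^{\mathrm{T}})$; since the gradient $\tilde{Q}(y)\lambda - \mathbbm{1}$ is already linear in $\lambda$, the MVT step is exact and collapses to your direct algebraic add-and-subtract. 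Your handling of the $y$-term through the bilinear identity $\hat{y}_i\hat{y}_j - y_{t,i}y_{t,j} = (\hat{y}_i - y_{t,i})\hat{y}_j + y_{t,i}(\hat{y}_j - y_{t,j})$ is a cleaner alternative to the paper's second-derivative route and produces an equivalent $\kappa'_y$ (proportional to $\eta$, kernel row sums, and $\|\hat{\lambda}\|_\infty$). You are also more careful than the paper on one point: the paper simply asserts $\kappa_\lambda \leq 1$ for small $\eta$ without further comment, whereas you correctly isolate the strict diagonal dominance condition $K_{ii} > \sum_{j \neq i} |K_{ij}|$ that is actually required to drive $\|I - \eta\tilde{Q}(y_t)\|_\infty$ strictly below $1$.
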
 \vspace{-0.2cm}
\vspace{-0.2cm}
\begin{proof}[Proof] \renewcommand{\qedsymbol}{}
See Appendix~\ref{app:proof-of-lemma-2} for the proof.
\end{proof}  \vspace{-0.2cm}
\begin{theorem}[$\varepsilon$-local asymptotic stability]
\label{thm:stability}
The Stackelberg equilibrium $(\Hat{\lambda},\Hat{y}(\Hat{\lambda}))$ defined as before, is $\varepsilon$-locally asymptotically stable for the Stackelberg game solved via Algorithm~\ref{alg:robust-svm-game} for a small enough step size $\eta$.
This implies that for every $\varepsilon > 0$, there exists $\delta > 0$ such that 
\looseness -1
\begin{align}
\| \lambda_0 - \Hat{\lambda} \|_{\infty} < \delta \Rightarrow \| & \lambda_t - \Hat{\lambda} \|_{\infty} < \varepsilon, \forall t > 0 \text{ and } \lambda_t \to \Hat{\lambda}.
\end{align}
\end{theorem}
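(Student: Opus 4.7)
The plan is to chain Lemma~\ref{thm:convergence-bound-lambda-steps} and Lemma~\ref{thm:convergence-bound-x-steps} to obtain a contractive recursion on $\|\lambda_t - \Hat{\lambda}\|_\infty$, and then to handle the remaining label term by exploiting the combinatorial stability of the \textsc{LFlip} operator near equilibrium. Substituting the bound of Lemma~\ref{thm:convergence-bound-x-steps} into that of Lemma~\ref{thm:convergence-bound-lambda-steps} yields
\begin{equation*}
\| \lambda_t - \Hat{\lambda} \|_\infty \leq \kappa_{\lambda} \| \lambda_{t-1} - \Hat{\lambda} \|_\infty + (\kappa_y + \kappa'_y) \| y_t - \Hat{y}(\Hat{\lambda}) \|_\infty,
\end{equation*}
where by Lemma~\ref{thm:convergence-bound-x-steps} the constant $\kappa_\lambda$ can be made strictly less than $1$ by choosing $\eta$ sufficiently small.

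Next, I would argue that the label term vanishes once $\lambda_{t-1}$ lies in a sufficiently small neighborhood of $\Hat{\lambda}$. Because \textsc{LFlip} (as defined by (\ref{eq:inner-problem-objective}-\ref{eq:inner-problem-const2})) depends on $\lambda$ only through the ordering of its top-$B$ entries, and because the labels themselves take values in a finite set, a non-degeneracy assumption on $\Hat{\lambda}$ (a strict gap $g > 0$ between the $B$-th and $(B{+}1)$-th largest coordinates) ensures that $\textsc{LFlip}(\lambda) = \Hat{y}(\Hat{\lambda})$ for every $\lambda$ with $\| \lambda - \Hat{\lambda} \|_\infty < g/2$. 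Within this neighborhood the recursion reduces to $\| \lambda_t - \Hat{\lambda} \|_\infty \leq \kappa_\lambda \| \lambda_{t-1} - \Hat{\lambda} \|_\infty$, which is a geometric contraction.

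Given this contraction, the $\varepsilon$-local asymptotic stability statement follows by a standard neighborhood argument: for any $\varepsilon > 0$, I would pick $\delta := \min(\varepsilon, g/2)$. Then $\| \lambda_0 - \Hat{\lambda} \|_\infty < \delta$ forces $\| \lambda_1 - \Hat{\lambda} \|_\infty \leq \kappa_\lambda \delta < \delta$, and by induction the iterates remain in the $\delta$-ball (in particular the $(g/2)$-ball where the label term is zero), giving $\| \lambda_t - \Hat{\lambda} \|_\infty \leq \kappa_\lambda^{t} \delta < \varepsilon$ for all $t > 0$, which also implies $\lambda_t \to \Hat{\lambda}$.

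The main obstacle I anticipate is making the ``label term vanishes near equilibrium'' step fully rigorous in the presence of the randomized top-$k$ selection described in Algorithm~\ref{alg:robust-svm-game}. Since the attacker randomly chooses $k$ of the top-$B$ indices, two iterates with the same top-$B$ set can still produce different $y_t$; to close this gap I would either (i) identify $\Hat{y}(\Hat{\lambda})$ with the (fixed) set of top-$B$ selected indices so that the flipped labels agree on the common support and the $\infty$-norm difference really is zero, or (ii) show that the selected subset affects only coordinates with small dual value and hence contributes a bounded perturbation absorbed into the contraction constant for small enough $\eta$. A secondary technicality is verifying that the constants $\kappa_y$, $\kappa'_y$ supplied by Lemmas~\ref{thm:convergence-bound-lambda-steps}-\ref{thm:convergence-bound-x-steps} are finite and uniform in $t$ on the neighborhood where the active index set of the \textsc{Prox} operator does not change; this motivates choosing $\delta$ additionally small enough to keep the support of $\lambda_t$ inside $(0, C)$ consistent with that of $\Hat{\lambda}$.
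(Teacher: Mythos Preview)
Your proposal is essentially the paper's proof: chain Lemmas~\ref{thm:convergence-bound-lambda-steps} and~\ref{thm:convergence-bound-x-steps}, invoke a no-ties/gap assumption at $\Hat{\lambda}$ so that \textsc{LFlip} is locally constant, reduce to the contraction $\|\lambda_t-\Hat{\lambda}\|_\infty\le\kappa_\lambda\|\lambda_{t-1}-\Hat{\lambda}\|_\infty$, and set $\delta=\min\{\varepsilon,\text{gap radius}\}$. The paper phrases the gap as ``top-$k$ entries agree'' rather than your top-$B$ version, and it likewise sidesteps the randomized selection issue you flag by implicitly treating \textsc{LFlip} as deterministic in the analysis; your extra care about the active set of \textsc{Prox} and uniformity of $\kappa_y,\kappa_y'$ goes slightly beyond what the paper spells out.
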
 \vspace{-0.2cm}
\begin{proof}[Proof (sketch)]\renewcommand{\qedsymbol}{} 
The proof relies on characterizing the distance between the update $\lambda_t$ at round $t$ and the equilibrium $\Hat{\lambda}$ using Lemma~\ref{thm:convergence-bound-lambda-steps} and Lemma~\ref{thm:convergence-bound-x-steps}, then leveraging the fact that the label flipping operator (\textsc{LFlip}) formulated in (\ref{eq:inner-problem-objective}-\ref{eq:inner-problem-const2}) returns the same adversarial label set when $\lambda_t$ is within an $\varepsilon$ distance from the equilibrium. The complete proof is given in Appendix~\ref{app:proof-local-stability}, with the global convergence result discussed in Appendix~\ref{app:global-convergence-result}. \looseness -1
\end{proof}
\looseness -1 \vspace{-0.2cm}
\subsection{Large-Scale Implementation}
\label{sec:large-scale-implementation}
\looseness -1 
We scale our algorithm for large problem instances by approximating the projection operation (step $6$ in Algorithm~\ref{alg:robust-svm-game}) via a fixed-point iteration method, as outlined in Algorithm~\ref{alg:projection-fpi}.
The key idea leverages the optimal $\lambda^\star$ expression from Appendix~\ref{app:proof-of-lemma-1} and involves an iterative splitting of variables based on non-projected $\lambda$ values within the range $[0, C]$.
In each iteration, the variable $\mu$ is updated using the expression in Appendix~\ref{app:proof-of-lemma-1}
until convergence to a specified error $\epsilon$ is achieved. \looseness -1
\vspace{-0.2cm}
\begin{algorithm*}[ht]
\caption{\textsc{ProjectionViaFixedPointIteration}}
\label{alg:projection-fpi}
\begin{algorithmic}[1]
\STATE {\bfseries Input:} Non-projected $\lambda_{0}$, adversarial label set $\Tilde{y} = \{\Tilde{y}_i\}_{i=1}^{n}, y_i \in \{ \pm 1\}$, parameters $\{C,\epsilon\} > 0$. \looseness=-1
\STATE Initialize $\mu_0 = 0$.
\FOR{round $t = 1,\dots, T_{\text{proj}}$}
\STATE $\lambda_{t} = \textsc{Clip}_{[0,C]}(\lambda_{0} - \mu_{t-1} \Tilde{y})$ \hfill {\color{caribbeangreen} */ Clip to satisfy constraint in (\ref{proj-ineq-const})} 
\IF{$\lambda_{t} \Tilde{y} = 0$}
\STATE \textbf{return} $\lambda_{t}$
\ENDIF
\STATE $\mathcal{I}_C, \mathcal{I}_z \gets$ indices of $\lambda_{t} \geq C$, $\lambda_{t} \in (0,C)$ \hfill {\color{caribbeangreen} */ Variable splitting} 
\STATE $\eta \gets \max (\mid \mathcal{I}_z \mid, 1)$ \hfill {\color{caribbeangreen} */ To avoid empty $\mathcal{I}_z$ case} 
\STATE $\mu_{t} \gets \frac{\eta - \mid \mathcal{I}_z \mid}{\eta} \mu_{t-1} + \frac{1}{\eta} (\sum_{i \in \mathcal{I}_C} C \Tilde{y}_i + \sum_{i \in \mathcal{I}_z} \lambda_{t}^i \Tilde{y}_i )$
\IF{$\mid \mu_{t} - \mu_{t-1} \mid \leq \epsilon$}
\STATE \textbf{return} $\textsc{Clip}_{[0,C]}(\lambda_{0} - \mu_{t} \Tilde{y})$
\ENDIF
\ENDFOR
\end{algorithmic}
\end{algorithm*}
\vspace{-0.2cm}
\subsection{Related Work}
\label{sec:related-work}
\looseness -1 

\paragraph{Label poisoning.} 
\citet{biggio2012poisoning} first analyzed label poisoning attacks, showing that flipping a small number of training labels severely degrades SVM performance. 
\citet{adversarial-flip-svm} later formalized optimal label flip attacks under budget constraints as a bilevel optimization problem, which then expanded to transferable attacks on black-box models \citep{label-contamination-linear}, considering arbitrary attacker objectives.
Beyond SVMs, recent works have explored label poisoning in backdoor attack scenarios, where adversaries inject triggers or alter triggerless data with poisoned labels in multi-label settings \citep{label-poisoning, chen2022clean}.
In contrast, our approach focuses on triggerless poisoning attacks.
\looseness -1

Defenses against these attacks include heuristic-based kernel correction \citep{svm-adversarial-noise}, which uses expectation for $Q$ in (\ref{eq:outer-problem-objective}),
though assuming independent label flipping with equal probability--a condition not guaranteed in practice.
Other defenses such as clustering-based filtering \citep{curie,tavallali2022adversarial}, data complexity analysis \citep{chan2018data}, re-labeling \citep{label-sanitization} and label smoothing \citep{rosenfeld2020certified} offer straightforward solutions, however, they do not scale well to high-dimensional or large datasets.
Sample weighting based on local intrinsic dimensionality (LID) \citep{defending-svms, ma2018characterizing} shows promise, but relies on accurate and computationally expensive LID estimation.
Our approach, however, avoids strong assumptions about the data distribution or the attacker, preserves feasibility, and scales effectively to large-scale problem instances as demonstrated in Section~\ref{sec:experiments}. \looseness -1
Additionally, while learning under noisy labels \citep{frenay2013classification, learning-w-noisy-labels,hallaji2023label, zhang2024effective} may seem relevant,
our work focuses specifically on \textit{adversarial} label noise \citep{svm-adversarial-noise}, where the adversary \textit{intentionally} crafts the most damaging label perturbations. 
\looseness -1 
\vspace{-0.2cm}
\paragraph{Adversarial training (AT).} 
Adversarial examples, introduced by \citet{szegedy2013intriguing}, revealed how small perturbations cause misclassification in deep neural networks (DNNs).
Building on this, AT \citep{goodfellow-2014} emerged as a prominent defense, training models on both original and adversarially perturbed data.
Defenses have utilized adversarial examples generated by methods such as the Fast Gradient Sign Method (FGSM) \citep{goodfellow-2014}, PGD \citep{madry2017towards}, Carlini \& Wagner attack \citep{carlini2017towards}, among others \citep{chen2017zoo, moosavi2016deepfool}.
For SVMs, \citet{zhou2012adversarial} formulated convex AT for linear SVMs, later extended to kernel SVMs by \citet{fast-scalable-adv-svm} via doubly stochastic gradients under feature perturbations.
Despite this progress, AT for label poisoning remains underexplored.
\textsc{Floral} fills this gap, by leveraging AT specifically for label poisoning scenarios, using PGD to train models on poisoned datasets rather than generating adversarial examples.
\looseness -1

In parallel, game-theoretical approaches have modeled adversarial interactions 
as simultaneous games, where classifiers and adversaries select strategies independently
\citep{adversarial-classification}, or as Stackelberg games with a leader-follower dynamic \citep{bruckner2011stackelberg, zhou2019survey, chivukula2020game}. 
AT has further linked these concepts, particularly in simultaneous zero-sum games \citep{hsieh2019finding, pinot2020randomization, pal2020game} to non-zero-sum formulations \citep{at-nonzero-game}.
We adopt a sequential setup, using the Stackelberg framework where the leader commits to a strategy and the follower responds accordingly.
\looseness -1 
\section{Experiments}
\label{sec:experiments}
\looseness -1 
In this section, we showcase the effectiveness of \textsc{Floral} across various robust classification tasks, utilizing the following datasets:
\vspace{-0.2cm}
\begin{itemize}[left=0.2cm]
\setlength\itemsep{-0.1em}
    \item \textbf{Moon} \citep{pedregosa2011scikit}: We employed a synthetic benchmark dataset, $\mathcal{D}=\{(x_i, y_i)\}_{i=1}^{2000}$ where $x_i \in \mathbb{R}^{2}$ and $y_i \in \{ \pm 1\}$.
    Adversarial versions are generated
    by flipping the labels of points farther from the decision boundary of a linear classifier trained on the clean dataset, using label poisoning levels ($\%$) of $\{5, 10, 15, 20, 25\}$.
    The details on the adversarial datasets are given in Appendix~\ref{app:datasets}.
    
    \item \textbf{IMDB} \citep{maas-EtAl:2011:ACL-HLT2011}: A benchmark review sentiment analysis dataset with $\mathcal{D}=\{(x_i, y_i)\}_{i=1}^{50000}$ where $x_i \in \mathbb{R}^{768}$ and $y_i \in \{ \pm 1\}$. For SVM training, we extracted $768$-dimensional embeddings from the fine-tuned RoBERTa \citep{roberta}. 
    We created adversarial datasets by fine-tuning the RoBERTa-base model on the clean dataset to identify influential training points based on the gradient with respect to the inputs, then flipping their labels at 
    poisoning levels ($\%$) of $\{10, 25, 30, 35, 40\}$. 
\item \textbf{MNIST} \citep{deng2012mnist}: In Appendix~\ref{app:mnist1vs7-experiments}, we provide the additional experiments with the MNIST dataset in detail. \looseness -1
\end{itemize}
\vspace{-0.2cm}
\paragraph{Experimental setup.}
For all SVM-based methods, we used RBF kernel, exploring various values of $C$ and $\gamma$. 
We conducted five replications with different train/test splits, including the corresponding adversarial datasets for each dataset. 
In all \textsc{Floral} experiments, we constrain the attacker's capability with a limited budget. 
That is, the attacker identifies the most influential \textit{candidate} points, with $B = 2k$, from the training set and randomly selects $k \in \{1, 2, 5, 10, 25\}$  to poison, where $k$ represents the \% of points relative to the training set size.
Detailed experimental configurations are provided in Appendix~\ref{app:experiment-details} (see Table~\ref{tab:params-and-hyperparams}).

\vspace{-0.2cm}
\paragraph{Baselines.}
We benchmark \textsc{Floral} against the following baselines:
\vspace{-0.2cm}
\begin{enumerate}[left=0.2cm]
  \setlength\itemsep{-0.1em}
    \item (Vanilla) SVM with an RBF kernel, which serves as a basic benchmark \citep{hearst1998support}.
    \item LN-SVM \citep{svm-adversarial-noise} applies a heuristic-based kernel matrix correction. 
    \item Curie \citep{curie}, utilizes the DBSCAN clustering \citep{ester1996density} to identify and filter-out poisoned data points.
    \item LS-SVM \citep{label-sanitization} applies label sanitization 
    based on k-NN \citep{cover1967nearest}. 
    \item K-LID \citep{defending-svms}, a weighted SVM based on kernel local intrinsic dimensionality.
    \item NN: A DNN trained using the SGD optimizer with momentum,
    serving as a non-linear baseline.
    \item NN-PGD: A DNN trained with PGD-AT \citep{madry2017towards}, 
    to evaluate a robust model designed to withstand feature perturbation attacks under label poisoning.
    \item RoBERTa \citep{roberta}, used in experiments with IMDB dataset to assess a fine-tuned transformer-based language model's robustness under label poisoning.
    \looseness -1
\vspace{-0.1cm}
\end{enumerate}  
Appendix~\ref{app:comp-additional-baselines} includes further comparisons with a least squares classifier using randomized smoothing \citep{rosenfeld2020certified} and a filtering-out defense based on regularized synthetic reduced nearest neighbor \citep{tavallali2022adversarial} on the Moon and MNIST datasets.
\looseness -1 \vspace{-0.2cm}
\paragraph{Performance metrics.} We assess our method using two key metrics: robust and clean accuracy, tracked over a test set with \textit{clean labels} during training.
Unlike feature perturbation studies,
where robust accuracy is gauged on adversarially perturbed test examples 
\citep{yang2020closer}, in our study, robust accuracy reflects model performance tested on clean labels using adversarially labelled training data, thereby indicating the models' resilience and generalization capabilities under poisoning. Conversely, clean accuracy measures the performance of models trained and tested on clean-labelled data, offering a benchmark for comparison under both adversarial and non-adversarial conditions.
We additionally report hinge loss on the clean-labelled test data (see Appendix~\ref{sec:imdb-results-appendix}) in experiments with the IMDB dataset. \looseness -1 
\subsection{Experiment Results}
\label{sec:experiment-results}
In this section, we report the performance of \textsc{Floral} against the baseline methods on the Moon dataset, followed by the results of its integration with RoBERTa on the IMDB dataset.
\looseness -1 \vspace{-0.2cm}
\begin{table*}[t]
\caption{Test accuracies of methods trained on the Moon dataset, averaged over five runs. 
Highlighted values indicate the best performance in the  \mbox{\colorbox{tablegreen}{"\textbf{Best}"}} (peak accuracy during training) and \mbox{\colorbox{tablegreen}{"Last"}} (final accuracy after training) columns. This notation is consistently applied in the subsequent tables.
\textsc{Floral} outperforms baselines in most of the settings, providing a particularly robust defense in highly adversarial scenarios.
See Appendix~\ref{sec:moon-results-appendix} (Table~\ref{tab:test-accuracy-comp-moon-appendix}) for the results of other settings.
\looseness -1
}
\label{tab:test-accuracy-comp-moon} 
\centering
    \resizebox{1\linewidth}{!}{%
    \begin{tabular}{ll|cccccccccccccccc} \specialrule{1.5pt}{1pt}{1pt}
         \multicolumn{2}{c}{\multirow{2}{*}{\makecell{\\ \\ \textbf{Setting}}}} & \multicolumn{16}{c}{\textbf{Method}} \\ \cmidrule(lr){3-18}
        & & \multicolumn{2}{c}{\textsc{Floral}} & \multicolumn{2}{c}{SVM} & \multicolumn{2}{c}{NN} & \multicolumn{2}{c}{NN-PGD} & \multicolumn{2}{c}{LN-SVM} & \multicolumn{2}{c}{Curie} & \multicolumn{2}{c}{LS-SVM} &  \multicolumn{2}{c}{K-LID} \\ 
        \cmidrule(lr){3-4} \cmidrule(lr){5-6} \cmidrule(lr){7-8} \cmidrule(lr){9-10} \cmidrule(lr){11-12} \cmidrule(lr){13-14} \cmidrule(lr){15-16} \cmidrule(lr){17-18}
        & &  Best & Last & Best & Last & Best & Last & Best & Last & Best & Last & Best & Last & Best & Last & Best & Last \\ 
        \specialrule{1.5pt}{1pt}{1pt}
         \text{Clean} & $C=10, \gamma=1$    & \cellcolor{tablegreen} \textbf{0.968} &  0.966 & 0.968 & \cellcolor{tablegreen} 0.968 & 0.960 & 0.960 & 0.966 & 0.964 & 0.940 & 0.940 & 0.941 & 0.941 & 0.881 & 0.881 & 0.966 & 0.966  \\
         $D^{\text{adv}}=5\%$ & $C=10, \gamma=1$   & \cellcolor{tablegreen} \textbf{0.966} &  \cellcolor{tablegreen} 0.966 & 0.965 & 0.957 & 0.926 & 0.926 & 0.964 & 0.937 & 0.940 & 0.940 &  0.903 &  0.903 & 0.881 & 0.881 & 0.964 & 0.964   \\
         $D^{\text{adv}}=10\%$ & $C=10, \gamma=1$   & 0.924  & \cellcolor{tablegreen} 0.907 & 0.912 & 0.900 & 0.859 & 0.855 & \cellcolor{tablegreen} \textbf{0.927} & 0.853 & 0.869 & 0.868 & 0.907 & 0.907 & 0.894 & 0.894 & 0.908 & 0.907  \\
         $D^{\text{adv}}=15\%$ & $C=10, \gamma=1$  & \cellcolor{tablegreen} \textbf{0.924} &  \cellcolor{tablegreen} 0.917 & 0.892 & 0.823 & 0.826 & 0.826 & 0.871 & 0.871 & 0.893 & 0.829 & 0.906 & 0.858 & 0.892 & 0.823 & 0.883 & 0.826\\
         $D^{\text{adv}}=20\%$ & $C=10, \gamma=1$   & \cellcolor{tablegreen} \textbf{0.875} & \cellcolor{tablegreen} 0.865 & 0.840 & 0.771 & 0.788 & 0.787 & 0.854 & 0.853 & 0.839 & 0.758 & 0.859 & 0.763 & 0.840 & 0.771 & 0.830 & 0.755 \\
        $D^{\text{adv}}=25\%$ & $C=10, \gamma=1$   & \cellcolor{tablegreen} \textbf{0.801} & \cellcolor{tablegreen} 0.768 & 0.753 & 0.717 & 0.693 & 0.647 & 0.740 & 0.655 & 0.754 & 0.693 & 0.779 & 0.697 & 0.766 & 0.721 & 0.747 & 0.690  \\ 
         \specialrule{1.5pt}{1pt}{1pt}
    \end{tabular}
    }
\end{table*}
\begin{table*}[t]
\centering
    \caption{Test accuracies of methods trained on the IMDB dataset, averaged over five replications. 
\textsc{Floral} demonstrates superior robustness compared to baselines, particularly in more adversarial scenarios. See also Figures~\ref{fig:roberta-motivation}-\ref{fig:roberta-motivation-floral}.}
    \label{tab:test-accuracy-comp-imdb-additional} 
    \resizebox{1\linewidth}{!}{%
    \begin{tabular}{l|cccccccccccccccc} \specialrule{1.5pt}{1pt}{1pt}
         \multicolumn{1}{c}{\multirow{2}{*}{\makecell{\\ \\ \textbf{Setting}}}} & \multicolumn{16}{c}{\textbf{Method}} \\ \cmidrule(lr){2-16}
        & \multicolumn{2}{c}{\textsc{Floral}}  & \multicolumn{2}{c}{RoBERTa} & \multicolumn{2}{c}{SVM} & \multicolumn{2}{c}{LN-SVM} & \multicolumn{2}{c}{Curie} & \multicolumn{2}{c}{LS-SVM} &  \multicolumn{2}{c}{K-LID} \\ 
        \cmidrule(lr){2-3} \cmidrule(lr){4-5} \cmidrule(lr){6-7} \cmidrule(lr){8-9} \cmidrule(lr){10-11} \cmidrule(lr){12-13} \cmidrule(lr){14-15} \cmidrule(lr){16-17}
        &  Best & Last & Best & Last & Best & Last & Best & Last & Best & Last & Best & Last & Best & Last & \\ \specialrule{1.5pt}{1pt}{1pt}
         \text{Clean}  & 0.9113 & 0.9113 & \cellcolor{tablegreen} \textbf{0.9119} & 0.9110 & 0.9113 &  0.9113 & 0.9113 & 0.9113 & 0.9116 & 0.9113 & 0.9108 & 0.9108 & 0.9116 & \cellcolor{tablegreen} 0.9115 & \\
         $D^{\text{adv}}=10\%$  & 0.9039 & \cellcolor{tablegreen} 0.9039 & \cellcolor{tablegreen}  \textbf{0.9048} & 0.9031 & 0.9039  &  0.9039 &  0.9029 &  0.9028 & 0.9039 & 0.9039 & 0.9010 & 0.9010 & 0.9039 & 0.9039 & \\
         $D^{\text{adv}}=25\%$  & \cellcolor{tablegreen} \textbf{0.8896} & \cellcolor{tablegreen} 0.8896 & 0.8827 & 0.8612 & 0.8887 & 0.8886 &  0.8860 &  0.8860 & 0.8889 & 0.8888 & 0.8771 & 0.8769 & 0.8885 & 0.8883 & \\
         $D^{\text{adv}}=30\%$ & \cellcolor{tablegreen} \textbf{0.8801} & \cellcolor{tablegreen} 0.8801 & 0.8675 & 0.8357 & 0.8792 & 0.8792 &  0.8771 &  0.8771 & 0.8797 & 0.8797 & 0.8325 & 0.8324 & 0.8795 & 0.8795 & \\
         $D^{\text{adv}}=35\%$ & \cellcolor{tablegreen} \textbf{0.8713} & \cellcolor{tablegreen} 0.8713 & 0.8270 & 0.8053 & 0.8660 & 0.8660 & 0.8646 & 0.8646 & 0.8695 & 0.8695 & 0.7667 & 0.7667 & 0.8700 & 0.8700 & \\ 
         $D^{\text{adv}}=40\%$ & \cellcolor{tablegreen} \textbf{0.8636} & \cellcolor{tablegreen} 0.8636 & 0.7792 & 0.7717 & 0.8574 & 0.8584 & 0.8515 & 0.8515 & 0.8589 & 0.8589 & 0.7060 & 0.7060 & 0.8594 & 0.8594 & \\
         \specialrule{1.5pt}{1pt}{1pt}
    \end{tabular}
    }
\end{table*}
\paragraph{Moon.} 
As reported in Table~\ref{tab:test-accuracy-comp-moon} and Figure~\ref{fig:moon-exp-results-C10-gamma0.5}, \textsc{Floral} achieves higher robust accuracy across almost all settings compared to baseline methods. Notably, in scenarios with more severe poisoning levels, \textsc{Floral} significantly outperforms all baselines, which experience a marked drop in their accuracy.
We report results under various kernel hyperparameters in Appendix~\ref{sec:moon-results-appendix} (see Table~\ref{tab:test-accuracy-comp-moon-appendix} and Figure~\ref{fig:moon-exp-results-plots-appendix}).
We additionally visualize the decision boundaries of trained methods on the test dataset in Figure~\ref{fig:moon-decision-boundaries-C10-gamma0.5}, which shows that \textsc{Floral} produces a smoother decision boundary compared to the baselines, promoting generalization (see Figure~\ref{fig:moon-decision-boundaries-app-C10-gamma0.5} in Appendix~\ref{app:additional-experiment-results} for the complete results). \looseness -1

When the initial training data is clean, \textsc{Floral} provides a \textit{proactive defense} by introducing adversarial labels during training, thereby effectively reducing the model's sensitivity to potential label attacks. 
Notably, \textsc{Floral} matches performance on par with vanilla SVM on clean data, demonstrating that its robust framework maintains high accuracy without compromising clean accuracy.
In scenarios with already poisoned training data, \textsc{Floral} achieves robustness through two key mechanisms: $(i)$ implicitly sanitizing corrupted labels, while $(ii)$ introducing additional adversarial labels to further reduce model sensitivity to attacks.
We analyze this sanitization effect in detail in Appendix~\ref{app:defense-analysis} and show that \textsc{Floral} sanitizes $25-35\%$ portion of the initial poisoned training labels. Furthermore, \textsc{Floral} operates on \textit{dynamically evolving} adversarial datasets during training, unlike baselines that are trained on fixed adversarially labelled datasets. 
This dynamic strategy introduces new adversarial labels in each round, further testing and enhancing the model's robustness.
These capabilities position \textsc{Floral} as a superior defense over baselines, particularly in maintaining robust accuracy under more challenging adversarial scenarios.
\looseness -1

\textsc{Floral} offers several distinct advantages over existing baselines, e.g., unlike LN-SVM, \textsc{Floral} does not rely on the strong assumption that training labels are independently flipped with equal probability.
Compared to Curie, \textsc{Floral} avoids a filter-out system that risks discarding data with valuable feature representations.
Further, in terms of scalability, Curie’s dependence on distance metrics makes it vulnerable to the curse of dimensionality, diminishing its clustering performance in high-dimensional and complex datasets.
To ensure a fair comparison, we calibrated the noise, confidence level, and threshold value parameters of LN-SVM, Curie, and LS-SVM baselines, aligning them with the poisoning level in each dataset. This ensures that the baselines are at their strongest configurations, highlighting the robustness and scalability of \textsc{Floral}.
\looseness -1
 \begin{figure*}[t]
    \centering  
    \begin{subfigure}[t]{0.2\linewidth}\centering{\includegraphics[width=1\linewidth,trim=0 0 0 0,clip]{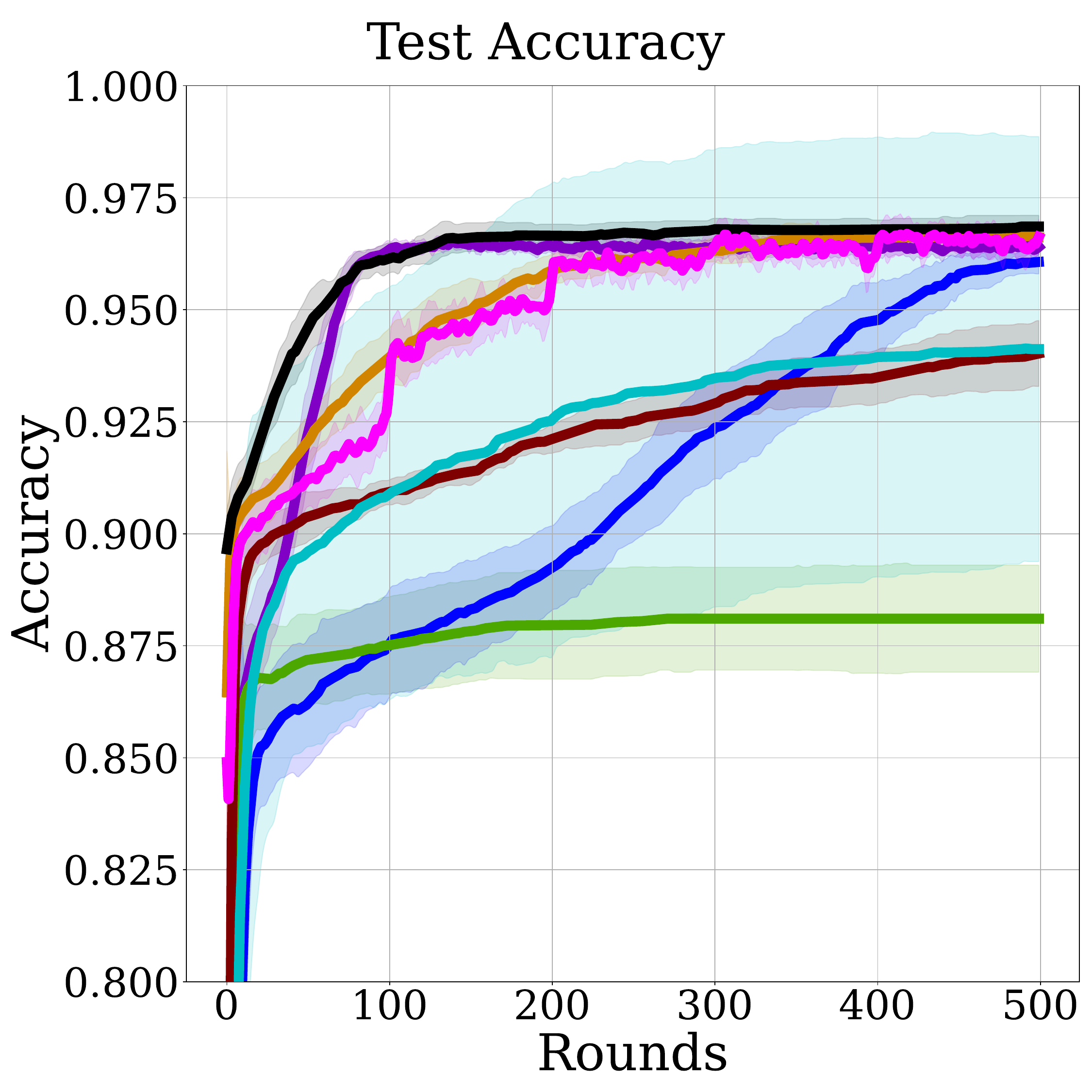}}
    \caption{Clean.}
    \end{subfigure}%
    \begin{subfigure}[t]{0.2\linewidth}\centering{\includegraphics[width=1\linewidth,trim=0 0 0 0,clip]{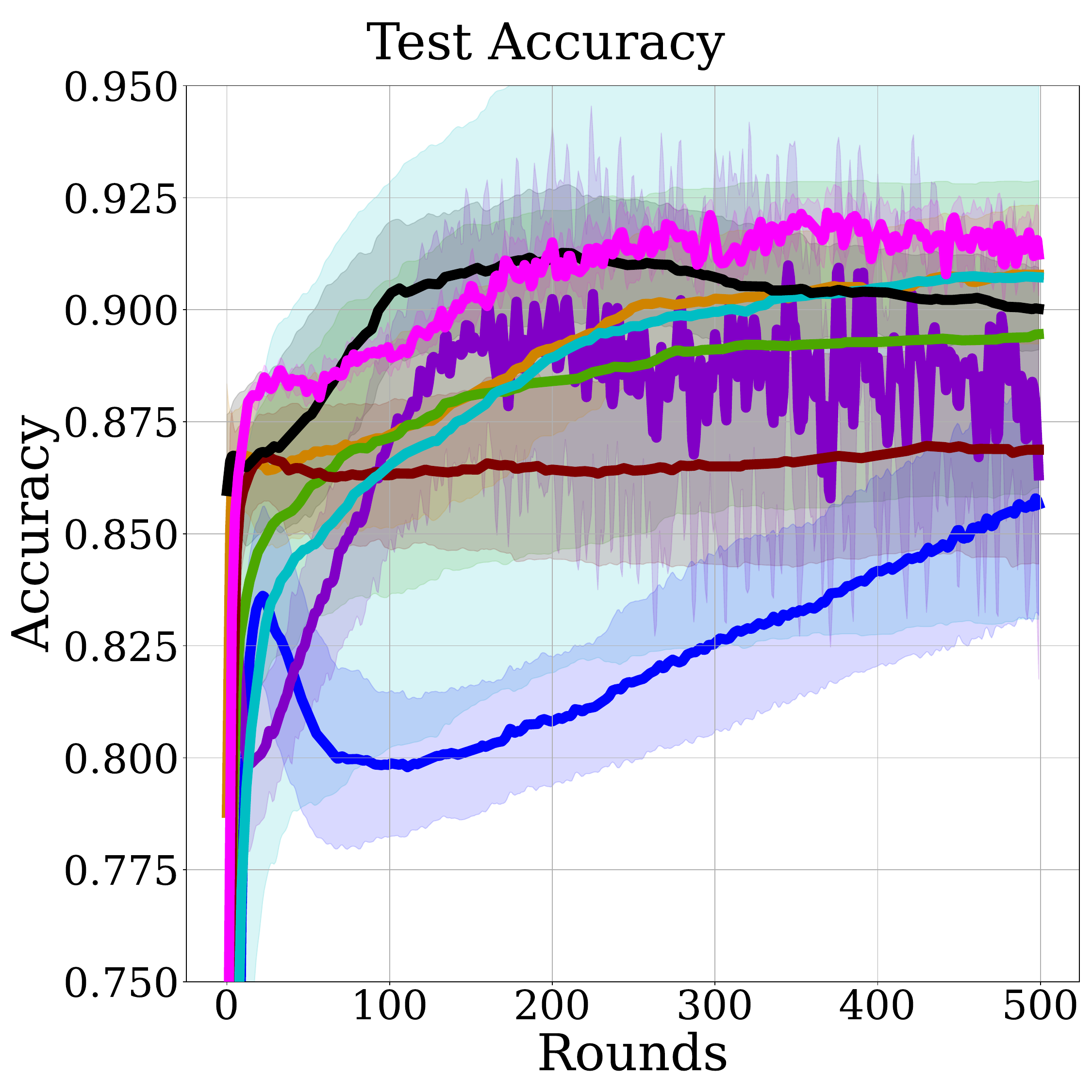}}
    \caption{$D^{\text{adv}} = 10\%$.}
    \end{subfigure}%
    \begin{subfigure}[t]{0.2\linewidth}\centering{\includegraphics[width=1\linewidth,trim=0 0 0 0,clip]{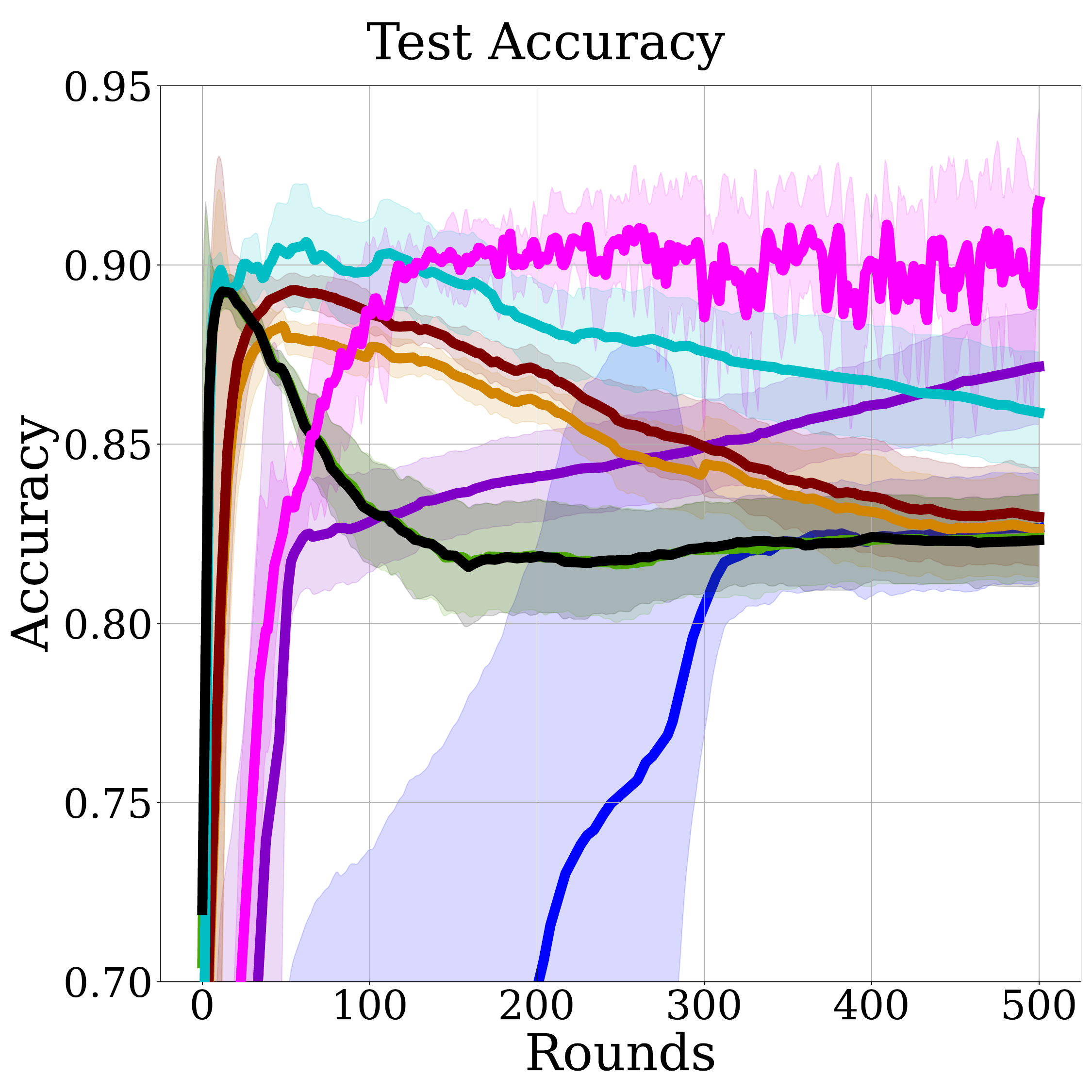}}
    \caption{$D^{\text{adv}} = 15\%$.}
    \end{subfigure}%
    \begin{subfigure}[t]{0.2\linewidth}\centering{\includegraphics[width=1\linewidth,trim=0 0 0 0,clip]{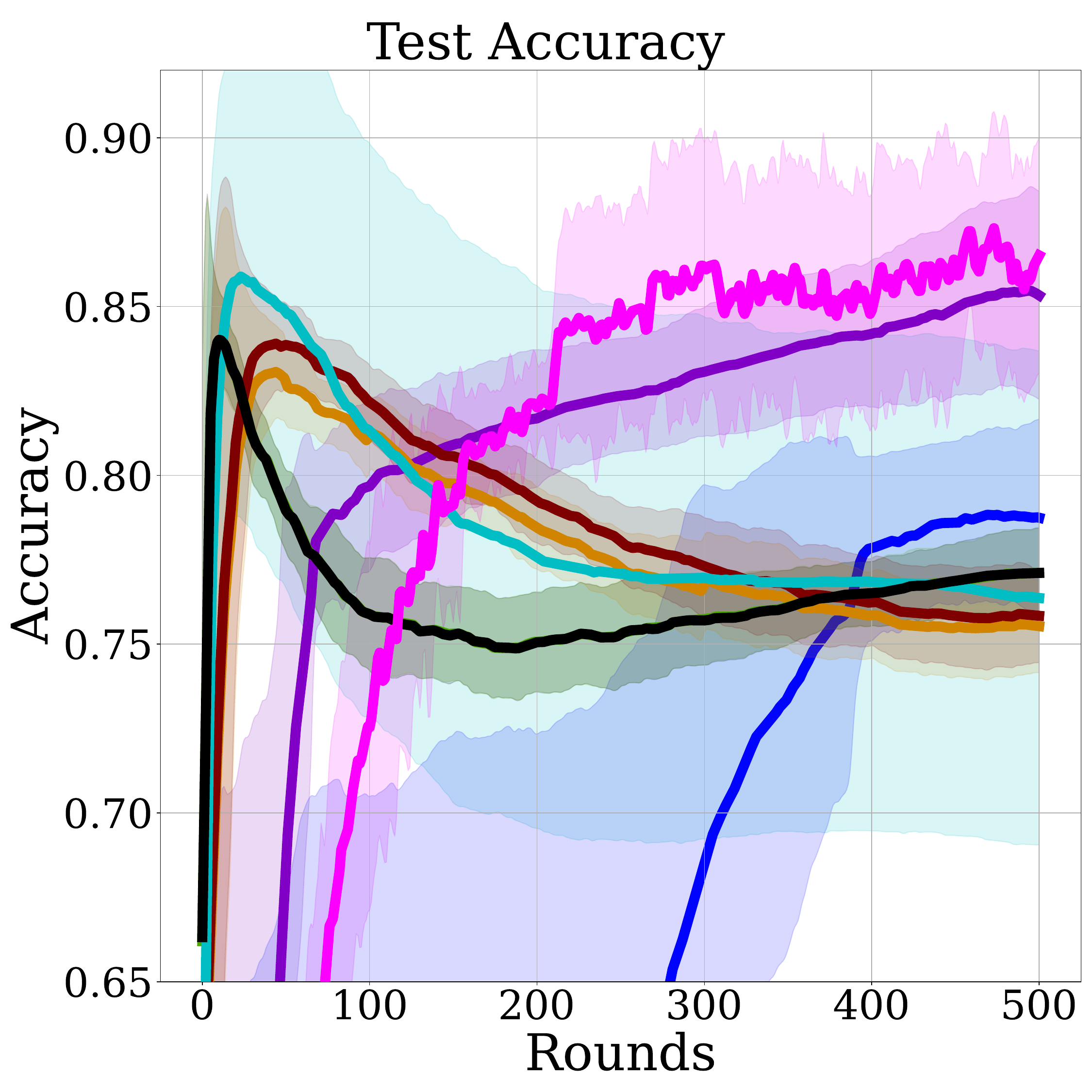}}
    \caption{$D^{\text{adv}} = 20\%$.}
    \end{subfigure}%
    \begin{subfigure}[t]
    {0.2\linewidth}\centering{\includegraphics[width=1\linewidth,trim=0 0 0 0,clip]{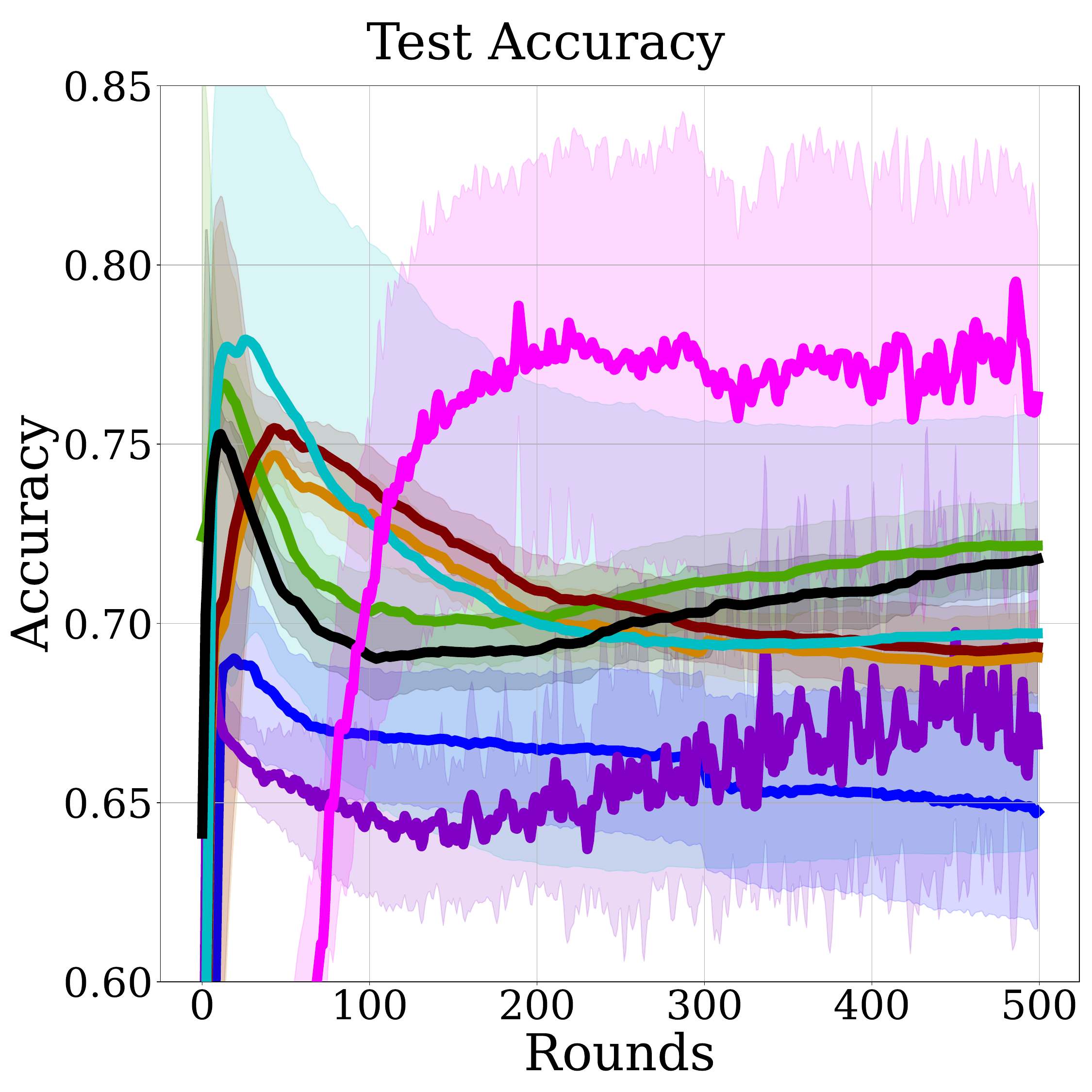}}
    \caption{$D^{\text{adv}} = 25\%$.}
    \end{subfigure}%
    \\
    \begin{subfigure}{1\linewidth}\centering{\includegraphics[width=1\linewidth,trim=40 10 10 10,clip]{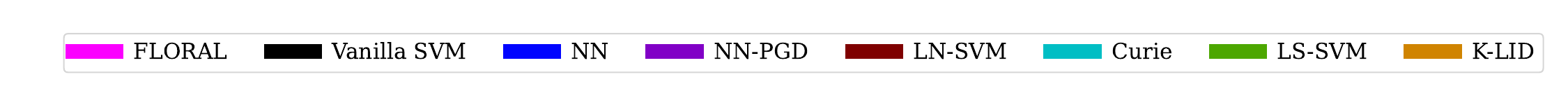}}
    \end{subfigure}%
    \caption{Test accuracy of methods on the Moon dataset under varying label poisoning levels.
    For SVM models, $C=10$, $\gamma=1$ are used. 
    See Appendix~\ref{app:additional-experiment-results} (Figure~\ref{fig:moon-exp-results-plots-appendix}) for results with other settings.
    As the label poisoning level increases, the accuracy of methods generally declines, however, \textsc{Floral} maintains higher robust accuracy across all adversarial settings, without compromising clean accuracy. \looseness -1
    }
\label{fig:moon-exp-results-C10-gamma0.5}
\vspace{-0.2cm}
\end{figure*}
\vspace{-0.2cm}
\paragraph{IMDB.} 
We integrate \textsc{Floral} as a robust classifier head for RoBERTa. 
The test performance on the IMDB dataset, shown in Table~\ref{tab:test-accuracy-comp-imdb-additional} and Figures~\ref{fig:roberta-motivation}-\ref{fig:roberta-motivation-floral}, reveals that \textsc{Floral} significantly improves robustness, outperforming fine-tuned RoBERTa along with other baselines. 
Our approach also converges faster to lower loss values, in more adversarial scenarios (see Table~\ref{tab:test-accuracy-loss-comp-imdb} and Figure~\ref{fig:imdb-result-comp-roberta-svm} in Appendix~\ref{sec:imdb-results-appendix}).
\looseness -1

In Appendix~\ref{sec:imdb-results-appendix}, we analyze the changes in influential training points—those that most affect model predictions—when applying \textsc{Floral} to RoBERTa-extracted embeddings (see Figures~\ref{fig:percentage-common-influential-points}-\ref{fig:most-important-points-roberta-vs-floral}).
The results reveal some overlap in the identified points, under clean train data scenario. 
However, as the training data becomes more adversarial, \textsc{Floral} identifies different critical points, which effectively shape the decision boundary and contribute to improved robust accuracy.
\looseness -1
\vspace{-0.2cm}
\paragraph{Adaptability.} As shown with the IMDB experiments, \textsc{Floral} integrates seamlessly with other model architectures (e.g., RoBERTa, NNs) by utilizing the last-layer embeddings for training.
We additionally demonstrate this in Appendix~\ref{app:nn-integration}, \textsc{Floral} integrated with an NN learns more robust representations and achieves higher robust accuracy on the Moon and MNIST datasets.
\looseness -1
\vspace{-0.2cm}
\paragraph{Sensitivity analysis.}
We further examined the sensitivity of our approach to the attacker's budget, and the results are detailed in Appendix~\ref{sec:sensitivity-analysis} with Figure~\ref{fig:moon-sensitivity-attacker-budget}. 
\looseness -1 
\vspace{-0.2cm}
\paragraph{Generalizability.} We demonstrate the effectiveness of \textsc{Floral} under different attacks: \texttt{alfa}, \texttt{alfa-tilt} \citep{xiao2015support} and \texttt{LFA} \citep{label-sanitization} in Appendix~\ref{app:different-attacks}.
Our experiments on the Moon and MNIST \citep{deng2012mnist} datasets 
again confirmed that \textsc{Floral} can also defend and achieve higher robust accuracy in the presence of other types of label attacks. 
\looseness -1
\vspace{-0.2cm}
\paragraph{Limitations.} Defense strategies may not be universally effective against all label poisoning attacks due to their non-adaptive nature \citep{papernot2016transferability}. 
Our defense strategy relies on a white-box attack, where the attacker can access the model. 
While we also show the performance of our approach under various label attacks from the literature, its efficacy may vary under different attack scenarios. \looseness -1
\vspace{-0.2cm}

\section{Conclusion} 
\label{sec:conclusions}
\vspace{-0.1cm}
In this paper, we address the vulnerability of machine learning models to label poisoning attacks and propose \textsc{Floral}, an adversarial training defense strategy based on kernel SVMs. We formulate the problem using bilevel optimization and frame the adversarial interaction between the learning model and the attacker as a non-zero-sum Stackelberg game. To compute the game equilibrium that solves the optimization problem, we introduce a projected gradient descent-based algorithm and analyze its local stability and convergence properties. Our approach demonstrates superior empirical robustness across various classification tasks compared to robust baseline methods. 

Future research includes exploring SVM-based transfer attacks or integrating our approach to robust fine-tuning of foundation models for supervised downstream tasks. 
Additionally, a detailed analysis of how \textsc{Floral} alters the most influential training points for model predictions, e.g. when integrated with foundation models such as RoBERTa could provide interesting insights. \looseness -1 
\vspace{-0.2cm}
 \begin{figure*}[t]
    \centering  
    \begin{subfigure}{0.25\textwidth}\centering{\includegraphics[width=1\linewidth,trim=0 20 0 55,clip]{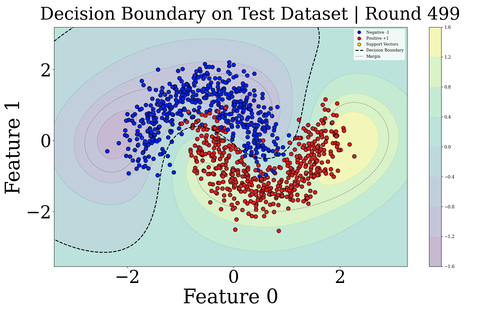}}
    \caption{\textsc{Floral} (Clean).}
    \end{subfigure}%
    \begin{subfigure}{0.25\textwidth}\centering{\includegraphics[width=1\linewidth,trim=0 20 0 55,clip]{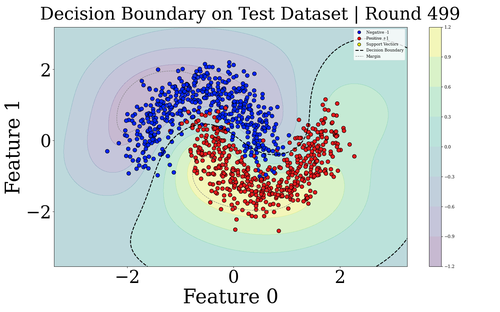}}
    \caption*{$D^{\text{adv}}= 5\%$.}
    \end{subfigure}%
    \begin{subfigure}
    {0.25\textwidth}\centering{\includegraphics[width=1\linewidth,trim=0 20 0 55,clip]{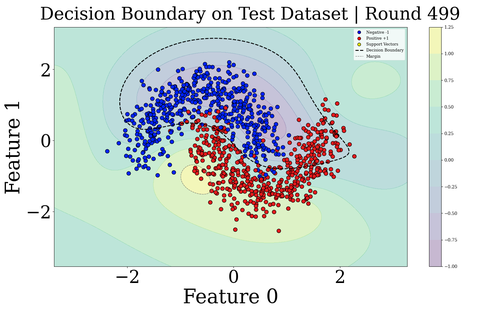}}
    \caption*{$D^{\text{adv}}= 10\%$.}
    \end{subfigure}%
    \begin{subfigure}{0.25\textwidth}\centering{\includegraphics[width=1\linewidth,trim=0 20 0 55,clip]{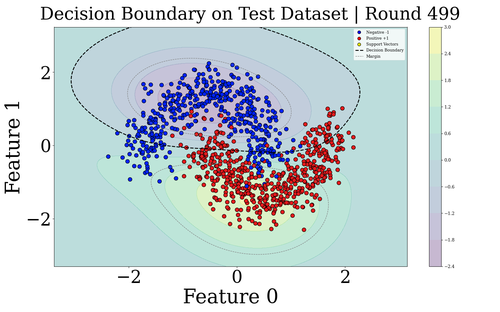}}
    \caption*{$D^{\text{adv}}= 25\%$.}
    \end{subfigure}%
    \\
    \begin{subfigure}{0.25\textwidth}\centering{\includegraphics[width=1\linewidth,trim=0 20 0 55,clip]{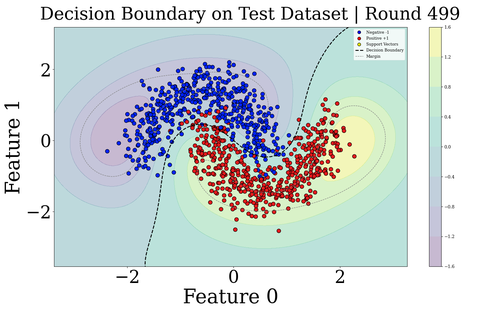}}
    \caption{SVM (Clean).}
    \end{subfigure}%
    \begin{subfigure}{0.25\textwidth}\centering{\includegraphics[width=1\linewidth,trim=0 20 0 55,clip]{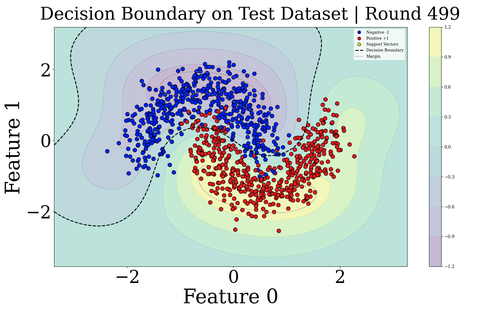}}
    \caption*{$D^{\text{adv}}= 5\%$.}
    \end{subfigure}%
    \begin{subfigure}{0.25\textwidth}\centering{\includegraphics[width=1\linewidth,trim=0 20 0 55,clip]{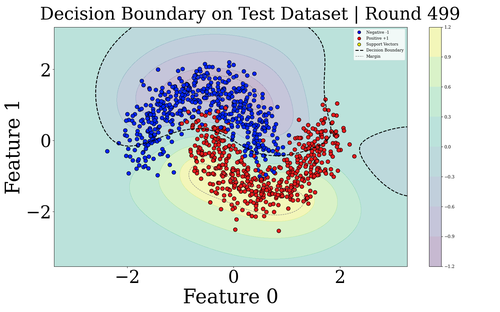}}
    \caption*{$D^{\text{adv}}= 10\%$.}
    \end{subfigure}%
    \begin{subfigure}{0.25\textwidth}\centering{\includegraphics[width=1\linewidth,trim=0 20 0 55,clip]{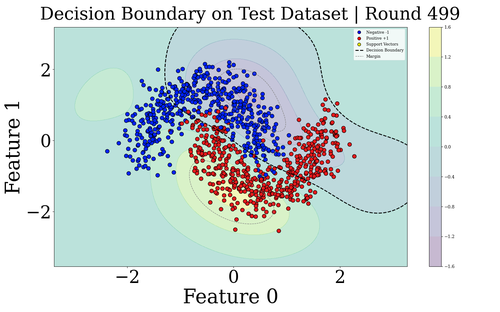}}
    \caption*{$D^{\text{adv}}= 25\%$.}
    \end{subfigure}%
    \\
    \begin{subfigure}{0.25\textwidth}\centering{\includegraphics[width=1\linewidth,trim=0 20 0 55,clip]{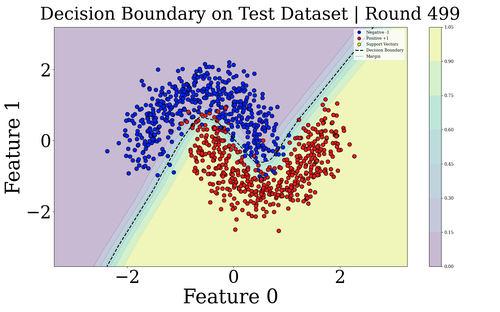}}
    \caption{NN (Clean).}
    \end{subfigure}%
    \begin{subfigure}{0.25\textwidth}\centering{\includegraphics[width=1\linewidth,trim=0 20 0 55,clip]{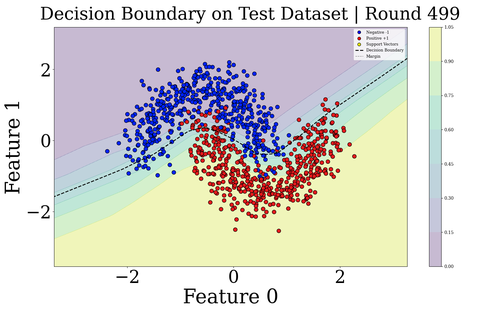}}
    \caption*{$D^{\text{adv}}= 5\%$.}
    \end{subfigure}%
    \begin{subfigure}{0.25\textwidth}\centering{\includegraphics[width=1\linewidth,trim=0 20 0 55,clip]{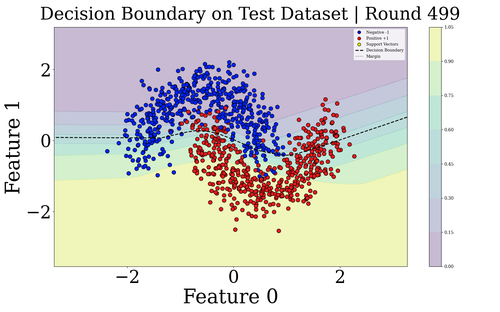}}
    \caption*{$D^{\text{adv}}= 10\%$.}
    \end{subfigure}%
    \begin{subfigure}{0.25\textwidth}\centering{\includegraphics[width=1\linewidth,trim=0 20 0 55,clip]{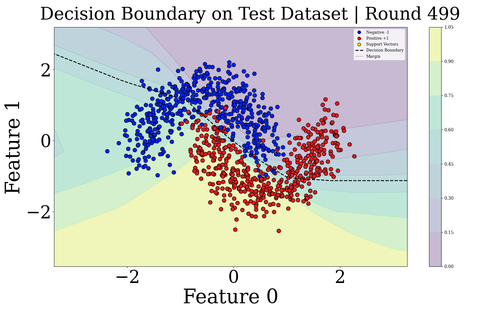}}
    \caption*{$D^{\text{adv}}= 25\%$.}
    \end{subfigure}%
    \\
    \begin{subfigure}{0.25\textwidth}\centering{\includegraphics[width=1\linewidth,trim=0 20 0 55,clip]{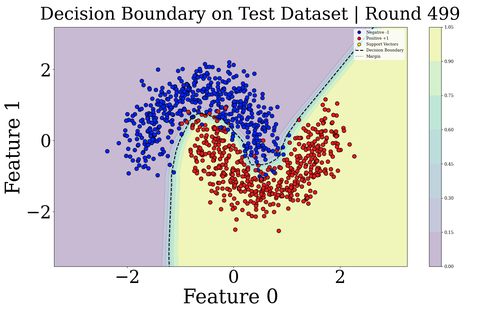}}
    \caption{NN-PGD (Clean).}
    \end{subfigure}%
    \begin{subfigure}{0.25\textwidth}\centering{\includegraphics[width=1\linewidth,trim=0 20 0 55,clip]{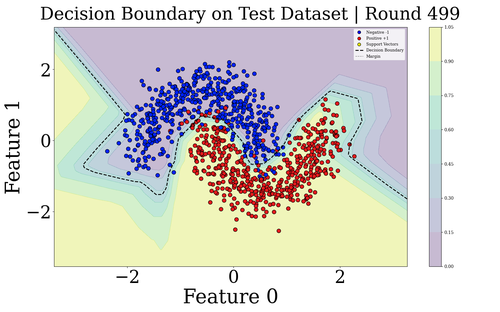}}
    \caption*{$D^{\text{adv}}= 5\%$.}
    \end{subfigure}%
    \begin{subfigure}{0.25\textwidth}\centering{\includegraphics[width=1\linewidth,trim=0 20 0 55,clip]{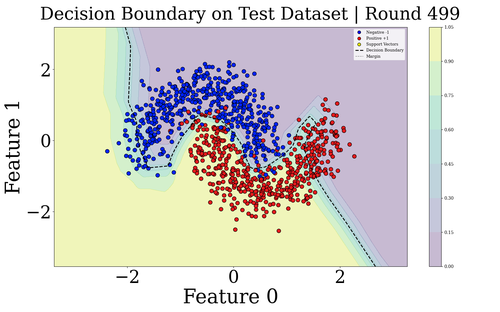}}
    \caption*{$D^{\text{adv}}= 10\%$.}
    \end{subfigure}%
    \begin{subfigure}{0.25\textwidth}\centering{\includegraphics[width=1\linewidth,trim=0 20 0 55,clip]{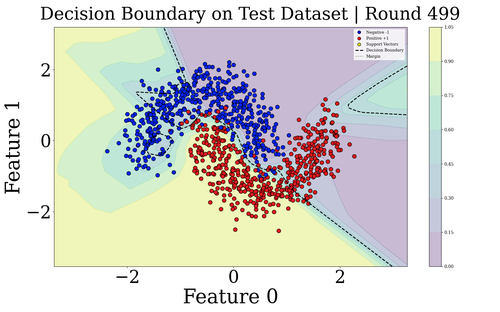}}
    \caption*{$D^{\text{adv}}= 25\%$.}
    \end{subfigure}%
    \caption{The decision boundaries on the Moon test dataset under varying label poisoning levels. 
    SVM models use an RBF kernel with $C=10$ and $\gamma=0.5$. 
    \textsc{Floral} generates a smooth decision boundary compared to baseline methods, which show drastic changes due to adversarial training label manipulations. 
    For the complete results with other baselines, see Appendix~\ref{app:additional-experiment-results} (Figure~\ref{fig:moon-decision-boundaries-app-C10-gamma0.5}).
    \looseness -1
     }
\label{fig:moon-decision-boundaries-C10-gamma0.5}
\end{figure*}

\clearpage

\section*{Acknowledgements}
\looseness -1 \vspace{-0.2cm}
This research was supported by the Max Planck \& Amazon Science Hub. We also thank the German Research Foundation for the support and Zhiyu He for the helpful comments on the manuscript. %
The work was conducted during Volkan Cevher's time at Amazon.

\bibliography{refs}
\bibliographystyle{iclr2025_conference}

\newpage
\appendix

\addcontentsline{toc}{section}{Appendix} %
\part{Appendix} %
\parttoc %
\newpage

\section{Theoretical Analysis Proofs}
\label{app:convergence-analysis-proofs}

In this section, we present the proofs for the local asymptotic stability analysis of \textsc{Floral} (Algorithm~\ref{alg:robust-svm-game}). 
We begin by proving Lemma~\ref{thm:convergence-bound-lambda-steps} in Section~\ref{app:proof-of-lemma-1}, which establishes that the distance of the updates of Algorithm~\ref{alg:robust-svm-game} from the equilibrium of the game is bounded.
In Section~\ref{app:proof-of-lemma-2}, we prove Lemma~\ref{thm:convergence-bound-x-steps}, demonstrating that the distance of the non-projected updates from the equilibrium of the game is also bounded.
Lastly, in Section~\ref{app:proof-local-stability}, we provide the proof of Theorem~\ref{thm:stability}, which shows the local asymptotic stability of our algorithm, with a derivation of a global convergence result presented in Section~\ref{app:global-convergence-result}.

\subsection{Proof of Lemma~\ref{thm:convergence-bound-lambda-steps}}
\label{app:proof-of-lemma-1}

\begin{proof}[\unskip\nopunct]
Our objective is to prove that the distance of the iterates of Algorithm~\ref{alg:robust-svm-game} from the Stackelberg equilibrium $(\Hat{\lambda}, \Hat{y}(\hat{\lambda}))$, specifically $\lambda_{t} -\Hat{\lambda}$, is bounded.
We begin by recalling the update rule at round $t$, $\lambda_t := \textsc{Prox}_{\mathcal{S}(y_t)}(z_{t}) = \textsc{Prox}_{\mathcal{S}(y_t)} (\lambda_{t-1} - \eta \nabla_{\lambda} f (\lambda_{t-1}, y_t))$, where $y_t = \Hat{y}(\lambda_{t-1})$, $\mathcal{S}(y_t)$ is the feasible region defined by constraints (\ref{proj-ineq-const}), using the labels at round $t$. The operator $\textsc{Prox}$ is defined below. 

\begin{definition}[\textsc{Prox} operator]
The operator \textsc{Prox}$_{\mathcal{S}(y_t)}(z_t): \mathbb{R}^n \to \mathbb{R}^n$ denotes the projection of $z_t \in \mathbb{R}^n$ onto the convex set $\mathcal{S}(y_t)$ at round $t$ of Algorithm~\ref{alg:robust-svm-game}. \textsc{Prox} minimizes the Euclidean distance and is defined by the following optimization problem:
\begin{alignat}{2}
\textsc{Prox}_{\mathcal{S}(y_t)}(z_t): \lambda_t \in \arg \min _{\lambda \in \mathbb{R}^{n}} \quad  \frac{1}{2} \| & \lambda - z_{t} \|^{2} \label{app:eq:svm-projection-operation} \\
\st  & y_t^{\mathrm{T}}  \lambda = 0  \label{app:eq:proj-eq-const} \\
& 0 \leq \lambda \leq C \label{app:eq:proj-ineq-const}
\end{alignat}
Equivalently, \textsc{Prox}$_{\mathcal{S}(y_t)}$ solves the following optimization problem:
\begin{align}
\min _{\substack{\lambda \in \mathbb{R}^{n} \\ 0 \leq \lambda \leq C}} \quad \sup_{\mu \in \mathbb{R}} \quad  \frac{1}{2} \| \lambda - z_{t} \|^{2} + \mu y_t^{\mathrm{T}}  \lambda.
\label{app:eq:svm-prox-operation}
\end{align}
\end{definition}

\begin{lemma}[Bounded iterates] \label{app:thm:bounded-iterates}
The sequence $\{ \lambda_t\}$ generated by the iterative update rule $\lambda_t := \textsc{Prox}_{\mathcal{S}(y_t)}(z_{t}) = \textsc{Prox}_{\mathcal{S}(y_t)} (\lambda_{t-1} - \eta \nabla_{\lambda} f (\lambda_{t-1}, y_t))$ is bounded, i.e., $\| \lambda_t \|_\infty \leq C, \forall t \geq 0$. 
\end{lemma}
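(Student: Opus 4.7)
The plan is to argue that the bound follows essentially by construction from the definition of the projection operator, so the proof is mostly a matter of unpacking definitions rather than performing calculations. The key observation is that the feasible set $\mathcal{S}(y_t)$ defined by equations (\ref{app:eq:proj-eq-const})--(\ref{app:eq:proj-ineq-const}) explicitly includes the box constraint $0 \leq \lambda \leq C$, so anything returned by $\textsc{Prox}_{\mathcal{S}(y_t)}$ necessarily lies in $[0,C]^n$.

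First I would verify that the projection is well-defined. The set $\mathcal{S}(y_t) = \{\lambda \in \mathbb{R}^n : y_t^{\mathrm{T}}\lambda = 0,\, 0 \leq \lambda \leq C\}$ is the intersection of an affine hyperplane with a compact box, hence closed and convex, and it is non-empty because $\lambda = 0$ always satisfies both constraints (for any labeling $y_t \in \{\pm 1\}^n$). Consequently the strongly convex problem (\ref{app:eq:svm-projection-operation})--(\ref{app:eq:proj-ineq-const}) admits a unique minimizer, so $\lambda_t = \textsc{Prox}_{\mathcal{S}(y_t)}(z_t)$ is well-defined.

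Next I would argue by induction on $t$. For the base case, assume (or verify from the input specification of Algorithm~\ref{alg:robust-svm-game}) that $\lambda_0 \in [0,C]^n$, for example by requiring the initialization to be feasible for the primal SVM. For the inductive step, suppose $\|\lambda_{t-1}\|_\infty \leq C$. Then $z_t = \lambda_{t-1} - \eta \nabla_\lambda f(\lambda_{t-1}, y_t) \in \mathbb{R}^n$ is some point in Euclidean space (its magnitude is irrelevant to the argument), and $\lambda_t \in \mathcal{S}(y_t)$ by the definition of the projection. In particular, the constraint $0 \leq \lambda_t \leq C$ is active as a feasibility requirement, which immediately yields $\|\lambda_t\|_\infty = \max_i |\lambda_t^i| \leq C$, completing the induction.

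I do not anticipate a genuine obstacle here: the result is essentially a tautology stemming from the fact that projection onto a set contained in $[0,C]^n$ enforces the $\ell^\infty$ bound. The only subtlety worth flagging is ensuring the initial iterate $\lambda_0$ satisfies the bound, which can be handled by convention (e.g.\ taking $\lambda_0 = 0$) or by a one-line remark that Algorithm~\ref{alg:robust-svm-game} is typically initialized with a feasible dual variable. This lemma will then serve as the boundedness prerequisite needed to invoke Lipschitz-type estimates on $\nabla_\lambda f$ in the subsequent proofs of Lemmas~\ref{thm:convergence-bound-lambda-steps} and \ref{thm:convergence-bound-x-steps}.
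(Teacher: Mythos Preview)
Your proposal is correct and takes essentially the same approach as the paper, which simply states that the bound follows immediately from the definition of $\mathcal{S}(y_t)$. Your induction is superfluous, since---as you yourself note---the inductive hypothesis on $\lambda_{t-1}$ is never actually used; for every $t\geq 1$ the conclusion $\lambda_t\in[0,C]^n$ follows directly from $\lambda_t\in\mathcal{S}(y_t)$, and the base case $\lambda_0=0$ is handled by the algorithm's initialization.
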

\vspace{-0.2cm}
\begin{proof}
This follows immediately from the definition of $\mathcal{S}(y_t)$.
\end{proof}
\vspace{-0.2cm} 
In the following, our aim is to quantify the sensitivity of  (\ref{app:eq:svm-prox-operation}) with respect to its arguments $y_t$ and $z_t$.
Let $\lambda^{\star}$ denote the optimal solution to the projection operation. We can express this solution through the following steps.
First, we simplify the expression by omitting the index $t$ in (\ref{app:eq:svm-prox-operation}).
Then, we exploit the fact that the objective function is convex-concave with convex constraints, which allows us to interchange the order of the $\min$ and the $\sup$. This yields
\begin{align}
& \min _{\substack{\lambda \in \mathbb{R}^{n} \\ 0 \leq \lambda \leq C}} \quad \sup_{\mu \in \mathbb{R}} \quad  \frac{1}{2} \| \lambda - z \|^{2} + \mu y^{\mathrm{T}}  \lambda
\nonumber \\
&= \sup_{\mu \in \mathbb{R}} \quad \min _{\substack{\lambda \in \mathbb{R}^{n} \\ 0 \leq \lambda \leq C}} \frac{1}{2} \| \lambda - z + \mu y \|^{2} - \frac{1}{2} \mu^{2} \| y \|^{2}.
\nonumber
\end{align}
At this stage, the optimization problem over $\lambda$ reduces to the minimization of a quadratic function over box constraints. 
We can therefore express $\lambda^{\star}$ based on the optimal choice $\mu^{\star}$ for $\mu$ as follows:
\begin{equation}
\lambda^{\star}_i = \left.
\begin{cases}
0, & \text{ if } z_i - \mu^{\star}(z,y) y_i \leq 0 \\
z_i - \mu^{\star}(z,y) y_i, & \text{ if } 0 <  z_i - \mu^{\star}(z,y) y_i < C  \\
C, & \text{ if } z_i - \mu^{\star}(z,y) y_i \geq C \\
\end{cases}
\right\} \text{ and choose }  \mu^{\star}(z,y) \text{ such that } y^{\mathrm{T}} \lambda^{\star} = 0,
\nonumber
\end{equation}
$\forall i \in \{1,\dots,n\}$. We use the notation $\mu^{\star}(z,y)$ to highlight the dependency of the multiplier $\mu^{\star}$ on the variable $z$ and the label $y$.

We introduce the $\textsc{Clip}_{[0,C]}(\cdot)$ operator which clips the value of the given input to the interval $[0,C]$. 
This operator yields the following compact expression for $\lambda^{\star}$:
\begin{equation}
\lambda^{\star} = \textsc{Clip}_{[0,C]}(z - \mu^{\star}(z,y) y) \text{, where } \mu^{\star}(z,y) \text{ is chosen such that } y^{\mathrm{T}} \lambda^{\star} = 0.
\label{app:eq:opt-lambda-clip-expression}
\end{equation}
By substituting the previous expression for $\lambda^{\star}$ into the equality constraint, we obtain
\begin{equation}
y^{\mathrm{T}} \textsc{Clip}_{[0,C]}(z - \mu^{\star}(z,y)y) = 0,
\label{app:eq:opt-lambda-eqn}
\end{equation}
which provides an equation that implicitly defines $\mu^{\star}(z,y)$. We further simplify (\ref{app:eq:opt-lambda-eqn}) by indexing the components of $z- \mu^{\star}(z,y)y$ with respect to their values, which yields
\begin{align}
0 &= \sum_{i \in \mathcal{I}_C} C y_i + \sum_{i \in \mathcal{I}_z} y_i(z_i - \mu^{\star}(z,y)y_i) \nonumber \\
&= \sum_{i \in \mathcal{I}_C} C y_i + \sum_{i \in \mathcal{I}_z} z_i y_i - \sum_{i \in \mathcal{I}_z} \mu^{\star}(z,y) y_i^2 \nonumber \\
&= \sum_{i \in \mathcal{I}_C} C y_i + \sum_{i \in \mathcal{I}_z} z_i y_i - \sum_{i \in \mathcal{I}_z} \mu^{\star}(z,y). \tag{from $y_i^2=1$} 
\end{align}
where $\mathcal{I}_z:=\{i \mid \lambda_i = z_i - \mu^{\star}(z,y)y_i \in (0,C) \}$ with cardinality $\mid \mathcal{I}_z \mid$ and $\mathcal{I}_C:=\{i \mid \lambda_i = z_i - \mu^{\star}(z,y)y_i \geq C \}$.
We further solve for $\mu^{\star}$, which yields 
\begin{align}
\mu^{\star}(z,y) &= \frac{1}{\mid \mathcal{I}_z\mid} \left( \sum_{i \in \mathcal{I}_C} C y_i + \sum_{i \in \mathcal{I}_z} z_i y_i \right). \nonumber
\end{align}
This equation implicitly defines $\mu^{\star}(z,y)$, which represents the basis for the fixed point iteration introduced in Algorithm~\ref{alg:projection-fpi}.

This equation will also be the basis for computing sensitivities, i.e. quantifying how $\lambda^{\star}$ and $\mu^{\star}$ change when altering $z$ or $\lambda$.
We first compute $\frac{\partial \lambda^{\star}}{\partial z}$. For a data point $i$, the following can be stated:
\begin{align}
    \frac{\partial \lambda^{\star}_i}{\partial z} &= \left.
    \begin{cases}
    e_i^{\mathrm{T}} - \frac{\partial \mu^{\star}(z,y)}{\partial z} y_i, & \text{ if } z_i-\mu^{\star}(z,y) y_i \in (0,C)\\
    0, & \text{ else, } 
    \end{cases}
\right.
\label{app:eq:diff-lambda-i-x}
\end{align}
where $e_i$ is the $i$th standard basis vector. 
Differentiating the constraint (\ref{app:eq:proj-eq-const}) yields
\begin{align}
    0 &= \frac{\partial (y^{\mathrm{T}}\lambda^{\star})}{\partial z} = \sum_{i=1}^{n} \frac{\partial \lambda^{\star}_i}{\partial z} y_i \nonumber \\
    &= \sum_{i \in \mathcal{I}_z } \left(e_i^{\mathrm{T}} y_i - \frac{\partial \mu^{\star}(z,y)}{\partial z}  y_i^2\right).
\nonumber
\end{align}

Substituting $y_i^2=1$ into the previous equation yields
\begin{align}
    \frac{\partial \mu^{\star}(z,y)}{\partial z} = \frac{\sum\limits_{i \in \mathcal{I}_z } e_i^{\mathrm{T}} y_i }{\mid \mathcal{I}_z \mid}, 
\label{app:eq:diff-mu-x-final}
\end{align}
where $\mathcal{I}_z$ with cardinality $\mid \mathcal{I}_z \mid$ is defined previously.
From (\ref{app:eq:diff-lambda-i-x}) and (\ref{app:eq:diff-mu-x-final}), we have
\begin{align}
    \frac{\partial \lambda^{\star}_i}{\partial z} = \left.
    \begin{cases}
    e_i^{\mathrm{T}} - \frac{\sum\limits_{j \in \mathcal{I}_z } e_j^{\mathrm{T}} y_j }{\mid \mathcal{I}_z \mid} y_i, & \text{if } i \in \mathcal{I}_z\\
    0, & \text{ if } i \notin \mathcal{I}_z. 
    \end{cases}
\right.
\nonumber
\end{align}
Therefore, we conclude that
\begin{align}
    \bigg\| \frac{\partial \lambda^{\star}_i}{\partial z} \bigg\|_{\infty} \leq 1, \forall i \in [n],
\label{app:eq:diff-lambda-i-x-final-norm-bound}
\end{align}
where we have exploited the fact that $y_i \in \{ \pm 1\}$.

We further note that in the situation $\mathcal{I}_z = \emptyset$, $\lambda^{\star} \in \{ 0, C\}$, a change in $z$ or $y$ will not affect $\lambda^{\star}$ unless $z = \mu^{\star}y$ or $z = C + \mu^{\star}y$. As a result, we have for $\mathcal{I}_z = \emptyset$, $\frac{\partial \lambda^{\star}}{\partial z} = \frac{\partial \lambda^{\star}}{\partial y} = 0$ (a.e.).

We now compute $\frac{\partial \lambda^{\star}}{\partial y}$. For a data point $i$, the following holds:
\begin{align}
    \frac{\partial \lambda^{\star}_i}{\partial y} = \left.
    \begin{cases}
    - \frac{\partial \mu^{\star}(z,y)}{\partial y}y_i - e_i^{\mathrm{T}} \mu^{\star}(z,y), & \text{ if } i \in \mathcal{I}_z\\
    0, & \text{ if } i \notin \mathcal{I}_z. 
    \end{cases}
\right.
\label{app:eq:diff-lambda-i-y}
\end{align}
Differentiating the constraint (\ref{app:eq:proj-eq-const}) with respect to $y$ yields
\begin{align}
    0 &= \frac{\partial (y^{\mathrm{T}}\lambda^{\star})}{\partial y} = \lambda^{{\star}^{\mathrm{T}}} + \sum\limits_{i=1}^{n} \frac{\partial \lambda^{\star}_i}{\partial y} y_i  \nonumber \\
    &= \lambda^{{\star}^{\mathrm{T}}} + \sum\limits_{i \in \mathcal{I}_z} \left(- \frac{\partial \mu^{\star}(z,y)}{\partial y} | y_i |^2 -  y_i \mu^{\star}(z,y) e_i^{\mathrm{T}}  \right).  \nonumber
\end{align}
It follows from $| y_i |^2=1$ that
\begin{align}
\frac{\partial \mu^{\star}(z,y)}{\partial y} = \frac{\lambda^{{\star}^{\mathrm{T}}} - \mu^{\star}(z,y) \sum\limits_{i\in \mathcal{I}_z} y_i e_i^{\mathrm{T}}}{\mid \mathcal{I}_z \mid}.
\label{app:eq:diff-mu-y}
\end{align}

From (\ref{app:eq:diff-lambda-i-y}) and (\ref{app:eq:diff-mu-y}), we obtain the following.
\begin{align}
    \frac{\partial \lambda^{\star}_i}{\partial y} = \left.
    \begin{cases}
    - \frac{y_i \lambda^{{\star}^{\mathrm{T}}}}{\mid \mathcal{I}_z \mid} + \mu^{\star}(z,y) \left( \frac{y_i \sum\limits_{j \in \mathcal{I}_z} e_j^{\mathrm{T}} y_j}{\mid \mathcal{I}_z \mid} - e_i^{\mathrm{T}}\right), & \text{ if } i \in \mathcal{I}_z\\
    0, & \text{ if } i \notin \mathcal{I}_z.
    \end{cases}
\right.
\nonumber
\end{align}
As a result, we conclude using Lemma~\ref{app:thm:bounded-iterates} that the following bound holds $\forall i \in [n]$
\begin{align}
    \bigg\| \frac{\partial \lambda^{\star}_i}{\partial y} \bigg\|_{\infty} \leq \frac{\| \lambda \|_{\infty}}{\mid \mathcal{I}_z \mid } + \mid \mu^{\star}(z,y) \mid \leq \underbrace{ \frac{C}{\mid \mathcal{I}_z \mid} + \mid \mu^{\star} \mid}_{\kappa_y},
\label{app:eq:diff-lambda-i-y-final-norm-bound}
\end{align}
where $\kappa_y$ is a constant that only depends on $C$ and the features of the dataset.

From (\ref{app:eq:diff-lambda-i-x-final-norm-bound}) and (\ref{app:eq:diff-lambda-i-y-final-norm-bound}), we conclude that 
\begin{align}
    \| \lambda_{t} - \Hat{\lambda} \|_{\infty} &= \| \lambda^{\star}(z_{t}, y_{t}) - \lambda^{\star}(z_{t}, \Hat{y}(\Hat{\lambda})) + \lambda^{\star}(z_{t}, \Hat{y}(\Hat{\lambda})) - \lambda^{\star}(\Hat{z}, \Hat{y}(\Hat{\lambda})) \|_{\infty} \nonumber \\
    & \leq \kappa_y \| y_{t} - \Hat{y}(\Hat{\lambda}) \|_{\infty} +  \| z_{t} - \Hat{z} \|_{\infty}. \nonumber
\end{align}
\end{proof}
\subsection{Proof of Lemma~\ref{thm:convergence-bound-x-steps}}
\label{app:proof-of-lemma-2}
\begin{proof}[\unskip\nopunct]
Our objective is to prove that the distance of the non-projected updates of Algorithm~\ref{alg:robust-svm-game} from the Stackelberg equilibrium $(\Hat{\lambda}, \Hat{y}(\hat{\lambda}))$, specifically $z_{t} - \Hat{z}$, is bounded. 

We begin by recalling the update rule at round $t$, $\lambda_t := \textsc{Prox}_{\mathcal{S}(y_t)}(z_{t}) = \textsc{Prox}_{\mathcal{S}(y_t)} (\lambda_{t-1} - \eta \nabla_{\lambda} f (\lambda_{t-1}, y_t))$, where $y_t = \Hat{y}(\lambda_{t-1})$, $\mathcal{S}(y_t)$ is the feasible set defined by constraints (\ref{proj-ineq-const}), using the labels at round $t$.
We further recall the Stackelberg equilibrium $(\Hat{\lambda}, \Hat{y}(\hat{\lambda}))$, i.e., 
\begin{align*}
\Hat{\lambda} &:= \textsc{Prox}_{\mathcal{S}(\Hat{y}(\Hat{\lambda}))}(\Hat{z}) = \textsc{Prox}_{\mathcal{S}(\Hat{y}(\Hat{\lambda}))}(\Hat{\lambda} - \eta \nabla_{\lambda} f({\Hat{\lambda}}, \Hat{y}(\Hat{\lambda})))\\
\Hat{y}(\Hat{\lambda}) &:= \textsc{LFlip}(\Hat{\lambda}),
\end{align*} 
where the operator $\textsc{LFlip}:\mathcal{X} \times \mathcal{Y} \to \mathcal{Y}$ defines the label poisoning attack formulated in (\ref{eq:inner-problem-objective}-\ref{eq:inner-problem-const2}).
We conclude the following:
\begin{align}
    z_{t} &= \lambda_{t-1} - \eta \nabla_\lambda f(\lambda_{t-1}, y_{t}) \nonumber \\
    \Hat{z} &= \Hat{\lambda} - \eta \nabla_{\lambda} f({\Hat{\lambda}}, \Hat{y}(\Hat{\lambda})) \nonumber \\
    z_{t} - \Hat{z} &= \lambda_{t-1} - \Hat{\lambda} - \eta \left( \nabla_\lambda f(\lambda_{t-1}, y_{t}) - \nabla_{\lambda} f({\Hat{\lambda}}, \Hat{y}(\Hat{\lambda})) \right).
\nonumber
\end{align}
We apply the mean value theorem for functions with multiple variables to the previous expression which allows us to rewrite $z_{t} - \Hat{z}$ as
\begin{align}
    &= \lambda_{t-1} - \Hat{\lambda} - \eta \left( \nabla_{\lambda} f(\lambda_{t-1}, y_{t}) - \nabla_{\lambda} f({\Hat{\lambda}}, y_{t}) + \nabla_{\lambda} f({\Hat{\lambda}}, y_{t})
    - \nabla_{\lambda} f({\Hat{\lambda}}, \Hat{y}(\Hat{\lambda})) \right) \nonumber \\
    &= \lambda_{t-1} -\Hat{\lambda} - \eta \left( \nabla_{\lambda}^2 f(\xi_\lambda, y_{t}) (\lambda_{t-1} - {\Hat{\lambda}}) + \nabla_{\lambda y}^2 f({\Hat{\lambda}}, \xi_y) (y_{t} - \Hat{y}(\Hat{\lambda})) \right),
    \nonumber
\end{align}
where $\xi_\lambda \in ({\Hat{\lambda}}, \lambda_{t-1})$ and $\xi_y \in (\Hat{y}(\Hat{\lambda}), y_{t})$. The last equation can be restated as:
\begin{align}
    z_{t} -\Hat{z} &= (I - \eta \nabla_{\lambda}^2 f(\xi_\lambda, y_{t}) )(\lambda_{t-1} - {\Hat{\lambda}}) - \eta \nabla_{\lambda y}^2 f({\Hat{\lambda}}, \xi_y) (y_{t} - \Hat{y}(\Hat{\lambda})),
\label{app:eq:x-difference-mvt}
\end{align}
where $I$ denotes the identity matrix.
We have defined the gradient of the objective in (\ref{eq:svm-dual-gradient}) as $\nabla_{\lambda} f(\lambda, y)= \Tilde{Q}\lambda - \mathbbm{1}$, where $\Tilde{Q}$ is the matrix with entries $\Tilde{Q}_{ij}=y_i y_j K_{ij}, \forall i,j \in [n]$, using the simplified notation.
We express the second-order partial derivatives as:
\begin{align}
    \nabla_{\lambda}^2 f({\lambda};y) &=   K \odot y y^{\mathrm{T}}, 
    \label{eq:eq:svm-dual-gradient-wrt-point-1} \\
    \nabla_{\lambda y}^2 f({\lambda};y) &= K \odot y \lambda^{\mathrm{T}} + I \odot (K(\lambda \odot y) \mathbbm{1}^{\mathrm{T}}),
\label{eq:eq:svm-dual-gradient-wrt-point-2}
\end{align}
where $K$ is the Gram matrix, $I$ is the $n \times n$ identity matrix, $\mathbbm{1}$ is the all-one vector and $\odot$ denotes the Hadamard product.
From (\ref{app:eq:x-difference-mvt}), (\ref{eq:eq:svm-dual-gradient-wrt-point-1}) and (\ref{eq:eq:svm-dual-gradient-wrt-point-2}), we obtain
\begin{align}
 z_{t} - \Hat{z}  &=  (I - \eta \left((K \odot y_t y_t^{\mathrm{T}} ) \right)(\lambda_{t-1} - {\Hat{\lambda}}) 
    \nonumber \\
    & \quad - \eta \left( K \odot \xi_y \Hat{\lambda}^{\mathrm{T}} + I \odot (K(\Hat{\lambda} \odot \xi_y) \mathbbm{1}^{\mathrm{T}}) \right)(y_{t} - \Hat{y}(\Hat{\lambda})).
    \nonumber
\end{align}
We take the infinity norm and conclude: 
\begin{align}
    \| z_{t} - \Hat{z} \|_{\infty} &= \| (I - \eta \left(K \odot y_t y_t^{\mathrm{T}} \right))(\lambda_{t-1} - {\Hat{\lambda}}) \nonumber \\
    & \quad - \eta \left( K \odot \xi_y \Hat{\lambda}^{\mathrm{T}} + I \odot (K(\Hat{\lambda} \odot \xi_y) \mathbbm{1}^{\mathrm{T}}) \right)(y_{t} - \Hat{y}(\Hat{\lambda})) \|_{\infty}  \nonumber \\
    & \leq \| (I - \eta \left(K \odot y_t y_t^{\mathrm{T}} \right))(\lambda_{t-1} - {\Hat{\lambda}}) \|_{\infty} \nonumber \\
    & \quad + \| - \eta \left( K \odot \xi_y \Hat{\lambda}^{\mathrm{T}} + I \odot (K(\Hat{\lambda} \odot \xi_y) \mathbbm{1}^{\mathrm{T}}) \right)(y_{t} - \Hat{y}(\Hat{\lambda})) \|_{\infty} \tag{triangle inequality} \nonumber \\
    & = \| (I - \eta \left(K \odot y_t y_t^{\mathrm{T}} \right))(\lambda_{t-1} - {\Hat{\lambda}}) \|_{\infty} \nonumber \\
    & \quad + \| \eta \left( K \odot \xi_y \Hat{\lambda}^{\mathrm{T}} + I \odot (K(\Hat{\lambda} \odot \xi_y) \mathbbm{1}^{\mathrm{T}}) \right)(y_{t} - \Hat{y}(\Hat{\lambda})) \|_{\infty} \tag{homogeneity}  \nonumber \\
    & \leq \underbrace{ \| (I - \eta \left(K \odot y_t y_t^{\mathrm{T}} \right)) \|_{\infty}}_{\kappa_{\lambda}}  \| \lambda_{t-1} - {\Hat{\lambda}} \|_{\infty}
    \nonumber \\
    & \quad + \underbrace{ \| \eta \left( K \odot \xi_y \Hat{\lambda}^{\mathrm{T}} + I \odot (K(\Hat{\lambda} \odot \xi_y) \mathbbm{1}^{\mathrm{T}}) \right)\|_{\infty} }_{\kappa'_y} \| y_{t} - \Hat{y}(\Hat{\lambda}) \|_{\infty}. \tag{homogeneity} 
\label{app:eq:coeff-of-terms} 
\end{align}
This implies that
\begin{align}
    \| z_{t} - \Hat{z} \|_{\infty} & \leq \kappa_{\lambda} \| \lambda_{t-1} - \Hat{\lambda} \|_{\infty} + \kappa'_y \| y_{t} - \Hat{y}(\Hat{\lambda}) \|_{\infty}.
    \nonumber
\end{align}
\end{proof}
\vspace{-0.2cm}
We note that $\kappa_\lambda \leq 1$ if the learning rate $\eta$ is chosen small enough.

\subsection{Proof of Theorem~\ref{thm:stability}}
\label{app:proof-local-stability}
Let $(\Hat{\lambda}, \Hat{y}(\hat{\lambda}))$ denote the Stackelberg equilibrium, i.e., 
\begin{align*}
\Hat{\lambda} &:= \textsc{Prox}_{\mathcal{S}(\Hat{y}(\Hat{\lambda}))}(\Hat{z}) = \textsc{Prox}_{\mathcal{S}(\Hat{y}(\Hat{\lambda}))}(\Hat{\lambda} - \eta \nabla_{\lambda} f({\Hat{\lambda}}, \Hat{y}(\Hat{\lambda})))\\
\Hat{y}(\Hat{\lambda}) &:= \textsc{LFlip}(\Hat{\lambda}),
\end{align*} 
where the operator $\textsc{LFlip}:\mathcal{X} \times \mathcal{Y} \to \mathcal{Y}$ defines the label poisoning attack formulated in (\ref{eq:inner-problem-objective}-\ref{eq:inner-problem-const2}). We further assume that the $\textsc{LFlip}$ operator returns a unique set of adversarial labels at the Stackelberg equilibrium $(\Hat{\lambda}, \Hat{y}(\hat{\lambda}))$, which implies that there are no ties with respect to $\Hat{\lambda}$ values. As a result, there exists a small enough constant $\delta'>0$ such that for any $\lambda_0$ with $\| \lambda_0 - \Hat{\lambda} \|_{\infty} < \delta'$, the corresponding $\Hat{y}(\lambda_0)$ satisfies $\Hat{y}(\lambda_0)=\Hat{y}(\Hat{\lambda})$. (Indeed, as long as $\delta'$ is small enough, such that the top-k entries between $\Hat{\lambda}$ and $\lambda_0$ agree, $\Hat{y}(\lambda_0)=\Hat{y}(\Hat{\lambda})$ will be satisfied.)

By combining Lemma~\ref{thm:convergence-bound-lambda-steps} and Lemma~\ref{thm:convergence-bound-x-steps} we conclude 
\begin{align*}
\| \lambda_{1} - \Hat{\lambda} \|_{\infty} &\leq \kappa_y \| \Hat{y}(\lambda_0) - \Hat{y}(\Hat{\lambda}) \|_{\infty} +  \| z_{1} - \hat{z} \|_{\infty}\leq  \| z_{1} - \hat{z} \|_{\infty} \leq \kappa_\lambda \|\lambda_0-\Hat{\lambda}\|_\infty < \kappa_\lambda \delta',
\end{align*}
where we used the fact that $\Hat{y}(\lambda_0)=\Hat{y}(\Hat{\lambda})$. The learning rate $\eta$ is chosen small enough, such that $\kappa_\lambda < 1$ and therefore $\|\lambda_1-\Hat{\lambda}\|_\infty< \kappa_\lambda \delta' < \delta'$. We therefore conclude by induction on $t$ that $\|\lambda_t-\hat{\lambda}\|_\infty<\kappa_\lambda^t \delta'$ for all $t>0$. This readily implies $\lambda_t \rightarrow \Hat{\lambda}$. Moreover, choosing $\delta=\min\{ \epsilon, \delta'\}$ concludes $\|\lambda_t-\Hat{\lambda}\|_\infty < \epsilon$ and concludes the proof.
\qed

\subsection{Global convergence result}
\label{app:global-convergence-result}
The previous section provides the proof of Theorem~\ref{thm:stability}, which provides a local stability and convergence result. Under additional assumptions on the constants $\kappa_y$ and $\kappa_y'$ that capture the sensitivity of the iterates $\lambda_t$ with respect to changes in the labels, one can derive a global convergence result, as summarized by the following proposition:

\begin{proposition}
Let $(\Hat{\lambda},\Hat{y}(\Hat{\lambda}))$ denote the Stackelberg equilibrium as before and let $\delta'=(\Hat{\lambda}_{\{k\}}-\Hat{\lambda}_{\{k+1\}})/2>0$, where $\Hat{\lambda}_{\{1\}}$ denotes the largest entry of $\Hat{\lambda}$, $\Hat{\lambda}_{\{2\}}$ the second larges entry of $\Hat{\lambda}$, etc. Provided that 
\begin{equation*}
\frac{2(\kappa_y+\kappa_y')k}{1-\kappa_\lambda} < \delta'
\end{equation*}
holds and that the step-size $\eta$ is chosen to be small enough, the iterates $\{\lambda_t\}$ of \textsc{Floral} are guaranteed to converge to $\Hat{\lambda}$ from any initial condition $\lambda_0$.
\end{proposition}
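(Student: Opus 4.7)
The plan is to show that under the stated hypothesis, the iterates $\{\lambda_t\}$ enter the $\ell_\infty$ neighborhood of $\Hat{\lambda}$ of radius $\delta'$ after finitely many steps, after which the argument of Theorem~\ref{thm:stability} drives them all the way to $\Hat{\lambda}$. First, combining Lemma~\ref{thm:convergence-bound-lambda-steps} and Lemma~\ref{thm:convergence-bound-x-steps} yields the single-step recursion
\begin{equation*}
\|\lambda_t - \Hat{\lambda}\|_\infty \leq \kappa_\lambda\,\|\lambda_{t-1} - \Hat{\lambda}\|_\infty + (\kappa_y+\kappa_y')\,\|y_t - \Hat{y}(\Hat{\lambda})\|,
\end{equation*}
with $\kappa_\lambda<1$ for $\eta$ small enough. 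The key uniform bound is on the label-difference term: since both $y_t$ and $\Hat{y}(\Hat{\lambda})$ are produced by $\textsc{LFlip}$ from the base labels by flipping exactly $k$ coordinates (constraint~(\ref{eq:inner-problem-const2})), they disagree in at most $2k$ positions, so in the relevant norm $(\kappa_y+\kappa_y')\,\|y_t-\Hat{y}(\Hat{\lambda})\| \leq 2(\kappa_y+\kappa_y')\,k$ independently of $t$.

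Unrolling the recursion then gives
\begin{equation*}
\|\lambda_t - \Hat{\lambda}\|_\infty \leq \kappa_\lambda^t\,\|\lambda_0 - \Hat{\lambda}\|_\infty + \frac{2(\kappa_y+\kappa_y')\,k}{1-\kappa_\lambda}\bigl(1-\kappa_\lambda^t\bigr).
\end{equation*}
The first term decays geometrically, and by hypothesis the second term is bounded above by a quantity strictly below $\delta'$, so there exists a finite $T_0$ with $\|\lambda_{T_0}-\Hat{\lambda}\|_\infty<\delta'$. The choice $\delta' = (\Hat{\lambda}_{\{k\}}-\Hat{\lambda}_{\{k+1\}})/2$ is precisely calibrated so that an $\ell_\infty$ perturbation of radius less than $\delta'$ cannot reorder the top-$k$ indices of $\Hat{\lambda}$: for any $i$ in the top-$k$ of $\Hat{\lambda}$ and any $j$ outside,
\begin{equation*}
\lambda_i > \Hat{\lambda}_i - \delta' \geq \Hat{\lambda}_{\{k\}} - \delta' = \Hat{\lambda}_{\{k+1\}} + \delta' \geq \Hat{\lambda}_j + \delta' > \lambda_j,
\end{equation*}
so $\textsc{LFlip}(\lambda_{T_0}) = \Hat{y}(\Hat{\lambda})$. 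From $t\geq T_0$ onward the label-difference term in the recursion vanishes identically, the iteration reduces to the pure contraction $\|\lambda_{t+1}-\Hat{\lambda}\|_\infty \leq \kappa_\lambda\,\|\lambda_t-\Hat{\lambda}\|_\infty$, and $\lambda_t\to\Hat{\lambda}$ geometrically, completing the proof.

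The main obstacle is ensuring that the constants $\kappa_\lambda,\kappa_y,\kappa_y'$ can be taken uniform along the whole trajectory. In Lemmas~\ref{thm:convergence-bound-lambda-steps}--\ref{thm:convergence-bound-x-steps} they are stated in terms of the local active index set $\mathcal{I}_z$ and the multiplier $\mu^\star$, both of which can change as the iterates move, so the inequality $2(\kappa_y+\kappa_y')k/(1-\kappa_\lambda) < \delta'$ must be interpreted with suitable global suprema over the feasible compact set $\lambda\in[0,C]^n$, which remain finite provided $|\mathcal{I}_z|$ does not degenerate. A secondary subtlety is that \textsc{Floral} as stated uses a randomized top-$k$ rule within a top-$B$ candidate pool; the argument above applies cleanly to the deterministic top-$k$ variant used in the stability analysis, and the randomized case follows pathwise once the top-$B$ set stabilizes.
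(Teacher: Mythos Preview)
Your proof is correct and follows essentially the same approach as the paper's: combine the two lemmas into a one-step recursion, bound the label-difference term uniformly by $2(\kappa_y+\kappa_y')k$, unroll to show the iterates enter the $\delta'$-ball, and then contract. Your explicit verification that a $\delta'$-perturbation preserves the top-$k$ index set and your caveats about uniform constants and the randomized versus deterministic top-$k$ rule are additions the paper does not spell out, but the skeleton of the argument is identical.
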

\begin{proof}
    As a result of  Lemma~\ref{thm:convergence-bound-lambda-steps} and Lemma~\ref{thm:convergence-bound-x-steps} we conclude that
\begin{align*}
\| \lambda_t-\hat{\lambda} \|_\infty &\leq \kappa_y \| \hat{y}(\lambda_{t-1}) - \hat{y}(\hat{\lambda}) \|_\infty + \| z_{t-1} - \hat{z} \|_\infty \\
&\leq (\kappa_y+\kappa_y') \| \hat{y}(\lambda_{t-1}) - \hat{y}(\hat{\lambda}) \|_\infty + \kappa_\lambda \| \lambda_{t-1} - \lambda_0 \|_\infty. 
\end{align*}
We further take advantage of the fact that $\|\Hat{y}(\lambda) - \Hat{y}(\Hat{\lambda})\|_\infty \leq 2k$ for any $\lambda$ (at most $k$ labels are flipped), which implies
\begin{align*}
\|\lambda_t-\Hat{\lambda}\|_\infty &\leq \kappa_\lambda \|\lambda_{t-1} - \lambda_0\|_\infty + 2(\kappa_y+\kappa_y') k. \nonumber
\end{align*}
The previous inequality is satisfied for all $t$, and can be used to conclude that
\begin{equation}
\| \lambda_{t} - \Hat{\lambda} \|_{\infty} \leq \kappa_\lambda^t \| \lambda_{0} - \Hat{\lambda} \|_{\infty} + \frac{2(\kappa_y + \kappa'_y)k}{1 - \kappa_\lambda} 
\label{eq:global-conv}
\end{equation}
holds for all $t$ (this can be verified by an induction argument). As a result, there exists an integer $t'>0$ such that $\| \lambda_{t} - \Hat{\lambda} \|_{\infty} < \delta'$ for all $t>t'$. This implies, due to the choice of $\delta'$, that $\Hat{y}(\lambda_t)=\Hat{y}(\Hat{\lambda})$ for all $t>t'$. We therefore conclude that for all $t>t'+1$
\begin{align}
\|\lambda_t-\Hat{\lambda}\|_\infty &\leq \kappa_\lambda \|\lambda_{t-1}-\Hat{\lambda}\|_\infty. \nonumber
\end{align}
This readily implies $\lambda_t\rightarrow \Hat{\lambda}$, due to the fact that $\kappa_\lambda <1$, and implies the desired result.
\end{proof}

\section{The Gradient of the Objective (\ref{eq:outer-problem-objective})}
\label{app:gradient-svm-dual}

We begin by recalling the kernel SVM dual formulation \citep{boser1992training, hearst1998support}:
\begin{subequations}
\begin{align}
D(f_{\lambda};\mathcal{D}): \min \limits_{\lambda \in \mathbb{R}^{n}} \quad  & \dfrac{1}{2} \sum\limits_{i=1}^{n} \sum\limits_{j=1}^{n} \lambda_{i} \lambda_{j} y_i y_{j} K_{ij} - \sum\limits_{i=1}^{n} \lambda_{i} \nonumber \label{eq:svm-dual-kernel-objective}  \\
 \text { subject to }  & \quad \sum\limits_{i=1}^{n} \lambda_{i} y_{i} = 0 \nonumber \\
&  \quad 0 \leq \lambda_{i} \leq C, \forall i \in [n], \nonumber
\end{align}
\end{subequations}
where $K$ represents the Gram matrix with entries $K_{ij}=k(x_i, x_j), \forall i,j \in [n] := \{1,\dots,n\}$, derived from a kernel function $k$.
We consider the $p^{\text{th}}$ data point and apply differentiation of a double summation to the objective, which yields
\begin{align}
\frac{\partial \left( D(f_{\lambda};\mathcal{D}) \right) }{\partial \lambda_p} &= \frac{1}{2} \left( \sum\limits_{i=1}^{n} \lambda_i y_i y_p K_{ip} + \sum\limits_{j=1}^{n} \lambda_j y_p y_j K_{pj} \right) - 1 \nonumber \\
&= y_p \sum\limits_{i=1}^{n} \lambda_i y_i K_{ip} - 1. \tag{from the symmetry of the kernel function}\nonumber 
\end{align}
In compact form, we obtain the following.
\begin{align}
\nabla_{\lambda} D(f_{\lambda};\mathcal{D}) = Q\lambda - \mathbbm{1}, \nonumber
\end{align}
where $Q$ is the matrix with entries $Q_{ij}=y_i y_j K_{ij}, \forall i,j \in [n]$ and $\mathbbm{1}$ is the vector of all ones.

\section{Experiment Details}
\label{app:experiment-details}
For our experiments, we set the hyperparameter values as given in Table~\ref{tab:params-and-hyperparams}. We provide the experiment details as follows.
\begin{itemize}[left=0cm,topsep=0pt]
\setlength\itemsep{-0.2em}
    \item We initialize the model $f_{\lambda_{0}}$ with parameters set to $0$. In \textsc{Floral}, however, the attacker uses a randomized top-$k$ rule to identify the $B$ most influential support vectors based on the $\lambda$ values. Due to the $0$ initialization of $\lambda$, a warm-up period is required, which we set to $1$ round for all SVM-related methods. 

    \item To train kernel SVM classifiers for all SVM-related methods other than \textsc{Floral}, we use our PGD-based Algorithm~\ref{alg:robust-svm-game} with a \textit{dummy} attack, that is, we eliminate the adversarial dataset generation step and employ vanilla PGD training.
    
    \item For large datasets such as IMDB, we implement projection via fixed point iteration as given in Algorithm~\ref{alg:projection-fpi} in Section~\ref{sec:large-scale-implementation} instead of constructing a quadratic program as defined in (\ref{eq:svm-projection-operation}-\ref{proj-ineq-const}). 
    
\end{itemize}

\begin{table}[ht]
\caption{Hyperparameter values.}
\centering
\resizebox{1\linewidth}{!}{
\begin{tabular}{c|l|l}
\specialrule{1.5pt}{1pt}{1pt} 
\textbf{Symbol} & \textbf{Hyperparameter}     & \textbf{Value} \\ 
\specialrule{1.5pt}{1pt}{1pt} 
$n$ & The size of the training dataset & Moon: $500$, IMDB: $20000$ \\
$T$  & The number of training rounds  & Moon: $500$,  IMDB: $1000$   \\ 
$T_{\text{proj}}$  & The number of projection via fixed point iteration rounds  & $1000$  \\ 
$B$  & The attacker budget  &  Moon: $\{10, 20, 50, 100, 250\}$, IMDB: $\{500, 2500, 5000, 12500\}$ \\ 
$k$  & The number of labels to poison  & Moon: $\{5, 10, 25, 50, 125\}$, IMDB $\{250, 1250, 2500, 6250\}$ \\ 
$C$ & Regularization parameter for soft-margin SVM & Moon: $\{10, 100\}$, IMDB: $10$ \\
$\gamma$ & RBF kernel parameter & Moon: $\{0.5, 1, 10\}$, IMDB: $0.005$ \\
$\epsilon$ & Error rate  for projection via fixed point iteration & $1e-21$\\
$\eta$ & Learning rate & Optimized over the set $\{0.0001, 0.0003, 0.0005, 0.0007\}$, for RoBERTa: $2e-05$ \\
& Learning rate scheduler & Moon: a decay rate of $0.1$ at every $\{100, 200\}$ rounds (optimized), for RoBERTa: linear scheduler \\
& The model architecture for NN and NN-PGD & Fully connected MLP with $2$ hidden layers with  $32$ units each \\
& Batch size & $32$\\
& NN-PGD perturbation amount & $8/255$\\
& NN-PGD step size & $2/255$\\
& SGD optimizer momentum value & $0.9$ \\
\specialrule{1.5pt}{1pt}{1pt}
\end{tabular}
}
\label{tab:params-and-hyperparams}
\end{table}
\vspace{-0.2cm}
\subsection{Datasets}
\label{app:datasets}
\begin{itemize}[left=0cm,topsep=0pt]
\setlength\itemsep{-0.2em}
    \item Moon is a benchmark dataset for binary classification tasks, generated directly using the \texttt{scikit-learn} library \citep{pedregosa2011scikit}. It contains two-dimensional input examples with each feature taking value in the range $[-2.5, 2.5]$.
    We generate its adversarial versions by flipping the labels of farthest points from the decision boundary of a linear classifier trained on the clean dataset, using label poisoning levels ($\%$) of $\{5, 10, 15, 20, 25\}$.
    We provide the visualizations of the Moon training dataset with clean and adversarial labels in Figure~\ref{fig:moon-datasets-clean-and-adv}.
    
    \item IMDB review sentiment analysis benchmark dataset \citep{maas-EtAl:2011:ACL-HLT2011} contains train and test datasets, each containing $25,000$ examples. We used randomly selected $20,000$ points from the training set as training examples, and the rest as validation examples.
    We fine-tuned the RoBERTa-base model \footnote{\label{roberta-base-link}The pre-trained RoBERTa-base model can be found in \url{https://huggingface.co/FacebookAI/roberta-base}.} \citep{roberta} on this dataset and extracted features ($768$-dimensional embeddings) to train SVM-related models on this dataset.
    We generated adversarially labelled datasets using the fine-tuned RoBERTa-base model on the clean dataset. Specifically, we identified the most influential training points based on the gradient of loss with respect to the inputs and flipped their labels under various poisoning levels ($\%$) of $\{10, 25, 30, 35, 40\}$. 
\end{itemize}

 \begin{figure*}[ht]
    \centering  
    \begin{subfigure}{0.25\textwidth}\centering{\includegraphics[width=1\linewidth,trim=0 20 0 55,clip]{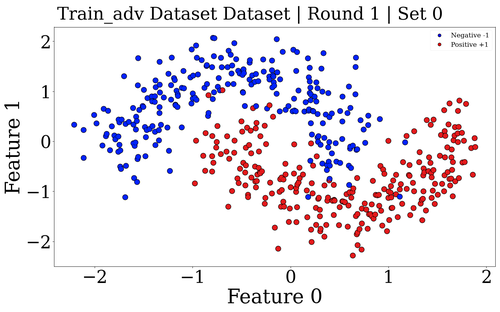}}
    \caption{Clean.}
    \end{subfigure}%
    \begin{subfigure}{0.25\textwidth}\centering{\includegraphics[width=1\linewidth,trim=0 20 0 55,clip]{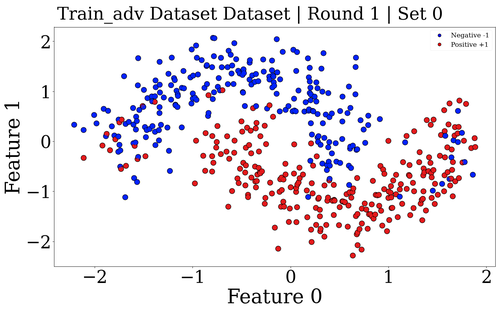}}
    \caption{$D^{\text{adv}} = 5\%$.}
    \end{subfigure}%
    \begin{subfigure}{0.25\textwidth}\centering{\includegraphics[width=1\linewidth,trim=0 20 0 55,clip]{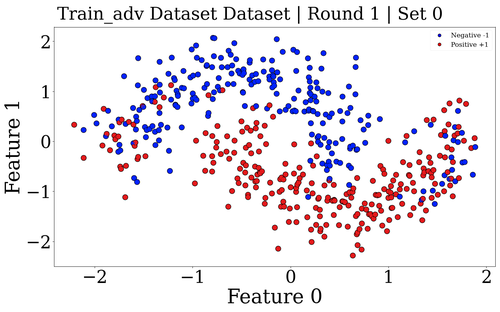}}
    \caption{$D^{\text{adv}} = 10\%$.}
    \end{subfigure}%
    \begin{subfigure}{0.25\textwidth}\centering{\includegraphics[width=1\linewidth,trim=0 20 0 55,clip]{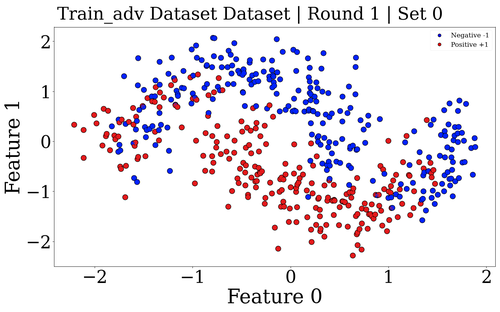}}
    \caption{$D^{\text{adv}} = 25\%$.}
    \end{subfigure}%
    \caption{Illustrations of the Moon training sets from an example replication, using clean and adversarial labels with poisoning levels: $5\%$, $10\%$, $25\%$. }
\label{fig:moon-datasets-clean-and-adv}
\end{figure*}
\subsection{Baselines}
\label{app:baselines}
In our main experiments, we compared \textsc{Floral} against the baseline methods by carefully selecting their hyperparameters using the domain knowledge, which we detail below.
\begin{itemize}[left=0cm,topsep=0pt]
    \item LN-SVM \citep{svm-adversarial-noise} applies a heuristic-based kernel matrix correction by assuming that every label in the training set is independently flipped with the same probability. It requires a predefined noise parameter $\mu$, which we set to $\mu \in \{ 0.05, 0.1, 0.15, 0.2, 0.25\}$ by leveraging the domain label poisoning knowledge, i.e. using the poisoning levels of the adversarial datasets. 

    \item For Curie \citep{curie}, we set the confidence parameter to $\{0.95, 0.90, 0.85, 0.8, 0.75\}$. To compute the average distance, we considered $k=20$ neighbors in the same cluster for the Moon dataset and $k=1000$ neighbors for the IMDB dataset experiments.

    \item For LS-SVM \citep{label-sanitization}, we use the relabeling confidence threshold from $\{0.95, 0.90, 0.85, 0.8, 0.75\}$, again aligning with the poisoning level of the adversarial datasets. For its $k$-NN step, we considered $k=20$ and $k=1000$ neighbors for the Moon and IMDB datasets, respectively. 

    \item NN baseline is a fully connected multi-layer perceptron with two hidden layers with $32$ units each, trained using the SGD optimizer with $0.9$ momentum and binary cross-entropy loss. For additional experiments on the MNIST dataset, a similar architecture with two hidden layers having $\{32,10\}$ units is employed.

    \item NN-PGD is based on the same NN architecture as above, trained with PGD-AT \citep{madry2017towards} using a standard perturbation budget of $8/255$ and a step size of $2/255$.
    \looseness -1
    
\end{itemize}

\subsection{RoBERTa Experiment Details}
\label{app:sec:roberta-exp-details}
We fine-tune the RoBERTa-base model\textsuperscript{\ref{roberta-base-link}} on the IMDB review sentiment analysis dataset\footnote{The IMDB review sentiment dataset can be found in \url{https://huggingface.co/datasets/stanfordnlp/imdb}.}.
We fine-tune the model for three epochs with no warm-up steps, using the AdamW optimizer, weight decay $0.01$, batch size $16$, and learning rate $2\mathrm{e}{-05}$ with a linear scheduler, using a single NVIDIA A100 40GB GPU. We extract the last layer embeddings of the trained model for experiments with \textsc{Floral} integration.

\section{Effectiveness Analysis of \textsc{Floral} Defense}
\label{app:defense-analysis}

As explained in Section~\ref{sec:proposed-approach}, \textsc{Floral} takes a \textit{proactive} defense when the initial training data is clean, iteratively adjusting the model to reduce sensitivity to potential label poisoning attacks by exposing it to adversarial decision boundary configurations through adversarial training. Conversely, when the training data is already contaminated with adversarial labels, \textsc{Floral} mitigates their effect by \textit{implicitly sanitizing} the corrupted labels.

To demonstrate how \textsc{Floral} defenses under already poisoned training data, we further analyze the efficacy of \textsc{Floral} by measuring its "recovery" rate of poisoned labels. That is, we quantify \textsc{Floral}'s rate of disrupting the initial attack ($\%$) on the adversarially labelled training sets, averaged over replications. 

As reported in Figure~\ref{fig:floral-recovery-rate} on the adversarial Moon datasets, \textsc{Floral} is able to disrupt the initial label attack (already inherited in the training set), at a $25-35\%$ rate. This contributes to the success of the \textsc{Floral} in achieving higher robust accuracy in training with adversarial datasets.
Moreover, we provide example illustrations (Figures~\ref{fig:floral-recovery-rate-Dadv5}-\ref{fig:floral-recovery-rate-Dadv25}) that show which poisoned data points are recovered by \textsc{Floral} under the randomized top-$k$ attack.

\begin{figure}[h]
    \centering  
    \begin{subfigure}[t]{0.5\textwidth}\centering{\includegraphics[width=1\linewidth,trim=0 0 0 0,clip]{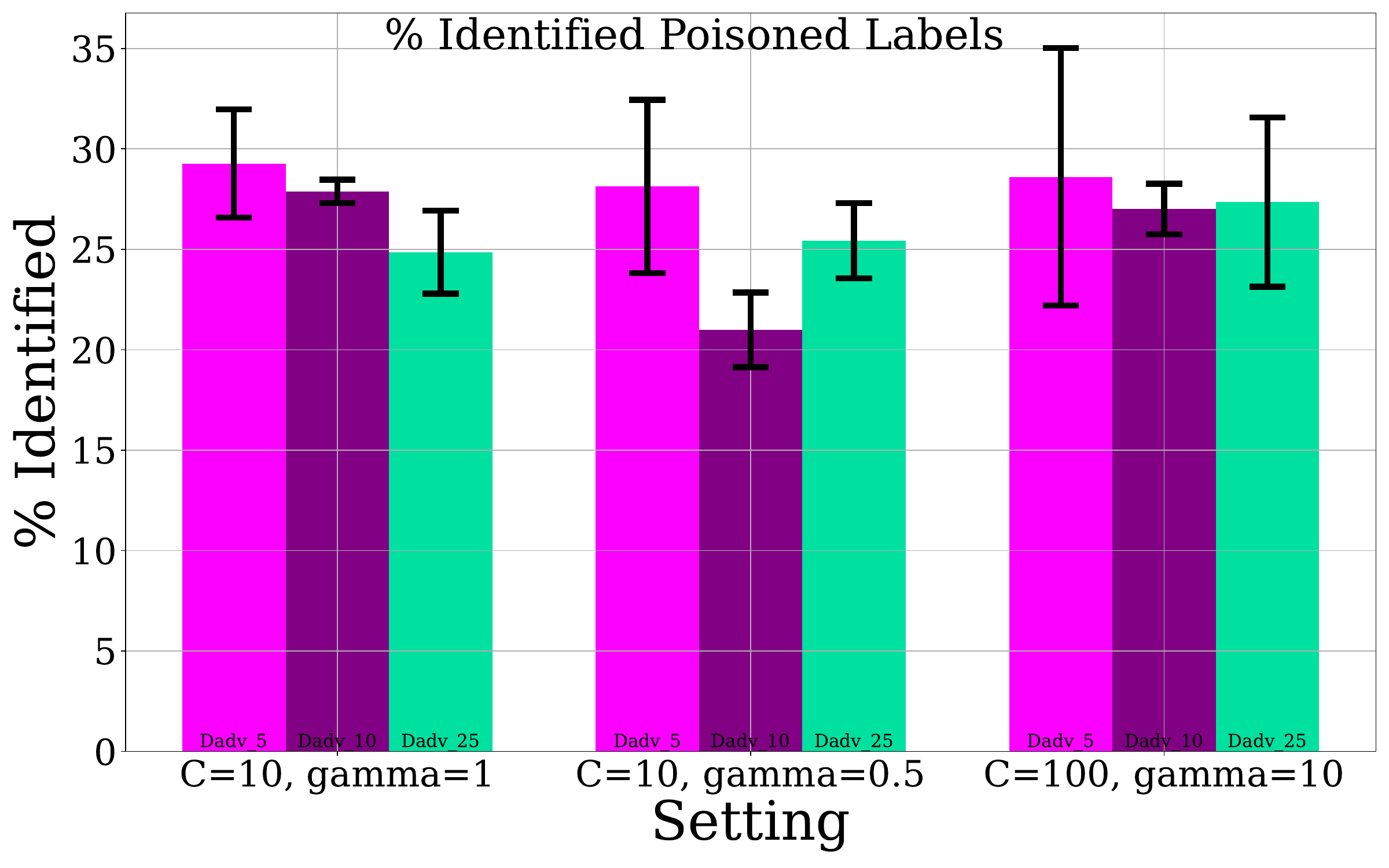}}
    \caption{ Average ($\%$) poisoned labels recovered by \textsc{Floral}.}
    \label{fig:floral-recovery-rate}
    \end{subfigure}%
    \begin{subfigure}[t]{0.5\textwidth}\centering{\includegraphics[width=1\linewidth,trim=0 0 0 0,clip]{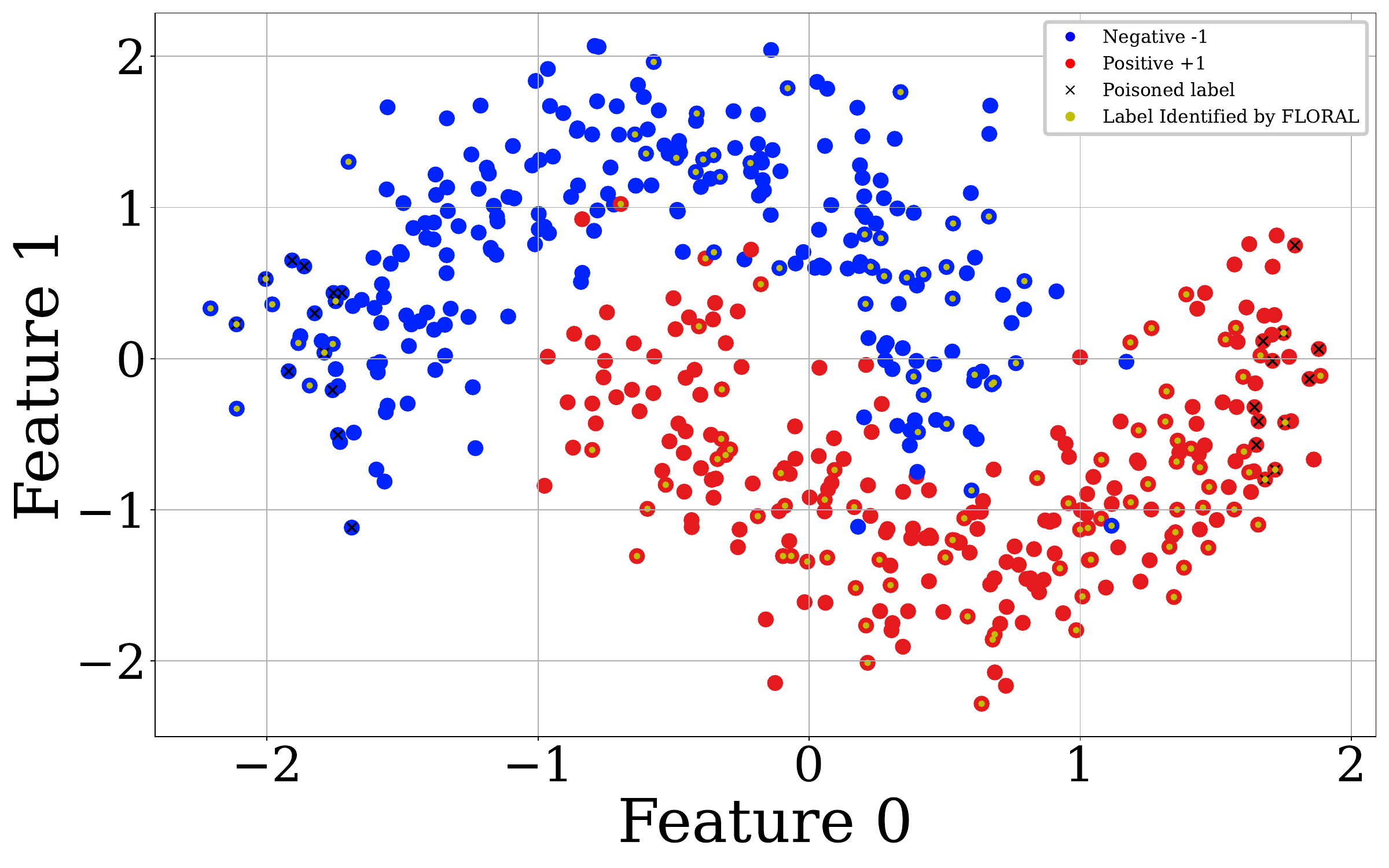}}
    \caption{ A trace for recovered points on $D^{\text{adv}}=5\%$.}
    \label{fig:floral-recovery-rate-Dadv5}
    \end{subfigure}%
    \\
    \begin{subfigure}[t]{0.5\textwidth}\centering{\includegraphics[width=1\linewidth,trim=0 0 0 0,clip]{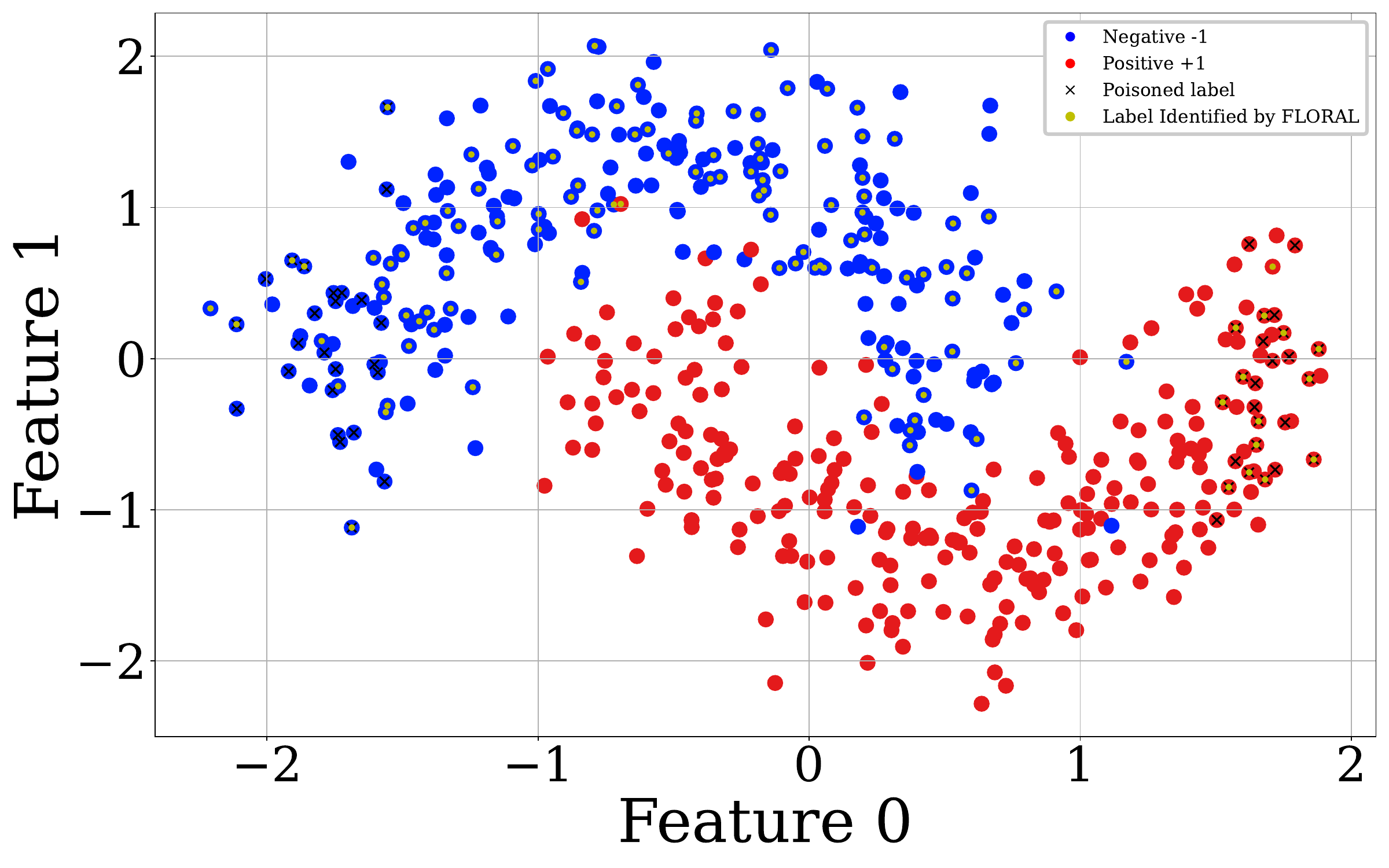}}
    \caption{ A trace for recovered points on $D^{\text{adv}}=10\%$.}
    \label{fig:floral-recovery-rate-Dadv10}
    \end{subfigure}%
    \begin{subfigure}[t]{0.5\textwidth}\centering{\includegraphics[width=1\linewidth,trim=0 0 0 0,clip]{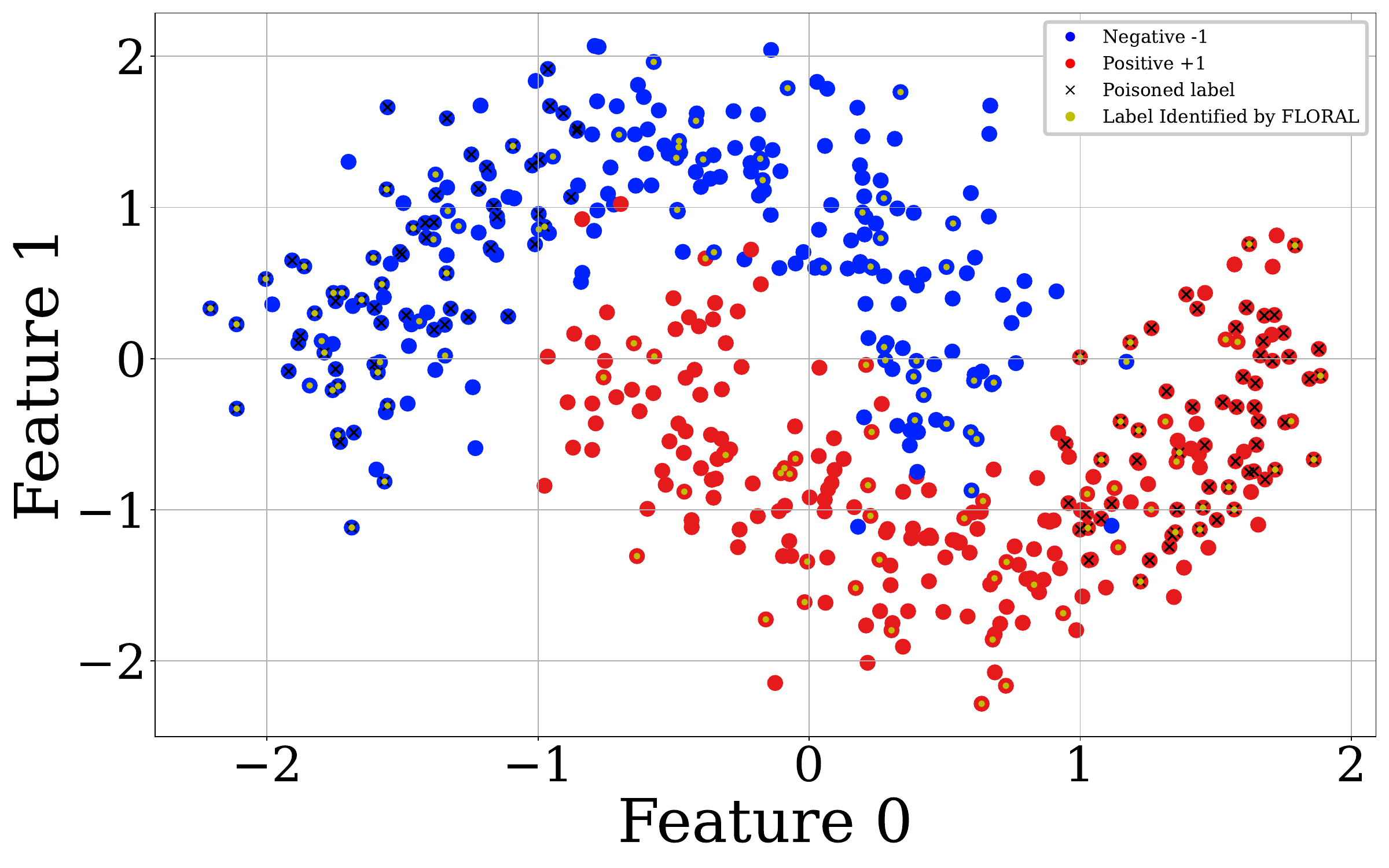}}
    \caption{A trace for recovered points on $D^{\text{adv}}=25\%$.}
    \label{fig:floral-recovery-rate-Dadv25}
    \end{subfigure}%
     \caption{ The average percentage of "\textbf{recovered}" poisoned labels by \textsc{Floral} over the adversarial Moon datasets containing $\{5, 10, 25\}$ ($\%$) poisoned labels. As shown in \textbf{(a)}, \textsc{Floral} is able to recover, on average, $25-35 \%$ of the poisoned labels. The plots \textbf{(b)-(d)} illustrate example traces, showing which poisoned data points are recovered by \textsc{Floral}.  \looseness -1}
\label{fig:recovery-of-floral}
\end{figure}

\clearpage 
\section{Additional Experimental Results}
\label{app:additional-experiment-results}
We provide additional experimental results under various hyperparameter settings for the Moon dataset in Appendix~\ref{sec:moon-results-appendix}. In Appendix~\ref{sec:imdb-results-appendix}, we first report a comprehensive comparison of \textsc{Floral} against other baselines on the IMDB dataset, followed by an analysis of how \textsc{Floral} shifts the most influential training points for RoBERTa's predictions on the IMDB dataset. 
In Appendix~\ref{app:mnist1vs7-experiments}, we provide experiments on the MNIST \citep{deng2012mnist} dataset.
Finally, we present a sensitivity analysis with respect to the attacker's budget in Appendix~\ref{sec:sensitivity-analysis}.
\looseness -1
\subsection{Moon}
\label{sec:moon-results-appendix}
We report the clean and robust test accuracy of methods under different (non-optimal) kernel hyperparameter choices and considering label poisoning levels $\{5,10,25\} (\%)$ in Figure~\ref{fig:moon-exp-results-plots-appendix} and Table~\ref{tab:test-accuracy-comp-moon-appendix}. 

When the kernel hyperparameters are not optimally chosen, NN-PGD shows superior performance in less adversarial scenarios compared to SVM-based methods. However, it also demonstrates significant sensitivity to label attacks in $25\%$ adversarial settings, against all other baselines. 
\textsc{Floral} particularly advances by maintaining a higher robust accuracy in more adversarial settings.

 \begin{figure*}[ht]
    \centering  
    \begin{subfigure}[t]{0.25\textwidth}\centering{\includegraphics[width=1\linewidth,trim=0 0 0 0,clip]{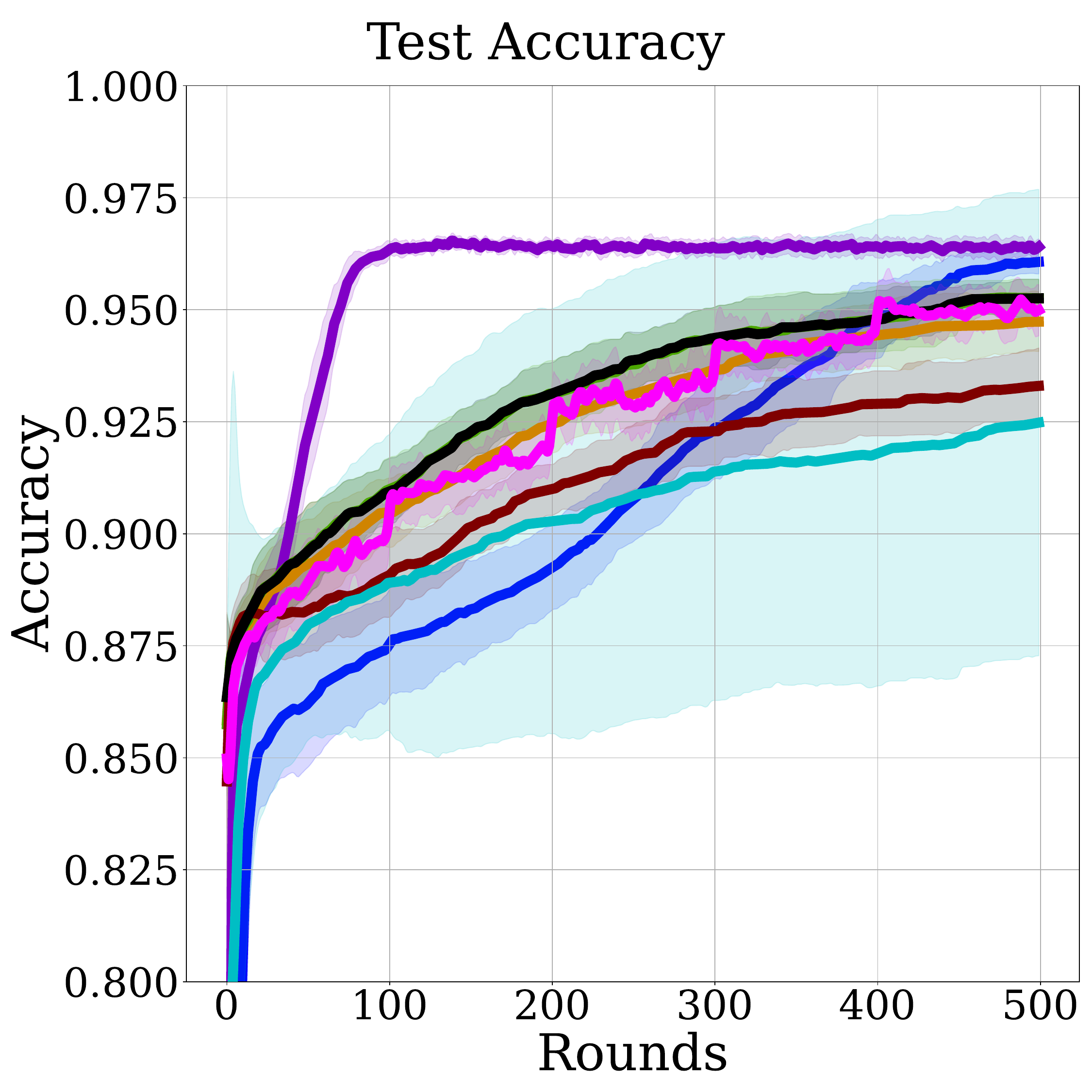}}
    \caption{Clean.}
    \end{subfigure}%
    \begin{subfigure}[t]{0.25\textwidth}\centering{\includegraphics[width=1\linewidth,trim=0 0 0 0,clip]{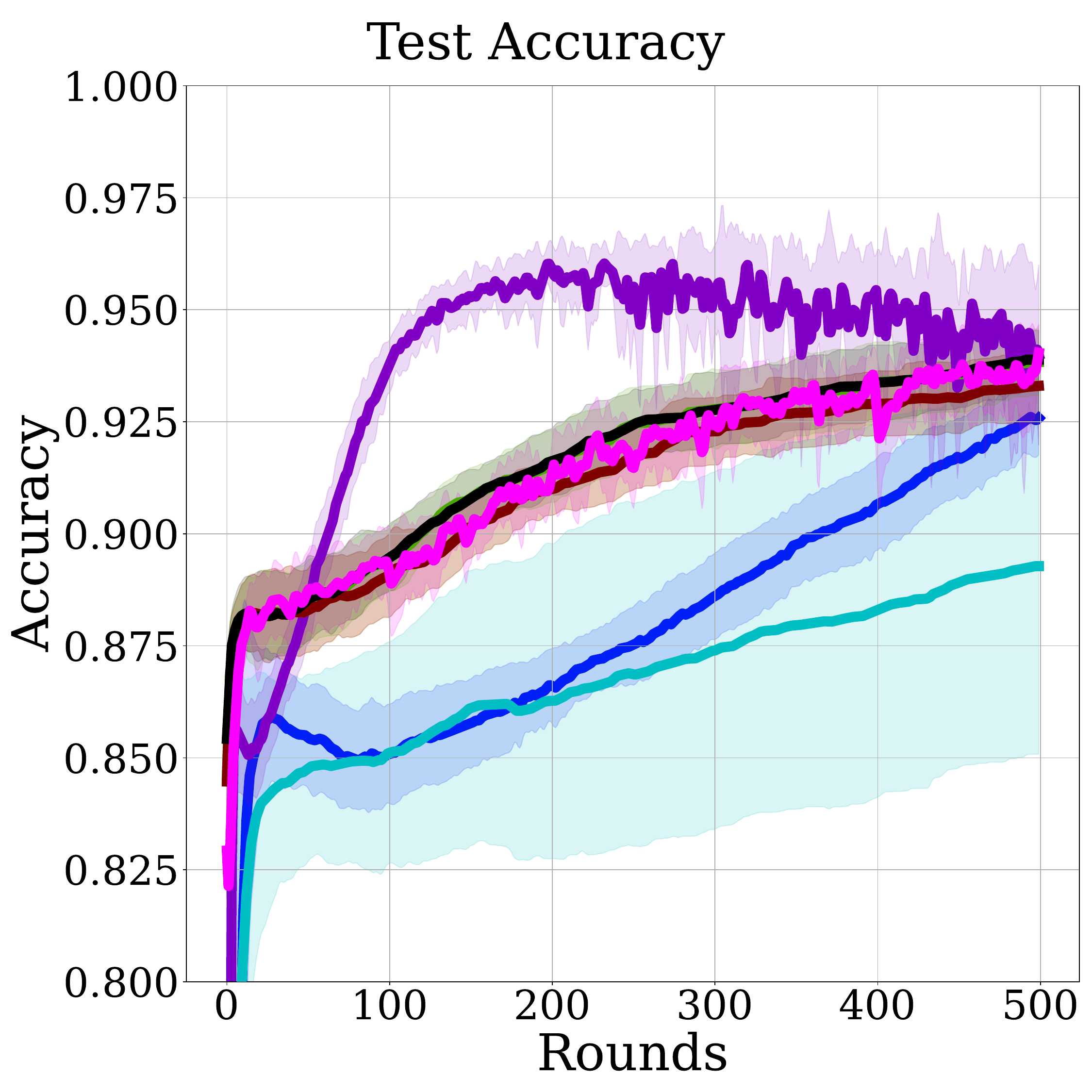}}
    \caption{$D^{\text{adv}} = 5\%$.}
    \end{subfigure}%
    \begin{subfigure}[t]{0.25\textwidth}\centering{\includegraphics[width=1\linewidth,trim=0 0 0 0,clip]{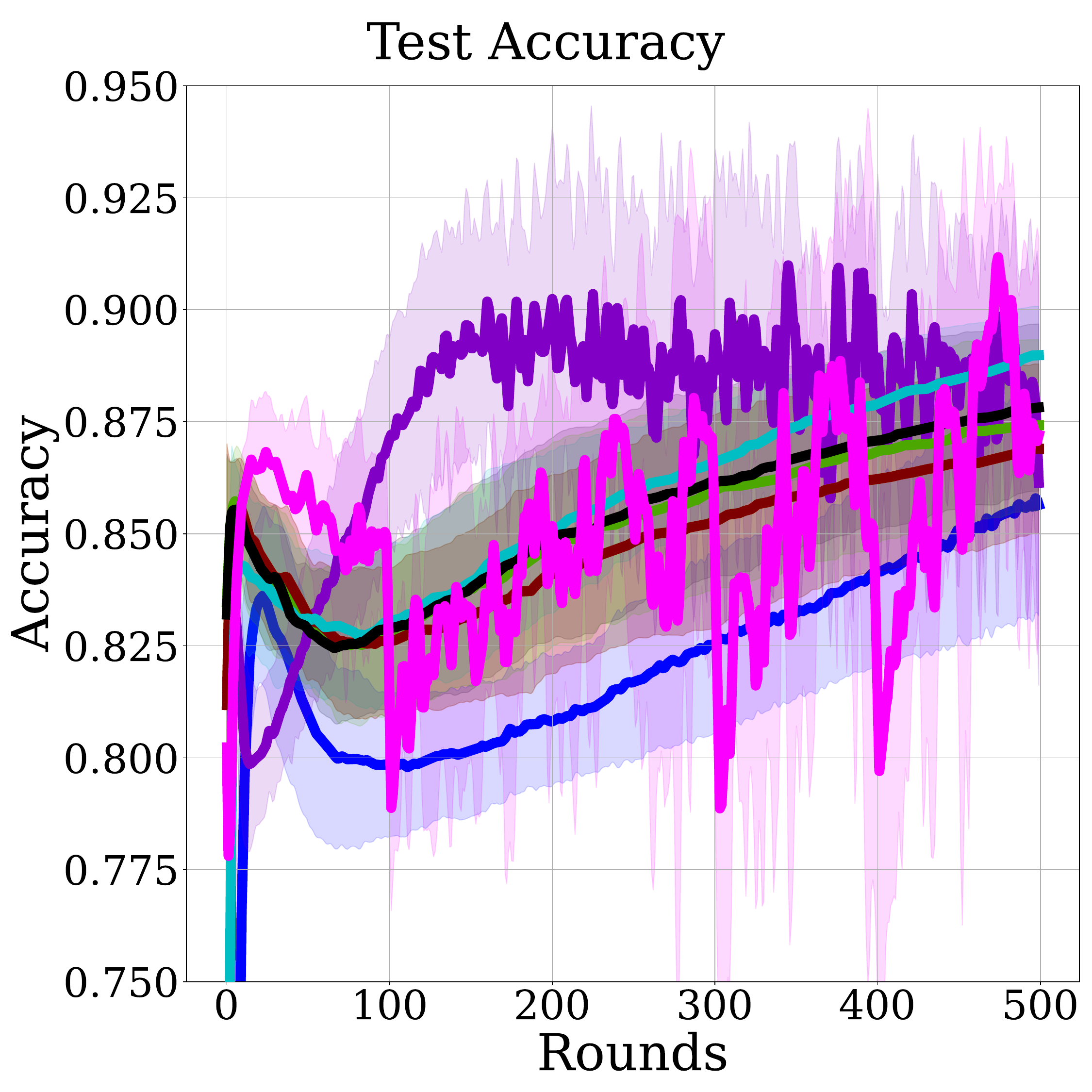}}
    \caption{$D^{\text{adv}} = 10\%$.}
    \end{subfigure}%
    \begin{subfigure}[t]{0.25\textwidth}\centering{\includegraphics[width=1\linewidth,trim=0 0 0 0,clip]{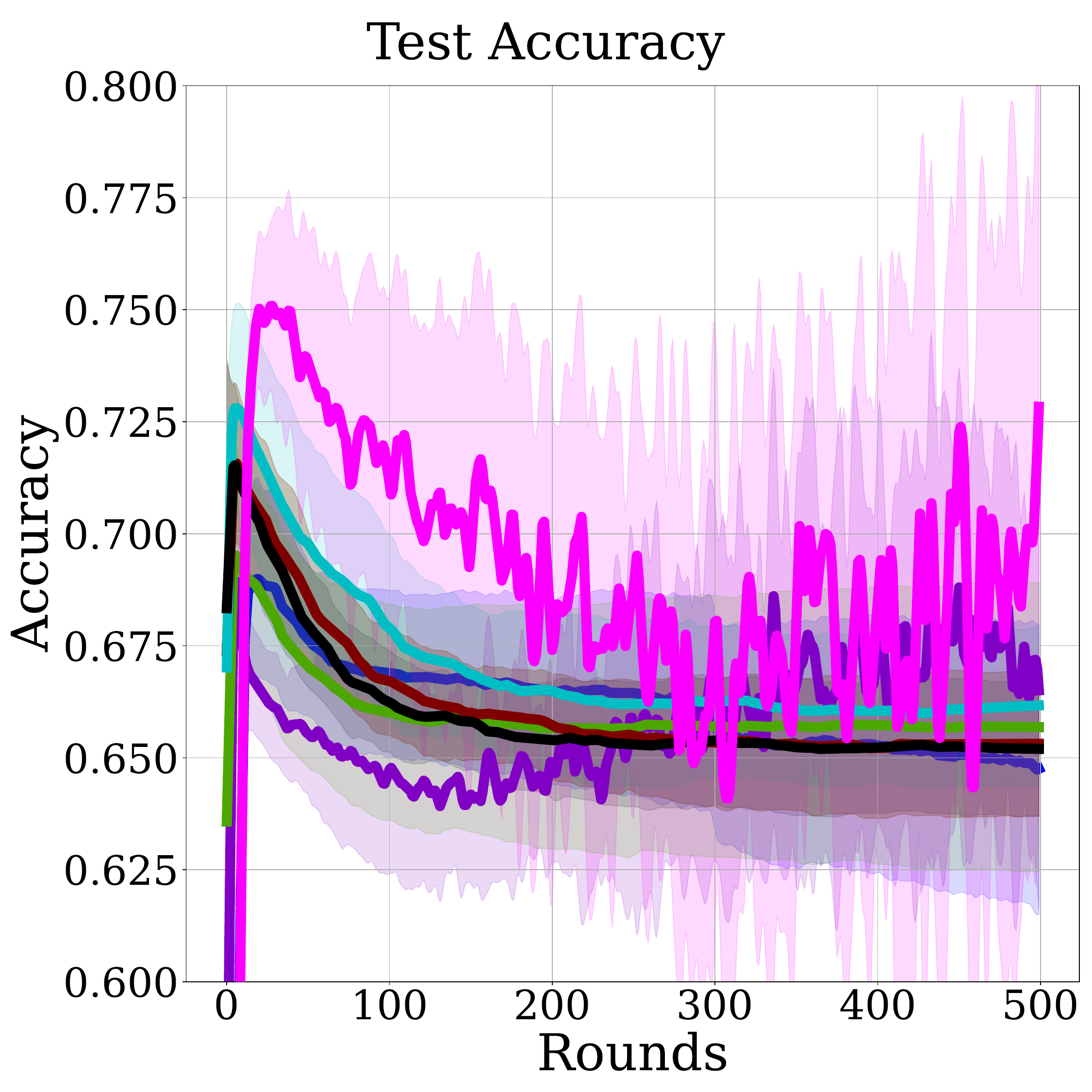}}
    \caption{$D^{\text{adv}} = 25\%$.}
    \end{subfigure}%
    \\
    \begin{subfigure}[t]{0.25\textwidth}\centering{\includegraphics[width=1\linewidth,trim=0 0 0 0,clip]{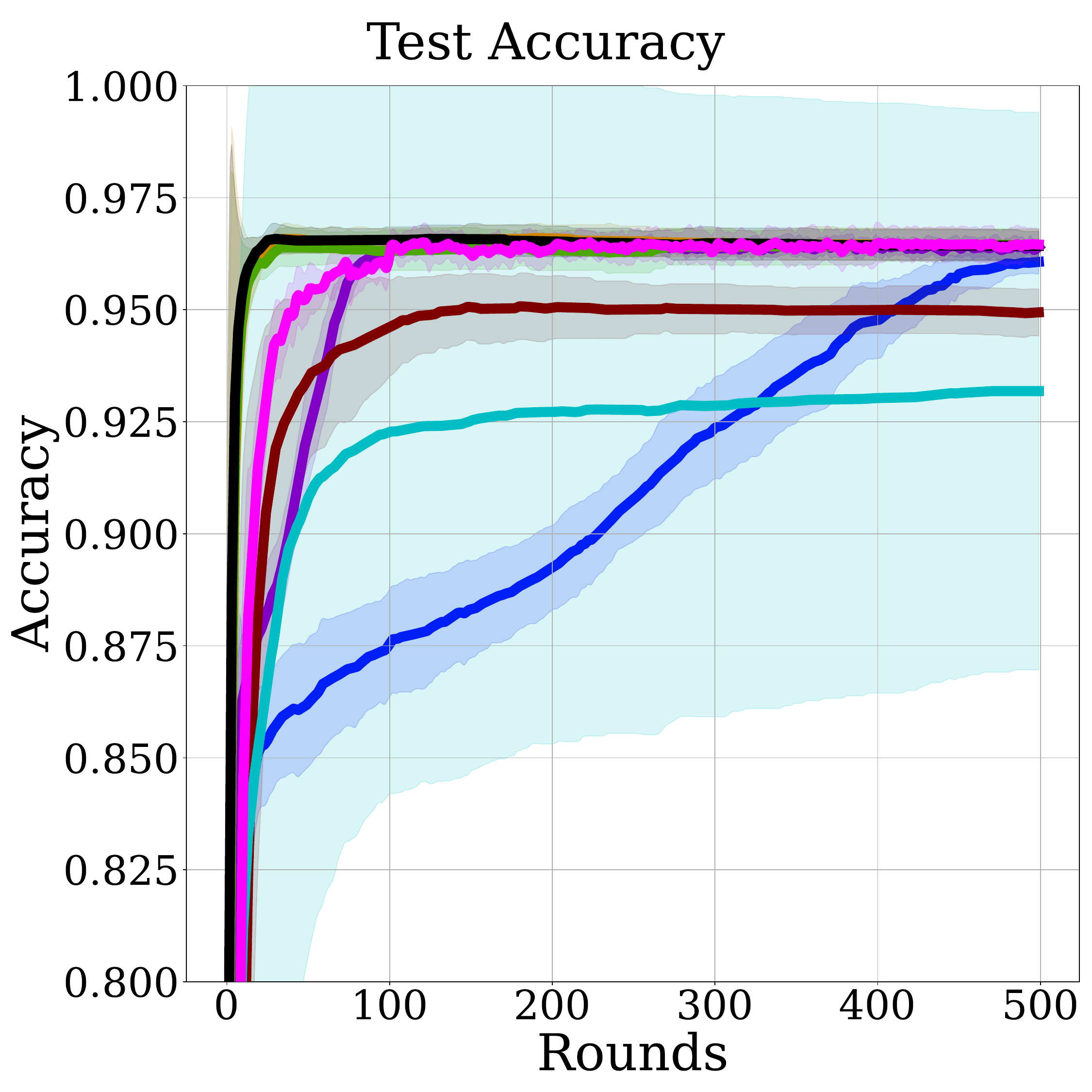}}
    \caption{Clean.}
    
    \end{subfigure}%
    \begin{subfigure}[t]{0.25\textwidth}\centering{\includegraphics[width=1\linewidth,trim=0 0 0 0,clip]{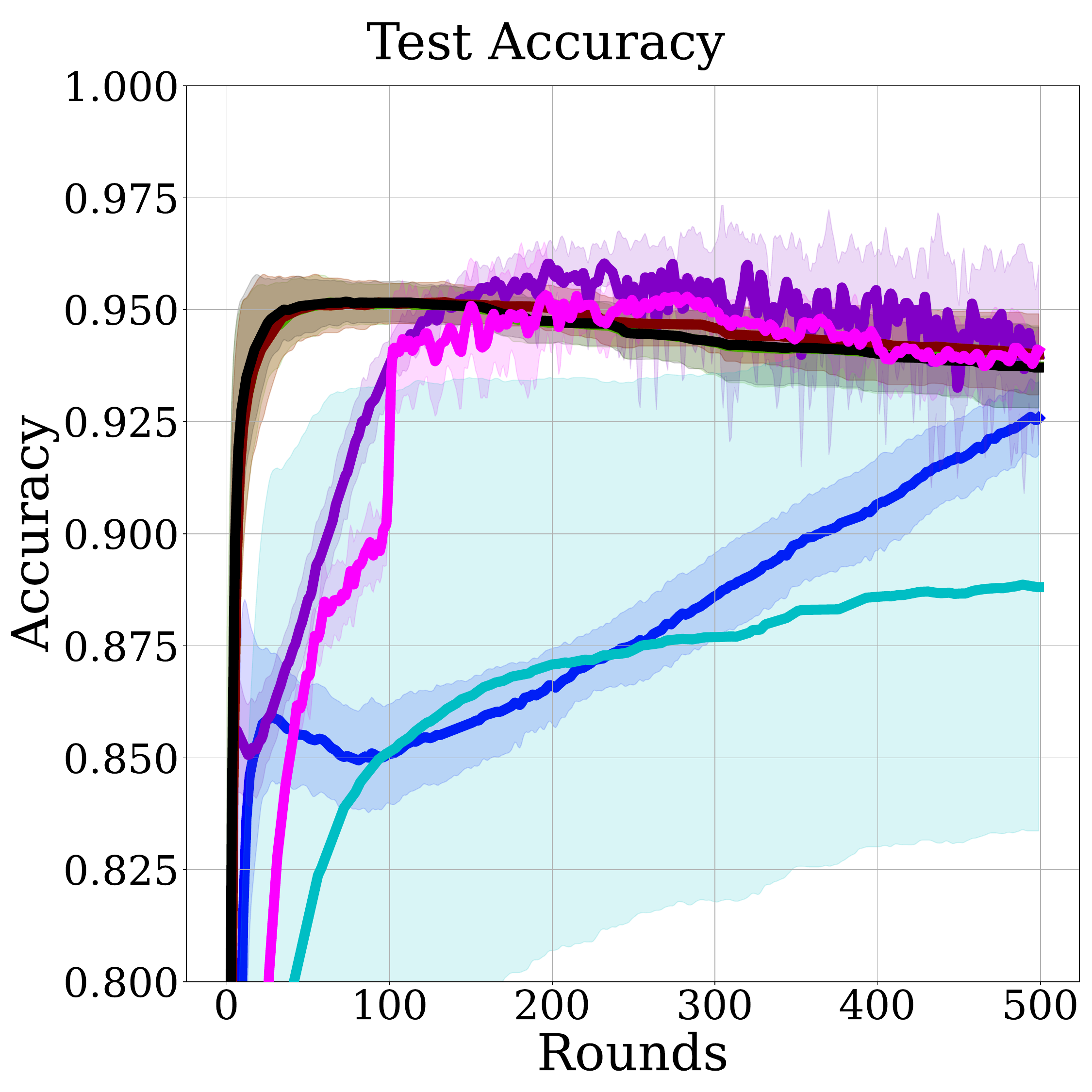}}
    \caption{$D^{\text{adv}} = 5\%$.}
    
    \end{subfigure}%
    \begin{subfigure}[t]{0.25\textwidth}\centering{\includegraphics[width=1\linewidth,trim=0 0 0 0,clip]{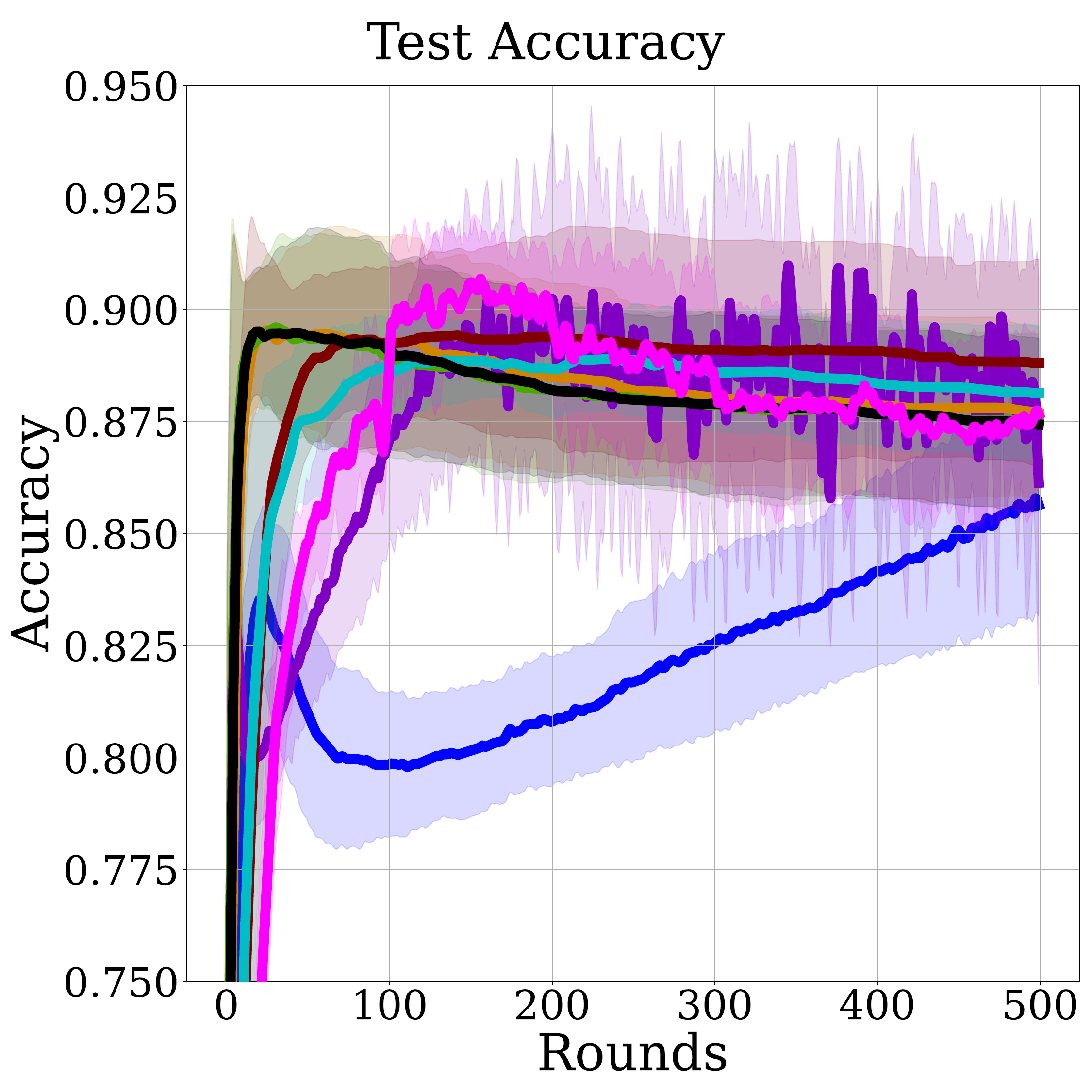}}
    \caption{$D^{\text{adv}} = 10\%$.}
    
    \end{subfigure}%
    \begin{subfigure}[t]{0.25\textwidth}\centering{\includegraphics[width=1\linewidth,trim=0 0 0 0,clip]{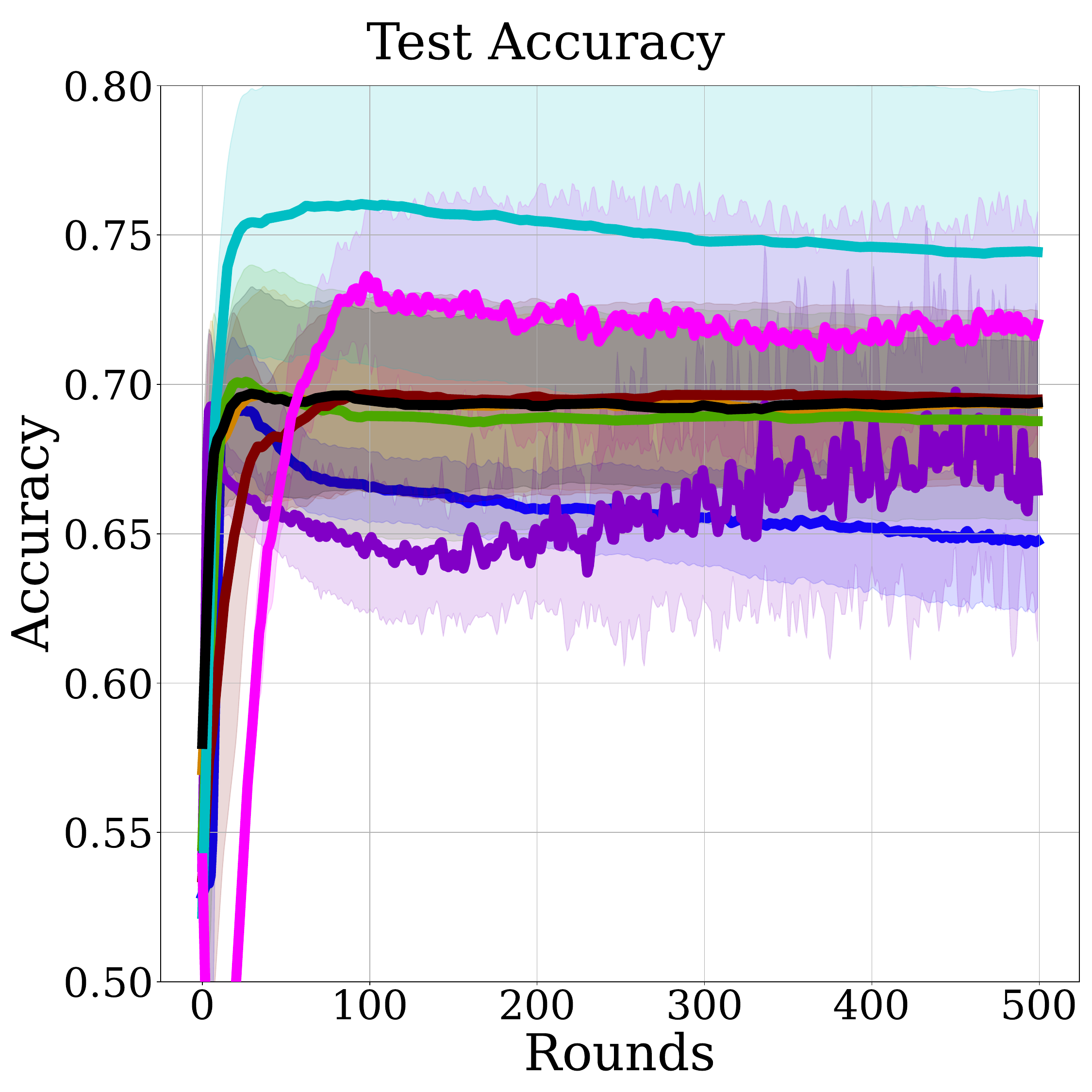}}
    \caption{$D^{\text{adv}} = 25\%$.}
    
    \end{subfigure}%
    \\
    \begin{subfigure}
{1\textwidth}\centering{\includegraphics[width=1\linewidth,trim=40 10 10 10,clip]{figures/moon_legend_8cols_new.pdf}}
    
    \end{subfigure}%
    \caption{Comparison of clean and robust test accuracy of methods trained on the Moon dataset under different kernel hyperparameter choices. 
    For all SVM-related models, the first row corresponds to the setting ($C=10$, $\gamma=0.5$), whereas the second row shows the setting ($C=100$, $\gamma=10$).
    As the level of label poisoning increases, the accuracy of models trained on adversarial datasets generally declines. When the kernel parameters are not optimally chosen, \textsc{Floral} demonstrates superior performance, particularly under the $25\%$ attack.
    }
\label{fig:moon-exp-results-plots-appendix}
\end{figure*}

\begin{table}[h]
\centering
    \caption{Test accuracies of methods trained on the Moon dataset. Each entry shows an average of five runs.
    Highlighted values indicate the best performance in the  \mbox{\colorbox{tablegreen}{"\textbf{Best}"}} (peak accuracy during training) and \mbox{\colorbox{tablegreen}{"Last"}} (final accuracy after training) columns.
    \looseness -1
    }
    \label{tab:test-accuracy-comp-moon-appendix} 
    \resizebox{1\columnwidth}{!}{%
    \begin{tabular}{ll|cccccccccccccccc} \specialrule{1.5pt}{1pt}{1pt}
         \multicolumn{2}{c}{\multirow{2}{*}{\makecell{\\ \\ \textbf{Setting}}}} & \multicolumn{16}{c}{\textbf{Method}} \\ \cmidrule(lr){3-18}
        & & \multicolumn{2}{c}{\textsc{Floral}} & \multicolumn{2}{c}{SVM} & \multicolumn{2}{c}{NN} & \multicolumn{2}{c}{NN-PGD} & \multicolumn{2}{c}{LN-SVM} & \multicolumn{2}{c}{Curie} & \multicolumn{2}{c}{LS-SVM} &  \multicolumn{2}{c}{K-LID} \\ 
        \cmidrule(lr){3-4} \cmidrule(lr){5-6} \cmidrule(lr){7-8} \cmidrule(lr){9-10} \cmidrule(lr){11-12} \cmidrule(lr){13-14} \cmidrule(lr){15-16} \cmidrule(lr){17-18}
        & &  Best & Last & Best & Last & Best & Last & Best & Last & Best & Last & Best & Last & Best & Last & Best & Last \\ \specialrule{1.5pt}{1pt}{1pt}
         \text{Clean} & $C=10, \gamma=0.5$  & 0.954 & 0.950 & 0.952 & 0.952 & 0.960 & 0.960 & \cellcolor{tablegreen} \textbf{0.966} & \cellcolor{tablegreen} 0.964 & 0.933 & 0.933 & 0.924 & 0.924 & 0.952 & 0.952 & 0.947 & 0.947 \\
         $D^{\text{adv}}=5\%$ & $C=10, \gamma=0.5$   & 0.941 & \cellcolor{tablegreen} 0.941 & 0.938 & 0.938 & 0.926 & 0.926 & \cellcolor{tablegreen} \textbf{0.964} & 0.937 & 0.933 & 0.933 & 0.892 & 0.892 & 0.938 & 0.938 & 0.933 & 0.933 \\ 
         $D^{\text{adv}}=10\%$ & $C=10, \gamma=0.5$  & 0.915 & 0.874 & 0.878 & 0.878 & 0.859 & 0.855 & \cellcolor{tablegreen} \textbf{0.927} & 0.853 & 0.868 & 0.868 & 0.889 & \cellcolor{tablegreen} 0.889 & 0.874 & 0.874 & 0.868 & 0.868  \\
         $D^{\text{adv}}=25\%$ & $C=10, \gamma=0.5$   & \cellcolor{tablegreen} \textbf{0.769} & \cellcolor{tablegreen} 0.738 & 0.717 & 0.651 & 0.693 & 0.647 & 0.740 & 0.655 & 0.717 & 0.653 & 0.731 & 0.661 & 0.696 & 0.656 & 0.717 & 0.653 \\ 
         \midrule
         \text{Clean} & $C=10, \gamma=1$    & \cellcolor{tablegreen} \textbf{0.968} &  0.966 & 0.968 & \cellcolor{tablegreen} 0.968 & 0.960 & 0.960 & 0.966 & 0.964 & 0.940 & 0.940 & 0.941 & 0.941 & 0.881 & 0.881 & 0.966 & 0.966  \\
         $D^{\text{adv}}=5\%$ & $C=10, \gamma=1$   & \cellcolor{tablegreen} \textbf{0.966} &  \cellcolor{tablegreen} 0.966 & 0.965 & 0.957 & 0.926 & 0.926 & 0.964 & 0.937 & 0.940 & 0.940 &  0.903 &  0.903 & 0.881 & 0.881 & 0.964 & 0.964   \\
         $D^{\text{adv}}=10\%$ & $C=10, \gamma=1$   & 0.924  & \cellcolor{tablegreen} 0.907 & 0.912 & 0.900 & 0.859 & 0.855 & \cellcolor{tablegreen} \textbf{0.927} & 0.853 & 0.869 & 0.868 & 0.907 & 0.907 & 0.894 & 0.894 & 0.908 & 0.907  \\
         $D^{\text{adv}}=25\%$ & $C=10, \gamma=1$   & \cellcolor{tablegreen} \textbf{0.801} & \cellcolor{tablegreen} 0.768 & 0.753 & 0.717 & 0.693 & 0.647 & 0.740 & 0.655 & 0.754 & 0.693 & 0.779 & 0.697 & 0.766 & 0.721 & 0.747 & 0.690  \\ 
         \midrule
         \text{Clean} & $C=100, \gamma=10$  & 0.965  & \cellcolor{tablegreen} 0.964 & \cellcolor{tablegreen} \textbf{0.966} & 0.964 & 0.960 & 0.960 & 0.966 & 0.964 & 0.950  & 0.949 & 0.932 & 0.931 & 0.964 & 0.964 & 0.966 & 0.964 \\ 
         $D^{\text{adv}}=5\%$ & $C=100, \gamma=10$  & 0.955 & 0.940 & 0.951 & 0.937 & 0.926 & 0.926 & \cellcolor{tablegreen} \textbf{0.964} & 0.937 & 0.951 & 0.940 & 0.888 & 0.888 & 0.947 & \cellcolor{tablegreen} 0.945 & 0.951 & 0.940  \\ 
         $D^{\text{adv}}=10\%$ & $C=100, \gamma=10$   & 0.910 & 0.877 & 0.895 & 0.874 & 0.859 & 0.855 & \cellcolor{tablegreen} \textbf{0.927} & 0.853 & 0.894 & \cellcolor{tablegreen} 0.888 & 0.889 & 0.881 & 0.896 & 0.875 & 0.895 & 0.876  \\ 
         $D^{\text{adv}}=25\%$ & $C=100, \gamma=10$   & 0.740 & 0.720 & 0.697 & 0.693 & 0.693 & 0.647 & 0.740 & 0.655 & 0.697 & 0.694 & \cellcolor{tablegreen} \textbf{0.760} & \cellcolor{tablegreen} 0.744 & 0.701 & 0.687 & 0.697 & 0.693 \\  \specialrule{1.5pt}{1pt}{1pt}
    \end{tabular}
    }
\end{table}
\vspace{-0.1cm}
\subsection{IMDB}
\label{sec:imdb-results-appendix}
\vspace{-0.2cm}
We report the test accuracy and loss performance of \textsc{Floral} against RoBERTa on the IMDB dataset in Figure~\ref{fig:imdb-result-comp-roberta-svm} and Table~\ref{tab:test-accuracy-comp-imdb-additional}.
As demonstrated, \textsc{Floral} consistently exhibits superior accuracy and a smaller loss in more adversarial problem instances, without sacrificing the clean performance. 
This shows the effectiveness of \textsc{Floral} in achieving robust classifiers when integrated with foundation models such as RoBERTa.
\vspace{-0.2cm}
\begin{table}[h]
\centering
    \caption{Test accuracy and loss of methods trained on the IMDB dataset. Each entry shows an average of five replications.
    Highlighted values indicate the best performance in the  \mbox{\colorbox{tablegreen}{"\textbf{Best}"}} (peak accuracy during training) and \mbox{\colorbox{tablegreen}{"Last"}} (final accuracy after training) columns.
    \textsc{Floral} demonstrates superior robust accuracy and lower test loss compared to RoBERTa, particularly in more adversarial scenarios. 
    }
    \label{tab:test-accuracy-loss-comp-imdb} 
    \resizebox{0.6\columnwidth}{!}{%
    \begin{tabular}{l|cccc|cccc} \specialrule{1.5pt}{1pt}{1pt}
         \multicolumn{1}{c}{\multirow{2}{*}{\makecell{\\ \\ \textbf{Setting}}}} & \multicolumn{4}{c}{\textbf{Accuracy}} & \multicolumn{4}{c}{\textbf{Loss}} \\ \cmidrule(lr){2-9}
        & \multicolumn{2}{c}{\textsc{Floral}}  & \multicolumn{2}{c}{RoBERTa} \vrule & \multicolumn{2}{c}{\textsc{Floral}} & \multicolumn{2}{c}{RoBERTa}  \\ 
        \cmidrule(lr){2-3} \cmidrule(lr){4-5} \cmidrule(lr){6-7} \cmidrule(lr){8-9} 
        &  Best & Last & Best & Last & Best & Last & Best & Last \\ \specialrule{1.5pt}{1pt}{1pt}
         \text{Clean}  & 0.911 & \cellcolor{tablegreen} 0.911 & \cellcolor{tablegreen} \textbf{0.911} & 0.911 & \cellcolor{tablegreen} \textbf{0.196} &  \cellcolor{tablegreen} 0.216 & 0.229 & 0.282   \\
         $D^{\text{adv}}=10\%$  & 0.903 & \cellcolor{tablegreen} 0.903 & \cellcolor{tablegreen} \textbf{0.904} & 0.903 & 0.234 & 0.259 & \cellcolor{tablegreen} \textbf{0.227} & \cellcolor{tablegreen} 0.231    \\
         $D^{\text{adv}}=25\%$  & \cellcolor{tablegreen} \textbf{0.889} & \cellcolor{tablegreen} 0.889 & 0.882 & 0.861 & \cellcolor{tablegreen} \textbf{0.310} & \cellcolor{tablegreen} 0.333 &   0.337 & 0.365   \\
         $D^{\text{adv}}=30\%$ & \cellcolor{tablegreen} \textbf{0.880} & \cellcolor{tablegreen} 0.880 & 0.867 & 0.835 & \cellcolor{tablegreen} \textbf{0.353} & \cellcolor{tablegreen} 0.366 &  0.428 & 0.428   \\
         $D^{\text{adv}}=35\%$ & \cellcolor{tablegreen} \textbf{0.871} & \cellcolor{tablegreen} 0.871 & 0.827 & 0.805 & \cellcolor{tablegreen} \textbf{0.381} & \cellcolor{tablegreen} 0.395 &  0.496 & 0.496    \\ 
         $D^{\text{adv}}=40\%$ & \cellcolor{tablegreen} \textbf{0.863} & \cellcolor{tablegreen} 0.863 & 0.779 & 0.771 & \cellcolor{tablegreen} \textbf{0.428} & \cellcolor{tablegreen} 0.439 & 0.551 & 0.551 \\
         \specialrule{1.5pt}{1pt}{1pt}
    \end{tabular}
    }\vspace{-0.2cm}
\end{table}
\begin{figure*}[t]
    \centering 
    \begin{subfigure}{0.24\textwidth}
    \centering
    {\includegraphics[width=1\linewidth,trim=0 0 0 0,clip]{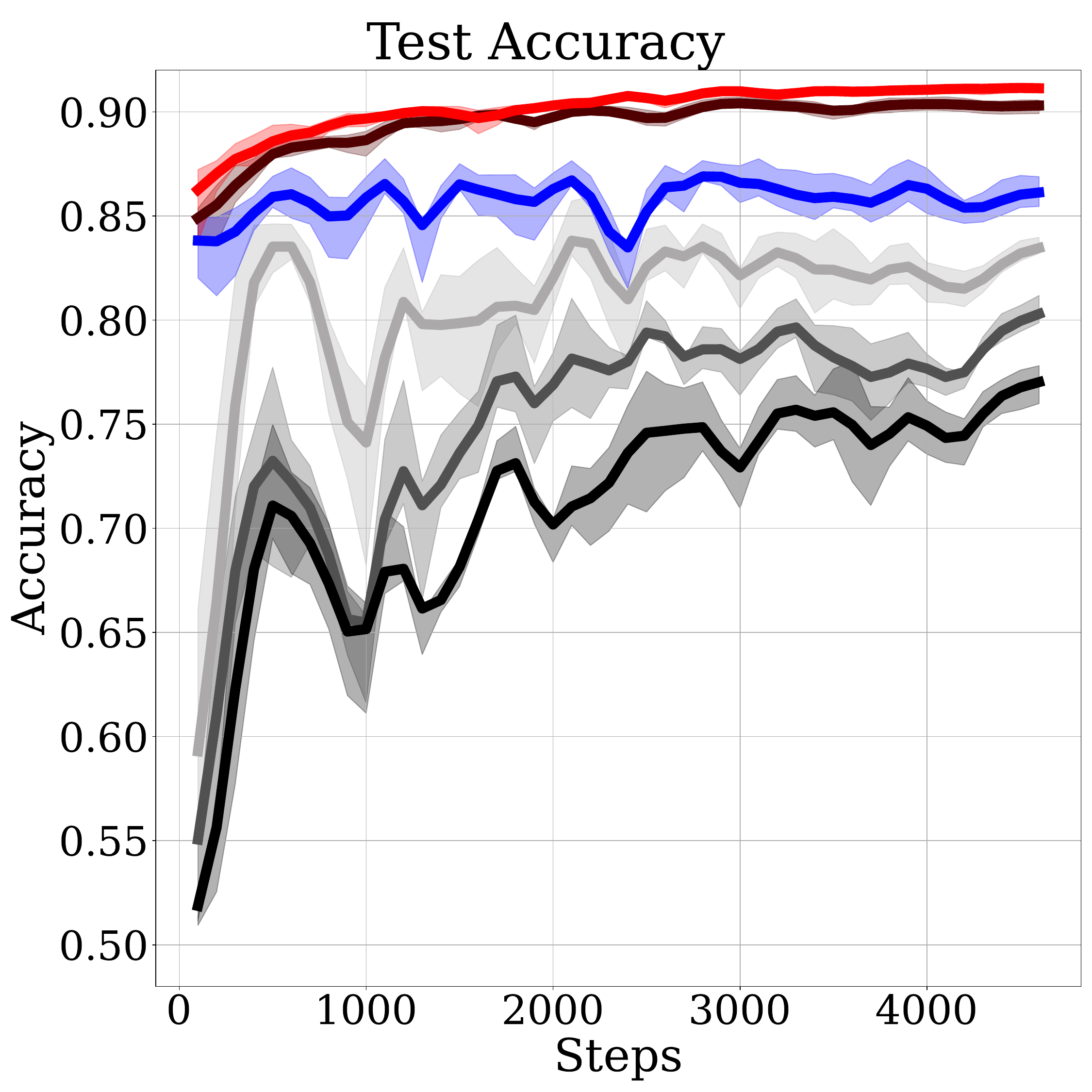}}
    \caption{RoBERTa.}
    
    \end{subfigure} 
    \begin{subfigure}{0.24\textwidth}
    \centering
    {\includegraphics[width=1\linewidth,trim=0 0 0 0,clip]{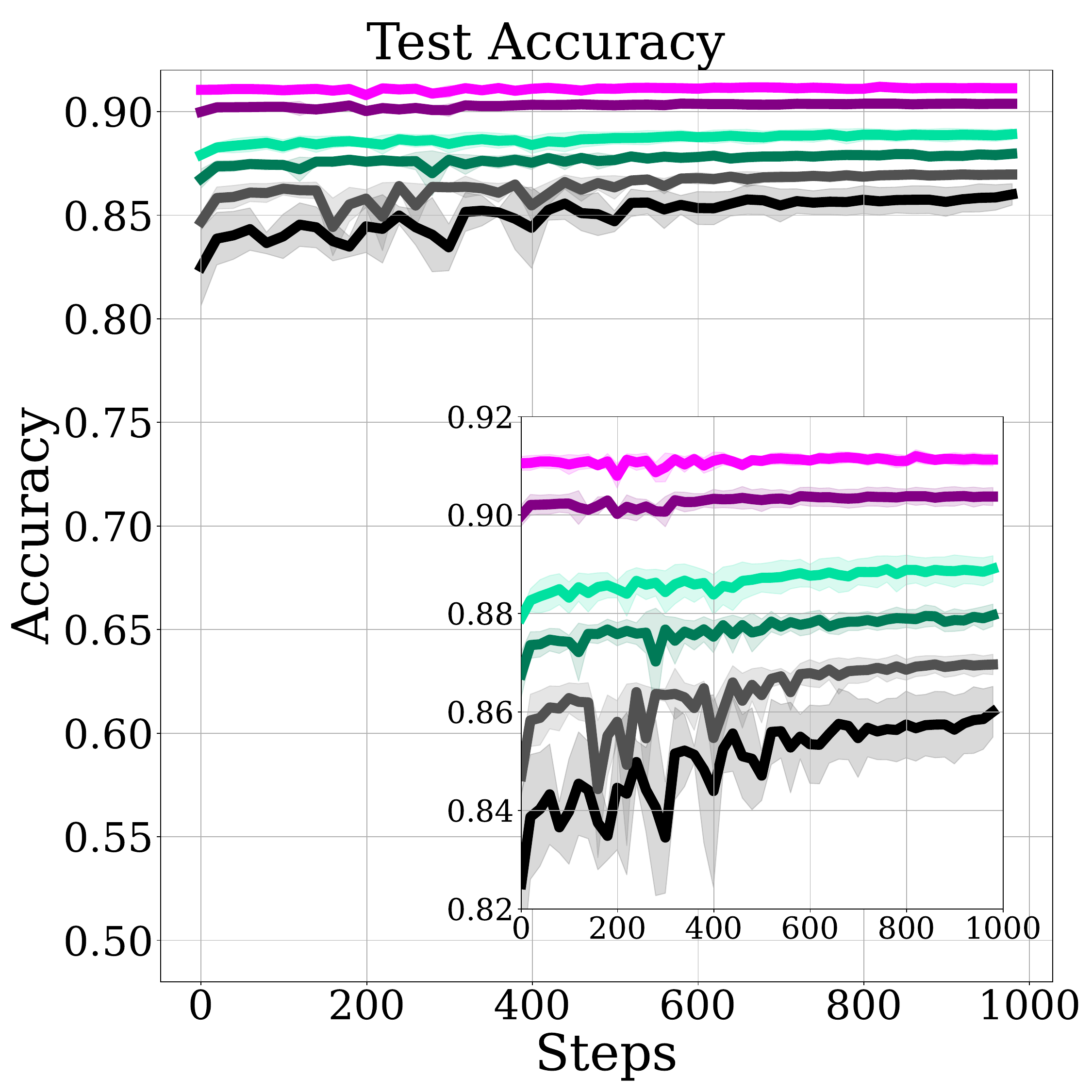}}
    \caption{\textsc{Floral}.}
    
    \end{subfigure} 
    \begin{subfigure}{0.24\textwidth}
    \centering
    {\includegraphics[width=1\linewidth,trim=0 0 0 0,clip]{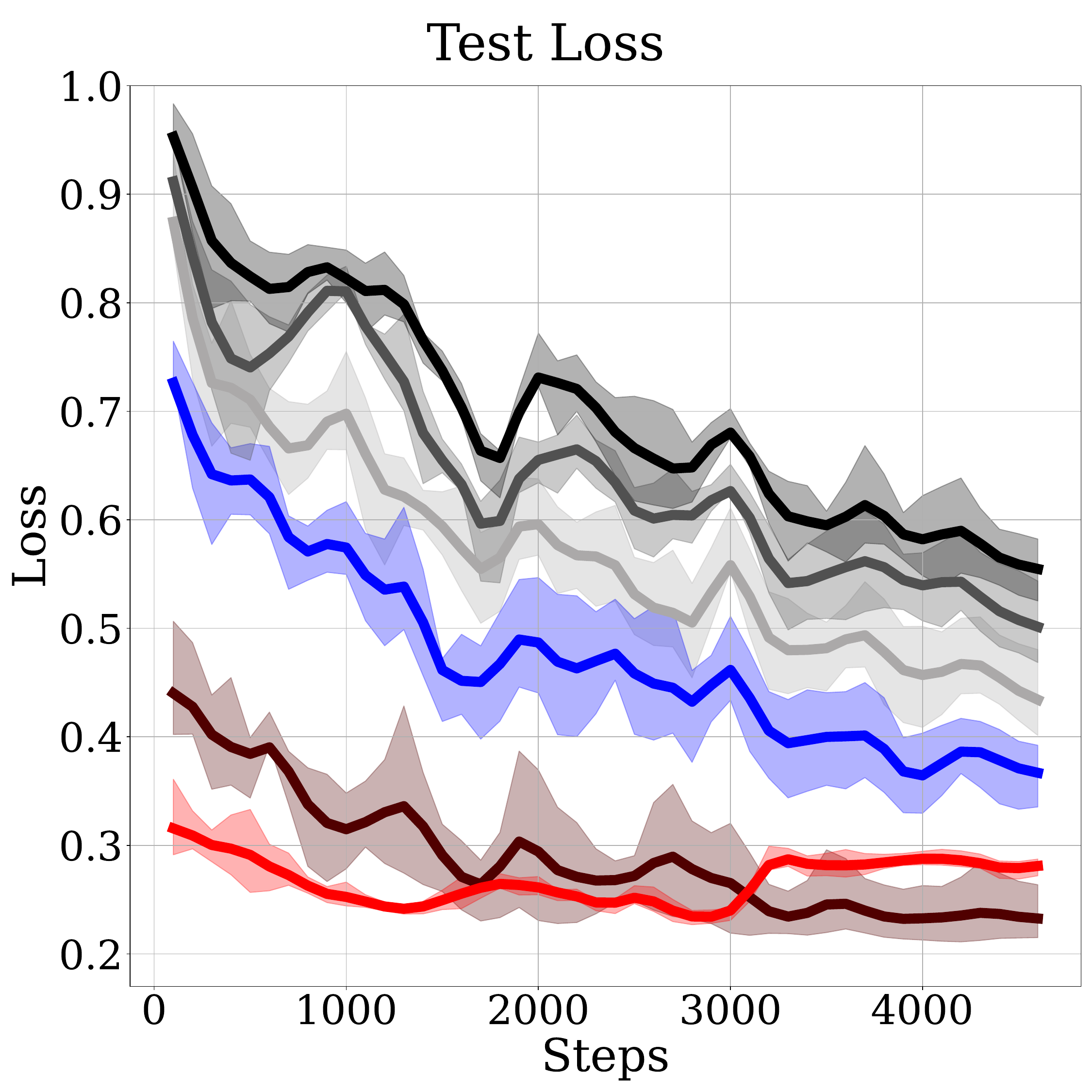}}
    \caption{RoBERTa.}
    
    \end{subfigure} 
    \begin{subfigure}{0.24\textwidth}
    \centering
    {\includegraphics[width=1\linewidth,trim=0 0 0 0,clip]{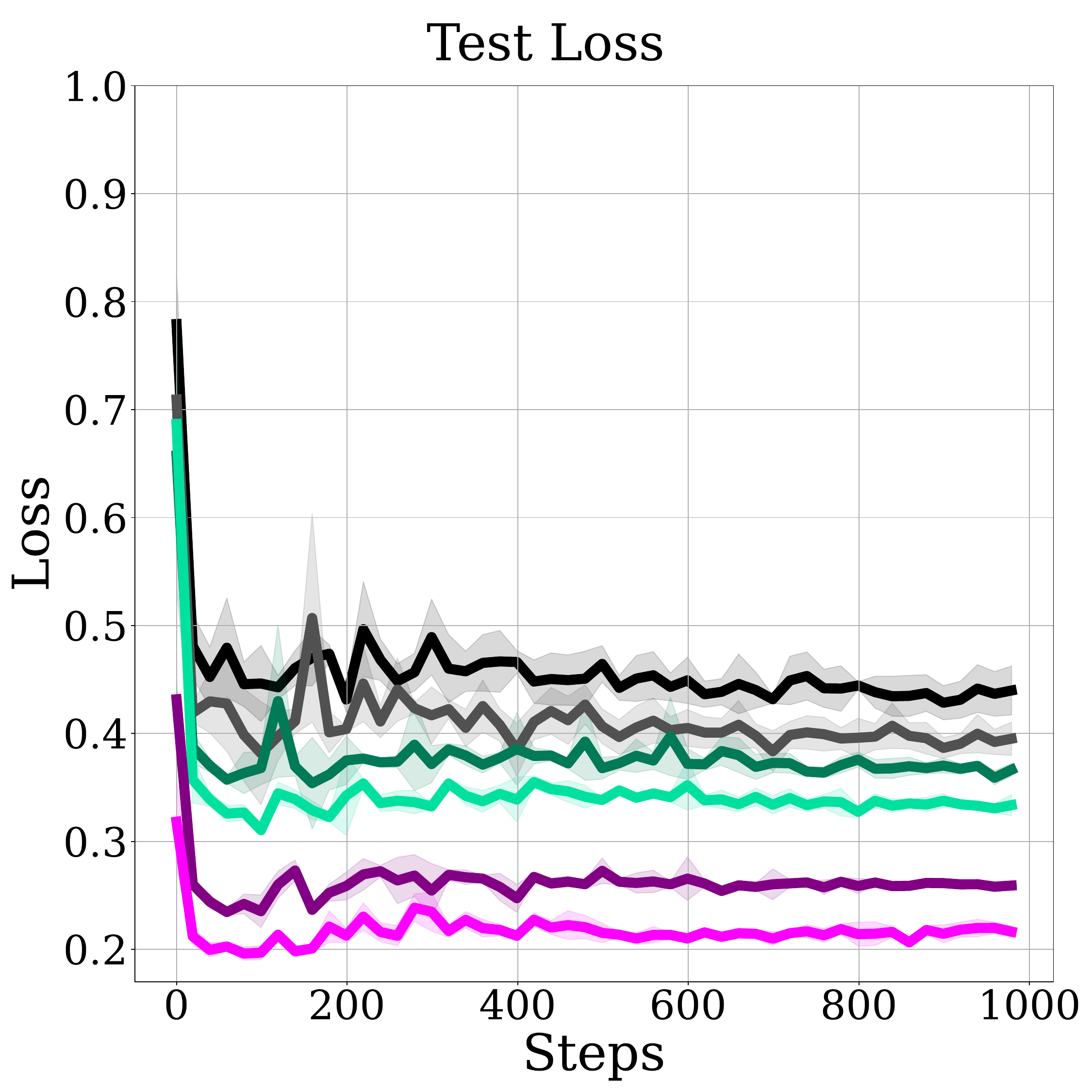}}
    \caption{\textsc{Floral}.}
    
    \end{subfigure} 
    \\
    \begin{subfigure}{0.9\textwidth}{\includegraphics[width=1\linewidth,trim=50 20 20 20,clip]{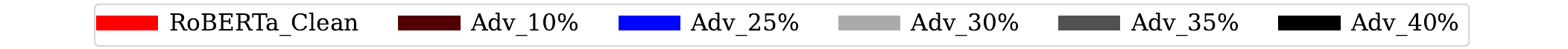}}
    \end{subfigure}
    \begin{subfigure}{0.91\textwidth}
    {\includegraphics[width=1\linewidth,trim=50 20 20 20,clip]{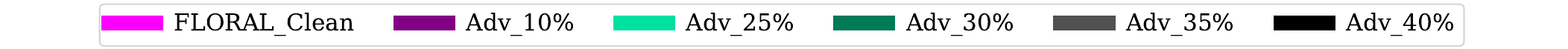}}
    \end{subfigure}
    \\
\setlength{\belowcaptionskip}{-15pt}
  \caption{Test accuracy (\textbf{(a)-(b)}) and test loss (\textbf{(c)-(d)}) of methods on the IMDB dataset. 
  \textsc{Floral} integration outperforms fine-tuned RoBERTa in maintaining better test accuracy and converging faster to lower loss, even when trained on extracted embeddings with heavily adversarial labels. \looseness -1
  }
\label{fig:imdb-result-comp-roberta-svm}
\end{figure*}

\vspace{1cm}
\clearpage
\paragraph{Analysis on influential training points.}
We further analyze how the influential training points (affecting the model's predictions) identified by \textsc{Floral} and RoBERTa change.

To illustrate, Figure~\ref{fig:most-important-points-roberta-vs-floral} shows an example from a replication where both models are trained on a dataset with 40\% adversarially labelled examples. We also provide the result for RoBERTa fine-tuned on the clean dataset.
For \textsc{Floral}, the most influential points are selected from the most important support vectors, while for RoBERTa, these are the points yielding the largest loss gradient with respect to the input. 
The example clearly demonstrates that \textsc{Floral}, implemented on RoBERTa-extracted embeddings, shifts the most important training point for the model's decision boundary. \textsc{Floral} identified a more descriptive point compared to others as given in Figure~\ref{fig:most-important-points-roberta-vs-floral}, however, further analysis is required to determine whether \textsc{Floral} consistently identifies such training points across all cases.

Additionally, we investigate the overlap in influential training points between the two methods. 
To this end, for each method, we extract the $25\%$ most influential training points (for the model predictions) among the training dataset, and measure how much overlap between these two sets. 
In Figure~\ref{fig:percentage-common-influential-points}, we report the percentage of "common" influential points identified from the IMDB dataset, averaged over replications, with error bars denoting the standard deviation. 
The left figure shows the percentage overlap between \textsc{Floral} trained on the IMDB dataset with different poison levels and RoBERTa fine-tuned on clean labels. Whereas, the right plot shows the overlap between both models trained on the dataset with different poison levels. 
On the clean dataset, although there are some differences, both methods almost identify the same set of influential points. 
However, as adversarial labels increase, the overlap decreases. This shows that \textsc{Floral} extracts more critical points that enhance the model's robustness in adversarial settings, as supported by its superior robust accuracy, already shown in Figure~\ref{fig:imdb-result-comp-roberta-svm} in Section~\ref{sec:experiment-results}.
\begin{figure}[ht]
    \centering  
    \begin{subfigure}[t]{0.3\textwidth}\centering{\includegraphics[width=1\linewidth,trim=0 0 0 0,clip]{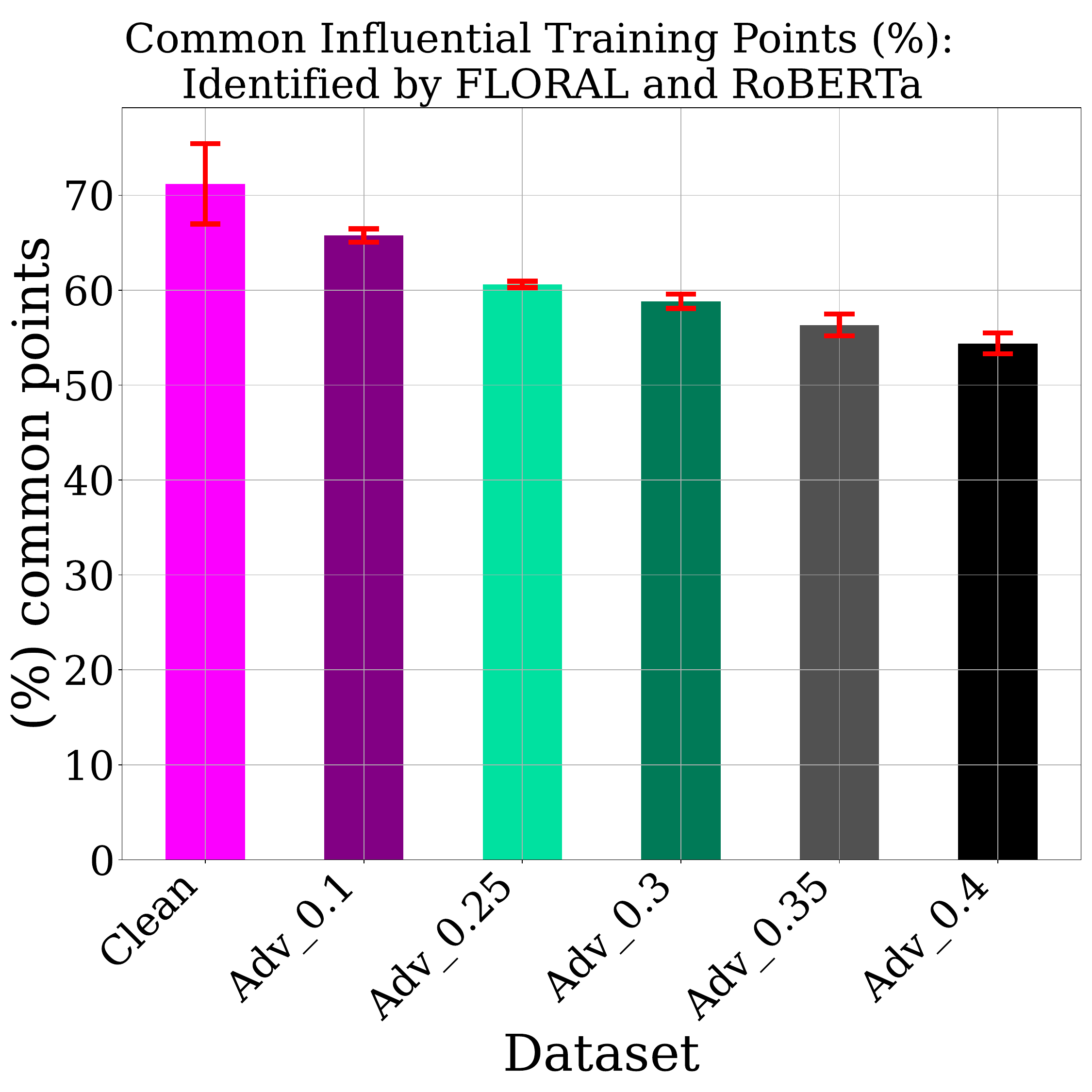}}
    \caption{w.r.t. RoBERTa (clean).}
    
    \end{subfigure}%
    \begin{subfigure}[t]{0.3\textwidth}\centering{\includegraphics[width=1\linewidth,trim=0 0 0 0,clip]{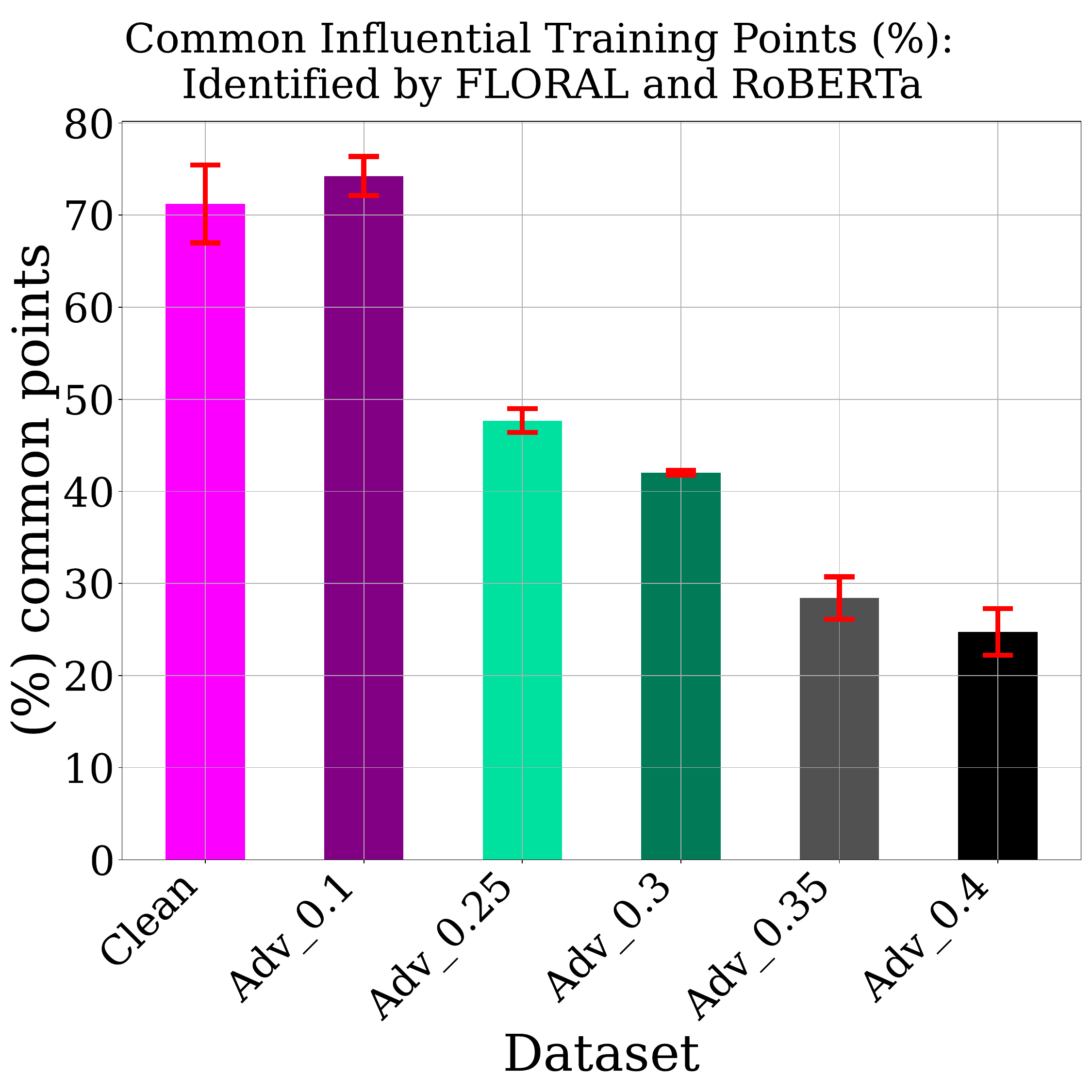}}
    \caption{w.r.t. RoBERTa (adversarial).}
    
    \end{subfigure}%
     \caption{The percentage of "common" influential points identified by \textsc{Floral} and RoBERTa from the IMDB dataset, averaged over replications, with error bars denoting the standard deviation. "Clean" shows the dataset with clean labels, whereas adversarial datasets contain $\{10, 25, 30, 35, 40\} (\%)$ poisoned labels. 
     Even when both methods are fine-tuned on the clean dataset, slight differences emerge in the identified influential training points. The discrepancy increases as the dataset becomes more adversarial, highlighting that \textsc{Floral} adjusts the influential training points affecting the model's predictions. \looseness -1}
\label{fig:percentage-common-influential-points}
\end{figure}

\begin{figure}[htp]\vspace{-0.2cm}
    \centering  
    {\includegraphics[width=1\linewidth,trim=0 90 0 0,clip]{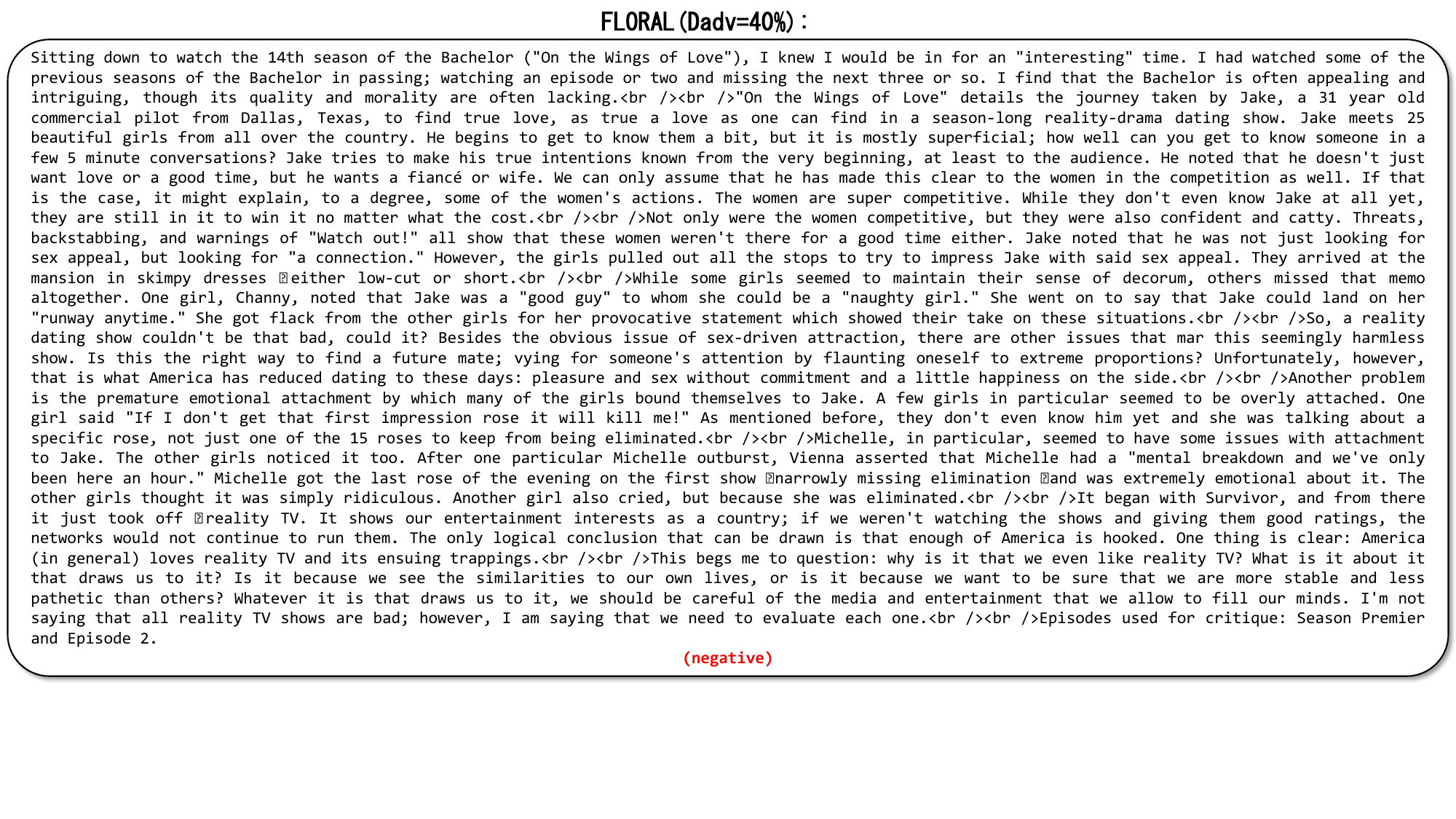}} \\
    {\includegraphics[width=1\linewidth,trim=0 230 0 0,clip]{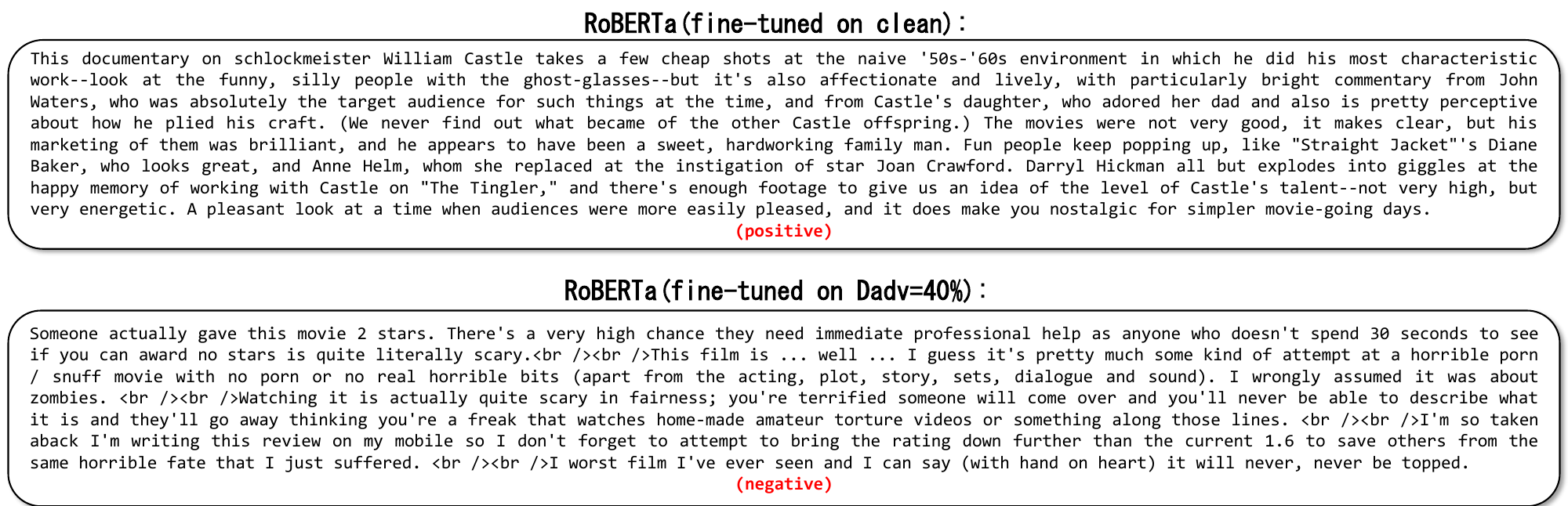}}
     \caption{{\color{red}\textit{Note: Figure might contain offensive content.}} The most influential training point for the model's predictions, identified by \textsc{Floral} and RoBERTa from the IMDB dataset. \textsc{Floral} implemented on RoBERTa extracted-embeddings changes the most important training point for the model's decision boundary. }
\label{fig:most-important-points-roberta-vs-floral}
\end{figure}

\clearpage 
\subsection{MNIST}
\label{app:mnist1vs7-experiments}

To demonstrate the generalizability of \textsc{Floral} across diverse datasets, we provide additional experiments on the MNIST dataset \citep{deng2012mnist}. Similar to \citep{rosenfeld2020certified}, we consider classes $1$ and $7$ which leads to a dataset of $\mathcal{D}=\{(x_i, y_i)\}_{i=1}^{13007}$ where $x_i \in \mathbb{R}^{784}$ and $y_i \in \{ \pm 1\}$, with $784$ pixel values for each image.

We perform the randomized top-$k$ label poisoning attack described in Section~\ref{sec:proposed-approach} and report the clean and robust test accuracy performance of methods in Figure~\ref{fig:mnist1vs7-exp-results-plots-appendix} and Table~\ref{tab:test-accuracy-comp-mnist1vs7}. The results show that \textsc{Floral} maintains a higher robust accuracy compared to most of the baselines. While Curie behaves almost on par, \textsc{Floral} achieves higher robust accuracy. Although NN baselines perform better on clean and $5\%$ adversarially labelled datasets, they show a significant accuracy decrease when the training dataset gets more adversarial. \looseness -1

 \begin{figure*}[h]
    \centering  
    \begin{subfigure}[t]
    {0.25\textwidth}\centering{\includegraphics[width=1\linewidth,trim=0 0 0 0,clip]{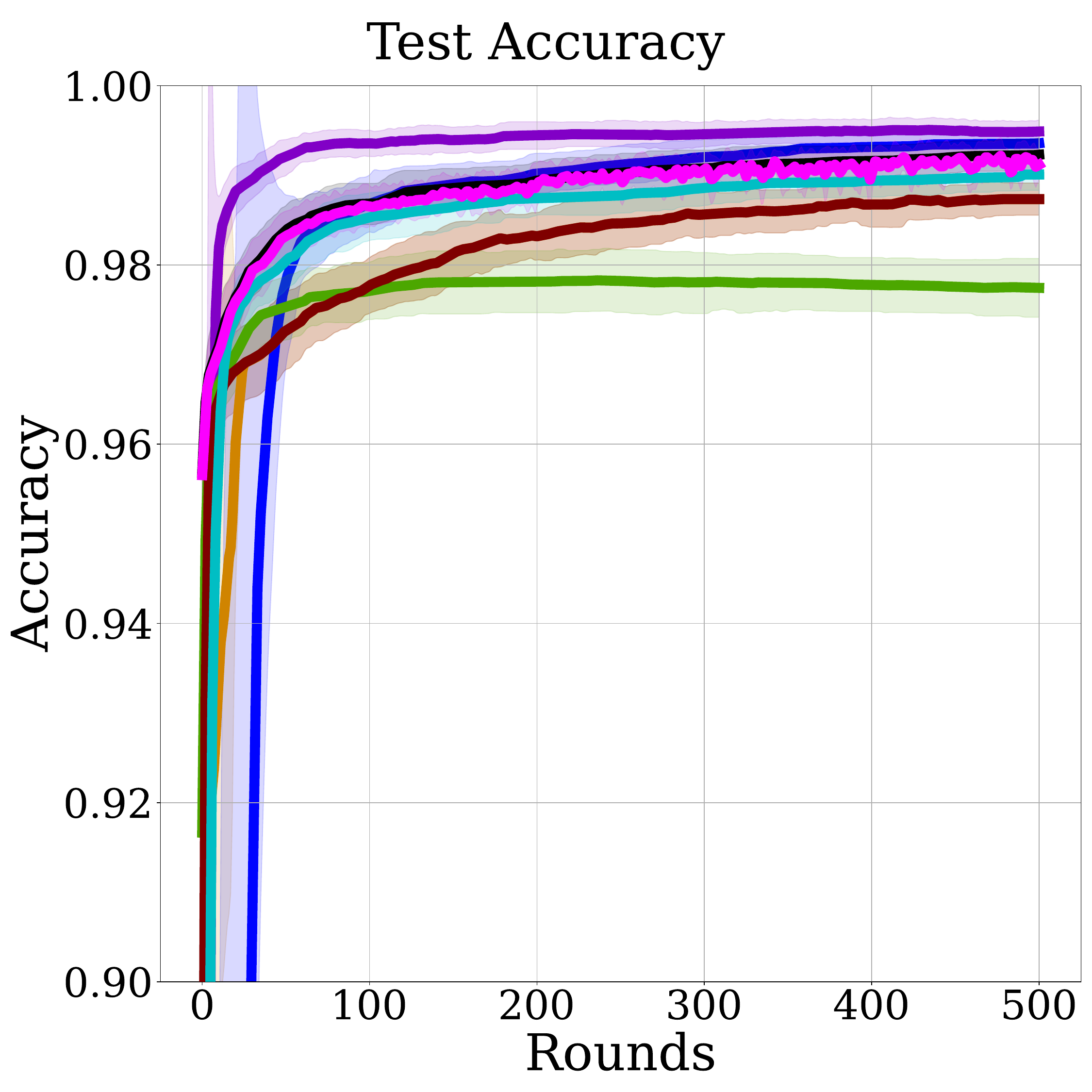}}
    \caption{Clean.}
    
    \end{subfigure}%
    \begin{subfigure}[t]{0.25\textwidth}\centering{\includegraphics[width=1\linewidth,trim=0 0 0 0,clip]{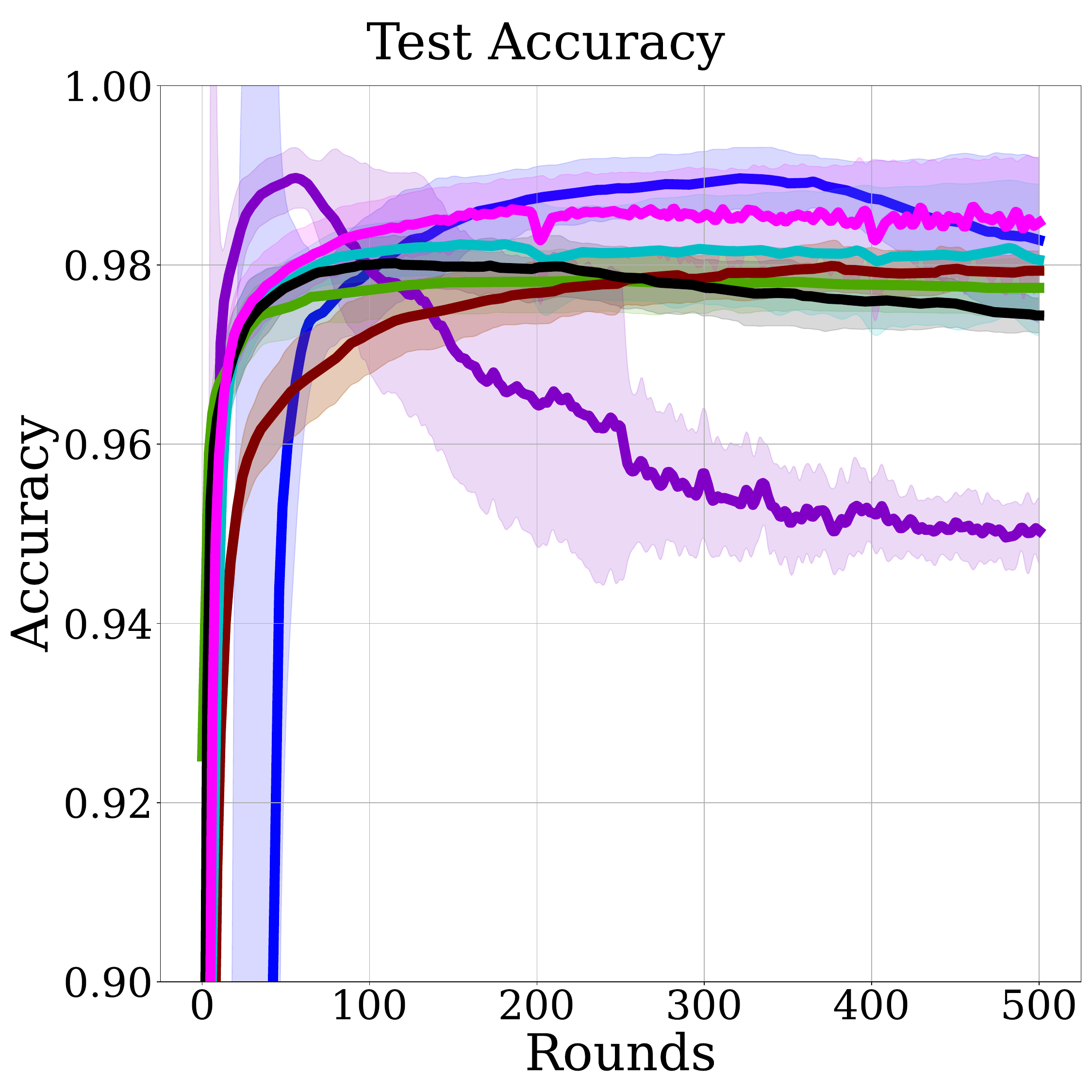}}
    \caption{$D^{\text{adv}} = 5\%$.}
    
    \end{subfigure}%
    \begin{subfigure}[t]{0.25\textwidth}\centering{\includegraphics[width=1\linewidth,trim=0 0 0 0,clip]{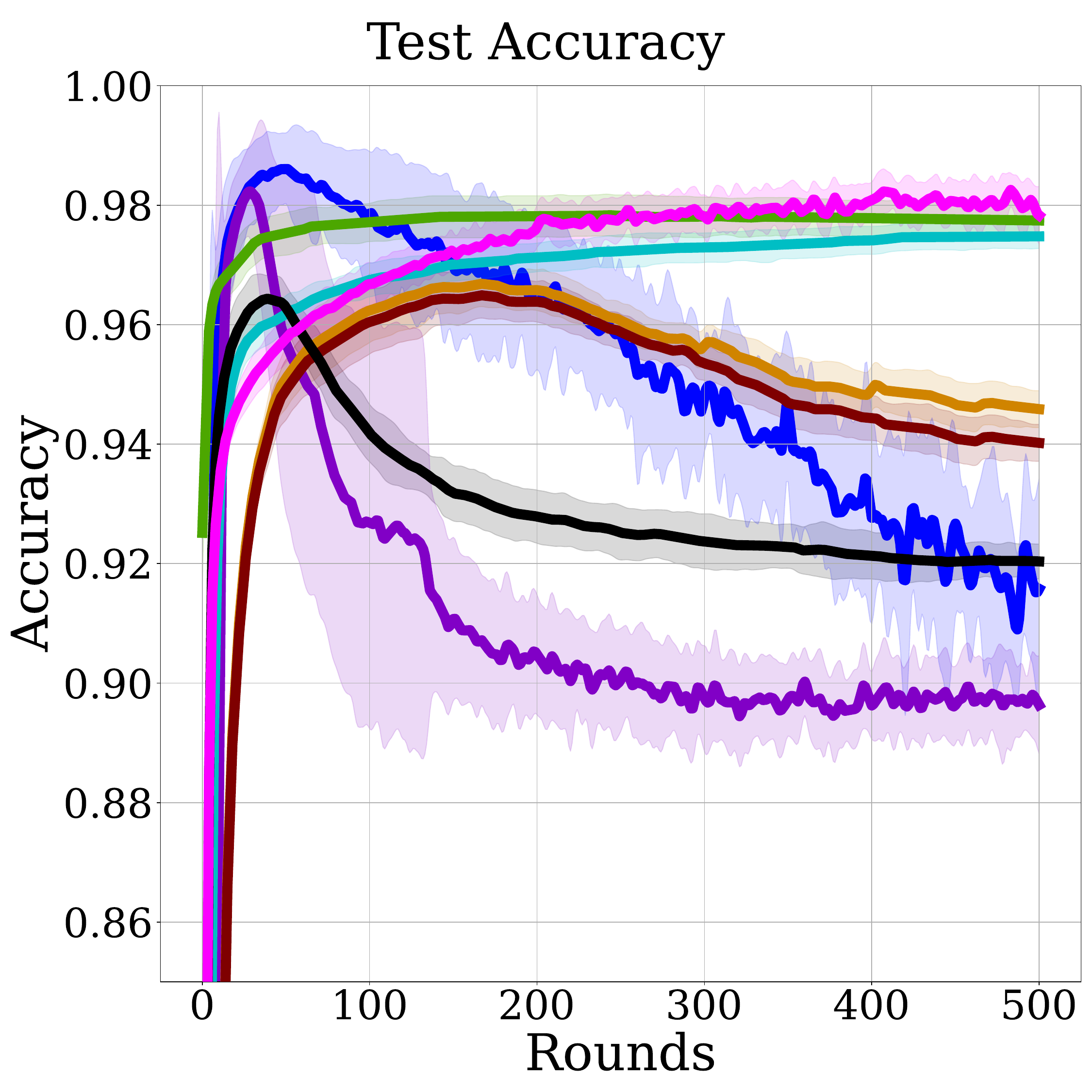}}
    \caption{$D^{\text{adv}} = 10\%$.}
    
    \end{subfigure}%
    \begin{subfigure}[t]{0.25\textwidth}\centering{\includegraphics[width=1\linewidth,trim=0 0 0 0,clip]{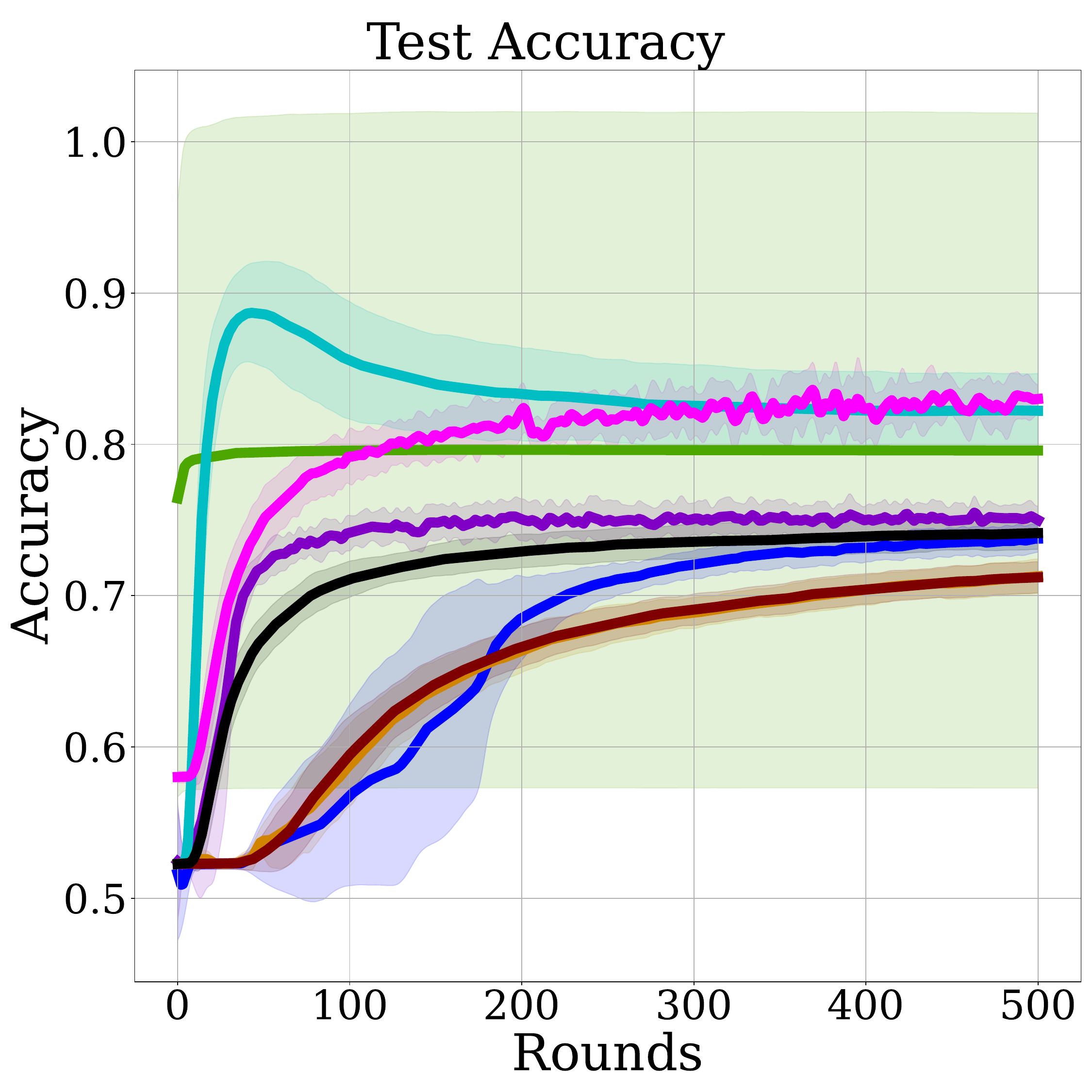}}
    \caption{$D^{\text{adv}} = 25\%$.}
    
    \end{subfigure}%
    \\
    \begin{subfigure}
{1\textwidth}\centering{\includegraphics[width=1\linewidth,trim=40 10 10 10,clip]{figures/moon_legend_8cols_new.pdf}}
    
    \end{subfigure}%
    \caption{Clean and robust test accuracy of methods trained on the MNIST-1vs7 dataset. "Clean" refers to the dataset with clean labels, while the adversarial datasets contain $\{5, 10, 25\}$ ($\%$) poisoned labels.
    For all SVM-related models, the setting $C=5$, $\gamma=0.005$ is used.
    As the level of label poisoning increases, models trained on adversarial datasets generally demonstrate a decline in accuracy.
    However, \textsc{Floral} maintains a higher robust accuracy level compared to most of the baselines and behaving on par with the Curie method.
    }
\label{fig:mnist1vs7-exp-results-plots-appendix}
\end{figure*}

\begin{table}[h]
\centering
    \caption{Test accuracies of methods trained over the MNIST-1vs7 dataset. Each entry shows the average of five replications with different train/test splits. Highlighted values indicate the best performance in the  \mbox{\colorbox{tablegreen}{"\textbf{Best}"}} (peak accuracy during training) and \mbox{\colorbox{tablegreen}{"Last"}} (final accuracy after training) columns.
    }
    \label{tab:test-accuracy-comp-mnist1vs7} 
    \resizebox{1\columnwidth}{!}{%
    \begin{tabular}{ll|cccccccccccccccc} \specialrule{1.5pt}{1pt}{1pt}
         \multicolumn{2}{c}{\multirow{2}{*}{\makecell{\\ \\ \textbf{Setting}}}} & \multicolumn{16}{c}{\textbf{Method}} \\ \cmidrule(lr){3-18}
        & & \multicolumn{2}{c}{\textsc{Floral}} & \multicolumn{2}{c}{SVM} & \multicolumn{2}{c}{NN} & \multicolumn{2}{c}{NN-PGD} & \multicolumn{2}{c}{LN-SVM} & \multicolumn{2}{c}{Curie} & \multicolumn{2}{c}{LS-SVM} &  \multicolumn{2}{c}{K-LID} \\ 
        \cmidrule(lr){3-4} \cmidrule(lr){5-6} \cmidrule(lr){7-8} \cmidrule(lr){9-10} \cmidrule(lr){11-12} \cmidrule(lr){13-14} \cmidrule(lr){15-16} \cmidrule(lr){17-18}
        & &  Best & Last & Best & Last & Best & Last & Best & Last & Best & Last & Best & Last & Best & Last & Best & Last \\ \specialrule{1.5pt}{1pt}{1pt}
         \text{Clean} & $C=5, \gamma=0.005$ & 0.992 & 0.991 & 0.992 & 0.992 & 0.993 & 0.993 & \cellcolor{tablegreen} \textbf{0.995} & \cellcolor{tablegreen} 0.994 & 0.987 & 0.987 & 0.990 & 0.990 & 0.978 & 0.977 & 0.987 & 0.987  \\
         $D^{\text{adv}}=5\%$ & $C=5, \gamma=0.005$ & 0.988 & \cellcolor{tablegreen} 0.984 & 0.980 & 0.974 & \cellcolor{tablegreen} 0.989 & 0.982 & 0.989 & 0.949 & 0.979 & 0.979 & 0.984 & 0.979 & 0.978 & 0.977 & 0.979 & 0.979 \\
         $D^{\text{adv}}=10\%$ & $C=5, \gamma=0.005$ &  \cellcolor{tablegreen} \textbf{0.984} &  \cellcolor{tablegreen} 0.978 & 0.964 & 0.920 & 0.982 & 0.930 & 0.982 & 0.894 & 0.965 & 0.940 & 0.974 & 0.974 & 0.978 & 0.977 & 0.966 & 0.945 \\
         $D^{\text{adv}}=25\%$ & $C=5, \gamma=0.005$ & 0.853 & \cellcolor{tablegreen} 0.830 & 0.741 & 0.741 & 0.738 & 0.738 & 0.763 & 0.750 & 0.712 & 0.712 & \cellcolor{tablegreen} \textbf{0.887} & 0.822 & 0.796 & 0.795 & 0.712 & 0.712 \\  \specialrule{1.5pt}{1pt}{1pt}
    \end{tabular}
    }
\end{table}

\subsection{Sensitivity Analysis}
\label{sec:sensitivity-analysis}
In our experiments with the Moon dataset under varying label poisoning levels, we consider attacker budgets $B=2k$ under varying $k$ values, and report the best performing setting in Figure~\ref{fig:moon-exp-results-C10-gamma0.5} in Section~\ref{sec:experiment-results}.

However, we further investigate the sensitivity of \textsc{Floral} to the attacker's budget $B$, by considering levels $B \in \{5, 10, 25, 50, 125 \}$, with results presented in Figure~\ref{fig:moon-sensitivity-attacker-budget}.
As demonstrated, \textsc{Floral} shows superior performance under a constrained attacker budget in the clean label scenario, as expected, since an increasing number of adversarially labelled examples during training degrades clean test accuracy.
In contrast, baseline methods operate on a fixed dataset. However, as the dataset gets more adversarial, \textsc{Floral} outperforms under higher attacker budgets.

 \begin{figure*}[ht]
    \centering  
    \begin{subfigure}[t]{0.25\textwidth}\centering{\includegraphics[width=1\linewidth,trim=0 0 0 0,clip]{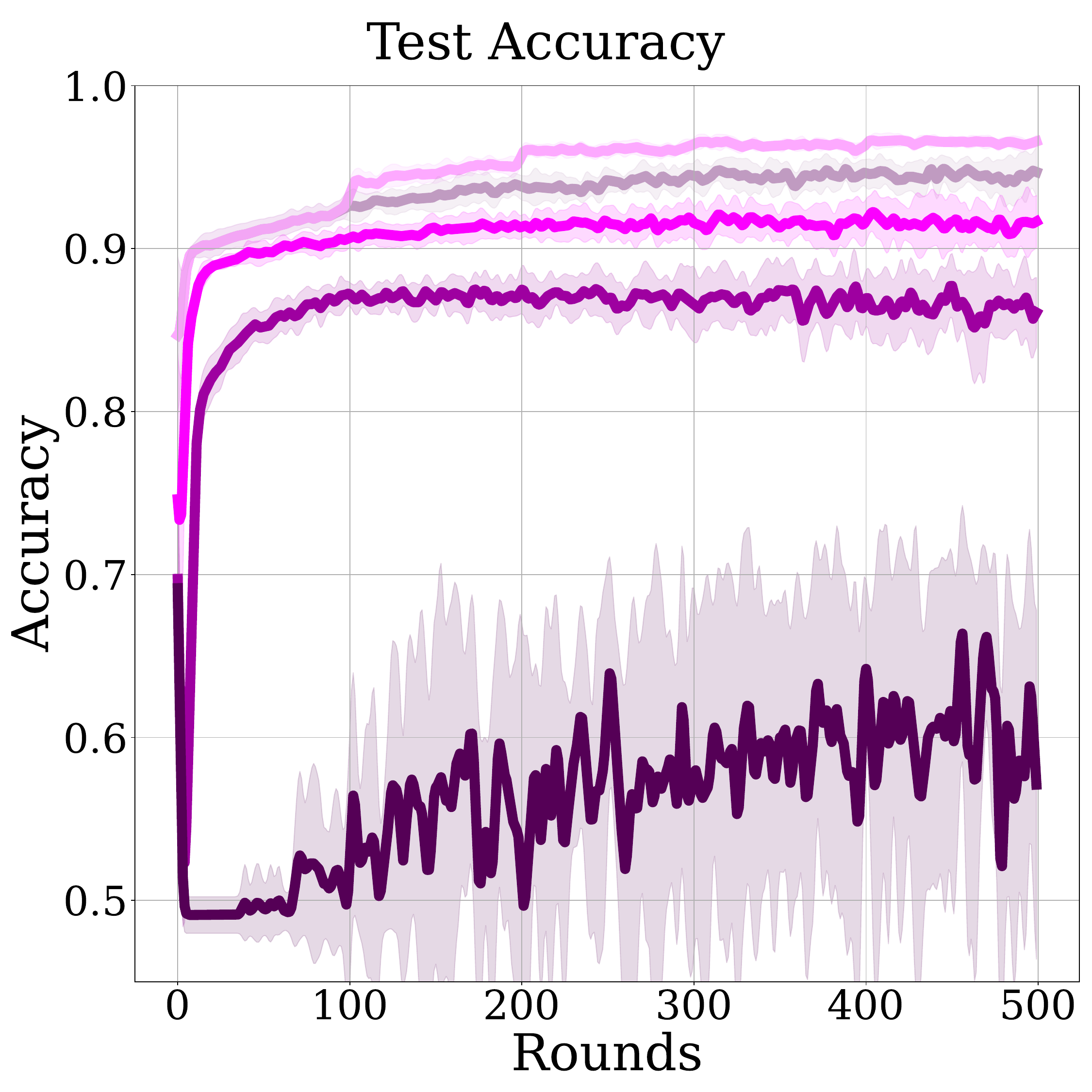}}
    \caption{Clean.}
    \end{subfigure}%
    \begin{subfigure}[t]{0.25\textwidth}\centering{\includegraphics[width=1\linewidth,trim=0 0 0 0,clip]{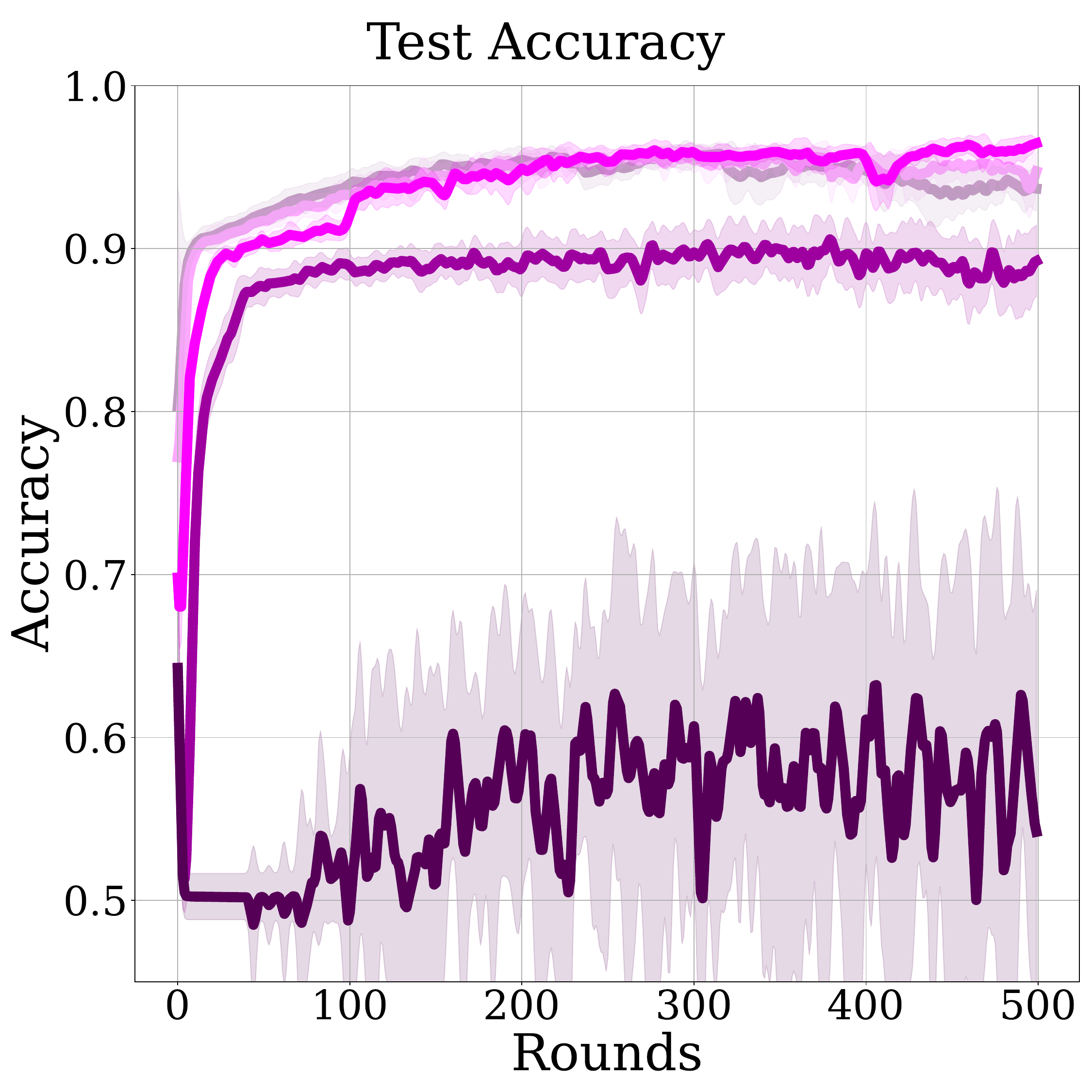}}
    \caption{$D^{\text{adv}} = 5\%$.}
    \end{subfigure}%
    \begin{subfigure}[t]{0.25\textwidth}\centering{\includegraphics[width=1\linewidth,trim=0 0 0 0,clip]{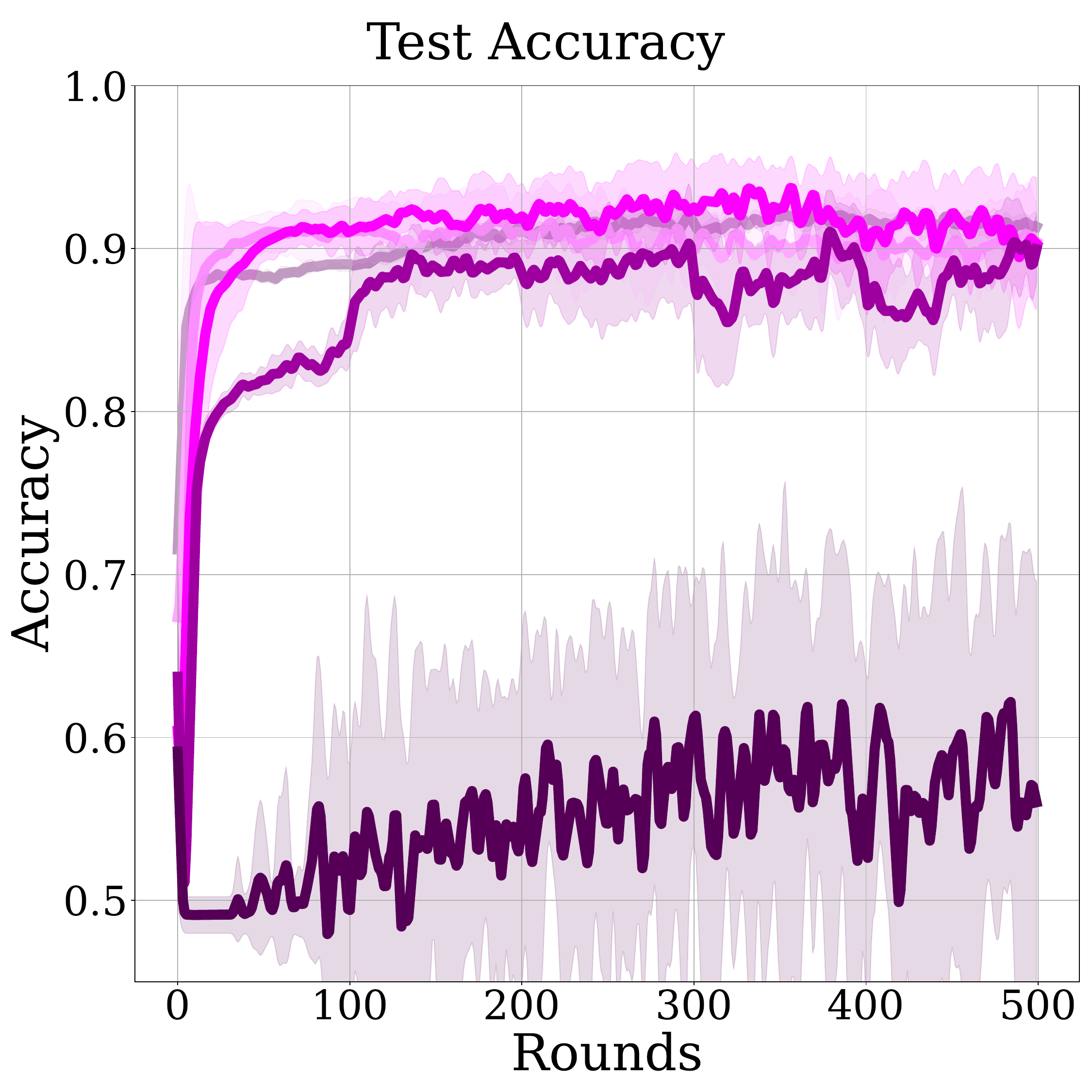}}
    \caption{$D^{\text{adv}} = 10\%$.}
    \end{subfigure}%
    \begin{subfigure}[t]
    {0.25\textwidth}\centering{\includegraphics[width=1\linewidth,trim=0 0 0 0,clip]{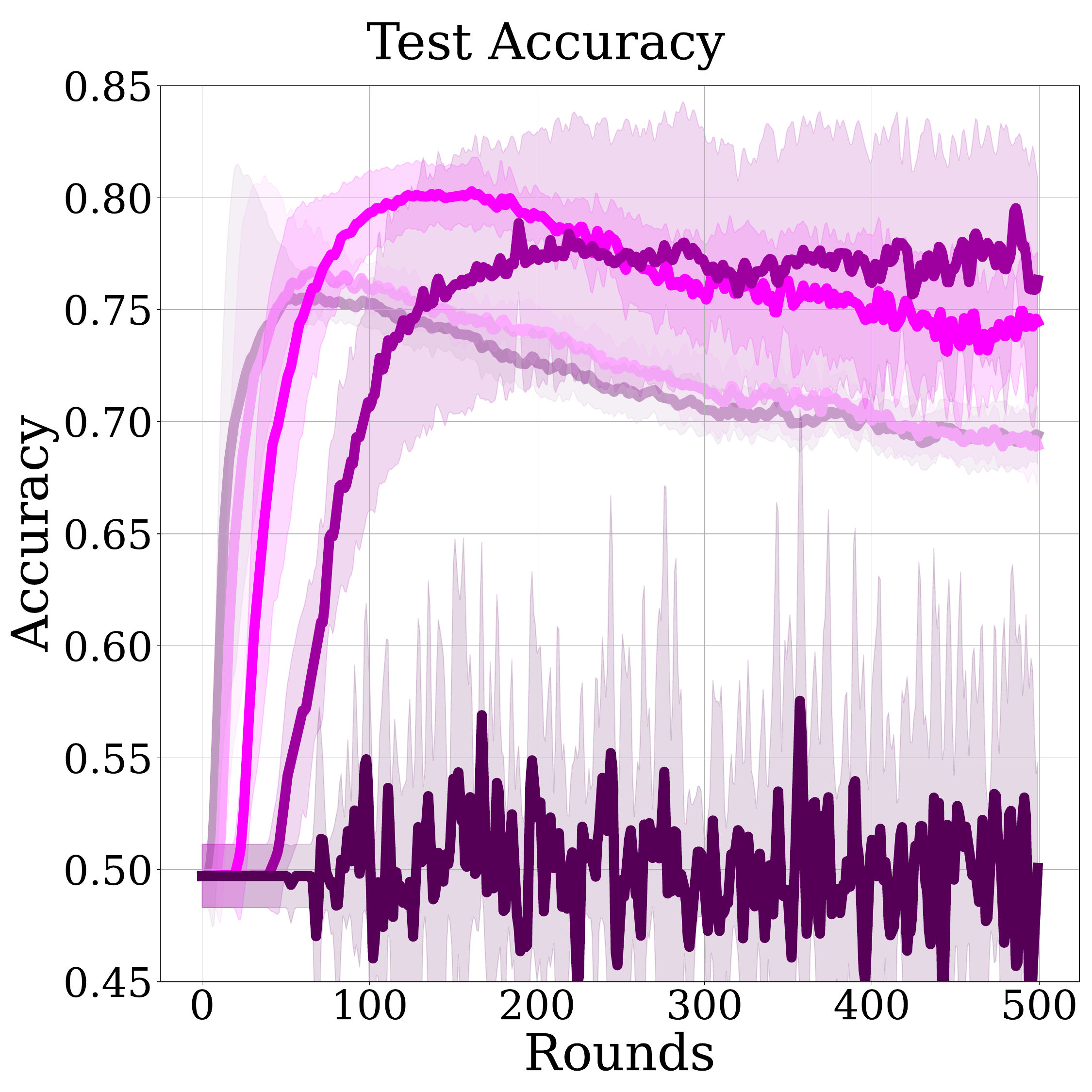}}
    \caption{$D^{\text{adv}} = 25\%$.}
    \end{subfigure}%
    \\
    \begin{subfigure}
{1\textwidth}\centering{\includegraphics[width=1\linewidth,trim=70 10 50 10,clip]{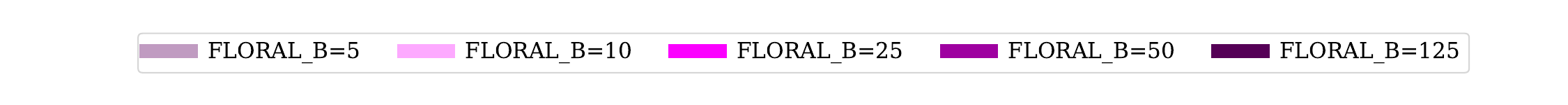}}
    
    \end{subfigure}%
    \caption{The sensitivity of \textsc{Floral} to the attacker's budget $B$. 
    "Clean" refers to the dataset with clean labels, while the adversarial datasets contain $\{5, 10, 25 \} (\%)$ poisoned labels.
    The performance under setting ($C=10$, $\gamma=1$) is presented.
    As the level of label poisoning increases, \textsc{Floral} performs better under higher attacker budget settings.
    }
\label{fig:moon-sensitivity-attacker-budget}
\end{figure*}

\vspace{-0.2cm}
\section{Extension to Multi-class Classification}
\label{app:multi-class-classification}
We extend our algorithm to multi-class classification tasks, as detailed in Algorithm~\ref{alg:robust-svm-game-multiclass}.
The primary modification involves adopting a one-vs-all approach \citep{hsu2002comparison} by employing kernel SVM model $f_{\lambda_0}^m$ for each class $m \in \mathcal{M}$ and associating multiple attackers $a_m, m \in \mathcal{M}$ for the corresponding classifiers.
In each round $t$, the attackers identify the $B_m$ most influential data points with respect to $\lambda_{t}^m$ values of the corresponding models under their constrained budgets $B_m$, and gather them into a set $\mathcal{B}_t$. Among the points in $\mathcal{B}_t$, the labels of top-$k$ influential data points are poisoned according to a predefined label poisoning distribution $q$. The dataset with adversarial labels is then shared with each kernel SVM model and local training is applied via PGD training step.

\begin{algorithm*}[ht]
\setlength{\textfloatsep}{0pt}
\caption{\textsc{Floral}-MultiClass}
\label{alg:robust-svm-game-multiclass}
\begin{algorithmic}[1]
\STATE {\bfseries Input:} Initial kernel SVM models $f_{\lambda_{0}}^m$ for each class $m \in \mathcal{M}$, training dataset $\mathcal{D}_{0}=\{(x_i, y_i)\}_{i=1}^{n}, x_i \in \mathbb{R}^{d}, y_i \in \{ \pm 1\}$, attackers' budgets $B_m$, parameter $k$, where $k \ll \min\{B_m\}_{m \in \mathcal{M}}$, learning rate $\eta$, a pre-defined label flip distribution $q$. \looseness -1
\FOR{round $t = 1, \dots, T$}
\STATE Initialize $\mathcal{B}_t \gets \emptyset$.
\FOR{$m \in \mathcal{M}$} 
\STATE $\mathcal{B}_t^m \gets $ Identify top-$B_m$ support vectors w.r.t. $\lambda_{t-1}^m$ values by solving (\ref{eq:inner-problem-objective}-\ref{eq:inner-problem-const2}).
\ENDFOR
\STATE $\mathcal{B}_t \gets \bigcup_{m \in \mathcal{M}} \mathcal{B}_t^m$.
\STATE $\Tilde{y}^t \gets$ \mbox{\colorbox{magenta!15}{randomized top-$k$}}: Randomly select $k$ points among $\mathcal{B}_t$ and poison their labels w.r.t. $q$. \looseness=-1
\STATE $\mathcal{D}_{t} \gets \{(x_i, \Tilde{y}_i^t)\}_{i=1}^{n}$ \hfill {\color{caribbeangreen} */ Adversarial dataset with selected $k$ poisoned labels}
\FOR{$m \in \mathcal{M}$} 
\STATE Compute gradient of the objective (\ref{eq:outer-problem-objective}), $\nabla_{\lambda} D(f_{\lambda_{t-1}}^m; \mathcal{D}_t)$, based on $\lambda_{t-1}^m, \mathcal{D}_{t}$ as given in (\ref{eq:svm-dual-gradient}).
\STATE Take a PGD step $\lambda_t^m \gets$ \mbox{\colorbox{magenta!15}{\textsc{Prox}}}$_{\mathcal{S}(\Tilde{y}^t)} (\lambda_{t-1}^m - \eta \nabla_{\lambda} D(f_{\lambda_{t-1}^m};\mathcal{D}_{t}))$. \hfill {\color{caribbeangreen} */ Adversarial training}
\ENDFOR
\ENDFOR
\STATE \textbf{return} $\{f_{\lambda_{T}}^m\}_{m \in \mathcal{M}}$
\end{algorithmic}
\end{algorithm*}

\vspace{-0.2cm}
\section{Comparison Against Additional Methods}
\label{app:comp-additional-baselines}

We additionally compare \textsc{Floral} against least squares classifier using randomized smoothing (RS) \citep{rosenfeld2020certified}, and regularized synthetic reduced nearest neighbor (RSRNN) \citep{tavallali2022adversarial} methods on the Moon and MNIST-1vs7 datasets. 
RS provides a robustness certification for a linear classifier under label-flipping attacks. Whereas, RSRNN, conceptually similar to Curie \citep{curie}, provides a filtering-out defense based on clustering. 

We evaluated the performance under different noise ($q$) and $l_2$ regularization ($\lambda$) hyperparameter values for the RS method suggested in \citep{rosenfeld2020certified}, whereas we considered varying number of centroids $K$, penalty coefficient $\lambda$, cost complexity coefficient $\alpha$, for the RSRNN method, using referenced values in the study \citep{tavallali2022adversarial} for the MNIST dataset.  
\looseness -1

The results presented in Tables~\ref{tab:test-accuracy-comp-moon-rs-comp}-\ref{tab:test-accuracy-comp-mnist-rsrnn-comp} demonstrate that \textsc{Floral} consistently outperforms both RS and RSRNN across all datasets and experimental settings. While RSRNN achieves comparable performance on the MNIST dataset, it still falls short of \textsc{Floral}. The performance of the RS method, which employs a linear classifier with a pointwise robustness certificate, aligns with expectations, as its simpler classifier limits its ability to capture complex patterns. In contrast, \textsc{Floral} utilizes kernel SVMs, enabling it to effectively model intricate patterns within the data and achieve superior results.

\begin{table}[h]
\centering
    \caption{Test accuracies of \textsc{Floral} against randomized smoothing (RS) method \citep{rosenfeld2020certified} on the Moon dataset. Each entry shows an average of five runs. 
    We evaluated the performance under different noise ($q$) values for RS. 
    }
    \label{tab:test-accuracy-comp-moon-rs-comp} 
    \resizebox{0.75\columnwidth}{!}{%
    \begin{tabular}{ll|cccc} \specialrule{1.5pt}{1pt}{1pt}
         \multicolumn{2}{c}{\multirow{2}{*}{\makecell{\\ \textbf{Setting}}}} & \multicolumn{4}{c}{\textbf{Method}} \\ \cmidrule(lr){3-6}
        & & \textsc{Floral} & RS ($q=0.1, \lambda=0.01$) & RS ($q=0.3, \lambda=0.01$) & RS ($q=0.4, \lambda=0.01$) 
        \\ \specialrule{1.5pt}{1pt}{1pt}
         \text{Clean} & $C=10, \gamma=1$  & \cellcolor{tablegreen} \textbf{0.968} & 0.557  & 0.509 & 0.509      \\
         $D^{\text{adv}}=5\%$ & $C=10, \gamma=1$  & \cellcolor{tablegreen} \textbf{0.963} & 0.552 & 0.509 & 0.509    \\
         $D^{\text{adv}}=10\%$ & $C=10, \gamma=1$   & \cellcolor{tablegreen} \textbf{0.954} & 0.540  & 0.509 &  0.509    \\
         $D^{\text{adv}}=25\%$ & $C=10, \gamma=1$   & \cellcolor{tablegreen} \textbf{0.776} & 0.520 & 0.505 & 0.505    \\  \specialrule{1.5pt}{1pt}{1pt}
    \end{tabular}
    }
\end{table}

\begin{table}[h]
\centering
    \caption{Test accuracies of \textsc{Floral} against randomized smoothing (RS) method \citep{rosenfeld2020certified} on the MNIST-$1$vs$7$ dataset. Each entry shows an average of five runs. 
    We evaluated the performance under different noise ($q$) and $l_{2}$ regularization ($\lambda$) hyperparameter values for RS, as suggested in \citep{rosenfeld2020certified}. 
    }
    \label{tab:test-accuracy-comp-mnist1vs7-rs-comp} 
    \resizebox{1\columnwidth}{!}{%
    \begin{tabular}{ll|ccccccc} \specialrule{1.5pt}{1pt}{1pt}
         \multicolumn{2}{c}{\multirow{2}{*}{\makecell{\\ \textbf{Setting}}}} & \multicolumn{7}{c}{\textbf{Method}} \\ \cmidrule(lr){3-9}
        & & \textsc{Floral} & RS ($q=0.1, \lambda=0.01$) & RS ($q=0.3, \lambda=0.01$) & RS ($q=0.4, \lambda=0.01$) & RS ($q=0.1, \lambda=12291$) & RS ($q=0.3, \lambda=12291$) & RS ($q=0.4, \lambda=13237$) 
        \\ \specialrule{1.5pt}{1pt}{1pt}
         \text{Clean} & $C=5, \gamma=0.005$  & \cellcolor{tablegreen} \textbf{0.991} & 0.973  & 0.921 & 0.836 & 0.940 & 0.846 & 0.732    \\
         $D^{\text{adv}}=5\%$ & $C=5, \gamma=0.005$  & \cellcolor{tablegreen} \textbf{0.984} & 0.921 & 0.876 & 0.800  & 0.895 & 0.802 & 0.701    \\
         $D^{\text{adv}}=10\%$ & $C=5, \gamma=0.005$   & \cellcolor{tablegreen} \textbf{0.978} & 0.868  & 0.831 & 0.768  & 0.830 & 0.745 & 0.673   \\
         $D^{\text{adv}}=25\%$ & $C=5, \gamma=0.005$  & \cellcolor{tablegreen} \textbf{0.830} & 0.706 & 0.693 & 0.669 & 0.548 & 0.594 & 0.595   \\  \specialrule{1.5pt}{1pt}{1pt}
    \end{tabular}
    }
\end{table}

\begin{table}[h]
\centering
    \caption{Test accuracies of \textsc{Floral} against regularized synthetic reduced nearest neighbor (RSRNN) \citep{tavallali2022adversarial} trained on the Moon dataset. Each entry shows an average of five runs. 
    We evaluated the performance under different hyperparameter values (number of centroids $K$, penalty coefficient $\lambda$, cost complexity coefficient $\alpha$) for the RSRNN method. 
    }
    \label{tab:test-accuracy-comp-moon-rsrnn-comp} 
    \resizebox{1\columnwidth}{!}{%
    \begin{tabular}{ll|ccccc} \specialrule{1.5pt}{1pt}{1pt}
         \multicolumn{2}{c}{\multirow{2}{*}{\makecell{\\ \textbf{Setting}}}} & \multicolumn{5}{c}{\textbf{Method}} \\ \cmidrule(lr){3-7}
        & & \textsc{Floral} & RSRNN ($K=2, \alpha=0.01, \lambda=0.1$) & RSRNN ($K=10, \alpha=0.01, \lambda=0.1$) & RSRNN ($K=10, \alpha=0.1, \lambda=1$) & RSRNN ($K=20, \alpha=0.01, \lambda=0.1$)  
        \\ \specialrule{1.5pt}{1pt}{1pt}
         \text{Clean} & $C=10, \gamma=1$  & \cellcolor{tablegreen} \textbf{0.968} & 0.505 & 0.629 & 0.688 & 0.617    \\
         $D^{\text{adv}}=5\%$ & $C=10, \gamma=1$  & \cellcolor{tablegreen} \textbf{0.963} & 0.502 & 0.547 &  0.603 &  0.512   \\
         $D^{\text{adv}}=10\%$ & $C=10, \gamma=1$   & \cellcolor{tablegreen} \textbf{0.954} & 0.502  & 0.532 & 0.566  &  0.482  \\
         $D^{\text{adv}}=25\%$ & $C=10, \gamma=1$   & \cellcolor{tablegreen} \textbf{0.776} & 0.494 & 0.434 & 0.476 & 0.439    \\  \specialrule{1.5pt}{1pt}{1pt}
    \end{tabular}
    }
\end{table}
\vspace{-0.2cm}
\begin{table}[H]
\centering
    \caption{Test accuracies of \textsc{Floral} against regularized synthetic reduced nearest neighbor (RSRNN) \citep{tavallali2022adversarial} trained on the MNIST-$1$vs$7$ dataset. Each entry shows an average of five runs. 
    We evaluated the performance under different cost complexity coefficient ($\alpha$) values for the RSRNN method. 
    }
    \label{tab:test-accuracy-comp-mnist-rsrnn-comp} 
    \resizebox{0.75\columnwidth}{!}{%
    \begin{tabular}{ll|ccc} \specialrule{1.5pt}{1pt}{1pt}
         \multicolumn{2}{c}{\multirow{2}{*}{\makecell{\\ \textbf{Setting}}}} & \multicolumn{3}{c}{\textbf{Method}} \\ \cmidrule(lr){3-5}
        & & \textsc{Floral} & RSRNN ($K=10, \alpha=0.1, \lambda=1.0$) & RSRNN ($K=10, \alpha=1.0, \lambda=1.0$)
        \\ \specialrule{1.5pt}{1pt}{1pt}
         \text{Clean} & $C=5, \gamma=0.005$  & \cellcolor{tablegreen} \textbf{0.991} &  0.619 & 0.692   \\
         $D^{\text{adv}}=5\%$ & $C=5, \gamma=0.005$ & \cellcolor{tablegreen} \textbf{0.984} & 0.599 & 0.441  \\
         $D^{\text{adv}}=10\%$ & $C=5, \gamma=0.005$   & \cellcolor{tablegreen} \textbf{0.978} &  0.432 & 0.408     \\
         $D^{\text{adv}}=25\%$ & $C=5, \gamma=0.005$   & \cellcolor{tablegreen} \textbf{0.830} & 0.403 & 0.408    \\  \specialrule{1.5pt}{1pt}{1pt}
    \end{tabular}
    }
\end{table}

\section{Experiments Under Different Label Attacks}
\label{app:different-attacks}
To show the generalizability of our approach in the presence of otherlabel poisoning attack types, we further compare \textsc{Floral} against baselines on adversarial datasets generated using the \texttt{alfa}, \texttt{alfa-tilt} \citep{xiao2015support} and \texttt{LFA} attacks \citep{label-sanitization}.

\subsection{Experiments with the $\texttt{alfa-tilt}$ attack}
\label{app:alfa-tilt-attack-experiments}

We further evaluate \textsc{Floral}'s performance in the presence of $\texttt{alfa-tilt}$ attack \citep{xiao2015support} on the Moon and MNIST-$1$vs$7$ datasets.
We report the results on the Moon datasets in Table~\ref{tab:test-accuracy-comp-moon-alfa-tilt}, whereas we present the results for MNIST-$1$vs$7$ dataset in Figure~\ref{fig:mnist1vs7-exp-results-plots-appendix-alfa-tilt} and Table~\ref{tab:test-accuracy-comp-mnist1vs7-alfatilt}.

As shown with the results on the Moon dataset, \textsc{Floral} is able to achieve a higher "Best" robust accuracy level throughout the training process. 
Furthermore, \textsc{Floral}'s effectiveness under $\texttt{alfa-tilt}$ attack is best shown on the MNIST dataset. As reported in Figure~\ref{fig:mnist1vs7-exp-results-plots-appendix-alfa-tilt} and Table~\ref{tab:test-accuracy-comp-mnist1vs7-alfatilt}, \textsc{Floral} achieves an outperforming robust accuracy level compared to baseline methods on all adversarial settings. This demonstrates the potential of \textsc{Floral} defense against other label poisoning attacks. 
\looseness -1

\begin{table}[h]
\centering
    \caption{Test accuracies of methods trained over the Moon dataset with adversarial labels generated by the \texttt{alfa-tilt} \citep{xiao2015support} attack. Each entry shows the average of five replications.
    Highlighted values indicate the best performance in the  \mbox{\colorbox{tablegreen}{"\textbf{Best}"}} (peak accuracy during training) and \mbox{\colorbox{tablegreen}{"Last"}} (final accuracy after training) columns.
    }
    \label{tab:test-accuracy-comp-moon-alfa-tilt} 
    \resizebox{1\columnwidth}{!}{%
    \begin{tabular}{ll|cccccccccccccccc} \specialrule{1.5pt}{1pt}{1pt}
         \multicolumn{2}{c}{\multirow{2}{*}{\makecell{\\ \\ \textbf{Setting}}}} & \multicolumn{16}{c}{\textbf{Method}} \\ \cmidrule(lr){3-18}
        & & \multicolumn{2}{c}{\textsc{Floral}} & \multicolumn{2}{c}{SVM} & \multicolumn{2}{c}{NN} & \multicolumn{2}{c}{NN-PGD} & \multicolumn{2}{c}{LN-SVM} & \multicolumn{2}{c}{Curie} & \multicolumn{2}{c}{LS-SVM} &  \multicolumn{2}{c}{K-LID} \\ 
        \cmidrule(lr){3-4} \cmidrule(lr){5-6} \cmidrule(lr){7-8} \cmidrule(lr){9-10} \cmidrule(lr){11-12} \cmidrule(lr){13-14} \cmidrule(lr){15-16} \cmidrule(lr){17-18}
        & &  Best & Last & Best & Last & Best & Last & Best & Last & Best & Last & Best & Last & Best & Last & Best & Last \\ \specialrule{1.5pt}{1pt}{1pt}
        \text{Clean} & $C=10, \gamma=1$ & \cellcolor{tablegreen} \textbf{0.968} &  0.966 & 0.968 & \cellcolor{tablegreen} 0.968 & 0.960 & 0.960 & 0.966 & 0.964 & 0.940 & 0.940 & 0.941 & 0.941 & 0.881 & 0.881 & 0.966 & 0.966 \\
         $D^{\text{adv}}=5\%$ & $C=10, \gamma=1$   & \cellcolor{tablegreen} \textbf{0.972} & \cellcolor{tablegreen} 0.957 & 0.944 & 0.939 & 0.948 & 0.948 & 0.962 & 0.943 & 0.956 & 0.956 & 0.940 & 0.939 & 0.898 & 0.896 & 0.937 & 0.936 \\
         $D^{\text{adv}}=10\%$ & $C=10, \gamma=1$   & \cellcolor{tablegreen} \textbf{0.971} & 0.928 & 0.910 & 0.886 & 0.915 & 0.914 & 0.940 & 0.906 & 0.930 & \cellcolor{tablegreen} 0.930 & 0.920 & 0.902 & 0.898 & 0.896 & 0.926 & 0.926 \\  
         $D^{\text{adv}}=25\%$ & $C=10, \gamma=1$ & \cellcolor{tablegreen} \textbf{0.893} & \cellcolor{tablegreen} 0.824 & 0.787 & 0.722 & 0.837 & 0.750 & 0.837 & 0.720 & 0.786 & 0.723 & 0.792 & 0.759 & 0.792 & 0.791 & 0.770 & 0.708
          \\ \specialrule{1.5pt}{1pt}{1pt}
    \end{tabular}
    }
\end{table}

\begin{table}[h]
\centering
    \caption{Test accuracies of methods trained on the MNIST-$1$vs$7$ dataset under \texttt{alfa-tilt} poisoning attack \citep{xiao2015support}. Each entry shows the average of five replications with different train/test splits.
    Highlighted values indicate the best performance in the  \mbox{\colorbox{tablegreen}{"\textbf{Best}"}} (peak accuracy during training) and \mbox{\colorbox{tablegreen}{"Last"}} (final accuracy after training) columns.
    }
    \label{tab:test-accuracy-comp-mnist1vs7-alfatilt} 
    \resizebox{1\columnwidth}{!}{%
    \begin{tabular}{ll|cccccccccccccccc} \specialrule{1.5pt}{1pt}{1pt}
         \multicolumn{2}{c}{\multirow{2}{*}{\makecell{\\ \\ \textbf{Setting}}}} & \multicolumn{16}{c}{\textbf{Method}} \\ \cmidrule(lr){3-18}
        & & \multicolumn{2}{c}{\textsc{Floral}} & \multicolumn{2}{c}{SVM} & \multicolumn{2}{c}{NN} & \multicolumn{2}{c}{NN-PGD} & \multicolumn{2}{c}{LN-SVM} & \multicolumn{2}{c}{Curie} & \multicolumn{2}{c}{LS-SVM} &  \multicolumn{2}{c}{K-LID} \\ 
        \cmidrule(lr){3-4} \cmidrule(lr){5-6} \cmidrule(lr){7-8} \cmidrule(lr){9-10} \cmidrule(lr){11-12} \cmidrule(lr){13-14} \cmidrule(lr){15-16} \cmidrule(lr){17-18}
        & &  Best & Last & Best & Last & Best & Last & Best & Last & Best & Last & Best & Last & Best & Last & Best & Last \\ \specialrule{1.5pt}{1pt}{1pt}
         \text{Clean} & $C=5, \gamma=0.005$ & 0.992 & 0.991 & 0.992 & 0.992 & 0.993 & 0.993 & \cellcolor{tablegreen} \textbf{0.995} & \cellcolor{tablegreen} 0.994 & 0.987 & 0.987 & 0.990 & 0.990 & 0.978 & 0.977 & 0.987 & 0.987 \\
         $D^{\text{adv}}=5\%$ & $C=5, \gamma=0.005$     & \cellcolor{tablegreen} \textbf{0.991} & \cellcolor{tablegreen} 0.990  & 0.980 & 0.980 & 0.991 & 0.958 & 0.988 & 0.955 & 0.979 & 0.979  & 0.987 & 0.987 & 0.980 & 0.979 & 0.978 & 0.978 \\
         $D^{\text{adv}}=10\%$ & $C=5, \gamma=0.005$   & 0.984 & \cellcolor{tablegreen} 0.982  &  0.970 & 0.970 & 0.986 & 0.917 & \cellcolor{tablegreen} \textbf{0.988} & 0.909 & 0.966 & 0.966 & 0.974 & 0.974 & 0.979 & 0.978  & 0.965 & 0.965 \\
         $D^{\text{adv}}=25\%$ & $C=5, \gamma=0.005$   & 0.811 & \cellcolor{tablegreen} 0.788 & 0.713  & 0.713 & 0.795 & 0.739 & \cellcolor{tablegreen} \textbf{0.824} & 0.754 & 0.703 & 0.701  & 0.734 & 0.734 & 0.526 & 0.526  & 0.707 & 0.705   \\  \specialrule{1.5pt}{1pt}{1pt}
    \end{tabular}
    }
\end{table}

 \begin{figure*}[h]
    \centering  
    \begin{subfigure}[t]
    {0.25\textwidth}\centering{\includegraphics[width=1\linewidth,trim=0 0 0 0,clip]{figures/mnist1vs7_accuracy_Dclean_C10_gamma05.pdf}}
    \caption{Clean.}
    
    \end{subfigure}%
    \begin{subfigure}[t]{0.25\textwidth}\centering{\includegraphics[width=1\linewidth,trim=0 0 0 0,clip]{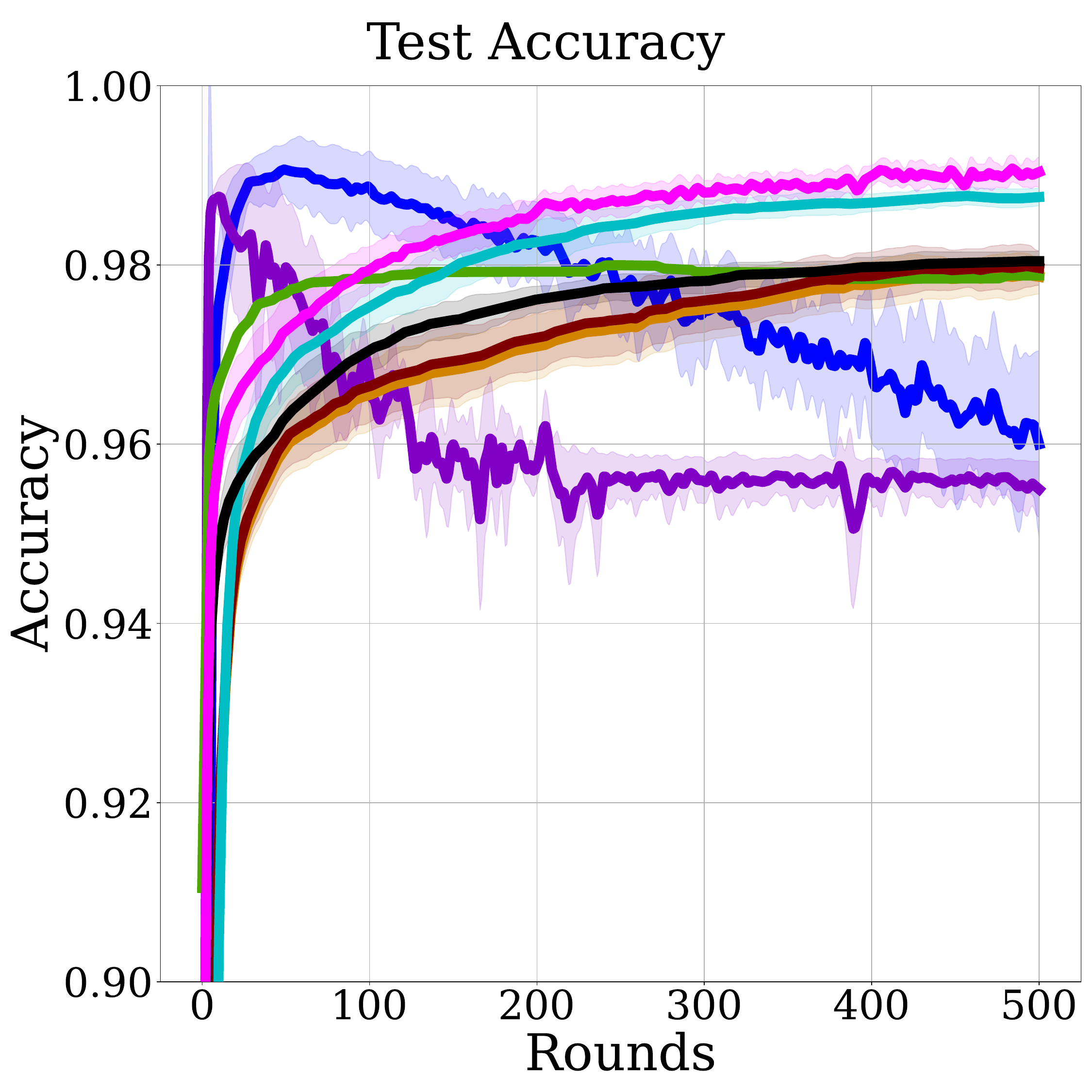}}
    \caption{$D^{\text{adv}} = 5\%$.}
    
    \end{subfigure}%
    \begin{subfigure}[t]{0.25\textwidth}\centering{\includegraphics[width=1\linewidth,trim=0 0 0 0,clip]{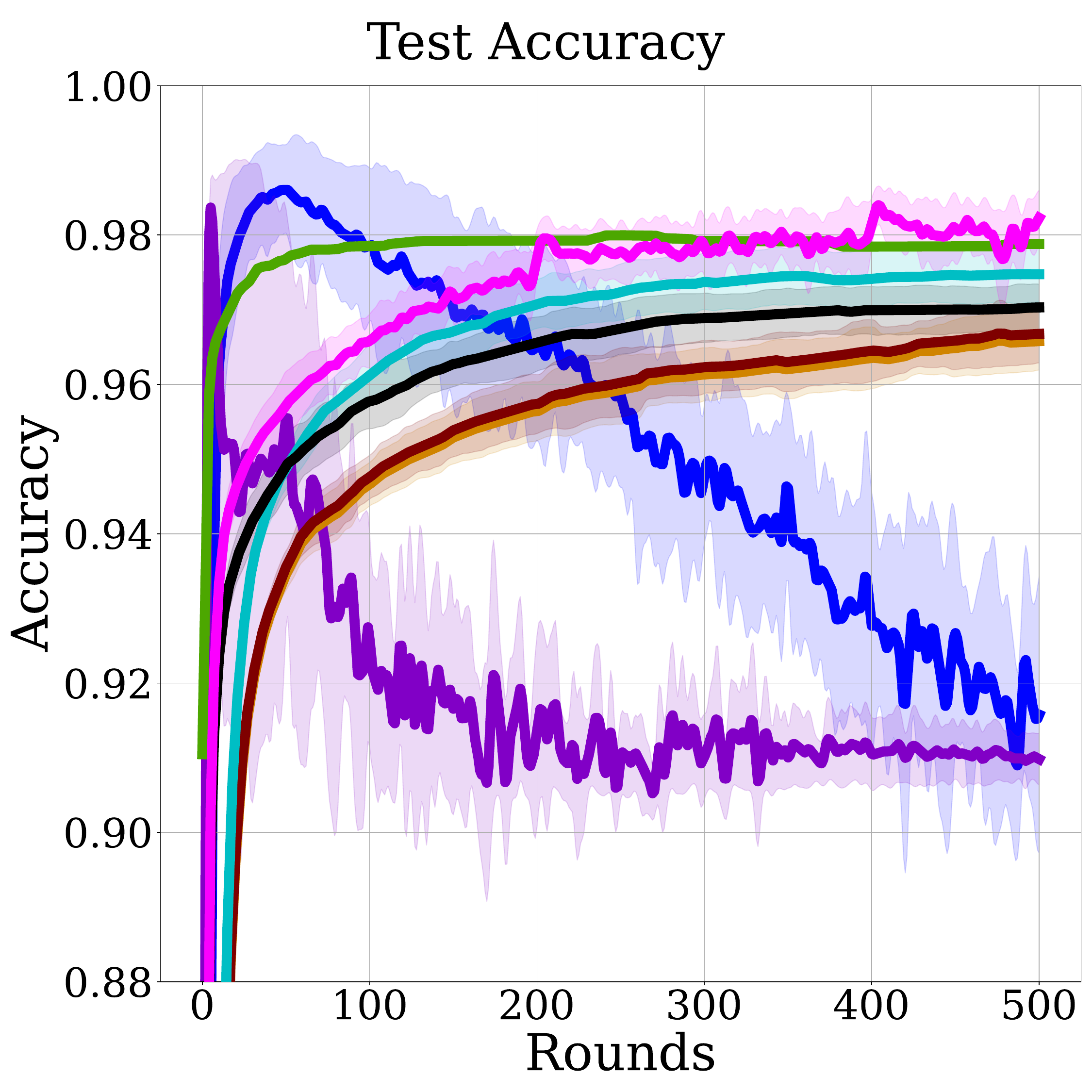}}
    \caption{$D^{\text{adv}} = 10\%$.}
    
    \end{subfigure}%
    \begin{subfigure}[t]{0.25\textwidth}\centering{\includegraphics[width=1\linewidth,trim=0 0 0 0,clip]{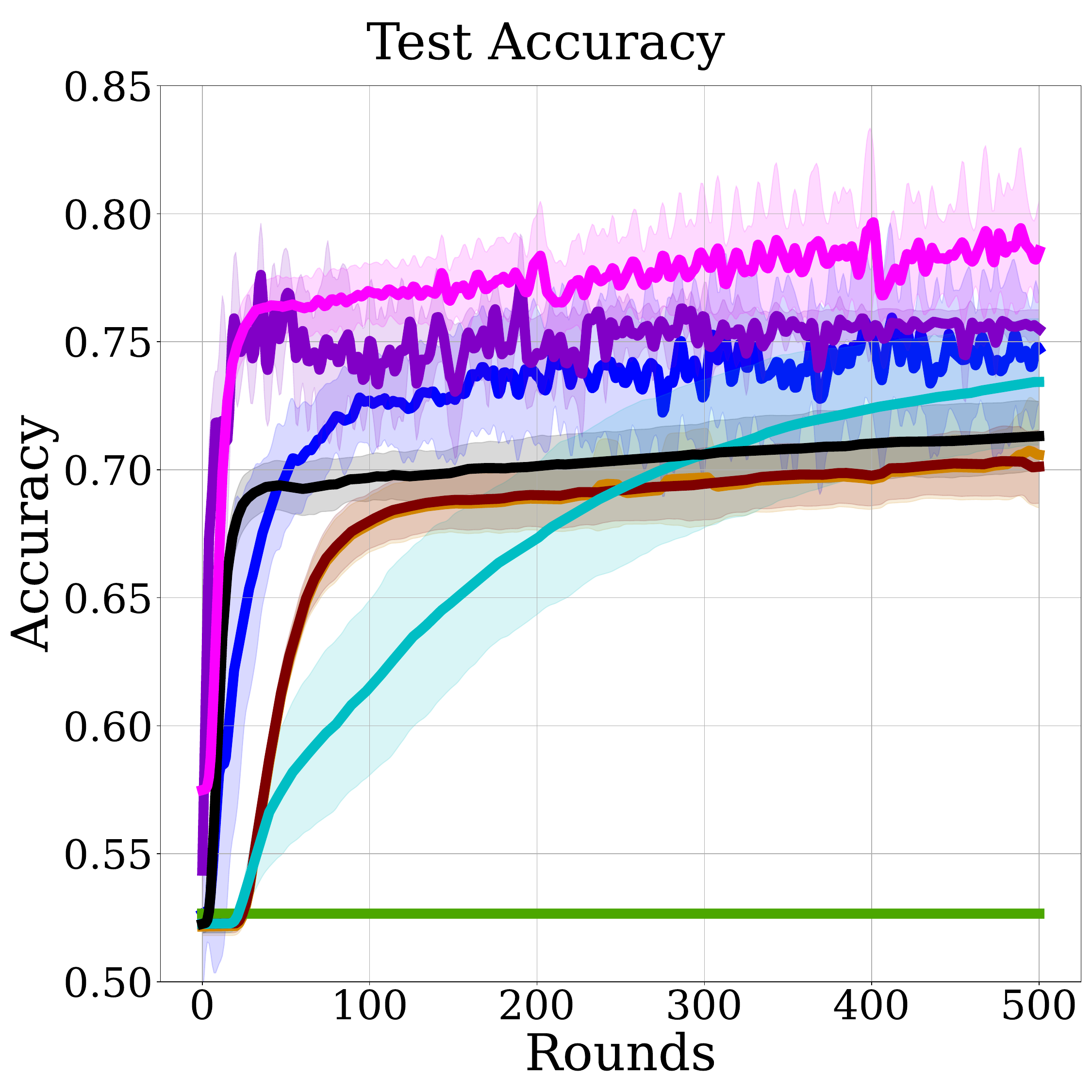}}
    \caption{$D^{\text{adv}} = 25\%$.}
    
    \end{subfigure}%
    \\
    \begin{subfigure}
{1\textwidth}\centering{\includegraphics[width=1\linewidth,trim=40 10 10 10,clip]{figures/moon_legend_8cols_new.pdf}}
    
    \end{subfigure}%
    \caption{Clean and robust test accuracy of methods trained on the MNIST-$1$vs$7$ dataset under \texttt{alfa-tilt} poisoning attack \citep{xiao2015support}. "Clean" refers to the dataset with clean labels, while the adversarial datasets contain $\{5, 10, 25\}$ $(\%)$ poisoned labels.
    For all SVM-related models, the setting $C=5$, $\gamma=0.005$ is used.
    \textsc{Floral} achieves outperforming robust accuracy level compared to baseline methods on all adversarial settings, clearly demonstrating the potential of \textsc{Floral} as a defense against other types of label poisoning attacks.
    }
\label{fig:mnist1vs7-exp-results-plots-appendix-alfa-tilt}
\end{figure*}

\subsection{Experiments with the $\texttt{alfa}$ attack}
\label{app:alfa-attack-experiments}

The \texttt{alfa} attack is generated under the assumption that the attacker can maliciously alter the training labels to maximize the empirical loss of
the original classifier on the tainted dataset. From this, the attacker's objective is formulated as maximizing the difference between the empirical risk for classifiers under tainted and untainted label sets.

We experimented on the Moon dataset and considered label poisoning levels ($\%$) of $\{5, 10, 25\}$.
The illustrations of the Moon dataset with clean and \texttt{alfa}-attacked adversarial labels are given Figure~\ref{fig:moon-datasets-clean-and-adv-alfa}.
We report the results under \texttt{alfa} attack in Figure~\ref{fig:moon-exp-results-plots-appendix-alfa} and Table~\ref{tab:test-accuracy-comp-moon-alfa}. As shown, \textsc{Floral} is especially effective against vanilla SVM in maintaining higher robust accuracy in adversarial settings. NN-based methods again fail drastically as the dataset becomes more adversarial under \texttt{alfa} attack. Although LS-SVM shows premise in moderate adversarial scenarios, it fails to be effective considering clean and most adversarial performance. 
Furthermore, \textsc{Floral} demonstrates superior performance compared to all baselines in the most adversarial setting (with $25\%$ poisoned labels).

 \begin{figure*}[ht]
    \centering  
    \begin{subfigure}{0.25\textwidth}\centering{\includegraphics[width=1\linewidth,trim=0 20 0 55,clip]{figures/moon_clean.png}}
    \caption{Clean.}
    
    \end{subfigure}%
    \begin{subfigure}{0.25\textwidth}\centering{\includegraphics[width=1\linewidth,trim=0 20 0 55,clip]{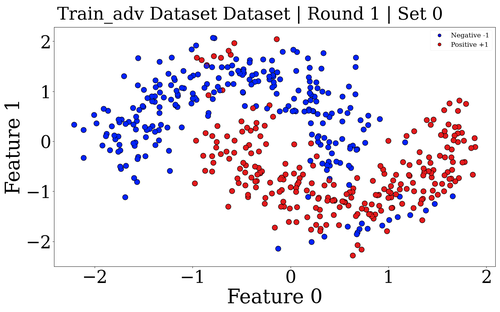}}
    \caption{$D^{\text{adv}} = 5\%$.}
    
    \end{subfigure}%
    \begin{subfigure}{0.25\textwidth}\centering{\includegraphics[width=1\linewidth,trim=0 20 0 55,clip]{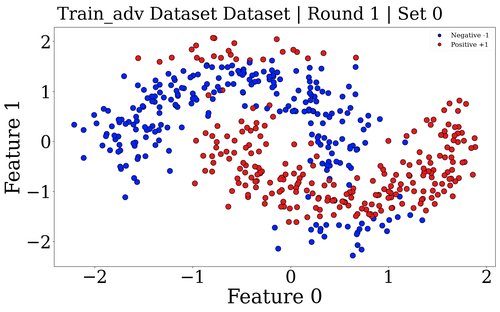}}
    \caption{$D^{\text{adv}} = 10\%$.}
    
    \end{subfigure}%
    \begin{subfigure}{0.25\textwidth}\centering{\includegraphics[width=1\linewidth,trim=0 20 0 55,clip]{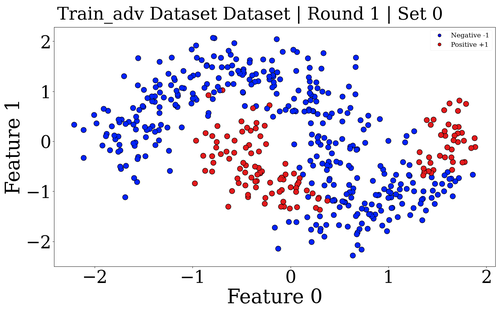}}
    \caption{$D^{\text{adv}} = 25\%$.}
    
    \end{subfigure}%
    \caption{Illustrations of the Moon training sets from an example replication, using clean and \texttt{alfa}-attacked adversarial labels with poisoning levels: $5\%$, $10\%$, $25\%$. }
\label{fig:moon-datasets-clean-and-adv-alfa}
\end{figure*}

\begin{table}[h]
\centering
    \caption{Test accuracies of methods trained over the Moon dataset with adversarial labels generated by the \texttt{alfa} attack. Each entry shows the average of five replications with different train/test splits.
    Highlighted values indicate the best performance in the  \mbox{\colorbox{tablegreen}{"\textbf{Best}"}} (peak accuracy during training) and \mbox{\colorbox{tablegreen}{"Last"}} (final accuracy after training) columns.
    }
    \label{tab:test-accuracy-comp-moon-alfa} 
    \resizebox{1\columnwidth}{!}{%
    \begin{tabular}{ll|cccccccccccccccc} \specialrule{1.5pt}{1pt}{1pt}
         \multicolumn{2}{c}{\multirow{2}{*}{\makecell{\\ \\ \textbf{Setting}}}} & \multicolumn{16}{c}{\textbf{Method}} \\ \cmidrule(lr){3-18}
        & & \multicolumn{2}{c}{\textsc{Floral}} & \multicolumn{2}{c}{SVM} & \multicolumn{2}{c}{NN} & \multicolumn{2}{c}{NN-PGD} & \multicolumn{2}{c}{LN-SVM} & \multicolumn{2}{c}{Curie} & \multicolumn{2}{c}{LS-SVM} &  \multicolumn{2}{c}{K-LID} \\ 
        \cmidrule(lr){3-4} \cmidrule(lr){5-6} \cmidrule(lr){7-8} \cmidrule(lr){9-10} \cmidrule(lr){11-12} \cmidrule(lr){13-14} \cmidrule(lr){15-16} \cmidrule(lr){17-18}
        & &  Best & Last & Best & Last & Best & Last & Best & Last & Best & Last & Best & Last & Best & Last & Best & Last \\ \specialrule{1.5pt}{1pt}{1pt}
         \text{Clean} & $C=10, \gamma=1$    & \cellcolor{tablegreen} \textbf{0.968} &  0.966 & 0.968 & \cellcolor{tablegreen} 0.968 & 0.960 & 0.960 & 0.966 & 0.964 & 0.940 & 0.940 & 0.941 & 0.941 & 0.881 & 0.881 & 0.966 & 0.966  \\
         $D^{\text{adv}}=5\%$ & $C=10, \gamma=1$     & 0.963 & 0.950 & 0.954 & 0.946 & 0.875 & 0.875 & 0.963 & 0.958 & 0.942 & 0.942 & 0.934 & 0.933 & \cellcolor{tablegreen} \textbf{0.964} & \cellcolor{tablegreen} 0.964 & 0.942 & 0.942 \\
         $D^{\text{adv}}=10\%$ & $C=10, \gamma=1$   & 0.954 & 0.902 & 0.914 & 0.893 & 0.836 & 0.816 & 0.918 & 0.895 & 0.914 & 0.907 & 0.915 & 0.914 & \cellcolor{tablegreen} \textbf{0.955} & \cellcolor{tablegreen} 0.954 & 0.914 & 0.907 \\
         $D^{\text{adv}}=25\%$ & $C=10, \gamma=1$   & \cellcolor{tablegreen} \textbf{0.776} & \cellcolor{tablegreen} 0.763 & 0.750 & 0.750 & 0.693 & 0.658 & 0.693 & 0.645 & 0.729 & 0.729 & 0.741 & 0.741 & 0.740 & 0.740 & 0.729 & 0.729 
         \\ \specialrule{1.5pt}{1pt}{1pt}
    \end{tabular}
    }
\end{table}

 \begin{figure*}[ht]
    \centering

    \begin{subfigure}[t]
    {0.25\textwidth}\centering{\includegraphics[width=1\linewidth,trim=0 0 0 0,clip]{figures/moon_accuracy_Dclean_C10_gamma1.pdf}}
    \caption{Clean.}
    
    \end{subfigure}%
    \begin{subfigure}[t]{0.25\textwidth}\centering{\includegraphics[width=1\linewidth,trim=0 0 0 0,clip]{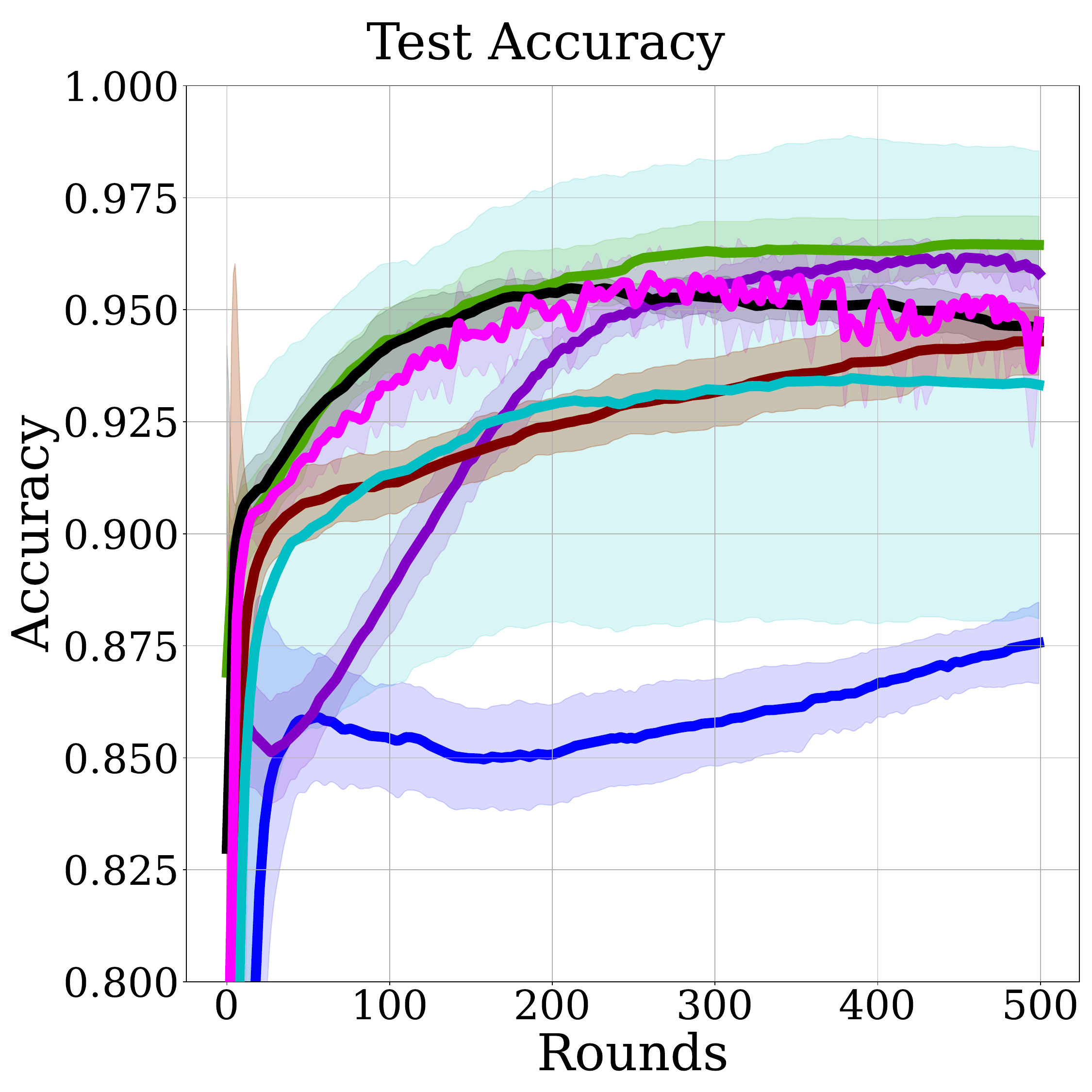}}
    \caption{$D^{\text{adv}} = 5\%$.}
    
    \end{subfigure}%
    \begin{subfigure}[t]{0.25\textwidth}\centering{\includegraphics[width=1\linewidth,trim=0 0 0 0,clip]{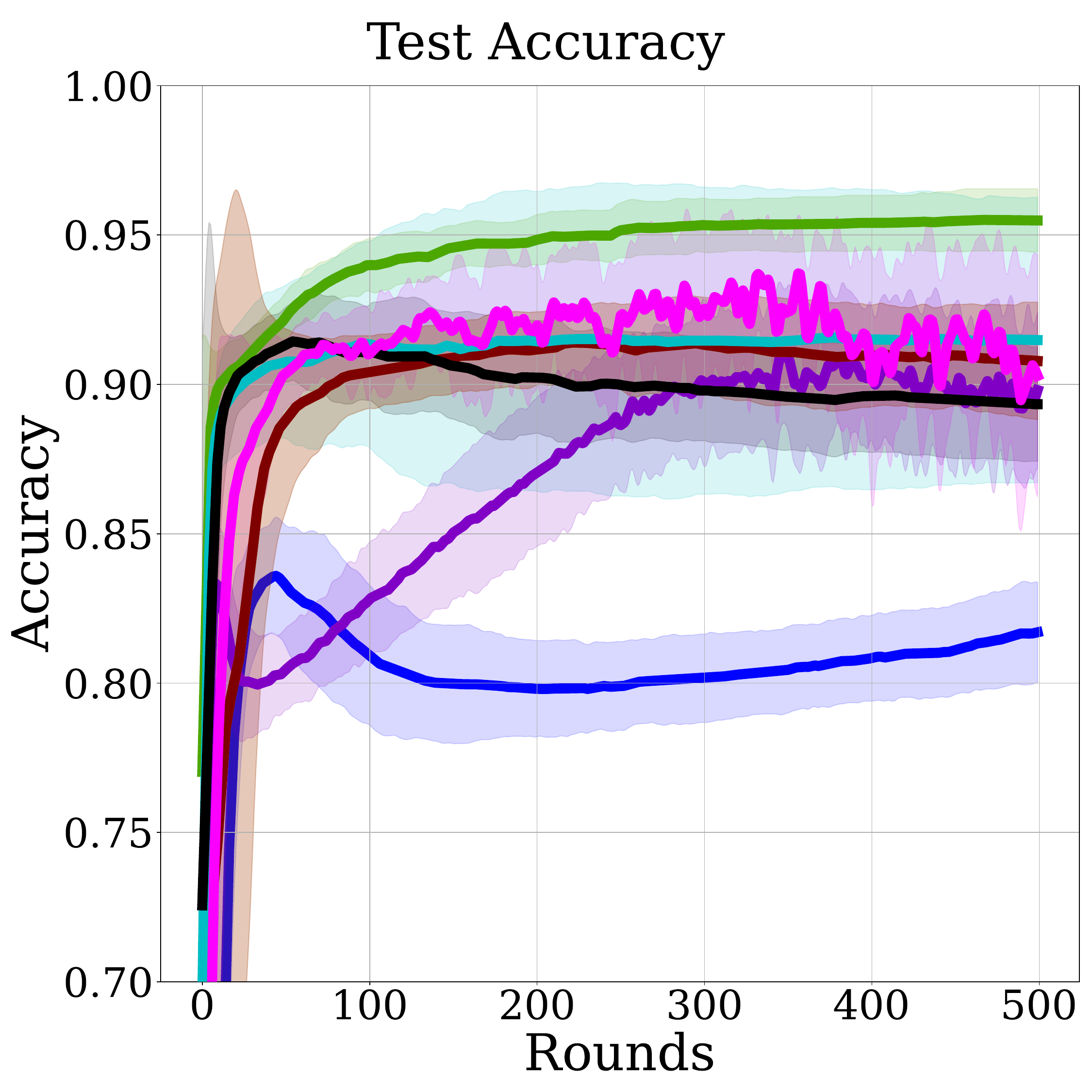}}
    \caption{$D^{\text{adv}} = 10\%$.}
    
    \end{subfigure}%
    \begin{subfigure}[t]{0.25\textwidth}\centering{\includegraphics[width=1\linewidth,trim=0 0 0 0,clip]{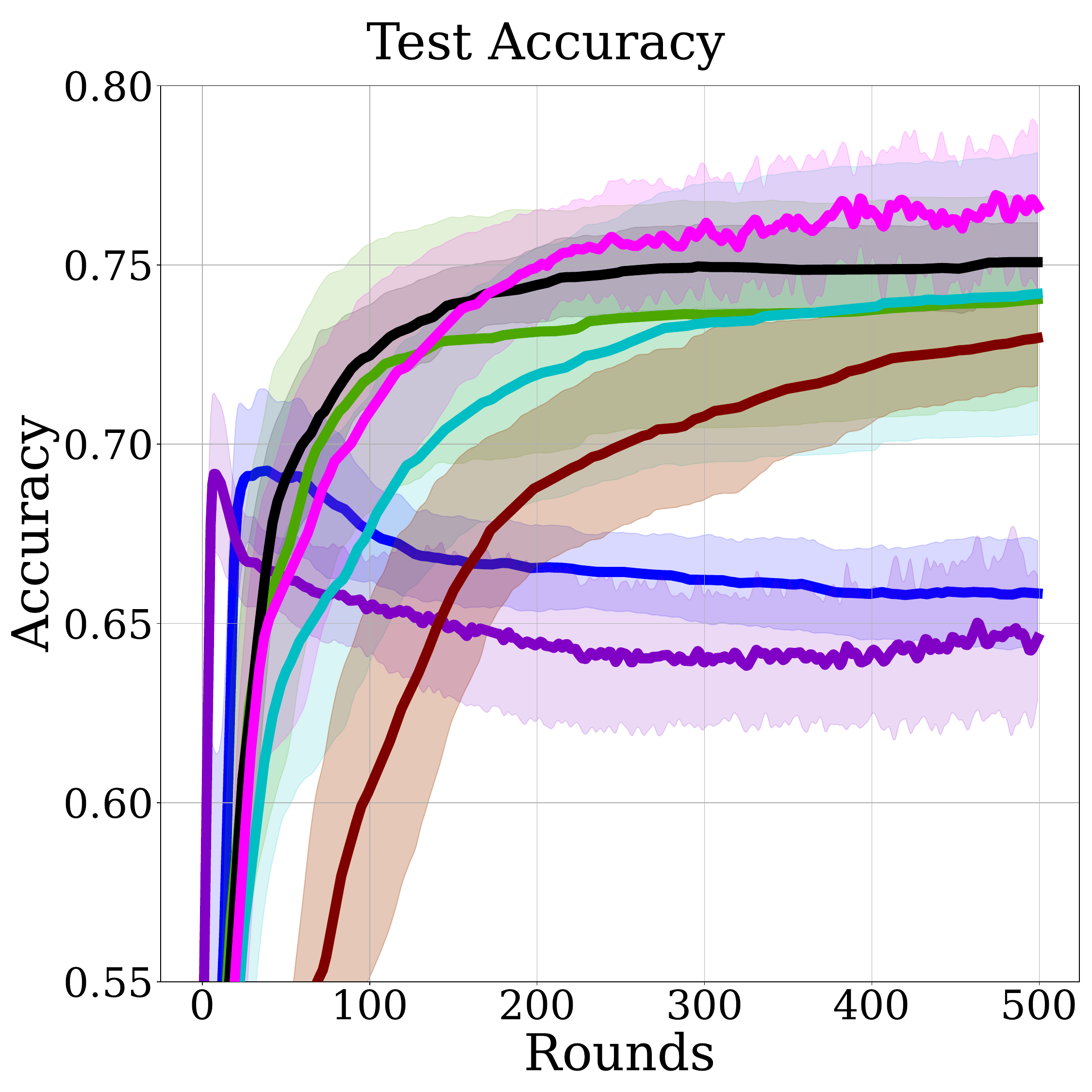}}
    \caption{$D^{\text{adv}} = 25\%$.}
    \end{subfigure}%
    \\
    \begin{subfigure}
{1\textwidth}\centering{\includegraphics[width=1\linewidth,trim=40 10 10 10,clip]{figures/moon_legend_8cols_new.pdf}}
    
    \end{subfigure}%
    \caption{Clean and robust test accuracy of methods trained on the Moon dataset under \texttt{alfa} poisoning attack. 
    "Clean" refers to the dataset with clean labels, while the adversarial datasets contain $\{5, 10, 25\} (\%)$ poisoned labels.
    As the level of label poisoning increases, models trained on adversarial datasets generally demonstrate a decline in accuracy.
    However, \textsc{Floral} demonstrates a gradually improving robust accuracy performance, particularly when the attack intensity increases to $25\%$.
    }
\label{fig:moon-exp-results-plots-appendix-alfa}
\end{figure*}

\subsection{Experiments with the $\texttt{LFA}$ attack}
\label{app:lfa-attack-experiments}
We additionally evaluate \textsc{Floral}'s effectiveness compared to baselines in the presence of $\texttt{LFA}$ attack \citep{label-sanitization} on the Moon dataset. 
As results are shown in Figure~\ref{fig:moon-exp-results-plots-appendix-lfa} and Table~\ref{tab:test-accuracy-comp-moon-lfa}, \textsc{Floral} demonstrates significant performance when the label poisoning attack level is high, i.e., $10\%$ or $25\%$. However, under those settings, LS-SVM \citep{label-sanitization} baseline shows faster convergence, which is expected as the LS-SVM \citep{label-sanitization} method is specifically crafted against the $\texttt{LFA}$ attack. 
Considering that LS-SVM fails in clean test performance, it is clear that \textsc{Floral} provides a generalizable defense.

\begin{figure*}[t]
\centering  
    \begin{subfigure}[t]
{0.25\textwidth}\centering{\includegraphics[width=1\linewidth,trim=0 0 0 0,clip]{figures/moon_accuracy_Dclean_C10_gamma1.pdf}}
    \caption{Clean.}
    \end{subfigure}%
    \begin{subfigure}[t]{0.25\textwidth}\centering{\includegraphics[width=1\linewidth,trim=0 0 0 0,clip]{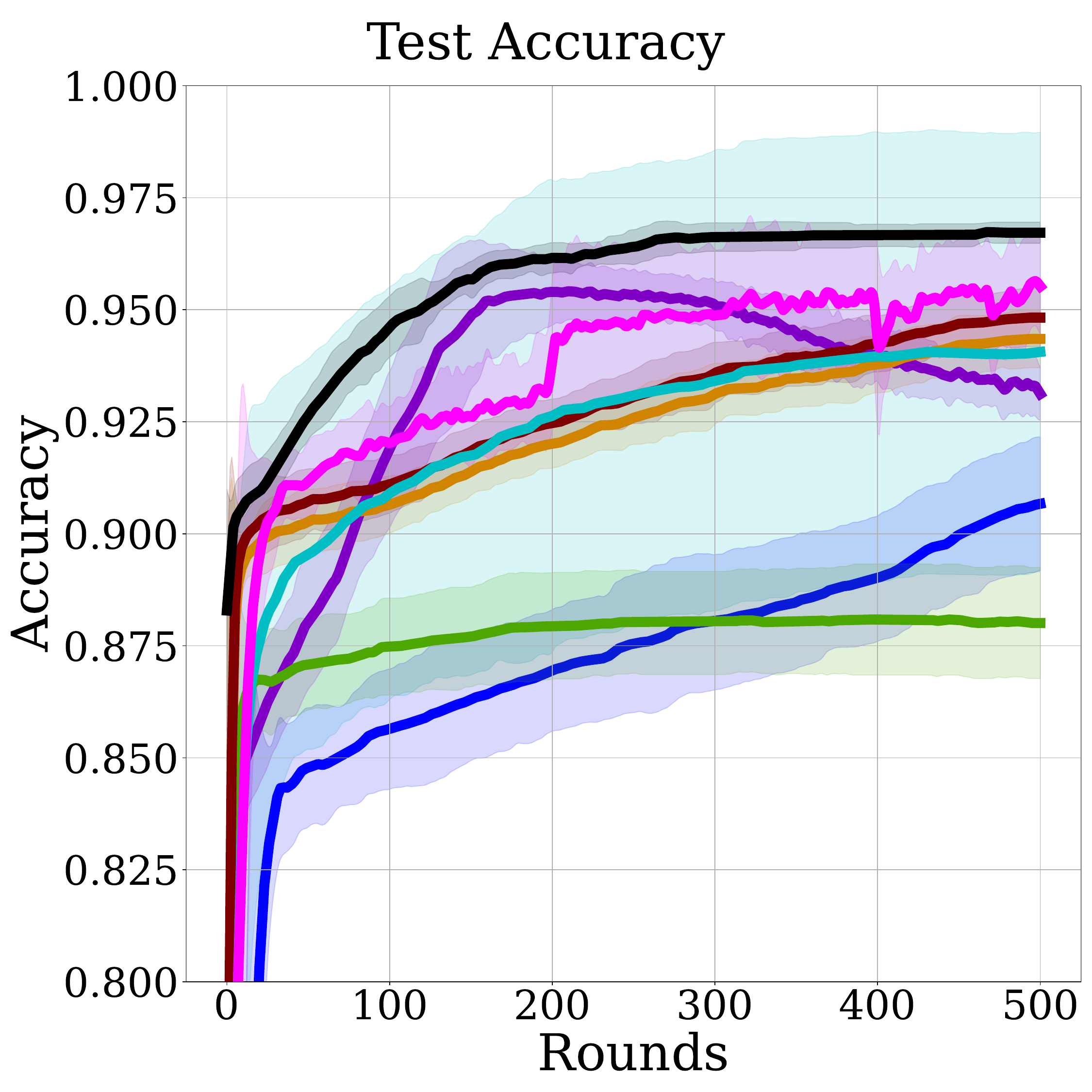}}
    \caption{$D^{\text{adv}} = 5\%$.}
    \end{subfigure}%
    \begin{subfigure}[t]{0.25\textwidth}\centering{\includegraphics[width=1\linewidth,trim=0 0 0 0,clip]{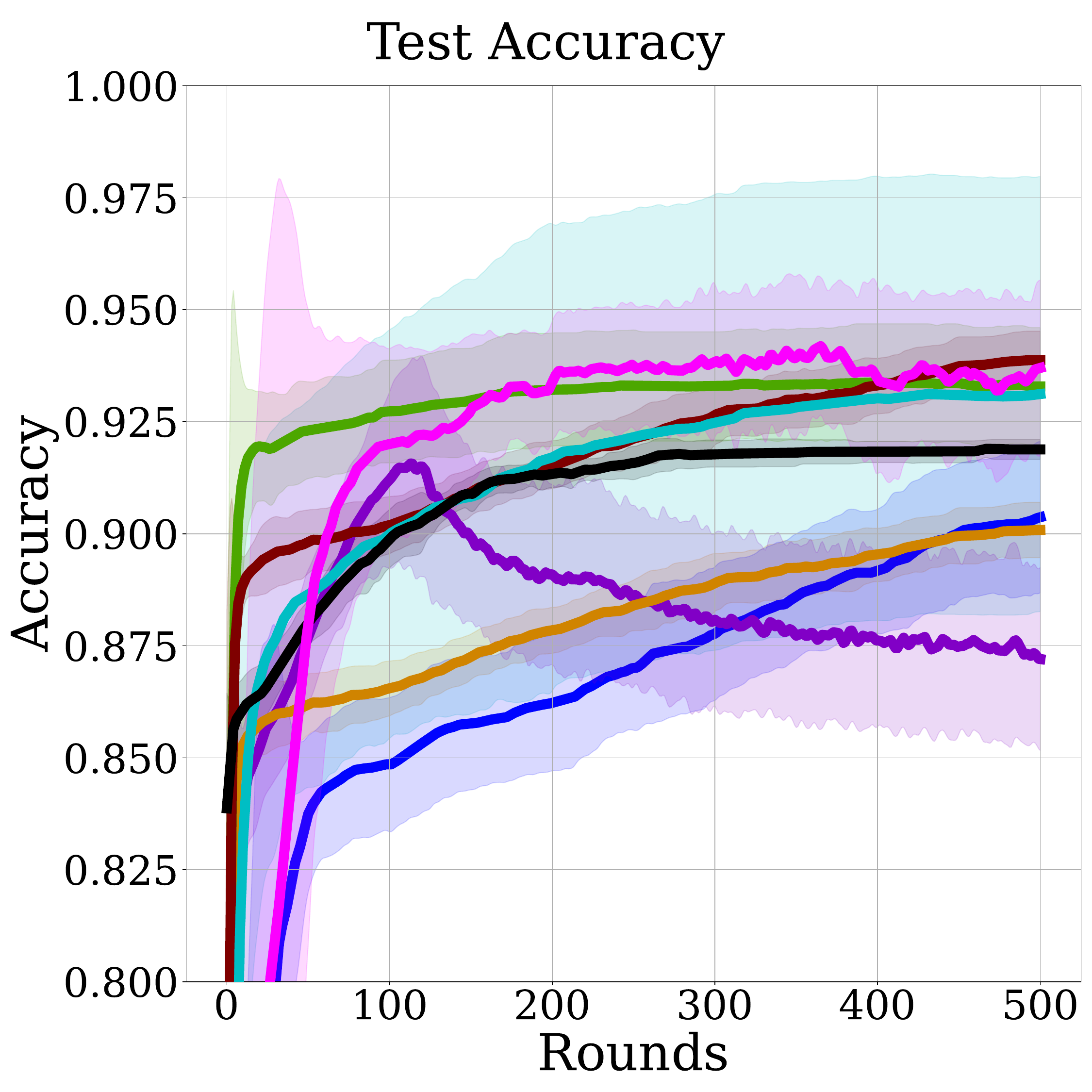}}
    \caption{$D^{\text{adv}} = 10\%$.}
    \end{subfigure}%
    \begin{subfigure}[t]{0.25\textwidth}\centering{\includegraphics[width=1\linewidth,trim=0 0 0 0,clip]{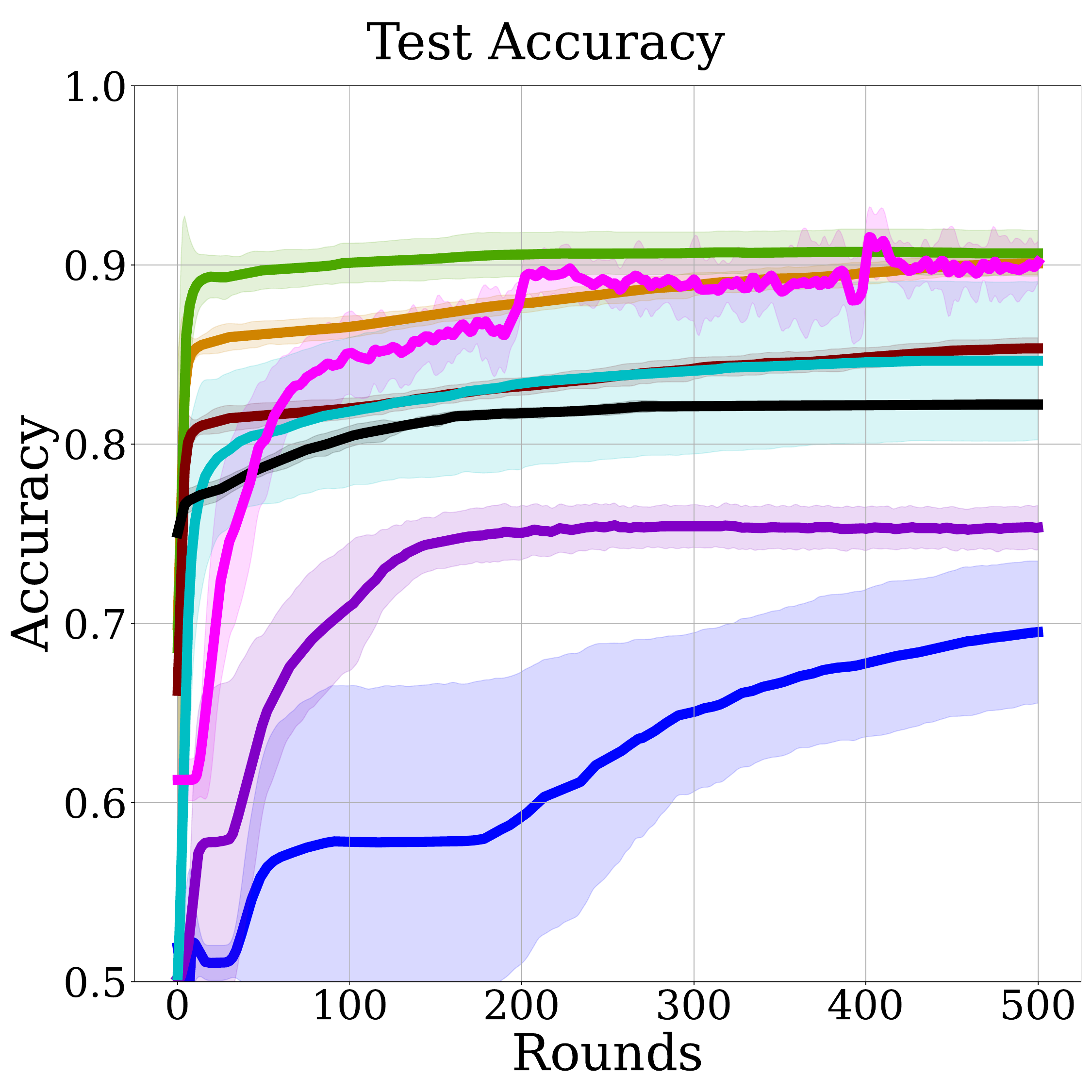}}
    \caption{$D^{\text{adv}} = 25\%$.}
    \end{subfigure}%
    \\
    \begin{subfigure}
{1\textwidth}\centering{\includegraphics[width=1\linewidth,trim=40 10 10 10,clip]{figures/moon_legend_8cols_new.pdf}}
    \end{subfigure}%
    \caption{Clean and robust test accuracy of methods trained on the Moon dataset under \texttt{LFA} poisoning attack \citep{label-sanitization}. "Clean" refers to the dataset with clean labels, while the adversarial datasets contain $\{5, 10, 25\}$ $(\%)$ poisoned labels.
    For all SVM-related models, the setting $C=1$, $\gamma=1.0$ is used.
    As the level of label poisoning increases, models trained on adversarial datasets generally demonstrate a decline in accuracy.
    However, \textsc{Floral} demonstrates a gradually improving robust accuracy performance, particularly when the attack level is $10\%$ or $25\%$. \looseness -1
    }
\label{fig:moon-exp-results-plots-appendix-lfa}
\end{figure*}

\begin{table}[t]
\centering
    \caption{Test accuracies of methods trained over the Moon dataset with adversarial labels generated by the \texttt{LFA} \citep{label-sanitization} attack. Each entry shows the average of five replications.
    Highlighted values indicate the best performance in the  \mbox{\colorbox{tablegreen}{"\textbf{Best}"}} (peak accuracy during training) and \mbox{\colorbox{tablegreen}{"Last"}} (final accuracy after training) columns.
    }
    \label{tab:test-accuracy-comp-moon-lfa} 
    \resizebox{1\columnwidth}{!}{%
    \begin{tabular}{ll|cccccccccccccccc} \specialrule{1.5pt}{1pt}{1pt}
         \multicolumn{2}{c}{\multirow{2}{*}{\makecell{\\ \\ \textbf{Setting}}}} & \multicolumn{16}{c}{\textbf{Method}} \\ \cmidrule(lr){3-18}
        & & \multicolumn{2}{c}{\textsc{Floral}} & \multicolumn{2}{c}{SVM} & \multicolumn{2}{c}{NN} & \multicolumn{2}{c}{NN-PGD} & \multicolumn{2}{c}{LN-SVM} & \multicolumn{2}{c}{Curie} & \multicolumn{2}{c}{LS-SVM} &  \multicolumn{2}{c}{K-LID} \\ 
        \cmidrule(lr){3-4} \cmidrule(lr){5-6} \cmidrule(lr){7-8} \cmidrule(lr){9-10} \cmidrule(lr){11-12} \cmidrule(lr){13-14} \cmidrule(lr){15-16} \cmidrule(lr){17-18}
        & &  Best & Last & Best & Last & Best & Last & Best & Last & Best & Last & Best & Last & Best & Last & Best & Last \\ \specialrule{1.5pt}{1pt}{1pt}
        \text{Clean} & $C=10, \gamma=1$ & \cellcolor{tablegreen} \textbf{0.968} &  0.966 & 0.968 & \cellcolor{tablegreen} 0.968 & 0.960 & 0.960 & 0.966 & 0.964 & 0.940 & 0.940 & 0.941 & 0.941 & 0.881 & 0.881 & 0.966 & 0.966 \\
         $D^{\text{adv}}=5\%$ & $C=10, \gamma=1$   & 0.957 & 0.954 & \cellcolor{tablegreen} \textbf{0.967} & \cellcolor{tablegreen} 0.967 & 0.906 & 0.906 & 0.955 & 0.930 &  0.948 &  0.948 &  0.940 &  0.940 & 0.880 & 0.880 &  0.943 &  0.943 \\
         $D^{\text{adv}}=10\%$ & $C=10, \gamma=1$   &  \cellcolor{tablegreen} \textbf{0.943} &  \cellcolor{tablegreen} 0.937 & 0.919 & 0.918 & 0.903 & 0.903 & 0.917 & 0.872 & 0.938 & 0.938 & 0.931 & 0.931 & 0.933 & 0.932 &  0.900 &  0.900 \\  
         $D^{\text{adv}}=25\%$ & $C=10, \gamma=1$ & \cellcolor{tablegreen} \textbf{0.922} & 0.903 & 0.822 & 0.822 & 0.695 & 0.695 & 0.757 & 0.753 &  0.853 &  0.853 & 0.892 & 0.846 &  0.907 & \cellcolor{tablegreen} 0.906 & 0.900 & 0.900
          \\ \specialrule{1.5pt}{1pt}{1pt}
    \end{tabular}
    }
\end{table}

\clearpage

\section{Integration with Neural Networks}
\label{app:nn-integration}

As demonstrated with the IMDB experiments in Section~\ref{sec:experiment-results}, \textsc{Floral} can be integrated with complex model architectures, e.g. a transformer-based language model such as RoBERTa, serving as a robust classifier head that enhances model robustness on classification tasks. 

Similarly, \textsc{Floral} can be directly incorporated into neural networks by utilizing the last-layer embeddings (the $x_i$’s in Algorithm~\ref{alg:robust-svm-game}) as inputs. These extracted representations can then be trained using \textsc{Floral}, resulting in more robust feature representations. Notably, our theoretical analysis remains valid under this integration, ensuring the approach's soundness.

To demonstrate this further, we performed additional experiments on the Moon and MNIST-$1$vs$7$ \citep{deng2012mnist} datasets, by integrating \textsc{Floral} with a neural network.

From Figure~\ref{fig:integration-nn-floral}, we can conclude that \textsc{Floral} integration achieves a higher robust accuracy level compared to plain neural network training.

\begin{figure}[ht]
    \centering  
{\color{red}
    \begin{subfigure}[t]
    {0.25\textwidth}\centering{\includegraphics[width=1\linewidth,trim=0 0 0 0,clip]{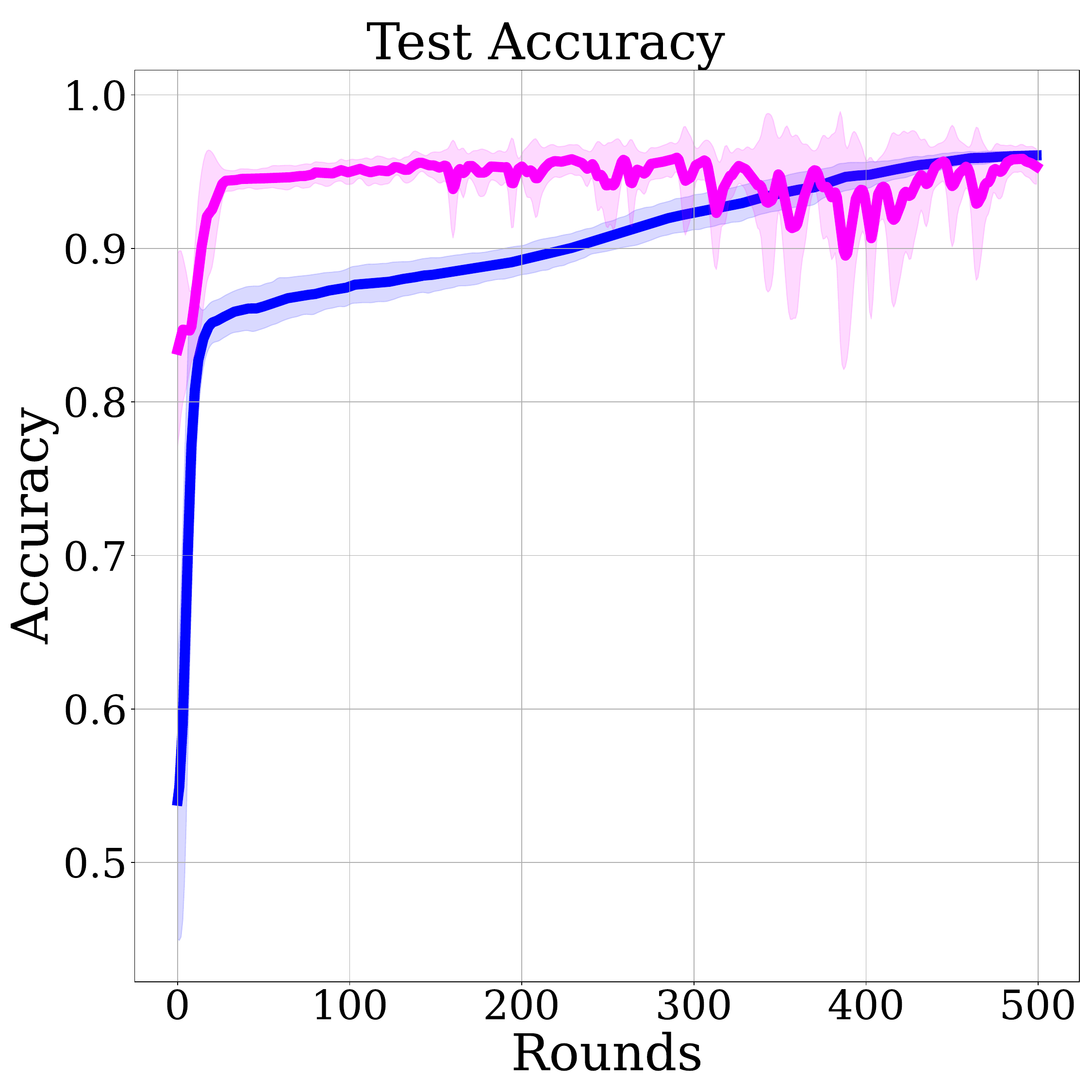}}
    \caption{Clean.}
    
    \end{subfigure}%
    \begin{subfigure}[t]{0.25\textwidth}\centering{\includegraphics[width=1\linewidth,trim=0 0 0 0,clip]{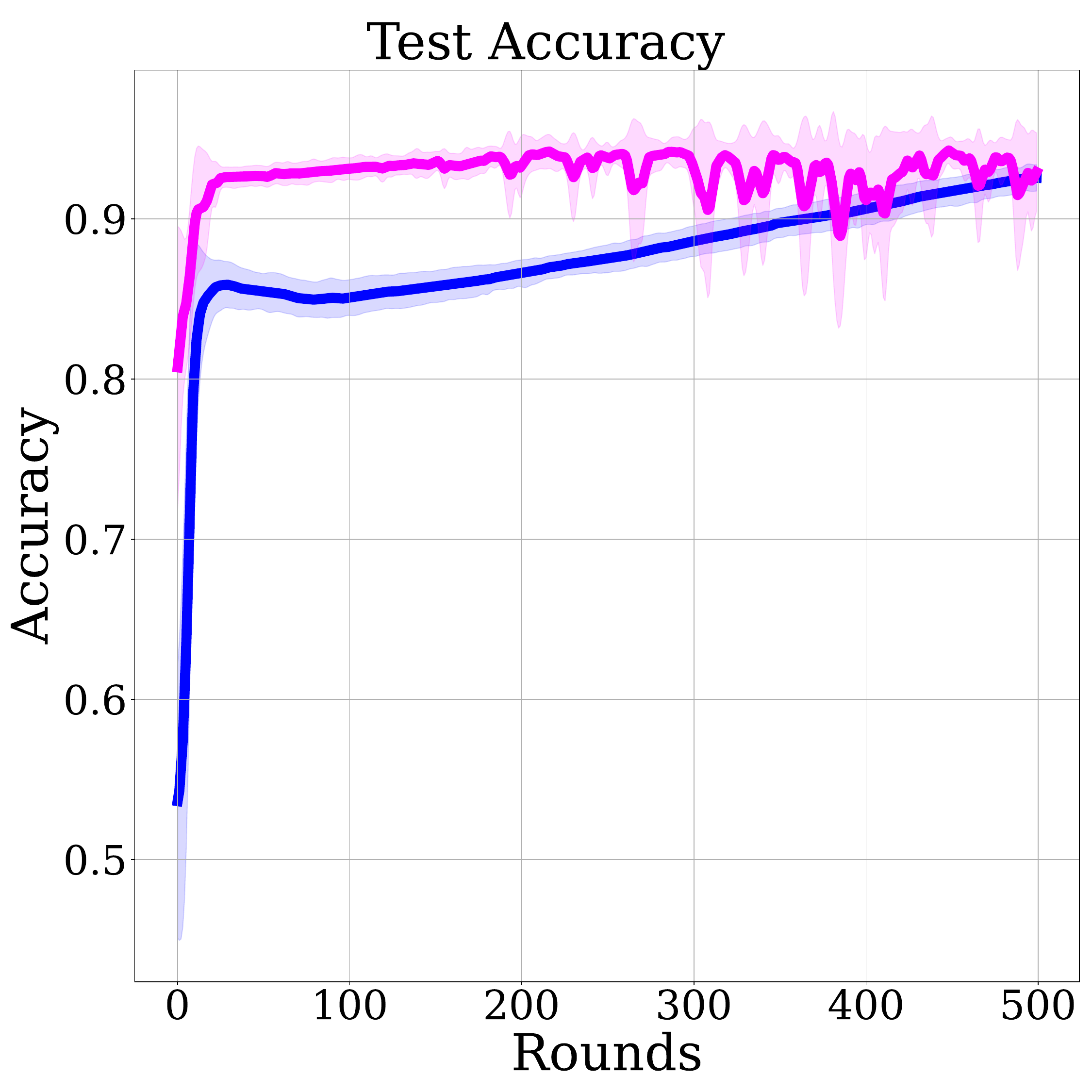}}
    \caption{$D^{\text{adv}} = 5\%$.}
    
    \end{subfigure}%
    \begin{subfigure}[t]{0.25\textwidth}\centering{\includegraphics[width=1\linewidth,trim=0 0 0 0,clip]{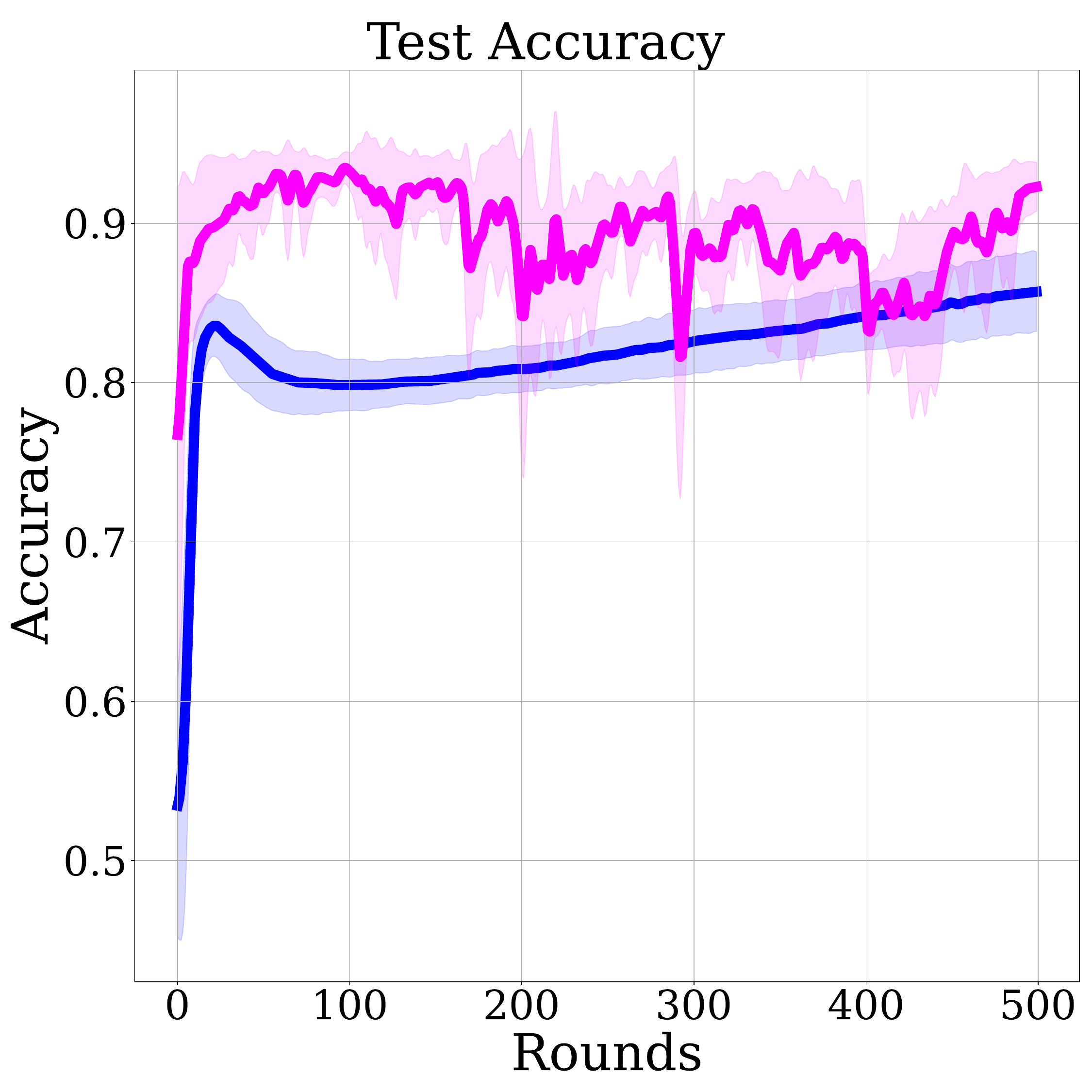}}
    \caption{$D^{\text{adv}} = 10\%$.}
    
    \end{subfigure}%
    \begin{subfigure}[t]{0.25\textwidth}\centering{\includegraphics[width=1\linewidth,trim=0 0 0 0,clip]{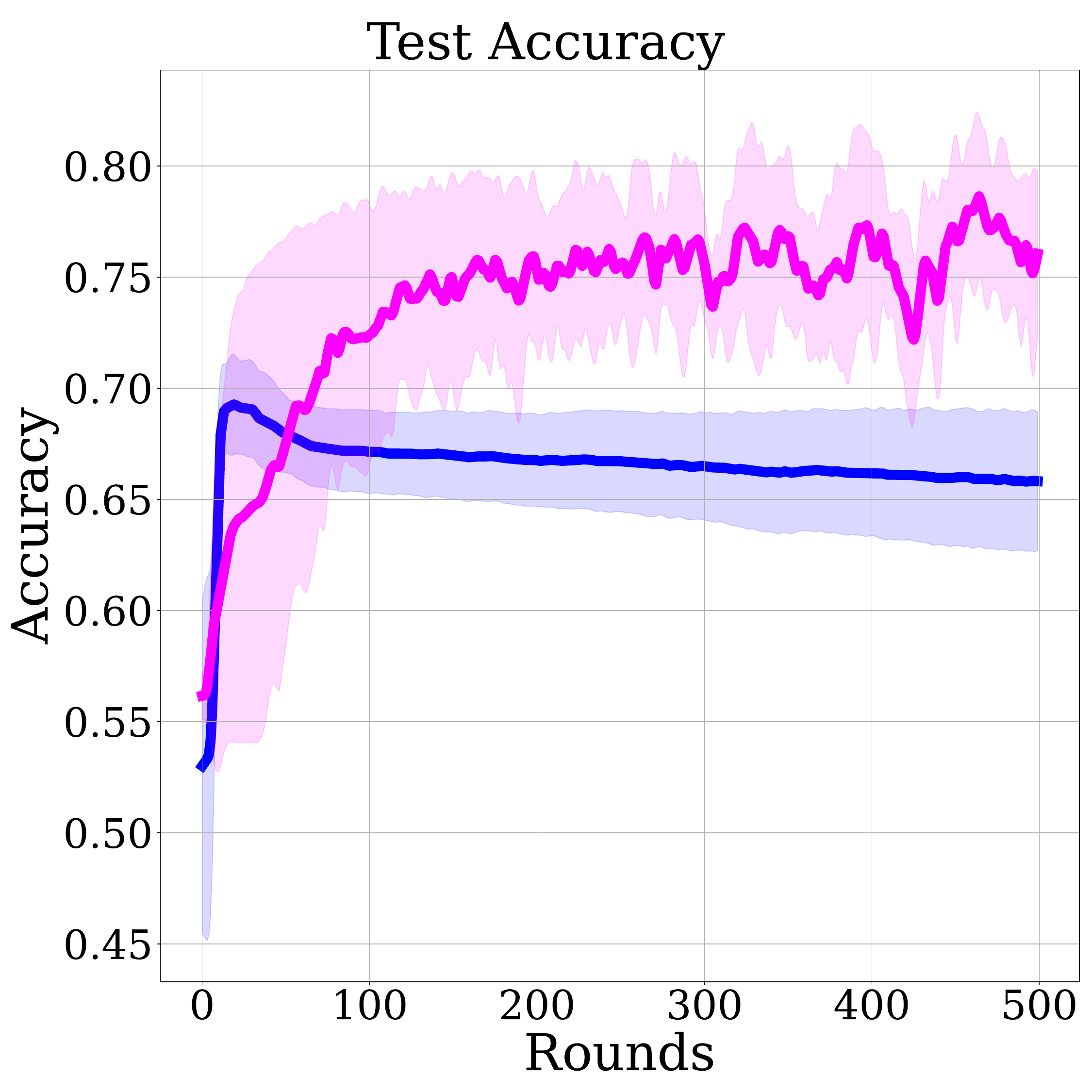}}
    \caption{$D^{\text{adv}} = 25\%$.}
    \end{subfigure}%
    \\
        \begin{subfigure}[t]
    {0.25\textwidth}\centering{\includegraphics[width=1\linewidth,trim=0 0 0 0,clip]{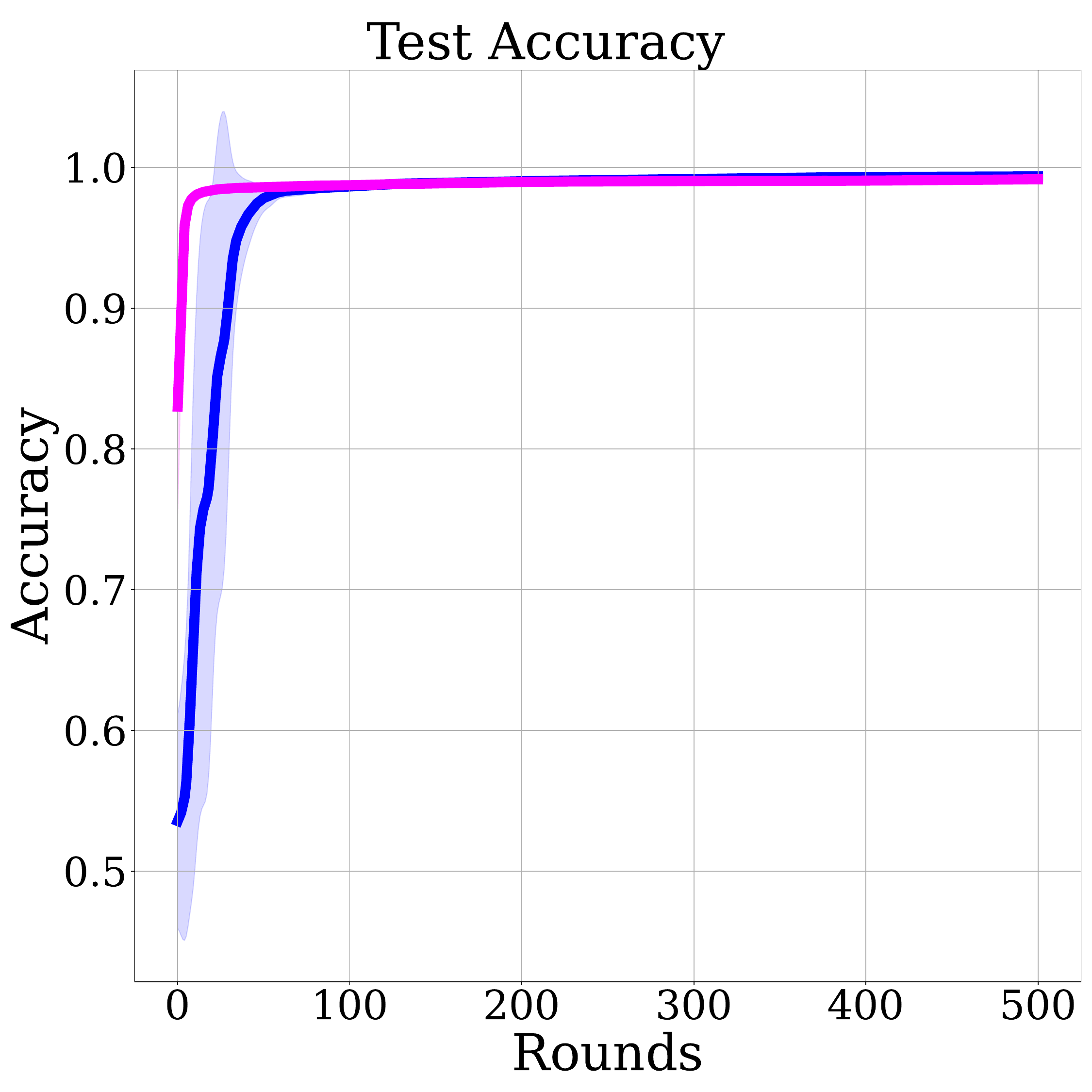}}
    \caption{Clean.}
    
    \end{subfigure}%
    \begin{subfigure}[t]{0.25\textwidth}\centering{\includegraphics[width=1\linewidth,trim=0 0 0 0,clip]{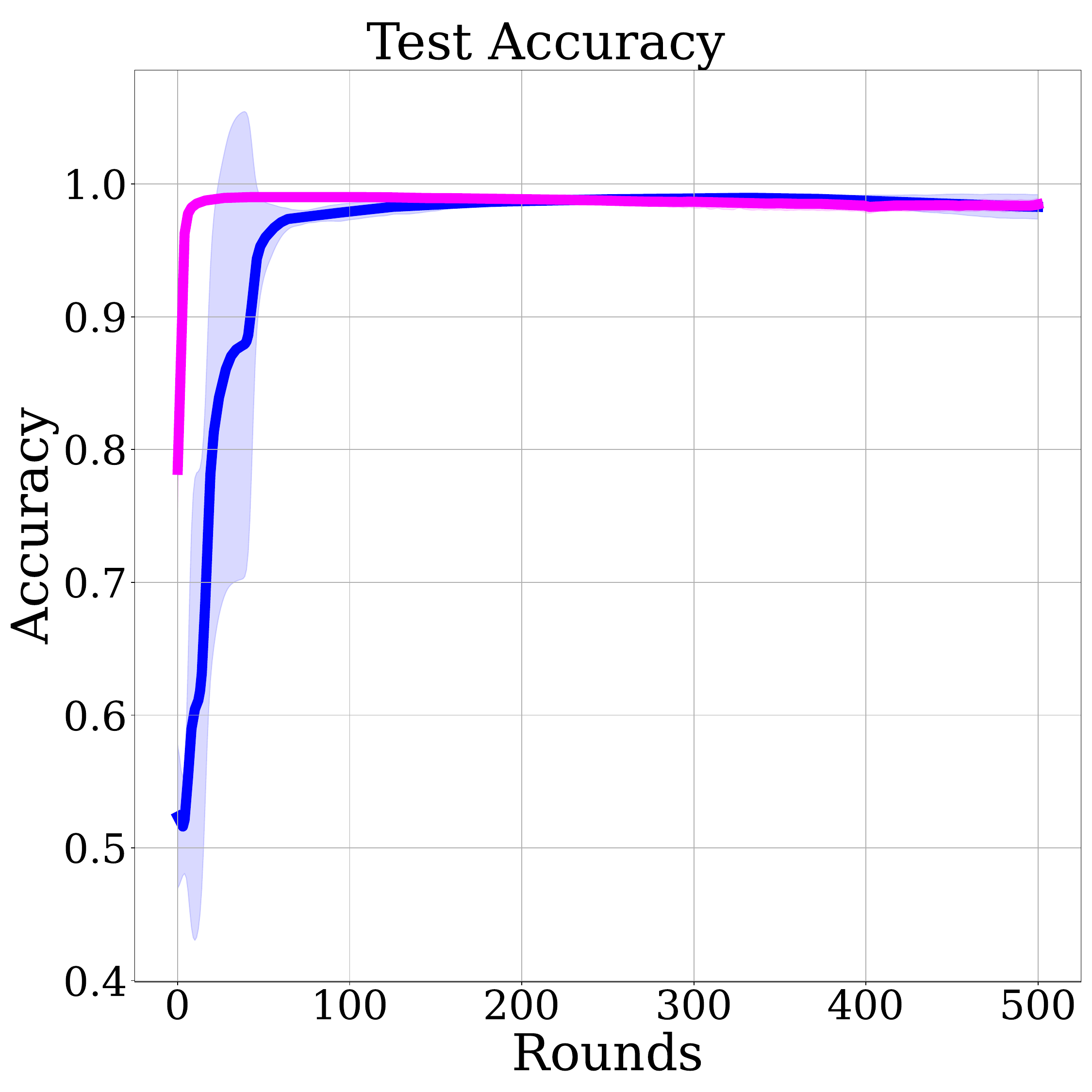}}
    \caption{$D^{\text{adv}} = 5\%$.}
    
    \end{subfigure}%
    \begin{subfigure}[t]{0.25\textwidth}\centering{\includegraphics[width=1\linewidth,trim=0 0 0 0,clip]{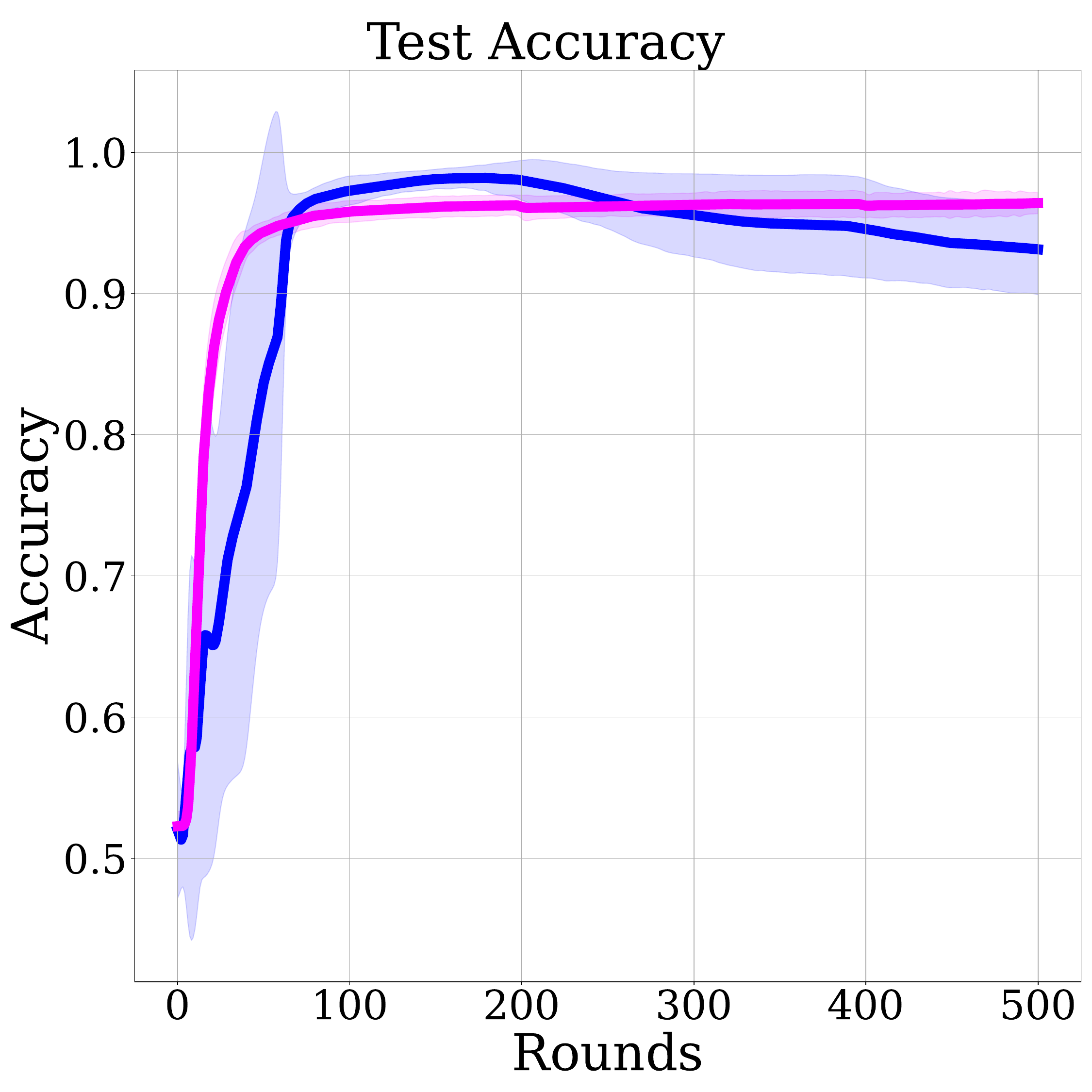}}
    \caption{$D^{\text{adv}} = 10\%$.}
    
    \end{subfigure}%
    \begin{subfigure}[t]{0.25\textwidth}\centering{\includegraphics[width=1\linewidth,trim=0 0 0 0,clip]{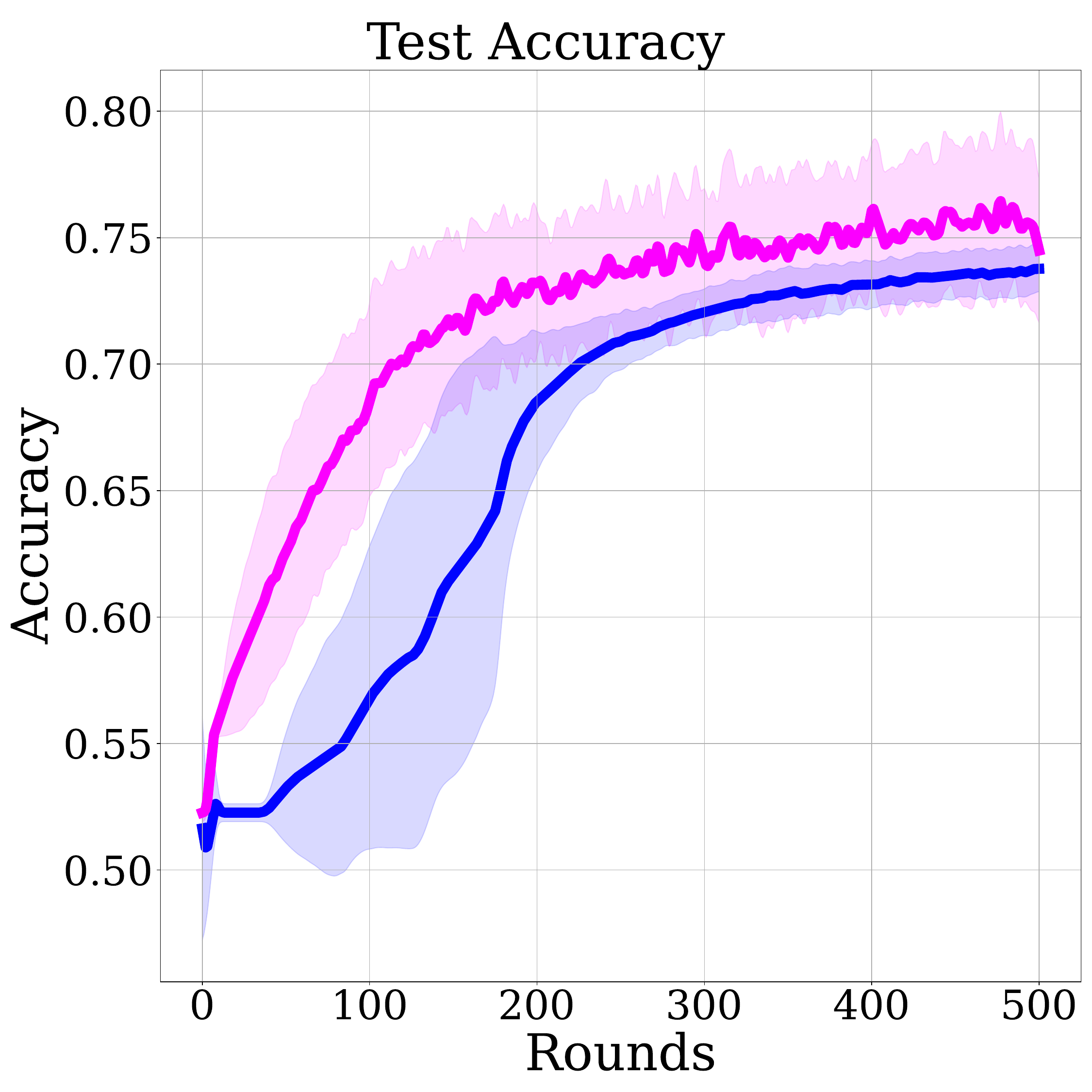}}
    \caption{$D^{\text{adv}} = 25\%$.}
    \end{subfigure}%
    \\
    \begin{subfigure}
    {0.15\textwidth}\centering{\includegraphics[width=1\linewidth,trim=40 10 920 10,clip]{figures/moon_legend_8cols_new.pdf}}
    \end{subfigure}%
    \begin{subfigure}
    {0.15\textwidth}\centering{\includegraphics[width=1\linewidth,trim=320 10 640 10,clip]{figures/moon_legend_8cols_new.pdf}}
    \end{subfigure}%
     \caption{Clean and robust test accuracy performance of neural network vs \textsc{Floral}-integrated neural network trained on the Moon (\textbf{the first row}) and MNIST-$1$vs$7$ (\textbf{the second row}) datasets. The results demonstrate that \textsc{Floral} integration helps to achieve a higher robust accuracy level. \looseness -1}
\label{fig:integration-nn-floral}
}
\end{figure}

 \begin{figure*}[ht]
    \centering  
    \begin{subfigure}
    {0.25\textwidth}\centering{\includegraphics[width=1\linewidth,trim=0 20 0 55,clip]{figures/synthetic-moon-svm-robust-db-round500-test_clean_C10_gamma05_seed1.png}}
    \caption{\textsc{Floral} (Clean).}
    
    \end{subfigure}%
    \begin{subfigure}
    {0.25\textwidth}\centering{\includegraphics[width=1\linewidth,trim=0 20 0 55,clip]{figures/synthetic-moon-svm-robust-db-round500-test_adv_5_C10_gamma05_seed1.png}}
    \caption{$D^{\text{adv}}= 5\%$.}
    
    \end{subfigure}%
    \begin{subfigure}
    {0.25\textwidth}\centering{\includegraphics[width=1\linewidth,trim=0 20 0 55,clip]{figures/synthetic-moon-svm-robust-db-round500-test_adv_10_C10_gamma05_seed1_2.png}}
    \caption{$D^{\text{adv}}= 10\%$.}
    
    \end{subfigure}%
    \begin{subfigure}{0.25\textwidth}\centering{\includegraphics[width=1\linewidth,trim=0 20 0 55,clip]{figures/synthetic-moon-svm-robust-db-round500-test_adv_25_C10_gamma05_seed1234.png}}
    \caption*{$D^{\text{adv}}= 25\%$.}
    \end{subfigure}%
    \\
    \begin{subfigure}{0.25\textwidth}\centering{\includegraphics[width=1\linewidth,trim=0 20 0 55,clip]{figures/synthetic-moon-svm-db-round500-test_clean_C10_gamma05_seed1.png}}
    \caption{SVM (Clean).}
    
    \end{subfigure}%
    \begin{subfigure}{0.25\textwidth}\centering{\includegraphics[width=1\linewidth,trim=0 20 0 55,clip]{figures/moon_svm_motivation_db_5_C10_gamma05_seed1.png}}
    \caption{$D^{\text{adv}}= 5\%$.}
    
    \end{subfigure}%
    \begin{subfigure}{0.25\textwidth}\centering{\includegraphics[width=1\linewidth,trim=0 20 0 55,clip]{figures/moon_svm_motivation_db_10_C10_gamma05_seed1.png}}
    \caption{$D^{\text{adv}}= 10\%$.}
    
    \end{subfigure}%
    \begin{subfigure}{0.25\textwidth}\centering{\includegraphics[width=1\linewidth,trim=0 20 0 55,clip]{figures/moon_svm_motivation_db_25_C10_gamma05_seed1.png}}
    \caption{$D^{\text{adv}}= 25\%$.}
    
    \end{subfigure}%
    \\
    \begin{subfigure}
    {0.25\textwidth}\centering{\includegraphics[width=1\linewidth,trim=0 20 0 55,clip]{figures/moon_nn_motivation_db_clean_seed1.png}}
    \caption{NN (Clean).}
    
    \end{subfigure}%
    \begin{subfigure}{0.25\textwidth}\centering{\includegraphics[width=1\linewidth,trim=0 20 0 55,clip]{figures/moon_nn_motivation_db_5_seed1.png}}
    \caption{$D^{\text{adv}}= 5\%$.}
    
    \end{subfigure}%
    \begin{subfigure}{0.25\textwidth}\centering{\includegraphics[width=1\linewidth,trim=0 20 0 55,clip]{figures/moon_nn_motivation_db_10_seed1.png}}
    \caption{$D^{\text{adv}}= 10\%$.}
    
    \end{subfigure}%
    \begin{subfigure}{0.25\textwidth}\centering{\includegraphics[width=1\linewidth,trim=0 20 0 55,clip]{figures/moon_nn_motivation_db_25_seed1.png}}
    \caption{$D^{\text{adv}}= 25\%$.}
    
    \end{subfigure}%
    \\
    \begin{subfigure}{0.25\textwidth}\centering{\includegraphics[width=1\linewidth,trim=0 20 0 55,clip]{figures/moon_nn_pgd_motivation_db_clean_seed1.png}}
    \caption{NN-PGD (Clean).}
    
    \end{subfigure}%
    \begin{subfigure}{0.25\textwidth}\centering{\includegraphics[width=1\linewidth,trim=0 20 0 55,clip]{figures/moon_nn_pgd_motivation_db_5_seed1.png}}
    \caption{$D^{\text{adv}}= 5\%$.}
    
    \end{subfigure}%
    \begin{subfigure}{0.25\textwidth}\centering{\includegraphics[width=1\linewidth,trim=0 20 0 55,clip]{figures/moon_nn_pgd_motivation_db_10_seed1.png}}
    \caption{$D^{\text{adv}}= 10\%$.}
    
    \end{subfigure}%
    \begin{subfigure}{0.25\textwidth}\centering{\includegraphics[width=1\linewidth,trim=0 20 0 55,clip]{figures/moon_nn_pgd_motivation_db_25_seed1.png}}
    \caption{$D^{\text{adv}}= 25\%$.}
    
    \end{subfigure}%
    \\
    \begin{subfigure}{0.25\textwidth}\centering{\includegraphics[width=1\linewidth,trim=0 20 0 55,clip]{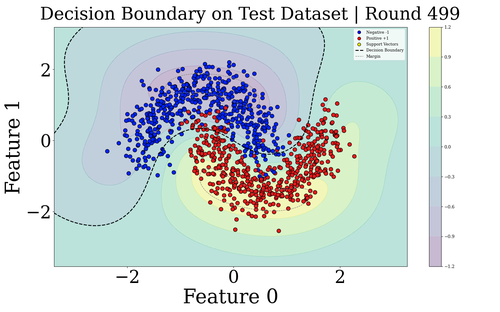}}
    \caption{LN-SVM (Clean).}
    
    \end{subfigure}%
    \begin{subfigure}{0.25\textwidth}\centering{\includegraphics[width=1\linewidth,trim=0 20 0 55,clip]{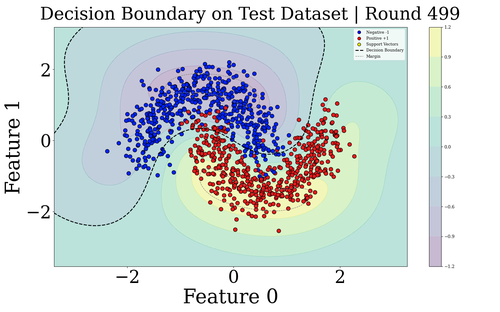}}
    \caption*{$D^{\text{adv}}= 5\%$.}
    
    \end{subfigure}%
    \begin{subfigure}{0.25\textwidth}\centering{\includegraphics[width=1\linewidth,trim=0 20 0 55,clip]{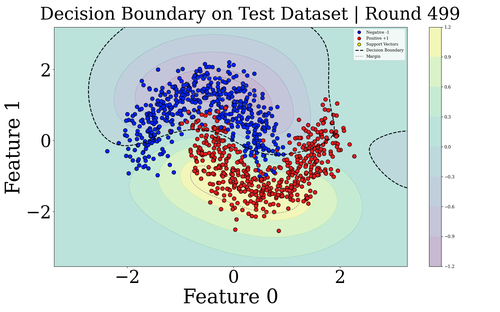}}
    \caption*{$D^{\text{adv}}= 10\%$.}
    
    \end{subfigure}%
    \begin{subfigure}{0.25\textwidth}\centering{\includegraphics[width=1\linewidth,trim=0 20 0 55,clip]{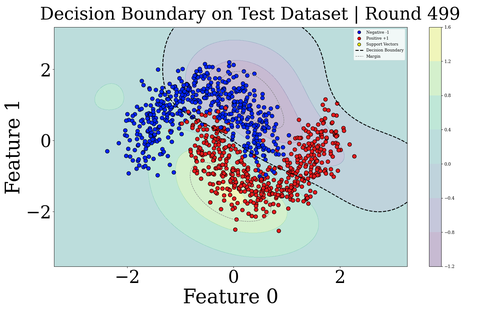}}
    \caption*{$D^{\text{adv}}= 25\%$.}
    
    \end{subfigure}%
    \\
    \begin{subfigure}{0.25\textwidth}\centering{\includegraphics[width=1\linewidth,trim=0 20 0 55,clip]{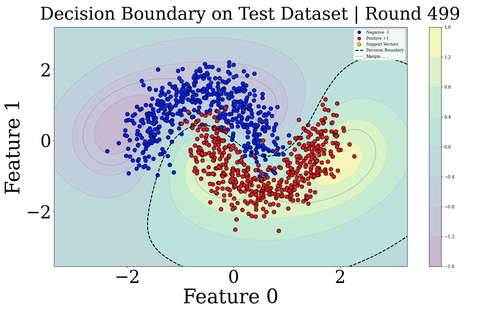}}
    \caption{Curie (Clean).}
    
    \end{subfigure}%
    \begin{subfigure}{0.25\textwidth}\centering{\includegraphics[width=1\linewidth,trim=0 20 0 55,clip]{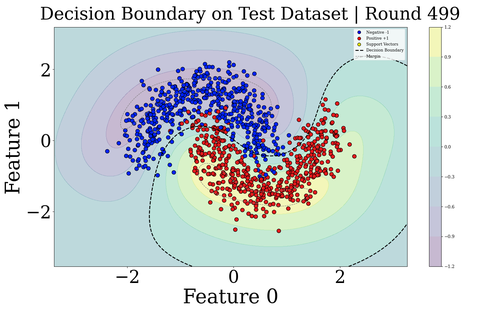}}
    \caption*{$D^{\text{adv}}= 5\%$.}
    
    \end{subfigure}%
    \begin{subfigure}{0.25\textwidth}\centering{\includegraphics[width=1\linewidth,trim=0 20 0 55,clip]{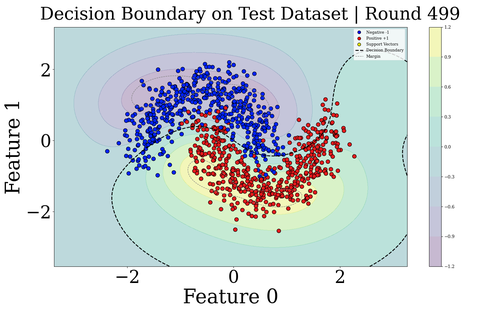}}
    \caption*{$D^{\text{adv}}= 10\%$.}
    
    \end{subfigure}%
    \begin{subfigure}{0.25\textwidth}\centering{\includegraphics[width=1\linewidth,trim=0 20 0 55,clip]{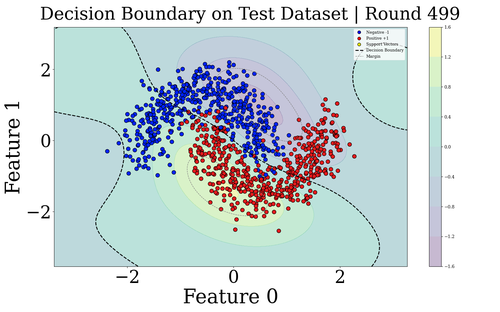}}
    \caption*{$D^{\text{adv}}= 25\%$.}
    
    \end{subfigure}%
    \\
    \begin{subfigure}{0.25\textwidth}\centering{\includegraphics[width=1\linewidth,trim=0 20 0 55,clip]{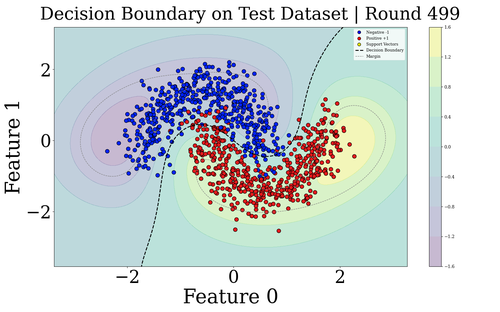}}
    \caption{LS-SVM (Clean).}
    
    \end{subfigure}%
    \begin{subfigure}{0.25\textwidth}\centering{\includegraphics[width=1\linewidth,trim=0 20 0 55,clip]{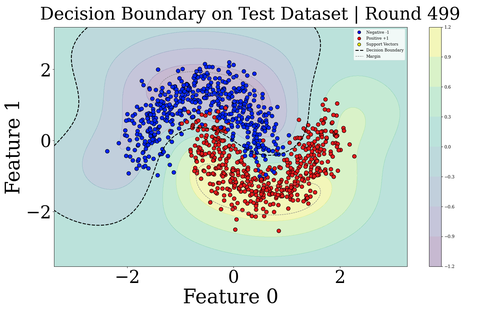}}
    \caption*{$D^{\text{adv}}= 5\%$.}
    
    \end{subfigure}%
    \begin{subfigure}{0.25\textwidth}\centering{\includegraphics[width=1\linewidth,trim=0 20 0 55,clip]{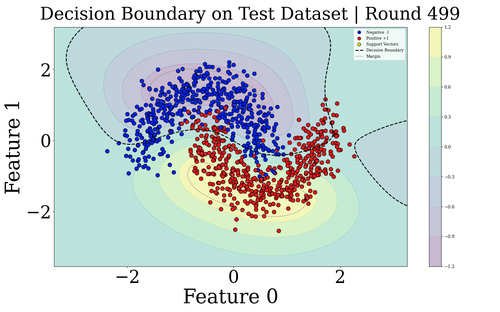}}
    \caption*{$D^{\text{adv}}= 10\%$.}
    
    \end{subfigure}%
    \begin{subfigure}{0.25\textwidth}\centering{\includegraphics[width=1\linewidth,trim=0 20 0 55,clip]{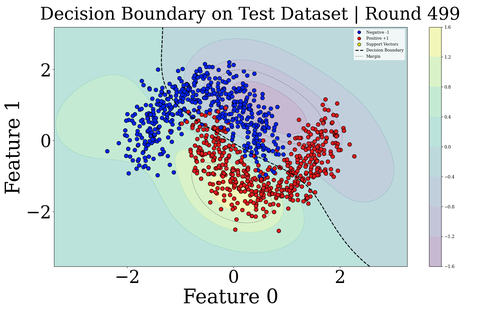}}
    \caption*{$D^{\text{adv}}= 25\%$.}
    
    \end{subfigure}%
    \\
    \begin{subfigure}{0.25\textwidth}\centering{\includegraphics[width=1\linewidth,trim=0 20 0 55,clip]{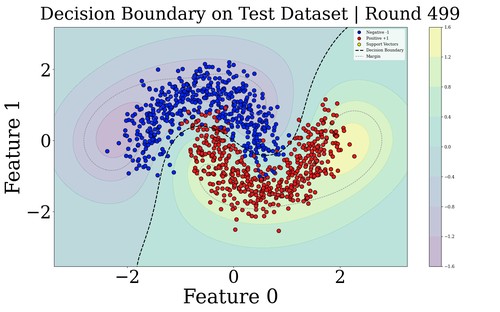}}
    \caption{K-LID (Clean).}
    
    \end{subfigure}%
    \begin{subfigure}{0.25\textwidth}\centering{\includegraphics[width=1\linewidth,trim=0 20 0 55,clip]{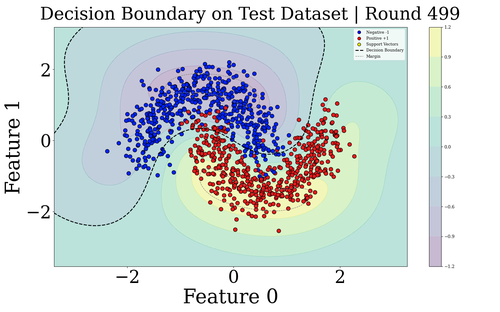}}
    \caption*{$D^{\text{adv}}= 5\%$.}
    
    \end{subfigure}%
    \begin{subfigure}{0.25\textwidth}\centering{\includegraphics[width=1\linewidth,trim=0 20 0 55,clip]{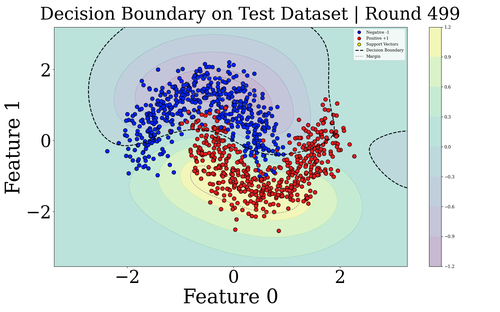}}
    \caption*{$D^{\text{adv}}= 10\%$.}
    
    \end{subfigure}%
    \begin{subfigure}{0.25\textwidth}\centering{\includegraphics[width=1\linewidth,trim=0 20 0 55,clip]{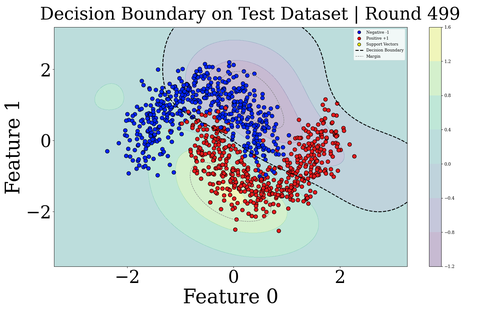}}
    \caption*{$D^{\text{adv}}= 25\%$.}
    
    \end{subfigure}%
    \caption{The decision boundaries on the Moon test dataset with various label poisoning levels. 
    SVM-related models use an RBF kernel with $C=10$ and $\gamma=0.5$. 
    \textsc{Floral} generates a relatively smooth decision boundary compared to baseline methods, particularly in $25\%$ adversarial setting, where baselines show drastic changes in their decision boundaries as a result of adversarial manipulations.
     }
\label{fig:moon-decision-boundaries-app-C10-gamma0.5}
\end{figure*}

\end{document}